\documentclass[11pt, oneside]{mnthesis}
\usepackage{epsfig,epic,eepic,units}
\usepackage{url}
\usepackage{longtable}
\usepackage{mathrsfs}
\usepackage{multirow}
\usepackage{mathtools}
\usepackage{algorithm, algorithmic}

\usepackage{mathtools}
\usepackage{framed}
\usepackage{multirow} 
\usepackage{makecell}
\usepackage{tabularx}
\usepackage{booktabs} 
\usepackage{ragged2e} 
\usepackage{graphicx,graphics,amsthm,amssymb,enumerate,tikz}
\usepackage{hyperref}
\usepackage{caption} 
\usepackage{float} 
\usepackage{arydshln}
\usepackage[table]{xcolor}

\theoremstyle{plain}
\newtheorem{theorem}{Theorem}
\newtheorem{proposition}{Proposition}
\newtheorem{lemma}{Lemma}

\theoremstyle{definition}
\newtheorem{definition}{Definition}
\newtheorem{assumption}{Assumption}
\newtheorem{example}{Example}
\theoremstyle{remark}
\newtheorem{remark}{Remark}
\usepackage{subcaption}
\usepackage{booktabs}       
\usepackage{makecell, multirow} 
\usepackage{colortbl}

\definecolor{magenta}{RGB}{255,0,255} 

\definecolor{lightergray}{RGB}{190,190,190}
\def\R{\mathbb{R}}
\def\z{\mathbf{z}}

\def\d{\mathbf{d}}
\def\c{\mathbf{c}}
\def\Z{\mathbf{Z}}

\def\Q{\mathbf{Q}}
\def\P{\mathbf{P}}

\def\V{\mathcal{L} \, V}
\def\N{\mathcal{N}}

\begin{document}
\bibliographystyle{hunsrt} 

\ms 

\title{\bf Distributed GNEP Algorithms without Multiplier Sharing and Applications to Multi-Robot Coordination and Contextual Bandit-Based Active Learning}
\author{Shao-An Yin}
\campus{University of Minnesota} 
\program{Electrical and Computer Engineering} 
\degree{DOCTOR OF PHILOSOPHY}
\director{Advisor: Dr. Nicola Elia} 

\submissionmonth{May} 
\submissionyear{2026} 

\abstract{

Recent advances in artificial intelligence have expanded the focus from classical optimization to include equilibrium analysis in noncooperative games. Many such games involve shared constraints, leading to Generalized Nash Equilibrium Problems (GNEPs). Existing distributed algorithms typically require agents to exchange Lagrange multipliers to enforce consensus and compute variational-GNEs (v-GNEs).

This work introduces fully distributed continuous-time algorithms and establishes convergence without requiring multiplier exchange, thereby reducing information exchange per iteration while improving privacy preservation. The analysis focuses on strongly monotone games with convex individual constraints and linear shared constraints. I also propose several discretization schemes for the continuous-time algorithms. The proposed approach converges to general GNEs, rather than being restricted to v-GNEs, with the attained equilibrium depending on the initialization. The effectiveness of the proposed method is demonstrated through applications in multi-robot coordination and placement.

In the second part, this work includes research conducted in collaboration with Amazon scientists. One of the most challenging problems in real-world machine learning is labeled data collection, which typically requires substantial human effort and cost. Active learning aims to reduce this labeling requirement. Existing handcrafted active learning strategies, however, generally perform well only on specific types of datasets, which are often unknown in advance. In this work, I propose using contextual bandits to adaptively select the most suitable active learning strategy. The effectiveness of the proposed approach is demonstrated on both Amazon’s internal datasets and publicly available external datasets, although only the external dataset results are presented due to confidentiality constraints.
}
\words{331}    
\copyrightpage 
\acknowledgements{

I would like to express my sincere gratitude to my advisor, Prof. Nicola Elia, for his guidance, encouragement, and support throughout my Ph.D. journey. His rigorous approach to research and countless discussions greatly shaped my understanding of distributed optimization, game theory, and mathematical analysis. I am especially grateful for his mentorship and patience during the development of this dissertation.

I also sincerely thank Prof. Mingyi Hong for his collaboration and many insightful research discussions that significantly influenced this work. I am grateful to Prof. Murti Salapaka, Prof. Ryan Caverly, and Prof. Donatello Materassi for agreeing to be on my dissertation committee and for their valuable feedback during my final defense. I also thank Prof. Maria Gini for serving on my preliminary examination committee and for her support during my graduate studies.

I would also like to thank my undergraduate advisor, Prof. Wen-Pin Shih, for introducing me to research and encouraging me to pursue graduate studies. I am similarly grateful to my master’s advisor, Prof. Blake Hannaford, for his guidance and mentorship during my time at the University of Washington.

Part of this dissertation includes research conducted in collaboration with Amazon scientists. I would particularly like to thank Dr. Jiacong Li for the close collaboration and many insightful discussions throughout these projects. I also thank Dr. Cecile Levasseur, Dr. Tianpei Xie, Dr. Dmitry Pavlov, and Wojciech Kowalinski for their collaboration, support, and insights.

In addition, I would like to acknowledge my collaborators, mentors, managers, and colleagues at Amazon, TSMC, the Allen Institute for Brain Science, Google, and the University of Minnesota for the valuable discussions, experiences, and opportunities to work on challenging real-world problems across both academia and industry. I am also grateful to the students I worked with during my time as a teaching assistant, whose questions and interactions often provided valuable perspectives and motivation throughout my graduate studies.

I would also like to thank Beñat Froemming-Aldanondo, Dr. Renat Sergazinov, my friends from internships, and my labmate for their companionship, support, and many valuable discussions throughout my Ph.D. journey.

This work was supported in part by NSF ECPN Award No. 2311007.

I would also like to thank my family and friends outside of academia for their continuous encouragement and support throughout this journey.

}


\beforepreface 

\figurespage
\tablespage

\afterpreface            


\chapter{Introduction}
\label{intro_chapter}


One recent development in machine learning has shifted the focus from finding optimal solutions to analyzing equilibrium points in noncooperative games \cite{NEURIPS2022_24f420aa}. In particular, algorithms for computing Nash Equilibria (NE) play an important role as subroutines in many multi-agent and distributed machine learning methods. Examples include multi-agent reinforcement learning and the associated stochastic games \cite{shapley_stochastic_1953, NashQ, littman_1994, meanfield, MPG}, as well as generative models such as generative adversarial networks \cite{GAN, pmlr-v119-jin20e, pmlr-v139-liu21d}.

While Nash Equilibrium Problems (NEPs) offer a strong theoretical foundation, applying them to real-world settings often introduces additional challenges. Many practical applications require solving an NEP with coupling constraints across different agents, such as total resource limits or collision avoidance \cite{NEURIPS2021_174a61b0, zhou-generalized-2005, maiorano-dynamics-2000, contreras_numerical_2004, le_cleach_algames_2020, Shen2023, ma_decentralized_2013}. 

A simple illustration is an autopilot system. In a standard NEP, each autopilot has its own cost function and enforces its own operational constraints, such as acceleration limits. However, since all autopilots operate on the same road, they must jointly satisfy shared safety constraints. This leads to a formulation beyond the classical NEP. When shared constraints are present, the appropriate framework is the Generalized Nash Equilibrium Problem (GNEP). GNEPs are strictly more general than NEPs, and significantly more difficult to solve.

Beyond modeling considerations, computational challenges also limit the real-world use of game-theoretic formulations in AI and machine learning. Even a two-agent NEP can be difficult, because NE are equilibrium points rather than optima \cite{comple}. As a result, one cannot easily monitor objective values to determine whether an algorithm is approaching a solution. This difficulty becomes even more pronounced in GNEPs.

Moreover, for large-scale systems, relying on a central authority to compute a Generalized Nash Equilibrium (GNE) is impractical. Centralized computation demands substantial communication and computational resources, and agents may be unwilling to share their local cost information due to privacy concerns. This motivates distributed computation. Each agent should compute its own decision locally, exchange only limited information, and collectively work toward a GNE.

There is an extensive body of existing literature focused on communication complexity in distributed convex optimization, particularly emphasizing communication overhead per iteration \cite{10.5555/2969239.2969435,McMahan2016CommunicationEfficientLO,NEURIPS2022_56bd2125,pmlr-v162-mishchenko22b,pmlr-v70-suresh17a,NEURIPS2018_3328bdf9}. Consensus-based methods inherently rely on augmented variables to achieve agreement among agents—a requirement that our proposed approach eliminates. Consequently, these augmented variables significantly increase information exchange per iteration. Moreover, the number of augmented variables typically depends on the network topology of the shared constraints; thus, for complex network structures, the number of required augmented variables increases quadratically.

\section{Overview of Thesis}

\subsection{Part I - Fully Distributed Algorithms Based on Continuous-Time Dynamics}

In this thesis, I focus on a fully distributed algorithm based on primal–dual dynamics. Here, fully distributed means that agents update their decisions using only local and neighbor information, without exchanging Lagrangian multipliers. This is in contrast to existing distributed methods, which typically require multiplier sharing or consensus to enforce shared constraints. By eliminating the need for multiplier exchange, the proposed algorithm is not restricted to variational-GNE and can converge to any generalized Nash equilibrium (GNE), with variational-GNE arising as a special case. I provide a theoretical analysis and establish convergence conditions for strongly monotone games with globally shared linear equality constraints. Future work will extend this approach to more general classes of shared constraints.

The contributions are summarized as follows:
\begin{enumerate}
    \item To the best of my knowledge, this is the first distributed algorithm that does not require consensus of Lagrangian multipliers, thereby significantly reducing information exchange.
    \item The proposed algorithm is the first distributed method capable of converging to any GNE, rather than only variational-GNE.
    \item Future work includes extending the algorithm to handle shared inequality constraints, and subsequently, more general convex shared constraints.
\end{enumerate}

\subsection{Part II - Contextual Bandit Based Active Learning}

In the second part, I incorporate active learning methods in the context of machine learning, where the objective is to minimize the effort required for querying labels from human annotators within a pool of unlabeled datasets while still achieving satisfactory performance from the underlying supervised learning model \cite{settles_active_2010}. The majority of existing work focuses on hand-crafted strategies for reducing sample complexity \cite{balcan_true_2010}. These strategies provide theoretical guarantees based on statistical assumptions about the unlabeled dataset. However, they typically perform well only for certain types of data, and in practice, the underlying data distribution is often unknown in advance.

On the other hand, \cite{baram_online_2004} addressed this challenge by combining multiple hand-crafted active learning strategies, aiming to leverage the strengths of each. This idea was further developed by \cite{Hsu_Lin_2015}, which emphasized adaptability and established connections to bandit algorithms through an adversarial bandit framework. However, adversarial bandit methods are often overly conservative and tend to perform similarly to random selection with equal probability. This behavior arises because they maintain persistent exploration without making explicit assumptions about the environment. In the context of active learning, the improvement in the performance of a machine learning model as more labeled data is queried provides useful contextual information. Therefore, I propose incorporating such contextual information into bandit-based methods to reduce conservativeness and enhance adaptability.

The contributions of this work are summarized as follows:
\begin{enumerate}
\item Compared to the recent trend of deep active learning, I focus on meta-learner-based active learning, as deep learning methods often perform worse on tabular data \cite{shwartz-ziv_tabular_2022}, which constitutes the majority of datasets in industry. This \textbf{meta-strategy} approach is more \textbf{industry-friendly} and can be easily integrated into existing pipelines.
\item While most prior work focuses on selecting a single unlabeled data point at each iteration, I instead consider batch selection in active learning and emphasize its \textbf{robustness across different batch sizes}, which is more aligned with practical industrial settings.
\item I emphasize adaptability to the best-performing hand-crafted active learning strategies under unknown datasets.
\item My approach shows that incorporating additional environmental context information for \textbf{reward prediction} improves adaptability compared to existing state-of-the-art methods, which primarily rely on adversarial bandits.
\end{enumerate}


\chapter{Generalized Nash Equilibrium Problem (GNEP)}
\label{background_chapter}

\section{Introduction}
In recent years, algorithms for noncooperative games have received significant research attention \cite{NEURIPS2022_24f420aa}. Among various solution concepts, the Nash Equilibrium (NE) has become a fundamental analytical tool in multi-agent systems and distributed machine learning. NE formulations have been widely applied in settings such as multi-agent reinforcement learning for stochastic games \cite{shapley_stochastic_1953, NashQ, littman_1994, meanfield, MPG} and in generative modeling frameworks, including generative adversarial networks (GANs) \cite{GAN, pmlr-v119-jin20e, pmlr-v139-liu21d}.

In many practical scenarios, however, agents must compute equilibria while satisfying shared constraints \cite{NEURIPS2021_174a61b0, tsaknakis_minimax_2023, pmlr-v48-pedregosa16}. This requirement leads to the Generalized Nash Equilibrium Problem (GNEP), a class of noncooperative games in which the feasible decision sets of agents are interdependent because of coupling constraints. GNEPs arise naturally across a wide range of engineering and economic applications, including power systems, communication networks, and traffic management. For example, in routing problems, each vehicle aims to minimize its travel time under common road capacity limits \cite{zhou-generalized-2005, maiorano-dynamics-2000}, and in power grid management, agents maximize utilities subject to shared resource constraints \cite{contreras_numerical_2004, le_cleach_algames_2020, Shen2023, ma_decentralized_2013}. Since these applications are often large-scale and involve distributed or privacy-sensitive information, a central research challenge is the development of scalable distributed algorithms that can compute generalized equilibria efficiently.

The study of GNEPs has a long history. In this class of games, each agent seeks to minimize its own local cost function, which depends on both its decision and the decisions of other agents. At the same time, each agent’s feasible set is influenced by the decisions of others, for example, when agents compete for limited resources or must satisfy shared safety constraints. A Generalized Nash Equilibrium (GNE) is a joint decision profile in which no agent can reduce its local cost by unilaterally changing its strategy to another feasible decision.

The study of the Generalized Nash Equilibrium Problem (GNEP) was initiated in the classical works of \cite{debreu-social-1952, rosen_existence_1965}. Early research concentrated on guaranteeing the existence of a solution, and an extensive survey of these foundational results can be found in \cite{facchinei_generalized_2010}. However, it is important to note that the existence of a solution does not imply that the solution can be easily computed, even when a centralized computing unit is available to compute the equilibrium and communicate the result to all agents.

In early centralized approaches, \cite{fukushima_restricted_2011} investigated the use of augmented penalty functions for solving the GNEP. The proposed method required solving an internal Nash Equilibrium Problem while progressively increasing the penalty associated with the shared constraint. Subsequent works, including \cite{facchinei_penalty_2010, kanzow_augmented_2016, dreves_nonsmooth_2011}, extended this idea and aimed to apply it to more general classes of GNEPs. However, \cite{dreves_nonsmooth_2011} identified a key limitation: the inner problem is often non-smooth, which creates significant difficulties when designing algorithms that operate in a distributed manner.

On the other hand, \cite{harker-1991} attempted to reduce the GNEP to a Quasi-Variational Inequality (QVI) problem \cite{pang_quasi-variational_2005, facchinei_computation_2011, migot_nonsmooth_2020, migot_parametrized_2020}. However, algorithmic research on QVIs is still relatively limited. As a result, \cite{harker-1991} and \cite{facchinei_generalized_2007} pursued an alternative reduction to a Variational Inequality (VI) problem. This reduction makes it possible to apply established VI algorithms \cite{facchinei-fischer-piccialli-2007, dreves_nonsmooth_2011}. The corresponding solution concept is often referred to as the normalized-GNE or variational-GNE.

\cite{dupuis_dynamical_1993} was the first to establish a formal connection between Variational Inequalities and Projected Dynamical Systems (PDS). The authors showed that the equilibrium points of the PDS coincide with the solutions of $VI(K,F)$, which provides a potential avenue for computing a variational-GNE. However, this correspondence alone does not ensure that the projected dynamics will converge. Subsequent work by \cite{zhang_stability_1995} provided fundamental results on attraction and stability for PDS, under the condition that the underlying problem is monotone. In addition, \cite{Xia_Wang_2000} proposed an alternative form of projected dynamics and analyzed its stability and convergence, although their results were restricted to symmetric problems.

Although computing a GNE point through a central authority has been widely studied, such an approach becomes impractical when the number of agents is large, due to the computational and communication burden associated with centralized processing. In addition, many agents may be unwilling to disclose their local cost functions to a central coordinator in order to preserve privacy. For these reasons, it is desirable for each agent to compute its own local decision in a distributed manner, using only limited information exchange, while still achieving a GNE.

Research on distributed methods for solving GNEPs remains limited, and most existing work focuses on solutions within the class of variational-GNEs. In particular, \cite{yi-pavel-2019, huang-distributed-2021} studied the generalized Nash problem through operator splitting techniques, whereas \cite{lin_distributed_2022} employed a singular-perturbation-based approach. However, all state-of-the-art methods rely on the uniqueness of the equilibrium point. As a result, these works primarily aim to enforce consensus among the Lagrangian multipliers, which ensures that the obtained solution corresponds to a variational-GNE associated with a unique VI solution. Table \ref{tab:algs} provides a summary of the main existing algorithms.

\begin{table*}[ht]
  \centering
  \renewcommand{\arraystretch}{1.3}
  \begin{tabularx}{\textwidth}{|>{\centering\arraybackslash}p{2.3cm}|X|X|}
    \hline
    \textbf{Perspective} & \textbf{Approaches} & \textbf{Comment}\\
    \hline
    \multirow{2}{*}[-1.5ex]{\makecell[l]{\textbf{Central}\\\textbf{Computing}}}
      & Penalty/ Augmented Lagrangian Method \cite{fukushima_restricted_2011, facchinei_penalty_2010, kanzow_augmented_2016, dreves_nonsmooth_2011}
      & Reduced the GNEP problem to the NEP problem with a penalty function. \\
    \cline{2-3}
      & VI/ Parameterized VI Reduction \cite{harker-1991,pang_quasi-variational_2005, facchinei-fischer-piccialli-2007, facchinei_computation_2011,facchinei_generalized_2007, nabetani_parametrized_2011, migot_parametrized_2020}
      & Reduced to VI/Parameterized VI problem and utilized existing VI solvers. \\
    \hline
    \multirow{2}{*}[-1.5ex]{\makecell[l]{\textbf{Distributed}\\\textbf{Computing}}}
      & Discrete-Time Dynamics \cite{paccagnan_distributed_2016, yi-pavel-2019, huang-distributed-2021}
      & Operator Splitting Method, typically targets \textbf{variational-GNE}. \\
    \cline{2-3}
      & Continuous-Time Dynamics \cite{bianchi-continuous-time-2021,lin_distributed_2022}
      & Targets \textbf{variational-GNE}. \\
    \hline
  \end{tabularx}
  \caption{Summary of Main Existing Algorithms for GNEPs}
  \label{tab:algs}
\end{table*}

\section{Generalized Nash Equilibrium Problems (GNEPs)}
I first introduced the GNEP in a formal manner. Consider a game with $N$ agents. Each agent $v$ controls decision variables $z_v \in \mathbb{R}^{n_v}$ and has a local cost function $f_v(z_v,\, \mathbf{z}_{-v}^f)$, which depends on its own decision variables as well as those of other agents. The vector $\mathbf{z}_{-v}^f$ denotes the subset of decision variables from all agents other than $v$ that influence $f_v(\cdot)$, thereby inducing a cost-function dependency graph $\mathcal G_f = (\mathcal V, \mathcal E_f)$. Each agent is also subject to individual constraints of the form $h_v(z_v) \le \mathbf{0}_{q_v}$. Let $\mathbf{z} = [z_1^\top, \dots, z_N^\top]^\top \in \mathbb{R}^n$ denote the joint decision vector, where $n = \sum_{v=1}^N n_v$, and let $q = \sum_{v=1}^N q_v$ denote the total number of individual constraints.

In addition, each agent may face shared constraints that depend on the decisions of others. In particular, the shared equality and inequality constraints of agent $v$ are given by
\[
    \psi_v(z_v,\, \mathbf{z}_{-v}^\psi) = \mathbf{0}_{l_v}, \qquad
    g_v(z_v,\, \mathbf{z}_{-v}^g) \le \mathbf{0}_{m_v},
\]
where $\mathbf{z}_{-v}^\psi$ and $\mathbf{z}_{-v}^g$ denote the sets of decision variables from the remaining agents that influence $\psi_v(\cdot)$ and $g_v(\cdot)$, respectively. These dependencies induce the constraint graphs $\mathcal G_\psi = (\mathcal V, \mathcal E_\psi)$ and $\mathcal G_g = (\mathcal V, \mathcal E_g)$.

Formally, each agent $v$ solves the following optimization problem:
\begin{equation}\label{equ:GNEP_All}
    \begin{split}
        \begin{matrix*}[l]
            \min_{z_v} & f_v(z_v,\, \mathbf{z}_{-v}^f) \\
            \text{subject to} 
                & h_v(z_v) \le \mathbf{0}_{q_v}, \\
                & \psi_v(z_v,\, \mathbf{z}_{-v}^\psi) = \mathbf{0}_{l_v}, \\
                & g_v(z_v,\, \mathbf{z}_{-v}^g) \le \mathbf{0}_{m_v}. 
        \end{matrix*}
    \end{split}
\end{equation}

To simplify notation, we use $\mathbf{z}_{-v}$ to denote the collection of decision variables of all agents other than $v$ whenever the context is clear. That is, we write $f_v(z_v, \mathbf{z}_{-v})$ instead of $f_v(z_v, \mathbf{z}_{-v}^f)$, $\psi_v(z_v, \mathbf{z}_{-v})$ instead of $\psi_v(z_v, \mathbf{z}_{-v}^\psi)$, and $g_v(z_v, \mathbf{z}_{-v})$ instead of $g_v(z_v, \mathbf{z}_{-v}^g)$.

Furthermore, to shorten the notation, we sometimes use $\Omega_v(\mathbf{z}_{-v})$ to denote the feasible set of agent $v$ as a function of the decision variables of the other agents. For a fixed collection of decisions from the other agents, denoted $\bar{\mathbf{z}}_{-v}$, we define
\begin{equation*}
    \Omega_v(\bar{\mathbf{z}}_{-v})
    :=
    \left\{
        z_v \,\middle|\,
        h_v(z_v) \le \mathbf{0}_{q_v},\ 
        \psi_v(z_v,\, \bar{\mathbf{z}}_{-v}) = \mathbf{0}_{l_v},\ 
        g_v(z_v,\, \bar{\mathbf{z}}_{-v}) \le \mathbf{0}_{m_v}
    \right\}.
\end{equation*}

The solution concept for the GNEP in \eqref{equ:GNEP_All} is called a \emph{Generalized Nash Equilibrium} (GNE).
\begin{definition}[Generalized Nash Equilibrium (GNE)]
    A point $\mathbf{z}^\star$ is a Generalized Nash Equilibrium if for every agent $v$,
    \begin{equation}
        f_v(z_v^\star,\, \mathbf{z}_{-v}^\star) 
        \le 
        f_v(z_v,\, \mathbf{z}_{-v}^\star)
        \qquad 
        \forall\, z_v, \, z_v^\star  \in \Omega_v(\z_{-v}^\star).
    \end{equation}
\end{definition}
For comparison, in a classical Nash Equilibrium (NE), each agent solves
\begin{equation}
    \min_{z_v \in \Omega_v} \; f_v(z_v,\, \z_{-v}),
\end{equation}
where $\Omega_v$ is a fixed feasible set independent of other agents. A point $\z^\star$ is a Nash Equilibrium if for every agent $v$,
\begin{equation}
    f_v(z_v^\star,\, \z_{-v}^\star) 
    \le 
    f_v(z_v,\, \z_{-v}^\star)
    \qquad 
    \forall\, z_v \in \Omega_v.
\end{equation}

The key difference is that in a GNEP the feasible set $\Omega_v(\z_{-v})$ depends on the decisions of other agents through shared constraints, whereas in a classical NE the feasible sets $\Omega_v$ are independent across agents. As a result, in a GNEP, unilateral deviations must remain feasible with respect to shared constraints, which introduces coupling not present in the classical NE formulation.

\subsection{QVI, VI, and the GNEP}
First, define the feasible set mapping
\begin{equation}\label{equ:qviset}
    \Omega(\z)
    :=
    \prod_{v=1}^N \Omega_v(\z_{-v}),
\end{equation}
which is a point-to-set map that assigns to each $\z \in \mathbb{R}^n$ the Cartesian product of the individual feasible sets, where each $\Omega_v(\z_{-v})$ depends on the decisions of the other agents.

Assuming that the cost function of each agent is continuous and convex in its own decision variable, the first-order optimality conditions for all agents can be written in a compact form. By stacking these conditions, solving the GNEP is equivalent to solving the quasi-variational inequality problem $QVI(\Omega, F)$:
\begin{equation}
    \begin{split}
        &\text{Find } \z^\star \in \Omega(\z^\star) \text{ such that} \\
        &(\z - \z^\star)^\top F(\z^\star) \ge 0,
        \qquad 
        \forall \, \z \in \Omega(\z^\star),
    \end{split}
\end{equation}
where the operator $F:\mathbb{R}^n \to \mathbb{R}^n$ is defined as the pseudo-gradient:
\begin{equation}\label{equ:pseudo_grad}
    F(\z) 
    :=
    \begin{bmatrix}
        \nabla_{z_1} f_1(z_1,\, \z_{-1}) \\
        \nabla_{z_2} f_2(z_2,\, \z_{-2}) \\
        \vdots \\
        \nabla_{z_N} f_N(z_N,\, \z_{-N})
    \end{bmatrix}.
\end{equation}
When the dependence on other agents' decision variables can be removed, that is, when
\begin{equation}\label{equ:viset}
    \Omega(\mathbf{z}) = \Omega \subseteq \R^n
\end{equation}
is a fixed closed and convex set, the QVI reduces to the classical variational inequality problem $VI(\Omega, F)$:
\begin{equation}\label{equ:VI}
    \begin{split}
        &\text{Find } \mathbf{z}^\star\in\Omega \text{ such that} \\
        &(\mathbf{z} - \mathbf{z}^\star)^\top F(\mathbf{z}^\star) \ge 0,
        \qquad \forall\, \mathbf{z} \in \Omega.
    \end{split}
\end{equation}

In \cite{harker-1991}, the author provided the first reduction from the QVI formulation to the VI formulation. The key result is given below.
\begin{theorem}[\cite{harker-1991}, Corollary 3.1]
    Let $\Omega$ and $\Omega(\mathbf{z})$ be defined as in \eqref{equ:viset} and \eqref{equ:qviset}, respectively, and suppose the following condition holds:
    \begin{equation}\label{equ:restric}
        \Omega(\z) \subseteq \Omega,
        \qquad
        \forall\, \z \in \Omega.
    \end{equation}
    Then every solution of the VI problem \eqref{equ:VI} is a GNE of the original GNEP. However, the converse does not hold in general.
\end{theorem}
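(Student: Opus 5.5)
Take $\z^\star$ solving $VI(\Omega,F)$ and show it solves $QVI(\Omega,F)$; since the GNEP is equivalent to the QVI under convexity of each $f_v$ in its own variable, $\z^\star$ is then a GNE. Two things must be checked: feasibility, $\z^\star\in\Omega(\z^\star)$, and the variational inequality over the smaller set $\Omega(\z^\star)$.

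\textbf{The variational inequality.} This step is immediate from \eqref{equ:restric}. Because $\z^\star\in\Omega$, we have $\Omega(\z^\star)\subseteq\Omega$. The inequality $(\z-\z^\star)^\top F(\z^\star)\ge0$ holds for every $\z\in\Omega$, so in particular it holds for every $\z\in\Omega(\z^\star)$.

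\textbf{Feasibility.} The inclusion \eqref{equ:restric} alone does not give $\z^\star\in\Omega(\z^\star)$. So I will use how $\Omega$ is built in the setting of \cite{harker-1991}: $\Omega$ is the joint feasible set, i.e.\ the collection of individual plus shared constraints. Every constraint defining $\Omega_v(\z^\star_{-v})$ (namely $h_v$, $\psi_v$, $g_v$ evaluated at $(z_v^\star,\z^\star_{-v})$) is then among those satisfied by $\z^\star\in\Omega$. Hence $z_v^\star\in\Omega_v(\z_{-v}^\star)$ for every $v$, and $\z^\star\in\Omega(\z^\star)$.

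\textbf{From the QVI to a GNE.} I will spell this out agent by agent. Fix $v$ and any $z_v\in\Omega_v(\z^\star_{-v})$. The point $\z=(z_v,\z^\star_{-v})$ lies in $\Omega(\z^\star)$, because the other blocks of $\z$ are the feasible $z_u^\star$ (by the feasibility step) and $\Omega(\z^\star)$ is a Cartesian product. Substituting this $\z$ into the inequality gives $(z_v-z_v^\star)^\top\nabla_{z_v}f_v(\z^\star)\ge0$. Since $f_v(\cdot,\z^\star_{-v})$ is convex and $\Omega_v(\z^\star_{-v})$ is convex, this first-order condition yields $f_v(z_v^\star,\z^\star_{-v})\le f_v(z_v,\z^\star_{-v})$.

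\textbf{The converse fails.} I expect this to be the main point needing care, and I will settle it with a two-player counterexample. Take scalars with $f_1=(z_1-1)^2$, $f_2=(z_2-1)^2$ and the shared constraint $z_1+z_2\le1$. Every point with $z_1+z_2=1$ and $z_1,z_2\in[0,1]$ is a GNE: each player is pushed against the constraint, and lowering $z_v$ only moves it away from $1$. However, the VI has the unique solution $(1/2,1/2)$, because $F$ is strongly monotone. So, for example, $(1/4,3/4)$ is a GNE that does not solve the VI.
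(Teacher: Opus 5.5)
Your proof of the first part is correct, and it goes further than the thesis. The paper gives no argument for this statement. It only cites Harker and illustrates the result in Example~\ref{exa:harker}, where $(9,6)$ is a GNE while the unique VI solution is $(5,9)$.

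You are right that $\z^\star\in\Omega(\z^\star)$ does not follow from \eqref{equ:restric} when $\Omega$ is merely a closed convex set, as in \eqref{equ:viset}. For example, take one player with $\Omega_1=\{1\}$, $\Omega=[0,2]$ and $F(z)=z$: the inclusion holds, yet the VI solution $0$ is infeasible. So reading $\Omega$ as the joint feasible set is a genuinely needed hypothesis.

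Once you adopt that reading, however, \eqref{equ:restric} is no longer needed. For $z_v\in\Omega_v(\z^\star_{-v})$, the point $(z_v,\z^\star_{-v})$ already lies in $\Omega$, because the constraints not involving $v$ are unchanged. You can substitute it directly into the VI and skip the passage through the QVI. That direct route is also the one that covers the examples, since \eqref{equ:restric} typically fails wherever a shared inequality is slack.

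This is the one point to repair. Your counterexample does not satisfy the theorem's hypothesis: at $\z=(0,0)\in\Omega$ one has $\Omega(\z)=\{y_1\le 1\}\times\{y_2\le 1\}\ni(1,1)\notin\Omega$. It shows that GNEs need not solve the VI, but not that the converse fails under \eqref{equ:restric}. Harker's game in the paper has the same defect: $\Omega(5,9)=[0,6]\times[0,10]\not\subseteq\Omega$.

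Replacing the constraint by the equality $z_1+z_2=1$ fixes this. Then $\Omega(\z)=\{(1-z_2,\,1-z_1)\}=\{\z\}\subseteq\Omega$ for every $\z\in\Omega$, so every point of the line is trivially a GNE. The VI over the line forces $F(\z)\perp(1,-1)$, which gives the unique solution $(1/2,1/2)$. So $(1/4,3/4)$ still serves as the counterexample, now with the hypothesis satisfied.
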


\begin{framed}
\begin{example}[Harker's Game \cite{harker-1991}]\label{exa:harker}
Consider the two-player GNEP
\begin{equation*}
    \begin{matrix*}[l]
        \min_{z_1} & f_1(z_1,\, \z_{-1})
        &
        \min_{z_2} & f_2(z_2,\, \z_{-2}) \\[0.2cm]
        \text{subject to} 
        & h_1(z_1) \le \mathbf{0},
        &
        \text{subject to} 
        & h_2(z_2) \le \mathbf{0} \\[0.1cm]
        &
        g_1(z_1,\, \mathbf{z}_{-1}) \le 0,
        &&
        g_2(z_2,\, \mathbf{z}_{-2}) \le 0,
    \end{matrix*}
\end{equation*}
where
\begin{equation*}
\begin{aligned}
    f_1(z_1,\, \z_{-1}) &= z_1^2 + \tfrac{8}{3} \, z_1 \, z_2 - 34 \, z_1, \\
    f_2(z_2,\, \z_{-2}) &= z_2^2 + \tfrac{5}{4} \, z_2 \, z_1 - 24.25\, z_2, \\[0.15cm]
    h_1(z_1) &= \begin{bmatrix} z_1 - 10 \\ -z_1 \end{bmatrix}, 
    \qquad
    h_2(z_2) = \begin{bmatrix} z_2 - 10 \\ -z_2 \end{bmatrix}, \\[0.15cm]
    g_1(z_1,\, \z_{-1}) &= g_2(z_2,\, \z_{-2}) = z_1 + z_2 - 15.
\end{aligned}
\end{equation*}

The operator $F$ is given by
\begin{equation*}
    F(\mathbf{z}) 
    = 
    \begin{bmatrix}
        \nabla_{z_1} f_1(z_1,\, \z_{-1}) \\
        \nabla_{z_2} f_2(z_2,\, \z_{-2})
    \end{bmatrix}
    =
    \begin{bmatrix}
        2 \, z_1 + \tfrac{8}{3} \, z_2 - 34 \\
        \tfrac{5}{4} \, z_1 + 2 \, z_2 - 24.25
    \end{bmatrix}.
\end{equation*}

The individual feasible sets are
\begin{equation*}
\begin{aligned}
    \Omega_1(\bar{z}_2)
        &= 
        \{ z_1 \mid z_1 \le -\bar{z}_2 + 15,\; 0 \le z_1 \le 10 \}, \\
    \Omega_2(\bar{z}_1)
        &= 
        \{ z_2 \mid z_2 \le -\bar{z}_1 + 15,\; 0 \le z_2 \le 10 \}.
\end{aligned}
\end{equation*}

\noindent\textbf{A QVI solution.}
Consider the point $\mathbf{z}^\star = (9,\, 6)$.  
We compute
\begin{equation*}
\begin{aligned}
    \Omega_1(6) 
        &= \{ z_1 \mid 0 \le z_1 \le 9 \}, \\
    \Omega_2(9) 
        &= \{ z_2 \mid 0 \le z_2 \le 6 \}.
\end{aligned}
\end{equation*}
Hence the Cartesian product is
\begin{equation*}
    \Omega(\mathbf{z}^\star)
    =
    \{ (z_1, z_2) \mid 0 \le z_1 \le 9,\; 0 \le z_2 \le 6 \}.
\end{equation*}

One can verify directly that
\begin{equation*}
    (\mathbf{z} - \mathbf{z}^\star)^\top F(\mathbf{z}^\star) \ge 0,
    \qquad
    \forall\, \mathbf{z} \in \Omega(\mathbf{z}^\star),
\end{equation*}
and therefore $(9,\, 6)$ is a solution of the associated QVI, and thus a GNE of the original GNEP.

\noindent\textbf{A variational-GNE (v-GNE).}
Consider instead the fixed feasible set
\begin{equation*}
    \Omega 
    =
    \{ (z_1, z_2) \mid z_1 + z_2 \le 15,\; 0 \le z_1 \le 10,\; 0 \le z_2 \le 10 \}.
\end{equation*}
The point $\mathbf{z}^\star = (5,\, 9)$ satisfies
\begin{equation*}
    (\mathbf{z} - \mathbf{z}^\star)^\top F(\mathbf{z}^\star) \ge 0,
    \qquad
    \forall\, \mathbf{z} \in \Omega,
\end{equation*}
and is therefore the solution of $VI(\Omega, F)$. This point is a variational-GNE, which differs from the QVI solution.

Figure~\ref{fig:harker_allsol} illustrates the full solution set of GNEs for Harker's Game. The red dot represents the v-GNE at $\z = (5, 9)$, and the light green region denotes the feasible set $\Omega$ for the corresponding VI problem. The red line segment from $\z = (9, 6)$ to $\z = (10, 5)$ shows the remaining GNEs. The dark green region represents the feasible set $\Omega(\z)$ for the QVI evaluated at $\z = (9, 6)$. The figure illustrates that the GNEP admits multiple solutions, even though the v-GNE is unique in this instance. It also highlights that the solution set of the GNEP may consist of several disconnected components, which presents challenges for algorithm design.
\end{example}
\end{framed}
\begin{figure}
    \centering
    \includegraphics[width=0.8\linewidth]{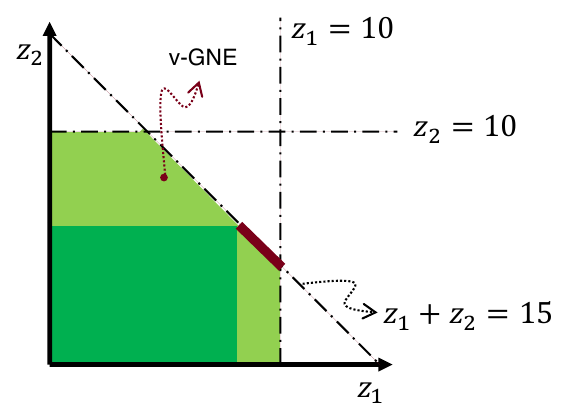}
    \caption{The solution set of Harker's game.}
    \label{fig:harker_allsol}
\end{figure}

Finally, the uniqueness of the VI solution is often guaranteed through the assumption of strong monotonicity, which is frequently used to establish the existence of a GNE.

\begin{definition}[Strongly Monotone Game]
    We say that the GNEP in \eqref{equ:GNEP_All} is strongly monotone if there exists a constant $\delta > 0$ such that for all $\mathbf{z}, \mathbf{z}' \in \mathbb{R}^n$,
    \begin{equation}
        (\mathbf{z} - \mathbf{z}')^\top 
        \big( F(\mathbf{z}) - F(\mathbf{z}') \big)
        \ge 
        \delta \, \|\mathbf{z} - \mathbf{z}'\|^2.
    \end{equation}
    Moreover, if $F$ is continuously differentiable, this condition is equivalent to
    \begin{equation}
        y^\top \nabla F(\mathbf{z})\, y 
        \ge 
        \delta \, \|y\|^2,
        \qquad
        \forall\, \mathbf{z} \in \mathbb{R}^n,\; y \in \mathbb{R}^n,
    \end{equation}
    where $\nabla F(\mathbf{z})$ denotes the Jacobian of $F$.
\end{definition}

Strong monotonicity ensures the existence of a unique solution to the variational inequality $VI(\Omega, F)$, which in turn guarantees the existence of a solution to the GNEP \cite{glynn-robinson-facchinei-2004}. Importantly, this holds despite the potential \textbf{non-uniqueness of the GNE itself}.  

The strong monotonicity assumption is widely used in existing distributed GNE algorithms and applies to many engineering applications that require distributed computation \cite{zhou-generalized-2005, maiorano-dynamics-2000, yi-pavel-2019, cenedese-asynchronous-2021, huang-distributed-2021, bianchi-continuous-time-2021}. In classical Nash Equilibrium Problems, strong monotonicity guarantees a unique equilibrium. In GNEPs, however, it only guarantees a unique v-GNE (variational-GNE), while general GNEs may still be non-unique. This is why most existing distributed algorithms enforce consensus among multipliers so that the solution lies within the v-GNE class.

\begin{theorem}[\cite{glynn_robinson_facchinei_2004}, Theorem. 2.3.3]
    If $VI(\Omega, F)$ is strongly monotone, then there exists a unique solution $\z^\star$.
\end{theorem}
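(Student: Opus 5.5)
Uniqueness and existence can be handled separately. Throughout, I take $\Omega$ to be nonempty, closed and convex, as in \eqref{equ:viset}, and $F$ to be continuous and strongly monotone with modulus $\delta>0$.

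\textbf{Uniqueness.} Suppose $\z^\star$ and $\z^{\star\star}$ both solve $VI(\Omega,F)$.
\begin{itemize}
\item Test the inequality for $\z^\star$ with $\z=\z^{\star\star}$, giving $(\z^{\star\star}-\z^\star)^\top F(\z^\star)\ge 0$.
\item Test the inequality for $\z^{\star\star}$ with $\z=\z^\star$, giving $(\z^\star-\z^{\star\star})^\top F(\z^{\star\star})\ge 0$.
\end{itemize}
Adding the two yields $(\z^\star-\z^{\star\star})^\top\big(F(\z^\star)-F(\z^{\star\star})\big)\le 0$. Strong monotonicity then gives $\delta\|\z^\star-\z^{\star\star}\|^2\le 0$, so $\z^\star=\z^{\star\star}$.

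\textbf{Existence via a fixed point.} I will write $VI(\Omega,F)$ as a fixed-point problem for the projection map $T(\z)=\Pi_\Omega(\z-\gamma F(\z))$, with $\gamma>0$. The variational characterization of the Euclidean projection onto a closed convex set shows that $\z^\star$ solves $VI(\Omega,F)$ if and only if $\z^\star=T(\z^\star)$.

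If $F$ is additionally $L$-Lipschitz, expanding the square gives
\[
\|T(\z)-T(\z')\|^2\le \|\z-\z'\|^2-2\gamma\delta\|\z-\z'\|^2+\gamma^2L^2\|\z-\z'\|^2 ,
\]
using nonexpansiveness of $\Pi_\Omega$ and strong monotonicity. For $\gamma<2\delta/L^2$, $T$ is a contraction on the complete set $\Omega$, and Banach's fixed-point theorem gives existence.

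\textbf{Main obstacle: $F$ only continuous.} Without a Lipschitz bound the contraction argument fails, so I would instead use a coercivity argument.
\begin{itemize}
\item Fix $\z_0\in\Omega$. Strong monotonicity gives
\[
(\z-\z_0)^\top F(\z)\ge \delta\|\z-\z_0\|^2-\|F(\z_0)\|\,\|\z-\z_0\|,
\]
which is strictly positive once $\|\z-\z_0\|>R:=\|F(\z_0)\|/\delta$.
\item Restrict to the compact convex set $\Omega_R=\Omega\cap \bar B(\z_0,R+1)$. Apply Brouwer's fixed-point theorem to $\Pi_{\Omega_R}(\z-F(\z))$, which is continuous and maps $\Omega_R$ into itself. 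This yields a solution $\bar\z$ of $VI(\Omega_R,F)$ (the Hartman--Stampacchia theorem).
\item The coercivity estimate forces $\|\bar\z-\z_0\|\le R$, so $\bar\z$ is not on the artificial boundary of the ball.
\item For an arbitrary $\z\in\Omega$, the point $\bar\z+t(\z-\bar\z)$ lies in $\Omega_R$ for small $t>0$, because $\Omega$ is convex and $\bar\z$ is interior to the ball. Hence $t(\z-\bar\z)^\top F(\bar\z)\ge 0$, which extends the inequality to all of $\Omega$.
\end{itemize}
Thus $\bar\z$ solves $VI(\Omega,F)$. Combined with the uniqueness step, this gives exactly one solution $\z^\star$.
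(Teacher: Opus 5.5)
Your proof is correct. The paper gives no argument of its own and only cites Facchinei--Pang, Thm.~2.3.3; your route is the textbook proof behind that citation: uniqueness by adding the two variational inequalities and applying strong monotonicity, and existence from the coercivity estimate combined with Hartman--Stampacchia (Brouwer) on the truncated compact set $\Omega\cap\bar B(\z_0,R+1)$, under the hypotheses you rightly make explicit (continuous $F$; nonempty, closed, convex $\Omega$). The Lipschitz/Banach paragraph is not needed for the general statement, and it would not cover the paper's setting, where $F$ is only locally Lipschitz; it is harmless, though, and gives a constructive alternative when a global Lipschitz constant is available.
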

\subsection{KKT System and GNEs}
Although the above discussion establishes the connections between the QVI and VI formulations and the original GNEP, these relationships are not directly useful for algorithm design, since QVI algorithms remain relatively immature and lack practical distributed implementations. On the other hand, when suitable constraint qualifications are satisfied for all agents, we can derive the KKT conditions for each individual optimization problem and stack them together. This KKT system provides a more tractable framework for algorithm development and serves as the basis for many existing approaches.

\begin{assumption}[KKT System]\label{ass:KKT_system}
    Let $\mathbf{z}_{-v}^\star$ be given, and suppose that $z_v^\star$ is a solution of the GNEP \eqref{equ:GNEP_All}. Then, for each agent $v$, there exist multipliers 
        \begin{equation*}
        \lambda_{v,e}^\star \in \R^{l_v}, \qquad
        \lambda_{v,i}^\star \in \R_{\ge0}^{m_v}, \qquad
        \mu_v^\star \in \R_{\ge0}^{q_v},
        \end{equation*}
    such that
\begin{equation}\label{equ:kkt} 
\begin{split} 
&L_v(\z, \lambda_{v, e}, \lambda_{v, i}, \mu_v): = f_v(z_v, \z_{-v}) + \psi_{v}^\top(z_v, \z_{-v})\,\lambda_{v, e} + g_{v}^\top(z_v, \z_{-v})\,\lambda_{v, i} + h_v^\top(z_v) \mu_v\\ 
&\nabla_{z_v} L_v(\z^\star, \lambda_{v, e}^\star, \lambda_{v, i}^\star, \mu_v^\star) = \mathbf{0}_{n_v} \\ 
& \psi_{v}(z_v^\star,\, \z_{-v}^\star) = \mathbf{0}_{l_v}\\ 
&\mathbf{0}_{m_v} \le \lambda_{v, i}^\star \, \bot \, g_{v}
(z_v^\star, \z_{-v}^\star) \le \mathbf{0}_{m_v}\\ 
&\mathbf{0}_{q_v} \le \mu_v^\star \, \bot \, h_{v}( z_v^\star) \le \mathbf{0}_{q_v} 
\end{split}. 
\end{equation}
For two vectors $a,b \in \mathbb{R}^n$, the compact complementarity notation
\begin{equation*}
    \mathbf{0}_n \le a \,\bot\, b \le \mathbf{0}_n
\end{equation*}
simultaneously expresses primal feasibility, dual feasibility, and complementary slackness:  
(i) $a \ge \mathbf{0}_n$, (ii) $b \le \mathbf{0}_n$, and (iii) $a_i b_i = 0$ for all $i$.
\end{assumption}

Assumption \ref{ass:KKT_system} implies that a suitable constraint qualification holds for each agent. Under the convexity assumptions  stated in Assumption~\ref{ass:standard_Eq}, Section~4.2 of \cite{facchinei-generalized-2010} shows that $\mathbf{z}^\star$ is a GNE if and only if it satisfies the KKT system \eqref{equ:kkt_eq}. In addition, we assume the Linear Independence Constraint Qualification (LICQ).

\begin{theorem}[\cite{facchinei_generalized_2007}, Theorem 3.1]\label{thm:consensus}
    Consider the VI problem $VI(K,F)$ defined in \eqref{equ:VI}.  
    Suppose that:
    \begin{enumerate}[(a)]
        \item for every agent $v$, the function $f_v(\cdot)$ is continuously differentiable in $\mathbf{z}$;
        \item for every agent $v$, the function $f_v(z_v, \mathbf{z}_{-v})$ is pseudo-convex in $z_v$.
    \end{enumerate}
    Let $\mathbf{z}^\star$ be a KKT point of the GNEP \eqref{equ:GNEP_All}.  
    Assume that for each shared constraint index $i$, all agents involved in that constraint share the same multiplier value, that is,
    \[
        \lambda_{1,i}^\star = \lambda_{2,i}^\star = \cdots = \lambda_{N,i}^\star = \bar{\lambda}_i.
    \]
    Then $\mathbf{z}^\star$ is a solution of the VI problem $VI(K,F)$. In particular, $\mathbf{z}^\star$ is a variational-GNE.
\end{theorem}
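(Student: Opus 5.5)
The plan is to show that, once the shared multipliers agree, the stacked KKT system \eqref{equ:kkt} is exactly the KKT system of the single variational inequality $VI(K,F)$, and then to use convexity of the constraints to turn that KKT system into the variational inequality itself. Throughout, $K$ is the fixed joint set
\[
K=\{\z \mid h_v(z_v)\le \mathbf{0}_{q_v}\ \forall v,\ \psi(\z)=\mathbf{0},\ g(\z)\le \mathbf{0}\},
\]
and I rely on standing assumptions the statement does not list explicitly. The shared constraints are assumed to be genuinely common functions, $g_v\equiv g$ and $\psi_v\equiv\psi$ restricted to the rows involving agent $v$. I also assume $h_v$ and $g$ are convex and $\psi$ is affine, as in the linear shared-constraint setting of the paper. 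If these fail, the statement as written is not available, so I would state them up front.

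\textbf{Step 1 (stacking).} For a shared row $i$, the stationarity block of agent $v$ contains $\nabla_{z_v} g_i(\z^\star)\,\lambda^\star_{v,i}$. Under the consensus hypothesis $\lambda^\star_{v,i}=\bar\lambda_i$, stacking these blocks over $v$ gives
\[
\big[\nabla_{z_1}g_i(\z^\star)^\top,\dots,\nabla_{z_N}g_i(\z^\star)^\top\big]^\top\bar\lambda_i=\nabla_{\z}g_i(\z^\star)\,\bar\lambda_i .
\]
Here a block is zero whenever agent $v$ does not appear in row $i$. The same holds for the equality rows with a common $\bar\lambda_e$. The individual terms stack block-diagonally into $\nabla_{\z}h(\z^\star)\mu^\star$ with $\mu^\star\ge 0$. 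Hence
\[
F(\z^\star)+\nabla g(\z^\star)\bar\lambda+\nabla\psi(\z^\star)\bar\lambda_e+\nabla h(\z^\star)\mu^\star=\mathbf{0}_n .
\]
The feasibility and complementarity conditions in \eqref{equ:kkt} carry over unchanged, so $\z^\star\in K$.

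\textbf{Step 2 (sufficiency).} Fix any $\z\in K$ and take the inner product of the identity from Step 1 with $\z-\z^\star$. Each term is handled separately:
\begin{itemize}
\item By convexity of $g_i$ and $\bar\lambda_i\ge0$, we get $\bar\lambda_i\nabla g_i(\z^\star)^\top(\z-\z^\star)\le\bar\lambda_i\big(g_i(\z)-g_i(\z^\star)\big)$. Complementarity gives $\bar\lambda_i g_i(\z^\star)=0$, and $g_i(\z)\le 0$, so the right-hand side is $\bar\lambda_i g_i(\z)\le 0$.
\item The same argument applies to the $h$ terms with $\mu^\star$.
\item Since $\psi$ is affine, $\nabla\psi(\z^\star)^\top(\z-\z^\star)=\psi(\z)-\psi(\z^\star)=\mathbf{0}$.
\end{itemize}
Therefore
\[
(\z-\z^\star)^\top F(\z^\star)=-\bar\lambda^\top\nabla g^\top(\z-\z^\star)-\mu^{\star\top}\nabla h^\top(\z-\z^\star)\ge 0 .
\]
This is exactly \eqref{equ:VI}, so $\z^\star$ solves $VI(K,F)$. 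By the definition of a variational-GNE (or by the Harker corollary, which also confirms it is a GNE), $\z^\star$ is a variational-GNE.

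Hypotheses (a) and (b) enter only to guarantee that $F$ is well defined and that KKT points are GNEs. The direction above needs only differentiability and constraint convexity.

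The step I expect to be the main obstacle is the bookkeeping in Step 1. Each agent's stationarity uses only the partial gradient $\nabla_{z_v}$ and only the rows of the shared constraint in which it participates. I must check carefully that, with equal multipliers, these partial blocks reassemble into the full gradient $\nabla_{\z}(g^\top\bar\lambda)$, and not into some agent-weighted version of it. This reassembly is precisely where the consensus hypothesis is indispensable: with unequal $\lambda_{v,i}^\star$, the stacked vector is not the gradient of any single Lagrangian term, and Step 2 breaks down.
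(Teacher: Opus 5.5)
Your argument is correct and is essentially the proof in the cited source; the paper itself only cites \cite{facchinei_generalized_2007} and gives no proof. The argument is that, with a common multiplier per shared row, the stacked player-wise KKT conditions coincide with the KKT system of $VI(K,F)$, and convexity of $g$ and $h_v$ together with affinity of $\psi$ makes those conditions sufficient for the variational inequality. The standing assumptions you make explicit (common shared constraint functions, convex $g$ and $h_v$, affine $\psi$) are the jointly convex setting of that reference, and they are genuinely needed because the statement as transcribed omits them.
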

The above theorem states that if a v-GNE exists, then enforcing consensus among the Lagrangian multipliers corresponding to each shared constraint ensures that the resulting KKT point coincides with the v-GNE. This establishes a fundamental connection between consensus on Lagrangian multipliers and v-GNEs, and it has become a cornerstone of modern distributed algorithm design. Building on this principle, existing distributed methods \cite{yi-pavel-2019, huang-distributed-2021, bianchi-continuous-time-2021, cenedese-asynchronous-2021} explicitly enforce multiplier consensus in order to compute a v-GNE.

However, convergence to the unique v-GNE typically requires introducing auxiliary multiplier variables and enforcing consensus after their dynamics stabilize. This increases algorithmic complexity and often demands more information exchange among agents than is intrinsically necessary, which may limit scalability in large or communication-constrained systems.

\begin{framed}
\begin{example}[Harker's Game \cite{harker-1991}]\label{exa:harker_KKT}
Continue the Harker's Game example from Example~\ref{exa:harker}.  
The two-player GNEP is
\begin{equation*}
    \begin{matrix*}[l]
        \min_{z_1} & f_1(z_1,\, z_2)
        &
        \min_{z_2} & f_2(z_2,\, z_1) \\[0.2cm]
        \text{subject to} 
        & h_1(z_1) \le \mathbf{0},
        &
        \text{subject to} 
        & h_2(z_2) \le \mathbf{0} \\[0.1cm]
        &
        g_1(z_1,\, z_2) \le 0,
        &&
        g_2(z_2,\, z_1) \le 0.
    \end{matrix*}
\end{equation*}

The Lagrangians for the two agents are
\begin{equation}\label{eq:harker_Lag}
\begin{split} 
L_1(\z, \, \lambda_{1}, \, \mu_1) = &z_1^2 + \tfrac{8}{3} \, z_1 \, z_2 - 34 \, z_1 + \lambda_1 \, (z_1 + z_2 - 15) \\ 
&+ \mu_1^1\, (z_1 - 10)+ \mu_{1}^2 \, (-z_1)\\ L_1(\z, \, \lambda_{2}, \, \mu_2) = &z_2^2 + \tfrac{5}{4} \, z_2 \, z_1 - 24.25\, z_2 + \lambda_2 \, (z_1 + z_2 - 15) \\ 
&+ \mu_2^1\, (z_2 - 10)+ \mu_{2}^2 \, (-z_2)\\ 
\end{split} 
\end{equation}

The KKT conditions for the GNEP are
\begin{equation*}
\begin{split}
    \text{(Stationarity)} \quad
    &2 z_1 + \tfrac{5}{4} z_2 - 34 + \lambda_1 + \mu_1^1 - \mu_1^2 = 0, \\
    &2 z_2 + \tfrac{8}{3} z_1 - 24.25 + \lambda_2 + \mu_2^1 - \mu_2^2 = 0, \\[0.2cm]
    \text{(Primal feasibility)} \quad
    &z_1 + z_2 - 15 \le 0,\quad 0 \le z_1 \le 10,\quad 0 \le z_2 \le 10, \\[0.2cm]
    \text{(Dual feasibility)} \quad
    &\lambda_1 \ge 0,\; \lambda_2 \ge 0, \qquad \mu_1^1,\mu_1^2,\mu_2^1,\mu_2^2 \ge 0, \\[0.2cm]
\end{split}
\end{equation*}
\begin{equation*}
    \begin{split}
    \text{(Complementary slackness)} \quad
    &\lambda_1 (z_1 + z_2 - 15) = 0,\quad 
     \lambda_2 (z_1 + z_2 - 15) = 0, \\
    &\mu_1^1 (z_1 - 10) = 0,\quad \mu_1^2 (-z_1) = 0, \\
    &\mu_2^1 (z_2 - 10) = 0,\quad \mu_2^2 (-z_2) = 0. 
    \end{split}
\end{equation*}
\noindent\textbf{A variational-GNE (v-GNE).}  
If we enforce consensus of multipliers and set  
\begin{equation*}
    \lambda_1 = \lambda_2 = 0,
\end{equation*}
then solving it yields the unique v-GNE:
\begin{equation*}
    (z_1^\star,\, z_2^\star) = (5,\, 9).
\end{equation*}
\end{example}
\end{framed}

\section{Main Existing Algorithms}
In the preceding sections, I introduced the GNEP and its fundamental properties, including its connections to the QVI and VI formulations, as well as the corresponding KKT systems. However, these structural properties do not directly imply practical algorithms for computing GNEs. In this section, we present several seminal classical algorithms developed for both centralized and distributed computing frameworks.

It is important to emphasize that the key distinction between centralized and distributed computing lies in the level of information sharing. In a centralized setting, it is typically assumed that a global coordinator has access to all problem data, including the local cost functions and constraints of each agent, and computes a solution to the overall problem using this global information. In contrast, such an assumption is often unrealistic in practice: local objectives and constraints are frequently private, and agents may be unwilling or unable to share them. This motivates the need for distributed approaches, where agents compute their decisions using only local information and limited communication with others, without relying on global access to the full problem data.

For this reason, distributed computing algorithms have been developed, in which agents compute their decisions using only local information and limited communication with neighboring agents. To the best of our knowledge, regardless of the specific distributed architecture, all existing distributed GNE algorithms focus exclusively on computing v-GNEs, and they accomplish this by enforcing consensus among the Lagrangian multipliers associated with the shared constraints. 

The goal of our work is to investigate whether it is possible to design a distributed algorithm that does not rely on multiplier consensus, especially given that a v-GNE is only one particular solution of the GNEP and does not represent the entire equilibrium set.

\subsection{Central Computing}
In centralized computing, there are two main categories of algorithmic approaches that can potentially converge to a Generalized Nash Equilibrium without restricting the solution to be a v-GNE. In other words, these methods can converge to any solution of the QVI reformulation and are not limited to solutions of the corresponding VI.

\subsubsection{Augmented Penalty Methods}
These algorithms aim to simplify the GNEP by reducing it to a sequence of standard Nash Equilibrium Problems (NEPs) via a penalty mechanism. At iteration $k$, the algorithm introduces a vector of penalty parameters
\begin{equation*}
    \rho(k) := \big(\rho_v(k)\big)_{v \in \mathcal{V}},
\end{equation*}
where each component $\rho_v(k) > 0$ is associated with agent $v$. The vector $\rho(k)$ is treated as fixed when solving the inner problem.

Given $\rho(k)$, the algorithm solves an NEP in which each agent $v$ minimizes a penalized objective, yielding the updated decision $\mathbf{z}(k+1)$. The penalty vector is then updated component-wise based on the new iterate, resulting in $\rho(k+1)$.

In particular, for each agent $v$, the penalty-based approach requires solving the following optimization problem:
\begin{equation}
    \min_{z_v} P_v(\z, \rho_v(k)) := f_v(z_v,\, \z_{-v}) + \rho_v(k)\,\big\|\big[ g_v(z_v, \z_{-v}) \big]_{+}\big\|_\gamma,
\end{equation}
where $\rho_v(k)$ is the $v$-th component of the penalty vector at iteration $k$, and
\begin{equation*}
    \big[ g_v(z_v, \z_{-v}) \big]_{+} := \max\{0,\, g_v(z_v, \z_{-v})\}
\end{equation*}
is applied component-wise.

By appropriately updating the penalty vector $\rho(k)$ and repeatedly solving the induced NEPs, these algorithms can converge to a solution of the original GNEP under suitable assumptions on the problem structure.

However, although each inner problem is an NEP, the penalized objective $P_v(\cdot)$ is generally not differentiable due to the presence of the term $\big\|\big[ g_v(\cdot) \big]_{+}\big\|_\gamma$. Consequently, even though the reformulated NEP is structurally simpler than the original GNEP, the lack of differentiability may still introduce numerical challenges.

To address this non-differentiability issue, \cite{kanzow_augmented_2016} proposes Algorithm~\ref{alg:penalty}, which remains the state-of-the-art method in this category. While the objective functions in \eqref{equ:inner_p} are first-order continuous, the induced gradient mapping is not continuous. As a result, although one could in principle apply a zero-finding method on the gradient to solve the inner NEP, such an approach is not distributed, since evaluating the gradient requires global information.

As an extreme case, consider setting $u_{\max} = 0$. In this case, $u(k) = 0$ for all $k$. When the algorithm terminates at iteration $k = T$, it indicates that the KKT conditions of the problem in \eqref{equ:inner_p} (line 3 of Algorithm~\ref{alg:penalty}) are satisfied at $\mathbf{z}(T)$ with the corresponding value of $\rho(T)$. More precisely, the gradient of the cost function in \eqref{equ:inner_p} can be written as
\begin{equation}
    \nabla_{z_v} f_v(z_v, \, \z_{-v}) 
    + \left [ \rho_v(T) \cdot g_v(z_v(T), \z_{-v}(T)) \right]_{+} \, \nabla_{z_v} g_v(z_v, \, \z_{-v}),
\end{equation}
which shows that the gradient of the Lagrangian of the original GNEP is zero at iteration $T$, since line 4 sets
\begin{equation*}
    \lambda_v(T) = \left[ \rho_v(T) \cdot g_v(z_v(T), \, \z_{-v}(T)) \right]_{+}.
\end{equation*}
In addition, lines 6 and 7 specify that the update of $\rho_v$ stops once
\begin{equation*}
    \left\| \min \left \{ -g_v(\z(k+1)),\; \lambda_v(k+1) \right \} \right\|
\end{equation*}
is sufficiently small. If
\begin{equation*}
    \left\| \min \{ -g_v(\mathbf{z}(T)),\; \lambda_v(T) \} \right\| = 0,
\end{equation*}
then $\lambda_v(T) = 0$ when the constraint is inactive, and otherwise the constraint is active.

The overall feasibility of this method depends critically on the ability to solve the subproblems in line 3 of Algorithm~\ref{alg:penalty}, that is, the optimization problem in \eqref{equ:inner_p}. Unfortunately, there is still no established algorithmic framework for solving these inner problems reliably. In \cite{kanzow_augmented_2016}, the authors selected a Levenberg--Marquardt-type method as a heuristic, but its global convergence properties remain unclear. The approach is centralized because the inner NEP involves a discontinuous projection, and there is no distributed algorithm available to solve it.

\begin{algorithm}
\caption{Augmented Lagrangian Method for GNEPs \cite{kanzow_augmented_2016}}
\label{alg:penalty}
\begin{algorithmic}[1]
\STATE \textbf{Input:} $(\z^{0},\lambda^{0},u^{0},\rho^{0})$, $\tau \in (0,1)$, $u_{\max}\ge 0$, $\gamma>1$; set $k \gets 0$.
\WHILE{$(\z^{k},\lambda^{k})$ is not an approximate KKT point}
    \STATE Compute $\z^{k+1}$ with augmented objectives for all $v = 1, \dots,N$: 
    \begin{equation}\label{equ:inner_p}
    \min_{z_v}\;
    f_v(\z)
    +
    \frac{1}{2}
    \big\|
    \big[g(\z) + (\rho^{k})^{-1} u^{k}\big]_+
    \big\|_{\rho^{k}}^{2},
    \, \text{ where } \|x\|_{\rho^{k}}^{2} := x^\top \operatorname{diag}(\rho^{k}) \, x.
    \end{equation}
    \STATE $\displaystyle
    \lambda^{k+1}
    \gets
    \big[u^{k} + \operatorname{diag}(\rho^{k})\, g(\z^{k+1})\big]_+.
    $ \hfill (Update multipliers).
    \STATE $\rho^{k+1} \gets
    \begin{cases}
    \gamma\,\rho^{k}, &
    \text{if }
    \|\min\{-g(\z^{k+1}),\lambda^{k+1}\}\|
    >
    \tau\,\|\min\{-g(\z^{k}),\lambda^{k}\}\|,\\
    \rho^{k}, & \text{otherwise}.
    \end{cases}$
    \STATE $u^{k+1} \gets \min\{\lambda^{k+1},\,u_{\max}\}$ \hfill (elementwise).
    \STATE $k \gets k+1$.
\ENDWHILE
\end{algorithmic}
\end{algorithm}

\subsubsection{Parameterized Variational Inequality}
Similar to augmented penalty methods that reduce the GNEP to an NEP, parameterized VI algorithms aim to simplify the QVI by fixing certain parameters in advance and thereby converting the problem into a VI. A major difficulty for these methods is that the solution of the resulting parameterized VI is not guaranteed to be a GNE of the original problem. Consequently, the algorithms must test different parameter values until a GNE is found.

One approach is to treat the Lagrangian multiplier as a fixed parameter. For a given multiplier $\lambda_v$, each agent $v$ solves
\begin{equation}
\begin{matrix*}[l]
        \min_{z_v} & f_v(z_v, \, \z_{-v}) + g_v(z_v, \, \z_{-v})^\top \lambda_v \\
        \text{subject to} & z_v \in \Omega.
\end{matrix*}
\end{equation}
The algorithm then attempts to compute a solution of the VI problem $VI(F^\lambda, K)$, where
\begin{align*}
    F^\lambda(\mathbf{z}) = 
    \begin{bmatrix}
        \nabla_{z_1} f_1(z_1, \, \z_{-1}) + \nabla_{z_1} g_1(z_1, \, \z_{-1})^\top \lambda_1 \\
        \nabla_{z_2} f_2(z_2, \, \z_{-2}) + \nabla_{z_2} g_2(z_2, \, \z_{-2})^\top \lambda_2 \\
        \vdots \\
        \nabla_{z_N} f_N(z_N, \, \z_{-N}) + \nabla_{z_N} g_N(z_N, \, \z_{-N})^\top \lambda_N \\
    \end{bmatrix}.
\end{align*}
This parametrization technique was first proposed by \cite{nabetani_parametrized_2011}, known as price-directed parametrization, and was later extended in \cite{migot_parametrized_2020}.

Another approach is to vary the initial feasible point $\mathbf{z}(0)$ and treat it as a fixed parameter. Algorithm~\ref{alg:PVI} presents a classical parameterized VI method. By exploring different initial conditions, the algorithm may identify a GNE, and this approach is not restricted to obtaining a v-GNE.

These methods depend heavily on advances in VI solvers. Furthermore, since a solution to a parameterized VI is not necessarily a GNE, the algorithm typically selects an initial condition at random and iteratively explores different starting points until it identifies a solution that satisfies the KKT conditions of the original GNEP.

\begin{algorithm}
\caption{Parameterized-VI Algorithm \cite{facchinei_computation_2011}}
\label{alg:PVI}
\begin{algorithmic}[1]
\STATE \textbf{Input:} choose any $\z^{0} \in \Omega$; set $k \gets 0$.
\WHILE{true}
    \STATE Find $\z^{k+1} \in K(\z^{k})$ such that
    $\displaystyle
    \begin{bmatrix}
        \nabla_{z_1} f_{1}(\z^{k+1}) \\
        \nabla_{z_2} f_{2}(\z^{k+1}) \\
        \vdots \\
        \nabla_{z_N} f_{N}(\z^{k+1})
    \end{bmatrix}^{\!\top}
    (y - \z^{k+1})
    \ge 0,
    \qquad \forall\, y \in \Omega(\z^{k}).
    $
    \IF{$\z^{k+1}$ is a solution of the GNEP}
        \STATE \textbf{Output:} $\z^{k+1}$; \textbf{stop}.
    \ELSE
        \STATE $k \gets k+1$.
    \ENDIF
\ENDWHILE
\end{algorithmic}
\end{algorithm}

\subsection{Distributed Computing}
In distributed computing, existing algorithmic strategies can generally be classified into two categories: discrete-time dynamics and continuous-time dynamics. Existing algorithms primarily focus on strongly monotone games with linear shared constraints (i.e., $A\, \z - \c \le 0$ or $A\, \z - \c = 0$). To the best of my knowledge, all existing distributed algorithms rely on enforcing consensus among the Lagrange multipliers, and their convergence guarantees typically require the uniqueness of the fixed point. As a result, these methods converge only to v-GNEs, meaning that the computed solution corresponds to the VI reformulation rather than a general solution of the QVI.

\subsubsection{Discrete-Time Dynamics}
In \cite{yi-pavel-2019}, Algorithm~\ref{alg:discre} was proposed for a network game. Line 3 of the algorithm shows that each $\mathbf{z}_v$ evolves according to a gradient descent step on a local Lagrangian function. As a result, $z_v$ reaches a stationary point when 
\begin{equation*}
        \nabla_{z_v} L_v(z_v, \lambda_v) = 0
\end{equation*}

The variable $\xi_v$ is introduced to facilitate satisfaction of the coupling constraints and to enforce agreement among local multipliers. From line 4, $\xi_v$ reaches a stationary state when 
\begin{equation*}
    \lambda_v = \lambda_u \quad \text{for all } (u, v) \in -v.
\end{equation*}

In line 5, $\lambda_v$ is updated through a combination of projected gradient ascent on the local Lagrangian and proportional–integral dynamics that compensate for consensus errors among the multipliers. The multiplier $\lambda_v$ reaches a stationary point when the shared constraint is active, or it is projected to zero when the constraint is inactive.

Consequently, when the algorithm converges, it converges to a KKT point of the GNEP with consensus among all $\lambda_v$. This corresponds to a v-GNE, since agreement of the multipliers ensures that the limiting point solves the VI reformulation rather than a general QVI.

To establish the convergence of the algorithm, the authors relied on existing convergence results from operator splitting theory. In particular, they showed that, with suitable choices of the step sizes $\tau_v$, $\nu_v$, and $\sigma_v$ for each agent $v$, the operator form induced by the algorithm can be written as a fixed-point iteration in which the associated operator is averaged. An averaged operator is a generalization of a contraction operator and still guarantees convergence of the corresponding fixed-point iteration. As a result, the algorithm converges to the desired fixed point.

\begin{algorithm} 
\begin{algorithmic}[1] 
\STATE {\bfseries Input:} Initialized $z_v(0)$, $\lambda_v(0)$, $\xi_v(0)$. $\tau_v$, $\nu_v$, $\sigma_v>0$ for player $v$. 
\WHILE{ Not Converge } 
\STATE $ z_v(k+1) \leftarrow P_{\Omega_v} \left \{ z_v(k) - \tau_v \left ( \nabla_{z_v} f_v (\z(k)) + A_v^\top \lambda_v(k) \right ) \right \}$ 
\STATE $ \xi_v(k+1) \leftarrow \xi_v(k) + \nu_v \sum_{u \in -v}{\left (\lambda_v(k) - \lambda_u(k) \right)} $ 
\STATE \begin{flalign*} \lambda_v(k+1) \leftarrow &P_{\R_{\ge 0}^{m_v}} \{ \lambda_v(k) - \sigma_v [-A_vz_v(k+1)+ c_v \\ &- A_v(z_v(k+1) - z_v(k))+ \sum_{u \in -v}{ (\lambda_v(k) - \lambda_u(k))} \\ &+ \sum_{u \in -v}{ 2(\xi_v(k+1) - \xi_u(k+1))}- \sum_{u \in -v }{(\xi_v(k) - \xi_u(k))}] \} 
\end{flalign*} 
\STATE $k \leftarrow k +1$. 
\ENDWHILE 
\end{algorithmic} 
\caption{Consensus-based algorithm for each agent $v$, \cite{yi-pavel-2019}} 
\label{alg:discre} 
\end{algorithm}

\subsubsection{Continuous-Time Dynamics}
A direct continuous-time counterpart of Algorithm \ref{alg:discre} is presented in Algorithm \ref{alg:continuous_pd}, following \cite{bianchi-continuous-time-2021}. This algorithm is obtained via a multi-integrator continuous-time formulation of the discrete scheme and aims to compute a v-GNE using augmented multiplier dynamics. The convergence proof is based on a standard Lyapunov argument. The Lyapunov function is straightforward to construct because the underlying operator is monotone.

\begin{algorithm}
\begin{algorithmic}[1]
\STATE {\bfseries Input:} Initialize $z_v(0)\in\Omega_v$, $\lambda_v(0)\in\R_{\ge0}^{m_v}$, $\xi_v(0)$.
\STATE $\dot z_v =
\Pi_{\Omega_v}\!\big[
z_v,\,
-\nabla_{z_v} f_v(\z) - A_v^\top \lambda_v
\big].$
\STATE $\dot \xi_v =
\sum_{u \in -v}\big(\lambda_v-\lambda_u\big).$
\STATE $\dot \lambda_v =
\Pi_{\mathbb{R}_{\ge 0}^{m_v}}\!\Big[
\lambda_v,\,
- A_v z_v + c_v
- A_v \dot z_v
+ \sum_{u \in-v}\big(\lambda_v-\lambda_u\big)
+ \sum_{u \in -v}
\big(2(\xi_v-\xi_u)-(\xi_v-\xi_u)\big)
\Big].$
\end{algorithmic}
\caption{Continuous-time primal--dual operator splitting dynamics (agent $v$)}
\label{alg:continuous_pd}
\end{algorithm}

Moreover, in \cite{lin_distributed_2022}, Algorithm~\ref{alg:continuous} proposes a continuous-time method based on singular perturbation dynamics, which achieves exponential convergence rather than asymptotic convergence. The authors restrict their analysis to cost functions of the form
\begin{equation*}
        f_v(z_v,\, \z) = f_v\big(z_v,\, \varrho(\z)\big), \qquad 
    \varrho(\mathbf{z}) = \sum_{v} \theta_v(z_v),
\end{equation*}
meaning that each player’s cost depends only on a linear aggregation of the other agents’ decisions. This structure allows each agent to estimate the aggregate term through local communication over the graph, instead of requiring direct access to all other agents’ decision variables. In addition, the authors consider only common shared equality constraints.

In the fast dynamics (lines 4–6 of Algorithm~\ref{alg:continuous}), lines 4 and 5 enforce consensus among the direct dual variables $\lambda_v$. At equilibrium, line 5 yields $\bar{\lambda}_u = \bar{\lambda}_v$ for all neighboring agents, and line 4 further ensures $\bar{\lambda}_v = \bar{\mu}_v$. Once consensus among the dual variables and agreement on the aggregated quantity are achieved, the distributed saddle-point–type dynamics in the slow subsystem (lines 2–3) guide the agents toward a GNE under the globally coupled equality constraints.

The convergence analysis relies on the standard stability theorem for singular perturbation systems. As a consequence, uniqueness of the equilibrium is required, and the algorithm converges only to a v-GNE.

\begin{algorithm}
\begin{algorithmic}[1]
\STATE {\bfseries Input:}  Initialized $z_v(0)$, $\lambda_v(0)$, $\mu_v(0)$, $\sum_{v=1}^N \nu_v(0) = 0$, $\sum_{v=1}^N \zeta_v(0) = 0$. $\epsilon \in (0, 1)$.
    \STATE $\dot{z}_v = - \nabla_{z_v}f_v(z_v, \varrho(\z))\rvert_{\varrho(\z) = \eta_v} - A_v^\top \lambda_v $
    \STATE $\dot{\mu}_v = A_v z_v - c_v $
    \STATE $\epsilon \, \dot{\lambda}_v = - \alpha (\lambda_v - \mu_v) - \beta \sum_{u=1}^N (A)_{vu}(\lambda_v - \lambda_u) - \nu_v$
    \STATE $\epsilon \, \dot{\nu}_v = \alpha \beta \sum_{u=1}^N (A)_{vu} (\lambda_v - \lambda_u)$
    \STATE $\epsilon \, \dot{\zeta}_v = \gamma \sum_{u = 1}^N (A)_{uv} \cdot \text{sign} (\eta_u - \eta_v)$
    \STATE $\eta = \zeta_v + \theta_v(z_v)$
\end{algorithmic}
\caption{Singular Perturbed Algorithm for each agent $v$, \cite{lin_distributed_2022}}
\label{alg:continuous}	
\end{algorithm}

\section{The Contribution of this Thesis}

As discussed earlier, since a GNE may not be unique, I do not intend to focus only on a special subclass of equilibria such as v-GNEs. However, computing a GNE algorithmically is challenging, even in centralized settings. Under restricted subclasses of the problem, both augmented penalty methods and parameterized variational inequality approaches can be used to compute approximate GNEs that are not limited to v-GNEs, but only under additional restrictive assumptions.

First, these methods typically require a central authority that has access to all agents’ private information, which is impractical due to privacy concerns.
Second, the resulting inner problems usually still require an additional solver or oracle, whose availability and implementation are often unclear.
Third, for parameterized variational inequality methods, it can only be shown that the set of GNEs is contained in the solution set of the parameterized variational inequality. As a result, non-GNE solutions may also be produced, and additional steps are required to identify and discard these undesired outcomes.

On the other hand, existing distributed GNE solvers, including both discrete-time and continuous-time methods, typically restrict their attention to v-GNEs. This restriction is mainly technical: the variational inequality associated with v-GNEs is usually monotone, which allows existing methods to rely on well-established analytical tools. In addition, these approaches often require augmented multipliers to enforce consensus among dual variables. This requirement is also unrealistic in practice, especially when compared with distributed optimization methods, where additional auxiliary variables are usually not needed.

Accordingly, the contribution of this thesis is summarized as follows:
\begin{enumerate}
    \item I study general GNEs, rather than restricting attention to v-GNEs, in a fully distributed setting.
    \item Our distributed framework preserves each agent’s privacy. Moreover, by not restricting the solution to v-GNEs, the proposed method does not require sharing augmented multipliers, which significantly reduces the information exchange per iteration.
    \item I develop a novel proof technique that can handle the potentially non-monotone structure of the underlying problem, which is typically regarded as a challenging case in this area.
\end{enumerate}


\chapter{Fully Distributed Algorithms Based on Continuous-Time Dynamics}
\label{continuous_chapter}

In this section, I present our main contribution: a continuous-time algorithm that converges to GNEs without enforcing consensus on the multipliers associated with the shared constraints. I begin by specifying the multi-agent setup and introducing the projected dynamics framework, which serves as our primary analytical tool. I then establish convergence for games with shared linear equality constraints and individual constraints under general convex and monotone settings. After that, I prove convergence for games with shared inequality constraints in the case of quadratic convex monotone costs.

From a theoretical perspective, several challenges arise in the convergence analysis. First, the proposed dynamics exhibit switching behavior with discontinuous right-hand sides, which precludes the use of standard stability arguments. Second, the non-uniqueness of equilibria leads to difficulties in characterizing algorithmic convergence. To address these issues, I introduce a transformation of the shared multipliers that separates each multiplier into a common base component and a difference component. This transformation makes the equilibrium unique for any fixed initialization.

For shared equality constraints, the transformed dynamics allow us to determine not only the convergence of the difference vector but also its exact limiting value, even though agents do not have direct access to this quantity. In contrast, for shared inequality constraints, the difference vector still converges, but its limiting value cannot be characterized explicitly, which leads to a more subtle analysis. In addition, since the induced primal–dual dynamics are not necessarily monotone even when the game is strongly monotone, I develop a proof based on bounded-input bounded-output and bounded trajectory analysis to establish convergence.

\section{Preliminaries}
\subsection{Notation}

We denote $\R^n$ as the space of $n$-dimensional Euclidean vectors. The sets $\R_{\ge 0}$ and $\R_{>0}$ represent the nonnegative and positive real numbers, respectively. The notation $[m]$ denotes the index set $\{1, 2, \dots, m\}$. Bold symbols $\mathbf{1}$ and $\mathbf{0}$ denote all-one and all-zero vectors of appropriate dimensions, respectively.  

For a vector $a \in \R^m$, $(a)_i$ denotes its $i$-th component, and $a^\top$ its transpose. For two vectors $a, b \in \R^m$, the inequality $a \le b$ is understood component-wise, and $a^\top b$ denotes their inner product. For a matrix $A \in \R^{p \times q}$, $(A)_{ij}$ denotes the $(i,j)$-th entry, and $A^\top$ its transpose.  

Following \cite{cherukuri-asymptotic-2016, ebrahimi-robust-2019}, we define the projection operator:
\begin{equation}\label{equ:proj_oper}
    [a]^+_b :=
    \begin{cases}
        a, & \text{if } b > 0, \\[2pt]
        \max \{ 0,\, a \}, & \text{if } b = 0,
    \end{cases}
\end{equation}
for any $a\in \R$ and $b \in \R_{\ge0}$. For $a \in \R^n$ and $b \in \R_{\ge0}$, $[a]^+_b$ denotes the vector whose $i$-th component is $[a_i]^+_{b_i}$.  

The sublevel set of a function $V: \R^n \to \R$ with parameter $\delta > 0$ is defined as
\begin{equation*}
    V^{-1}(\le \delta)
    := \{\z \in \R^n \mid V(\z) \le \delta \}.
\end{equation*} 

Finally, for two vectors $a, b \in \R^n$, the compact complementarity notation
\begin{equation*}
    \mathbf{0}_n \le a \, \bot \, b \le \mathbf{0}_n
\end{equation*}
simultaneously expresses primal feasibility, dual feasibility, and complementary slackness, which explicitly means (1) $a \ge \mathbf{0}_n$, (2) $b \le \mathbf{0}_n$, (3) $(a)_i \cdot (b)_i = 0, \quad \forall \, i \in [n]$.

\subsection{Generalized Nash Equilibrium Problems (GNEPs)}

Consider the GNEP in \eqref{equ:GNEP_All} with $N$ agents. In a distributed setup, each agent $v$ controls a local decision variable $z_v \in \mathbb{R}^{n_v}$ and has a private cost function $f_v(z_v, \mathbf{z}_{-v}^f)$ that depends on both its own decision and those of the agents that influence it. Here, $\mathbf{z}_{-v}^f$ denotes the subset of decision variables that affect $f_v(\cdot)$, inducing a cost-dependency graph $\mathcal{G}_f = (\mathcal V, \mathcal E_f)$. Each agent also has private individual constraints $h_v(z_v) \le \mathbf{0}_{q_v}$. Let $\mathbf{z} = [z_1^\top,\dots,z_N^\top]^\top \in \mathbb{R}^n$ denote the joint decision vector, where $n = \sum_{v=1}^N n_v$ and $q = \sum_{v=1}^N q_v$.

In addition to their private constraints, agents may be coupled through shared constraints. Specifically, agent $v$ is subject to equality and inequality constraints of the form
\begin{equation*}
    \psi_v(z_v, \mathbf{z}_{-v}^\psi) = \mathbf{0}_{l_v}, \qquad
    g_v(z_v, \mathbf{z}_{-v}^g) \le \mathbf{0}_{m_v},
\end{equation*}
where $\mathbf{z}_{-v}^\psi$ and $\mathbf{z}_{-v}^g$ denote the decision variables of other agents that influence these constraints. These dependencies give rise to the constraint graphs $\mathcal G_\psi = (\mathcal V, \mathcal E_\psi)$ and $\mathcal G_g = (\mathcal V, \mathcal E_g)$, where each edge indicates that the corresponding agents are jointly involved in a shared constraint. Note that while $\psi_v$ and $g_v$ represent shared constraints, they typically involve only a subset of agents, and therefore are not globally known to all agents.

In distributed optimization and game-theoretic settings, agents are typically assumed to keep the following information private:
\begin{itemize}
    \item their local cost function $f_v(\cdot)$ and its gradient structure,
    \item their individual constraints $h_v(\cdot)$,
    \item their local copies of dual variables associated with shared constraints.
\end{itemize}
Agents are generally unwilling or unable to share this information because it may contain sensitive parameters, proprietary cost models, or private operational requirements. Consequently, an algorithm cannot assume global access to the full collection of functions $\{f_v, h_v, \psi_v, g_v\}_{v=1}^N$, nor to all dual variables or decision variables.

I focus on the setting in which agents are coupled through aggregate linear constraints:
\begin{align}
    \psi(\mathbf{z}) &= A \, \z - \c, \label{equ:linear_eq}\\
    g(\mathbf{z})    &= B \, \z - \d, \label{equ:linear_ineq}
\end{align}
where $A \in \mathbb{R}^{o \times n}$ and $B \in \mathbb{R}^{p \times n}$, and $\mathbf{c} \in \mathbb{R}^{o}$ and $\mathbf{d} \in \mathbb{R}^{p}$ represent the shared resource limits. Each agent has access only to the rows and columns of $A$ and $B$ associated with the constraints and decision variables in which it participates. Its local views are therefore
\begin{equation*}
    \begin{split}
    \psi_v(z_v, \mathbf{z}_{-v})
        &= A(l_v, n_v) z_v
           + \!\!\sum_{u \in \tilde{\mathcal{N}}_\psi(v)}\!\! A(l_v, n_u) z_u
           - \mathbf{c}(l_v),\\
    g_v(z_v, \mathbf{z}_{-v})
        &= B(m_v, n_v) z_v
           + \!\!\sum_{u \in \tilde{\mathcal{N}}_g(v)}\!\! B(m_v, n_u) z_u
           - \mathbf{d}(m_v),
   \end{split}
\end{equation*}
where $A(l_v, n_v)$ and $B(m_v, n_v)$ denote the corresponding submatrices, and $\tilde{\mathcal{N}}(v)$ represents the neighboring agents of agent $v$ in the connectivity graph.

This structure yields a GNEP with feasible set
\begin{equation*}
    \Omega
    =
    \left\{
        \mathbf{z}
        \,\middle|\,
        \psi(\mathbf{z}) = \mathbf{0},\;
        g(\z) \le \mathbf{0},\;
        h_v(z_v) \le \mathbf{0}
        \quad \forall\, v
    \right\}.
\end{equation*}

Our algorithm is designed to compute the full set of GNEs rather than only the v-GNE. This allows application to a broader family of problems in which enforcing consensus among multipliers is not justified or not feasible. When centralized coordination or uniform pricing is appropriate, the proposed framework can still impose multiplier consensus to recover a v-GNE, but consensus is not required in general.

\subsection{Projected Dynamical System (PDS)}
In this section, I provide a brief overview of Projected Dynamical Systems and several existing convergence theorems that form the foundation of our analysis. I also examine a special class of Projected Dynamical Systems known as primal–dual dynamics, which have been widely used as continuous-time algorithms for solving a broad range of convex optimization problems.
\begin{definition}[Projected Dynamical System]\cite{debreu-social-1952}.
    Given $x \in K$ and $v \in \R^n$, where $K \subseteq \R^n$ is a closed convex set, define the projection of the vector $v$ at $x$ with respect to $K$ by  
    \begin{equation*}
        \Pi_{K}(x, v) := \lim_{\epsilon \to 0} \frac{P_{K}(x + \epsilon \, v) - x}{\epsilon},
    \end{equation*}
    where 
    \begin{equation*}
        P_{K}(x) := \arg \min_{y \in K} \| x - y\|.
    \end{equation*}
    
    Let $\hat{F}:\R^n \to \R^n$ be a given mapping. The PDS is ordinary differential equations as a form of 
    \begin{equation}\label{equ:PDS}
        \dot{x} = \Pi_{K}(x, -\hat{F}(x)).
    \end{equation}
\end{definition}
Figure~\ref{fig:pds_illu} provides an illustration of a Projected Dynamical System (PDS). Given a feasible set $K$, the PDS ensures that the trajectory never leaves $K$. When the state lies in the interior of $K$, the system follows the original vector field. However, once the trajectory reaches the boundary of $K$ and the vector field points outward, the projection forces the motion to evolve tangentially along the boundary, preventing violation of the feasibility constraints.
\begin{figure}
    \centering
    \includegraphics[width=0.5\linewidth]{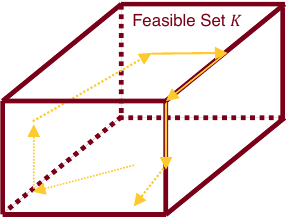}
    \caption{An illustration of the movement of a projected dynamical system over a feasible set $K$.}
    \label{fig:pds_illu}
\end{figure}

The next two propositions establish fundamental properties of Projected Dynamical Systems. The first concerns the uniqueness of trajectories and guarantees their existence for all time, ruling out Zeno behavior despite the switching that may occur at the boundary of the feasible set. The second provides sufficient conditions for the stability of equilibrium points.
\begin{proposition}[\cite{zhang_projected_1996} Theorem 2.5]\label{thm:uniq_tra}
    Let $-\hat{F}(x): \R^n \to \R^n$ be Lipschitz on a closed convex polyhedron $K \in \R^n$, then for any $x_0 \in K$, there exists a unique solution $t \to x(t)$ to the projected dynamic system with $x(0) = x_0$ defined over the domain $[0, \infty)$.
\end{proposition}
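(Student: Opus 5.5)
The plan is to reformulate the projected dynamical system as an equivalent variational problem and then apply standard existence and uniqueness tools for Lipschitz-type evolution equations. The first step is a pointwise description of the projection operator. For $x \in K$ closed and convex, Moreau's decomposition gives
\begin{equation*}
    \Pi_K(x,v) = v - P_{N_K(x)}(v),
\end{equation*}
where $N_K(x)$ denotes the normal cone of $K$ at $x$. Equivalently, $\Pi_K(x,v) = P_{T_K(x)}(v)$, where $T_K(x)$ is the tangent cone. Because $K$ is a polyhedron, there are only finitely many distinct faces, and hence only finitely many distinct pairs $(T_K(x), N_K(x))$. This finiteness is what I will use later to exclude Zeno-type accumulation of switches.

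The second step handles existence. Any absolutely continuous solution of $\dot x = \Pi_K(x,-\hat F(x))$ with $x(t)\in K$ is exactly a solution of the differential inclusion
\begin{equation*}
    \dot x \in -\hat F(x) - N_K(x),
\end{equation*}
with the additional minimal-norm selection property. The right-hand side of this inclusion is $-\hat F$, which is Lipschitz, plus a maximal monotone operator $N_K = \partial \iota_K$. By the Brezis theory of Lipschitz perturbations of maximal monotone operators, the inclusion has a unique global absolutely continuous solution on $[0,\infty)$ for every $x_0 \in K$. That solution stays in $K$, and its right derivative equals the minimal-norm element of $-\hat F(x) - N_K(x)$, which is precisely $\Pi_K(x,-\hat F(x))$. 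I would spell out the last identification using the Moreau decomposition from the first step.

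The third step handles uniqueness directly, so that the argument does not rely entirely on the black box. Let $x,y$ be two solutions and set $e = x-y$. Since $\dot x + \hat F(x) \in -N_K(x)$ and similarly for $y$, monotonicity of $N_K$ gives
\begin{equation*}
    \tfrac{d}{dt}\tfrac12\|e\|^2 = e^\top(\dot x - \dot y) \le -e^\top(\hat F(x)-\hat F(y)) \le L\|e\|^2.
\end{equation*}
Grönwall's inequality then forces $e\equiv 0$. The same estimate also shows that solutions cannot blow up in finite time: comparing a solution with a fixed point of $K$ gives linear growth of $\|x\|$, and therefore the solution extends to all of $[0,\infty)$.

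I expect the main obstacle to be existence. The vector field $x\mapsto \Pi_K(x,-\hat F(x))$ is discontinuous at the boundary of $K$, so Picard–Lindelöf does not apply. If a self-contained argument is preferred over invoking Brezis, I would build Euler polygons $x_{k+1} = P_K(x_k - h\hat F(x_k))$. Nonexpansiveness of $P_K$ yields equicontinuity and uniform bounds on compact time intervals, and Arzelà–Ascoli then produces a limit. One checks that this limit satisfies the variational inequality $(\dot x + \hat F(x))^\top(y - x) \ge 0$ for all $y\in K$, which characterizes the inclusion. The polyhedral structure of $K$ is used to upgrade the almost-everywhere derivative to the right-derivative identity with $\Pi_K$.
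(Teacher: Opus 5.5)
Your proof is correct, but it takes a different route from the one behind the cited result. The paper gives no proof of its own. It imports Theorem~2.5 of Zhang--Nagurney, whose proof (going back to Dupuis--Nagurney) recasts the projected dynamics as a Skorokhod problem with normal reflection. Existence, uniqueness and continuous dependence then come from the Lipschitz continuity of the Skorokhod map on convex polyhedra (Dupuis--Ishii), combined with a Picard-type fixed-point argument on path space. That Lipschitz estimate is exactly where polyhedrality is used.

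You instead use the Moreau decomposition $\Pi_K(x,v)=P_{T_K(x)}(v)=v-P_{N_K(x)}(v)$ to identify solutions with those of $\dot x\in-\hat F(x)-N_K(x)$. After that you need only maximal monotonicity of $N_K$ (via Brezis, or Euler polygons and Arzel\`a--Ascoli) and Gr\"onwall. This is more elementary and more general:
\begin{itemize}
\item nothing in it uses that $K$ is polyhedral, so it proves the statement for every closed convex $K$;
\item your uniqueness computation, run with two different initial conditions, gives $\|x(t)-y(t)\|\le e^{Lt}\|x(0)-y(0)\|$. This is the continuous dependence on initial data that the paper later relies on in Lemmas~\ref{lem:uniq} and~\ref{lem:alg_pds}.
\end{itemize}
The Skorokhod route gives sup-norm Lipschitz dependence on arbitrary continuous (not necessarily absolutely continuous) driving paths. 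It also extends to oblique reflection directions, where the monotonicity you exploit is lost.

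Since your argument never uses polyhedrality, drop the places where you say it will be used.
\begin{itemize}
\item \textbf{Zeno switching.} Finitely many faces bounds the number of modes, not the number of switches. Take $K=\{x\in\R^2: x_1\ge 0\}$ and $-\hat F(x)=(g(x_2),1)$ with $g(s)=s^2(\sin(1/s)-\tfrac12)$ on $[-1,1]$, extended by constants so that $\hat F$ is globally Lipschitz. The trajectory from $(0,-a)$, with $a>0$ small, leaves and re-enters $\{x_1=0\}$ infinitely often before time $a$. So Zeno switching cannot be excluded this way, but the proposition never needs it: solutions are absolutely continuous and satisfy the equation almost everywhere.
\item \textbf{Returning to the projected equation.} Polyhedrality is not needed to pass from the inclusion back to $\dot x=\Pi_K(x,-\hat F(x))$. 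At every $t$ where $x$ is differentiable and the inclusion holds (hence almost everywhere), $\pm\dot x(t)\in T_K(x(t))$ because $x$ stays in $K$. Hence $\dot x(t)\perp N_K(x(t))$, and uniqueness of the Moreau decomposition of $-\hat F(x(t))$ forces $\dot x(t)=P_{T_K(x(t))}\bigl(-\hat F(x(t))\bigr)$.
\item \textbf{Growth bound.} Comparing with a fixed point of $K$ gives an exponential, not linear, bound on $\|x(t)\|$. This still rules out blow-up.
\end{itemize}
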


\begin{proposition}[\cite{zhang_stability_1995}, Theorem 4.2.]\label{thm:PDS_sta} Suppose that $x^\star$ is an equilibrium point of the PDS (\ref{equ:PDS}). If $\hat{F}(x)$ is globally monotone at $x^\star$, then $x^\star$ is globally stable.
\end{proposition}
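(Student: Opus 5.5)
The natural route is a Lyapunov argument with the distance function. Define
\[
V(x) := \tfrac12\|x - x^\star\|^2 .
\]
By Proposition~\ref{thm:uniq_tra}, for every $x_0 \in K$ there is a unique absolutely continuous trajectory $x(t)\in K$ on $[0,\infty)$. Hence $V(x(t))$ is absolutely continuous, and it suffices to show $\frac{d}{dt}V(x(t)) \le 0$ for almost every $t$. We interpret ``globally monotone at $x^\star$'' as
\[
(x - x^\star)^\top\big(\hat F(x) - \hat F(x^\star)\big) \ge 0 \qquad \text{for all } x \in K .
\]

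The first step is a structural fact about the projection operator. For $x\in K$, $\Pi_K(x,v) = P_{T_K(x)}(v)$, the projection of $v$ onto the tangent cone $T_K(x)$. By Moreau's decomposition, $v - \Pi_K(x,v)$ lies in the normal cone $N_K(x)$. Therefore, for any $y\in K$, since $y-x \in T_K(x)$,
\[
(y - x)^\top\big(\Pi_K(x,v) - v\big) \ge 0 .
\]
Applying this with $y = x^\star$ and $v = -\hat F(x)$ gives
\[
(x - x^\star)^\top \Pi_K\big(x,-\hat F(x)\big) \le -(x - x^\star)^\top \hat F(x).
\]

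The second step uses the equilibrium condition. Since $x^\star$ is an equilibrium, $\Pi_K(x^\star,-\hat F(x^\star)) = 0$, which is equivalent to $-\hat F(x^\star)\in N_K(x^\star)$. This is exactly the variational inequality
\[
(x - x^\star)^\top \hat F(x^\star) \ge 0 \qquad \text{for all } x\in K .
\]
Combining the two steps with monotonicity yields
\[
\dot V = (x-x^\star)^\top \dot x \le -(x-x^\star)^\top\big(\hat F(x)-\hat F(x^\star)\big) - (x-x^\star)^\top \hat F(x^\star) \le 0 .
\]
So $\|x(t)-x^\star\|$ is nonincreasing. This gives Lyapunov stability with $\delta=\epsilon$, uniformly over all initial conditions in $K$ (global stability), and boundedness of all trajectories.

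The main obstacle is rigor in the derivative step. The right-hand side is discontinuous, so one cannot differentiate $V$ along a classical solution. I would handle this by working almost everywhere: at a.e.\ $t$ the solution is differentiable with $\dot x(t) = \Pi_K(x(t),-\hat F(x(t)))$, and then apply the chain rule for absolutely continuous functions. The tangent-cone characterization of $\Pi_K$ for closed convex $K$ must also be justified; it follows from directional differentiability of $P_K$ (Zarantonello), and it is cleaner still for a polyhedron, as assumed in Proposition~\ref{thm:uniq_tra}.

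If asymptotic stability were also wanted, I would add strict or strong monotonicity and invoke a LaSalle-type argument. This can be done directly from the bound $\dot V \le -\delta\|x - x^\star\|^2$, but it is not required for the stated conclusion.
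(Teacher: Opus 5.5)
Your argument is correct and is essentially the standard proof behind this cited result. The paper gives no proof of its own and defers to Zhang--Nagurney, but in the proof of Lemma~\ref{lem:delta0} it invokes your key estimate $\V \le -(x-x^\star)^\top\big(\hat F(x)-\hat F(x^\star)\big)$ for $V=\tfrac12\|x-x^\star\|^2$ as a ``standard property of PDSs,'' and your normal-cone inequality combined with the variational-inequality characterization of equilibria supplies exactly that step.
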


Additionally, due to the discontinuous nature of projected dynamical systems \cite{bacciotti-nonpathological-2006, cherukuri-asymptotic-2016}, the Lie derivative of a continuously differentiable function $V: \R^n \to \R$ with respect to the dynamics~(\ref{equ:PDS}) at a point $x$ is defined as
\begin{equation}\label{equ:lie_der}
\V := (\nabla_x V)^\top \Pi_{K}(x, -\hat{F}(x)).
\end{equation}

This allows us to extend standard stability analysis and apply invariance principles to the $\omega$-limit set, with an appropriate extension to discontinuous dynamics that are discontinuous only on a set of measure zero.


\section{Linear Equality Shared Constraints}

I first consider a subclass of the GNEP~\eqref{equ:GNEP_All} in which agents share linear equality constraints in addition to their individual constraints. This subclass is particularly important because it includes a wide range of multi-agent coordination problems, such as distributed multi-robot placement, which can be naturally formulated as a GNEP with shared equality constraints. It also generalizes the classical consensus constraints widely studied in distributed optimization, thereby serving as a bridge between distributed optimization and distributed equilibrium computation. Formally, each agent~$v$ solves the following optimization problem:
\begin{equation}\label{equ:GNEP_eq}
    \begin{split}
        \begin{matrix*}[l]
            \min\limits_{z_v} & f_v(z_v,\, \z_{-v}^f) \\[2pt]
            \text{s.t.} & h_v(z_v) \le \mathbf{0}_{q_v}, \\[2pt]
            & \psi_v(z_v,\, \z_{-v}^\psi) = \mathbf{0}_{l_v},
        \end{matrix*}
    \end{split}
\end{equation}
where $f_v(\cdot)$ is the local cost function of agent~$v$, $h_v(\cdot)$ represents individual inequality constraints, and $\psi_v(\cdot)$ denotes the shared equality constraints that couple the agents’ decisions according to the global structure in \eqref{equ:linear_eq}, i.e.,
\begin{align*}
\psi_v(z_v,, \z_{-v}) = A(l_v, n_v)\, z_v + \sum_{u \in \tilde{\mathcal{N}}_{\psi}(v)} A(l_v, n_u)\, z_u - \c(l_v).
\end{align*}
\begin{assumption}\label{ass:stronglymonotone}
    The GNEP (\ref{equ:GNEP_All}) is strongly monotone.
\end{assumption}
In addition to the strong monotonicity assumption in Assumption~\ref{ass:stronglymonotone}, I impose the following standard conditions.
\begin{assumption}\label{ass:standard_Eq}
For every player $v$, $f_v(z_v, \z_{-v})$ and $h_v(z_v)$ are continuously differentiable, convex, and have locally Lipschitz continuous gradients with respect to $z_v$. Moreover, the stacked pseudo-gradient of all agents, $F$, defined in (\ref{equ:pseudo_grad}), is locally Lipschitz in $\z$, i.e., Lipschitz continuous on every compact set $K' \subseteq \mathbb{R}^n$.
\end{assumption}

\begin{assumption}\label{ass:KKT_Eq}
    Let $\z_{-v}^\star$be given, and suppose that $z_v^\star$ is a solution of the GNEP (\ref{equ:GNEP_eq}), then for all $v$, $\lambda_{v}^\star \in \R^{l_v}$, $\mu_v^\star \in \R^{q_v}$ exist such that
    \begin{equation}\label{equ:kkt_eq}
    \begin{split}
        &L_v(\z, \lambda_{v}, \mu_v): = f_v(z_v, \z_{-v}) + \psi_{v}^\top(z_v, \z_{-v}) \lambda_{v} + h_v^\top(z_v) \mu_v\\
        &\nabla_{z_v}  L_v(\z^\star, \lambda_{v}^\star, \mu_v^\star) = \mathbf{0}_{n_v} \\
        & \psi_{v}(z_v^\star,\, \z_{-v}^\star) = \mathbf{0}_{l_v}\\
        &\mathbf{0}_{q_v} \le \mu_v^\star \, \bot \, h_{v}( z_v^\star) \le \mathbf{0}_{q_v}
    \end{split}.
    \end{equation}
\end{assumption}
Assumption~\ref{ass:KKT_Eq} ensures that an appropriate \textbf{constraint qualification} holds for each agent~$v$. Together with the convexity assumption in Assumption~\ref{ass:standard_Eq}, stacking the KKT conditions of all agents and applying the corresponding constraint qualifications yields the result in Section~4.2 of~\cite{facchinei-generalized-2010}, which states that $\mathbf{z}^\star$ is a GNE if and only if it satisfies the KKT system in~\eqref{equ:kkt_eq}.

\subsection{A Fully Distributed Algorithm}

Based on the problem formulation in~\eqref{equ:GNEP_eq}, the distributed structure of the game can be represented using two interaction graphs. The first graph, $\mathcal G_f = (\mathcal V, \mathcal E_f)$, captures the coupling among agents through their cost functions, while the second graph, $\mathcal G_\psi = (\mathcal V, \mathcal E_\psi)$, captures the coupling induced by the shared linear constraints. Each agent has access only to its own local information, namely its cost function $f_v(\cdot)$, its individual constraints $h_v(\cdot)$, and its local component of the shared constraint function $\psi_v(\cdot)$. The goal is to design a distributed algorithm that guarantees convergence to a GNE while requiring only limited information exchange along these graphs, thereby preserving both privacy and scalability.

In the worst case, both graphs may be complete, meaning that $f_v(\cdot)$ and $\psi_v(\cdot)$ depend on the decision variables of all other agents. Even in this scenario, our method does not rely on any central coordinator or global multiplier sharing. Building on this distributed setup, our main continuous-time algorithm is presented in Algorithm~\ref{alg:equa}.

\begin{algorithm}
\begin{algorithmic}[1]
\STATE {\bfseries Input:}Initial $z_v(0)$, $\lambda_v(0)$, $\mu_v(0) \geq 0$
\STATE $\dot{z}_v = -\nabla_{z_v} L_v(\z, \lambda_v, \mu_v)$.
\STATE $\dot{\lambda}_{v} = \psi_v(z_v, \z_{-v})$.
\STATE $\dot{\mu}_v= [h_v(z_v)]_{\mu_v}^+$.
\end{algorithmic}
\caption{Continuous Time Distributed Primal-Dual Dynamics for each agent $v$}
\label{alg:equa}	
\end{algorithm}

This fully distributed method preserves the privacy of each agent’s cost function, multipliers, and constraints: agents exchange only their decision variables at each iteration. In particular, in Line~2 of Algorithm~\ref{alg:equa}, agents communicate decision variables over the cost-function graph $\mathcal G_f$, and in Line~3, they communicate over the shared-constraint graph $\mathcal G_\psi$. Consequently, the information exchanged at each iteration is restricted to the decision variables transmitted over $\mathcal G_f$ and $\mathcal G_\psi$.

This communication structure stands in sharp contrast to existing v-GNE algorithms (e.g., \cite{yi-pavel-2019, cenedese-asynchronous-2021, huang-distributed-2021, bianchi-continuous-time-2021}), which require agents to exchange not only their decision variables but also Lagrangian multipliers and auxiliary variables over $\mathcal G_\psi$ in order to enforce consensus. Such exchanges substantially increase communication overhead and reveal sensitive information.

To quantify the savings, consider the case where $\mathcal G_f = \mathcal G_\psi$ and each edge transmits a vector of dimension $d$. In our algorithm, each communication step requires exchanging only decision variables. In contrast, consensus-based v-GNE methods require additional messages for multipliers and auxiliary variables, resulting in approximately $2|\mathcal E_\psi|\, d$ extra transmissions per communication round. Thus, in this setting, our approach uses only \textbf{one third} of the communication cost required by consensus-based algorithms.

\subsection{Convergence Analysis}
I first establish the connection between Algorithm~\ref{alg:equa} and the projected dynamical system framework, which allows us to guarantee the existence of trajectories and apply Lyapunov and Lie-derivative–based arguments for convergence.

\begin{lemma}\label{lem:pds}
Algorithm~\ref{alg:equa} is an instance of the PDS~\eqref{equ:PDS}, where $\hat{F}$ is obtained by stacking the right-hand sides of the dynamics of all agents before applying any projection, and 
\begin{equation*}
    K \;=\; \big\{\, (\mathbf{z},\, \lambda_v,\, \mu_v) \;\big|\; 
    \mathbf{z} \in \mathbb{R}^n,\;
    \lambda_v \in \mathbb{R}^{l_v},\;
    \mu_v \ge \mathbf{0}_{q_v},
    \;\forall v
    \big\}.
\end{equation*}
\end{lemma}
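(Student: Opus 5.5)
We define the stacked state $x := (\mathbf{z}, \lambda_1,\dots,\lambda_N, \mu_1,\dots,\mu_N)$ and the unprojected vector field
\[
-\hat{F}(x) := \Big( \big(-\nabla_{z_v} L_v(\mathbf{z},\lambda_v,\mu_v)\big)_{v},\; \big(\psi_v(z_v,\mathbf{z}_{-v})\big)_{v},\; \big(h_v(z_v)\big)_{v} \Big),
\]
which stacks the right-hand sides of Lines~2--4 of Algorithm~\ref{alg:equa} before any projection is applied. The set $K$ in the statement is a Cartesian product $\mathbb{R}^n \times \prod_v \mathbb{R}^{l_v} \times \prod_v \mathbb{R}^{q_v}_{\ge 0}$. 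It is therefore a closed convex polyhedron, and $P_K$ acts componentwise: it is the identity on the $\mathbf{z}$- and $\lambda$-blocks and the componentwise $\max\{0,\cdot\}$ on the $\mu$-blocks. Consequently $\Pi_K(x,-\hat{F}(x))$ also splits blockwise and coordinatewise, so it is enough to compute the directional-derivative limit one scalar coordinate at a time.

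For a free coordinate (any component of $\mathbf{z}$ or of $\lambda_v$) we have $P(x_i+\epsilon w_i)=x_i+\epsilon w_i$. The difference quotient is then exactly $w_i$, which reproduces Lines~2 and~3. For a coordinate $(\mu_v)_j$ with value $b\ge 0$ and direction $a=(h_v(z_v))_j$ we split into two cases. If $b>0$, then for $\epsilon$ small enough $b+\epsilon a>0$, so $\max\{0,b+\epsilon a\}=b+\epsilon a$ and the limit equals $a$. If $b=0$, the quotient is $\max\{0,\epsilon a\}/\epsilon=\max\{0,a\}$. These two cases are precisely the definition of $[a]^+_b$ in \eqref{equ:proj_oper}, so Line~4 coincides with the $\mu$-block of $\Pi_K(x,-\hat F(x))$. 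Combining the blocks shows that the algorithm is $\dot x=\Pi_K(x,-\hat F(x))$ with initial state $x(0)\in K$, since $\mu_v(0)\ge 0$. This is the claimed instance of \eqref{equ:PDS}.

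The computation itself is routine. The one place that needs care is the justification that the projection onto the product set is the product of the coordinate projections, together with the passage to the limit in the case $b>0$, where we need $\epsilon<b/|a|$. It is also worth noting, for the later use of Proposition~\ref{thm:uniq_tra}, that $\hat F$ is only locally Lipschitz under Assumption~\ref{ass:standard_Eq}. This is because $\nabla_{z_v}L_v$ contains the bilinear term $\nabla h_v(z_v)^\top\mu_v$ and $\psi_v$ is affine. The identification as a PDS, however, requires no Lipschitz property.
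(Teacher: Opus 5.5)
Your proof is correct and is essentially the argument the paper takes for granted. The paper states this lemma without proof, and for the analogous Lemma~\ref{lem:alg_pds} it simply defers to \cite{cherukuri-asymptotic-2016}, whose argument is your blockwise identification of $\Pi_K$ with the identity on the free $\mathbf{z}$- and $\lambda$-blocks and with $[\cdot]^+_{\mu_v}$ on the orthant block. The only point to make explicit is that your coordinate computation, in particular $\max\{0,\epsilon a\}/\epsilon=\max\{0,a\}$, uses the one-sided limit $\epsilon\to 0^+$; this is the standard convention for $\Pi_K$ even though the paper writes $\epsilon\to 0$, and with a two-sided limit the $\mu$-block would fail to exist at $\mu_v^j=0$, $h_v^j(z_v)<0$.
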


Consider the set of multipliers $\lambda_v^i(t)$ $\forall v$ associated with the $i$-th shared constraint $\psi^i(\mathbf{z})$, which corresponds to the $i$-th component of the global constraint $\psi(\mathbf{z})$. I arbitrarily select one of these multipliers as a reference state and denote it by $\lambda^i(t)$. Thus, there are $o$ such reference states in total.

\subsection{Auxiliary States and Multiplier Deviations}
For every multiplier $\lambda_u^i(t)$ corresponding to the same shared constraint $\psi^i(\mathbf{z})$ but not chosen as the reference, I define its deviation from the reference as
\begin{equation}\label{equ:difference_vec}
    \Delta_{u}^i := \lambda_u^i - \lambda^i.
\end{equation}
The associated dynamics satisfy
\begin{equation}\label{equ:delta}
    \dot{\Delta}_{u}^i
    = \dot{\lambda}_u^i - \dot{\lambda}^i
    = \psi^i(\mathbf{z}) - \psi^i(\mathbf{z})
    = 0,
\end{equation}
which implies that each $\Delta_{u}^i(t)$ remains constant over time. If the multiplier of agent $u$ is chosen as the reference for constraint $i$, then $\Delta_u^i(t) = 0$. Finally, I denote by $\Delta_{[v]}(t)$ the stacked vector containing all deviations $\Delta_v^i(t)$ for $i \in l_v$, ordered consistently.
\begin{lemma}\label{lem:equipt}
    \textbf{(Existence, Uniqueness, and KKT Consistency).} 
    Consider the dynamical system described by Algorithm \ref{alg:equa} under the assumption that the pseudo-gradient mapping $F$ is strongly monotone. For any fixed set of initial auxiliary states $\{\Delta_{[v]}(0)\}_{v \in \mathcal{V}}$: 
    \begin{enumerate}
        \item Any equilibrium point $(\bar{\mathbf{z}}, \bar{\lambda}, \bar{\boldsymbol{\mu}})$ of the algorithm satisfies the KKT conditions \eqref{equ:kkt_eq} of the GNEP defined in \eqref{equ:GNEP_eq}.
        \item There exists a unique equilibrium point in the primal variables $\mathbf{z}^\star$.
    \end{enumerate}
\end{lemma}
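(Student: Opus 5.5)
Part 1 follows by reading off the stationarity conditions of Algorithm~\ref{alg:equa}. Through Lemma~\ref{lem:pds}, an equilibrium $(\bar{\mathbf z},\bar\lambda,\bar{\boldsymbol\mu})$ is a point with $\Pi_K(\cdot,-\hat F(\cdot))=0$. Each component then gives one KKT condition:
\begin{itemize}
\item $\dot z_v=0$ gives $\nabla_{z_v}L_v(\bar{\mathbf z},\bar\lambda_v,\bar\mu_v)=\mathbf 0$.
\item $\dot\lambda_v=0$ gives $\psi_v(\bar z_v,\bar{\mathbf z}_{-v})=\mathbf 0$.
\item $\dot\mu_v=[h_v(\bar z_v)]^+_{\bar\mu_v}=0$, checked componentwise through the definition \eqref{equ:proj_oper}. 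If $(\bar\mu_v)_j>0$, then $(h_v)_j=0$. If $(\bar\mu_v)_j=0$, then $\max\{0,(h_v)_j\}=0$, so $(h_v)_j\le 0$. Since $\bar\mu_v\ge 0$ by invariance of $K$, this is exactly $\mathbf 0\le\bar\mu_v\perp h_v(\bar z_v)\le\mathbf 0$.
\end{itemize}
These are precisely the conditions in \eqref{equ:kkt_eq}. By Assumptions~\ref{ass:standard_Eq}--\ref{ass:KKT_Eq} and the cited equivalence (Section~4.2 of \cite{facchinei-generalized-2010}), $\bar{\mathbf z}$ is therefore a GNE.

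For Part 2, I will use the reparametrization \eqref{equ:difference_vec}. By \eqref{equ:delta} the deviations are constant, so $\Delta_{[v]}(t)\equiv\Delta_{[v]}(0)$. This lets me write $\lambda_v=E_v\lambda+\Delta_{[v]}$, where $\lambda\in\mathbb R^{o}$ is the vector of reference multipliers and $E_v$ selects the rows of $\psi$ that agent $v$ participates in. Because $\nabla_{z_v}\psi_v=A(\cdot,n_v)^\top$ restricted to those rows, the stacked stationarity conditions become
\begin{equation*}
F(\mathbf z)+A^\top\lambda+\tilde A^\top\Delta+\nabla h(\mathbf z)^\top\boldsymbol\mu=\mathbf 0,
\end{equation*}
with the constant vector $w:=\tilde A^\top\Delta$ built from the fixed deviations. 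These conditions, together with $A\mathbf z=\mathbf c$ and complementarity for $h$, are exactly the KKT system of the variational inequality $VI(\Omega_0,F_w)$, where $F_w(\mathbf z):=F(\mathbf z)+w$ and
\begin{equation*}
\Omega_0=\{\mathbf z\mid A\mathbf z=\mathbf c,\ h_v(z_v)\le 0\ \forall v\}.
\end{equation*}
This set is closed and convex because the $h_v$ are convex.

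Two facts then give uniqueness. First, $F_w$ is strongly monotone with the same constant $\delta$, since adding a constant does not change $(\mathbf z-\mathbf z')^\top(F_w(\mathbf z)-F_w(\mathbf z'))$. Second, since the KKT system of a VI with convex constraints is sufficient, every equilibrium's $\bar{\mathbf z}$ solves $VI(\Omega_0,F_w)$. The uniqueness claim can also be checked directly. Take two equilibria $\mathbf z^1,\mathbf z^2$ with multipliers $(\lambda^1,\boldsymbol\mu^1)$ and $(\lambda^2,\boldsymbol\mu^2)$. Subtract their stationarity equations and take the inner product with $\mathbf z^1-\mathbf z^2$:
\begin{itemize}
\item the $\lambda$ terms vanish because $A(\mathbf z^1-\mathbf z^2)=0$;
\item the $\boldsymbol\mu$ terms contribute a nonnegative amount, by convexity of $h$ and complementarity (the standard monotone-subgradient argument);
\item the $w$ terms cancel.
\end{itemize}
This leaves $\delta\|\mathbf z^1-\mathbf z^2\|^2\le 0$, so $\mathbf z^1=\mathbf z^2$.

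For existence, I will invoke the cited theorem (\cite{glynn_robinson_facchinei_2004}, Theorem~2.3.3): since $F_w$ is strongly monotone, $VI(\Omega_0,F_w)$ has a solution, provided $\Omega_0$ is nonempty, which the existence of a GNE implicitly assumes. I then need to turn this VI solution back into an equilibrium of the dynamics. Under the constraint qualification of Assumption~\ref{ass:KKT_Eq}, read as holding for the aggregate constraint set (e.g.\ LICQ, which the paper assumes), there exist $\lambda,\boldsymbol\mu\ge 0$ satisfying the VI's KKT conditions. I then set $\lambda_v:=E_v\lambda+\Delta_{[v]}(0)$, which reconstructs an equilibrium of Algorithm~\ref{alg:equa} consistent with the given deviations.

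I expect this existence direction to be the main obstacle. The bookkeeping of which agent holds which reference multiplier, together with the map $\Delta\mapsto w$, must be written carefully. The constraint qualification must also apply to $\Omega_0$ itself rather than only agent-wise. If it fails, I would fall back to claiming uniqueness only, conditional on existence.
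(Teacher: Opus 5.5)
Your proposal is correct and follows essentially the same route as the paper. You reparametrize the multipliers as reference values plus the time-invariant deviations, recognize the equilibrium conditions as the KKT system of the variational inequality over $\{A\mathbf z=\mathbf c,\ h_v(z_v)\le 0\}$ with the pseudo-gradient shifted by the constant term built from $\Delta(0)$, and obtain uniqueness from the preserved strong monotonicity; your explicit KKT-sufficiency step and your existence argument, which needs a constraint qualification on the aggregate set so that a common multiplier exists and can be lifted back via $\lambda_v=E_v\lambda+\Delta_{[v]}(0)$, spell out steps the paper's proof passes over silently.
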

\begin{remark}
    While the equilibrium is mathematically unique for fixed initial conditions, agents cannot compute this point \textit{a priori} as they lack global knowledge of the aggregate deviation vector $\Delta(0)$.
\end{remark}

\renewcommand{\proofname}{Proof of Lemma \ref{lem:equipt}}
\begin{proof}
    The proof proceeds in two steps: first, establishing the equivalence between the equilibrium points and the GNEP KKT conditions; second, proving the uniqueness of the primal solution via variational inequality theory.

    \noindent\textbf{1. Equivalence to KKT Conditions}
    
    Substituting the definitions of the global variables $\lambda$ and the auxiliary terms $\Delta_{[v]}(0)$ from \eqref{equ:delta} into Algorithm \ref{alg:equa}, the closed-loop dynamics are:
    \begin{equation}
    \label{equ:new_dyn_expand}
        \begin{split}
            \dot{z}_v &= -\nabla_{z_v} f_v(z_v, \mathbf{z}_{-v}) - A(l_v, n_v)^\top \big(\lambda(l_v) + \Delta_{[v]}(0)\big) - \nabla_{z_v} \big(h_v(z_v)^\top\mu_v\big), \quad \forall v,\\
            \dot{\lambda} &= A \mathbf{z} - \mathbf{c},\\
            \dot{\mu}_v &= [\, h_v(z_v) \,]_{\mu_v}^{+}, \quad \forall v.
        \end{split}
    \end{equation}
    By definition, an equilibrium point $(\bar{\mathbf{z}}, \bar{\lambda}, \bar{\boldsymbol{\mu}})$ satisfies $\dot{z}_v = 0$, $\dot{\lambda}=0$, and $\dot{\mu}_v=0$. Setting the LHS of \eqref{equ:new_dyn_expand} to zero yields:
    \begin{subequations}\label{equ:system_steady_state}
        \begin{align}
            \mathbf{0} &= \nabla_{z_v} f_v(\bar{z}_v, \bar{\mathbf{z}}_{-v}) + A(l_v,n_v)^\top \big(\bar{\lambda}(l_v) + \Delta_{[v]}(0)\big) + \nabla_{z_v} \big( h_v(\bar{z}_v)^\top \bar{\mu}_v \big), \label{equ:eq_stat}\\
            \mathbf{0} &= A \, \bar{\mathbf{z}} - \mathbf{c}, \label{equ:eq_feas}\\
            \mathbf{0} &= [\,h_v(\bar{z}_v)\,]_{\bar{\mu}_v}^{+}. \label{equ:eq_proj}
        \end{align}
    \end{subequations}
    Let us define the \textit{effective dual variable} for agent $v$ as $\tilde{\lambda}_v \coloneqq \bar{\lambda}(l_v) + \Delta_{[v]}(0)$. We observe that:
    \begin{itemize}
        \item Eq. \eqref{equ:eq_stat} becomes $\nabla_{z_v} f_v + A(l_v,n_v)^\top \tilde{\lambda}_v + \nabla_{z_v} h_v^\top \bar{\mu}_v = 0$, corresponding exactly to the KKT \textit{Stationarity} condition.
        \item Eq. \eqref{equ:eq_feas} ensures $A\bar{\mathbf{z}} = \mathbf{c}$, corresponding to \textit{Primal Feasibility}.
        \item Eq. \eqref{equ:eq_proj} is equivalent to the complementarity conditions $\bar{\mu}_v \ge 0, h_v(\bar{z}_v) \le 0, \bar{\mu}_v^\top h_v(\bar{z}_v) = 0$.
    \end{itemize}
    Thus, every equilibrium of the dynamics is a KKT point of the GNEP.

    \noindent\textbf{2. Uniqueness via Strong Monotonicity}

    To show uniqueness, consider the modified operator $\widetilde{F}$ defined by stacking the components:
    \begin{equation}\label{equ:new_vi}
        \widetilde{F}_v(\mathbf{z}) \coloneqq \nabla_{z_v} f_v(z_v, \mathbf{z}_{-v}) + A(l_v, n_v)^\top \Delta_{[v]}(0).
    \end{equation}
    Now consider another $VI(\tilde{K}, \tilde{F})$ problem with $\tilde{F}$ defined in equation~\eqref{equ:new_vi} and the $\tilde{K}$ is defined as
    \begin{equation*}
        \tilde{K} := \left \{ (\z, \lambda) \, | \, A\, \z  - \c = 0, \; h_v(z_v) \le 0 \text{ for all }v\right \}.
    \end{equation*}
    Define the stacked constraint function and multiplier as
    \begin{equation*}
        h(\z) := \begin{bmatrix} h_1(z_1) \\ \vdots \\ h_N(z_N) \end{bmatrix}, 
        \qquad 
        \mu := \begin{bmatrix} \mu_1 \\ \vdots \\ \mu_N \end{bmatrix}.
    \end{equation*}
    Then the KKT condition of this $VI(\tilde{F}, \tilde{K})$ can be written as 
    \begin{equation*}
    \begin{split}
        \mathbf{0} &= \tilde{F}(\z) + A^\top \lambda + \nabla h(\z)^\top \mu\\
        \mathbf{0} &= A \, \z - \c\\
    \mathbf{0} &\le \mu \ \perp\ h(\z) \le \mathbf{0}.
    \end{split}
    \end{equation*}
    Comparing this condition with the equilibrium system \eqref{equ:system_steady_state}, I can conclude that the equilibrium system \eqref{equ:system_steady_state} can be viewed as the solution to the Variational Inequality problem $\mathrm{VI}(\tilde{K}, \tilde{F})$.
        
    By hypothesis, the original game mapping $F(\mathbf{z}) = [\nabla_{z_v} f_v]_{v \in N}$ is strongly monotone. Take
    \begin{equation*} D = \mathrm{diag}\!\begin{bmatrix} A(l_1,n_1)^\top \\ A(l_2,n_2)^\top \\ \vdots \\ A(l_N,n_N)^\top \end{bmatrix}, \qquad \Delta = \begin{bmatrix} \Delta_{[1]}\\ \Delta_{[2]}\\ \vdots \\ \Delta_{[N]} \end{bmatrix}. \end{equation*}
    The modified operator $\widetilde{F}$ differs from $F$ only by the addition of the constant vector term $D\Delta(0)$. Since adding a constant vector preserves monotonicity:
    \begin{equation*}
        \langle \widetilde{F}(\mathbf{z}) - \widetilde{F}(\mathbf{z}'), \mathbf{z} - \mathbf{z}' \rangle = \langle F(\mathbf{z}) - F(\mathbf{z}'), \mathbf{z} - \mathbf{z}' \rangle \ge \delta\, \|\mathbf{z} - \mathbf{z}'\|^2,
    \end{equation*}
    where $\delta > 0$ is the monotonicity modulus. Consequently, $\widetilde{F}$ is strongly monotone. Standard results in VI theory dictate that a strongly monotone VI over a closed convex set admits a unique solution in the primal variable $\mathbf{z}$. Thus, $\mathbf{z}^\star$ is unique.
\end{proof}

\begin{remark}
Lemma~\ref{lem:equipt} implies that the equilibrium point is uniquely determined once the initial deviation $\Delta(0)$ is fixed. In other words, the limiting point of the dynamics depends on the initial condition through $\Delta(0)$.

This observation does not assume the presence of a central authority, nor does it suggest that the equilibrium can be obtained without solving the underlying optimality conditions (e.g., the KKT system). Rather, it characterizes how the asymptotic outcome of the proposed dynamics depends on the initialization.

In a distributed setting, individual agents do not have access to the initial multipliers of others and therefore cannot determine $\Delta(0)$ explicitly. Nevertheless, the proposed algorithm evolves using only local information and neighbor communication, and implicitly drives the system to the corresponding equilibrium associated with the given initialization.

Since the overall dynamics can be modeled as a projected dynamical system, I next establish Lemma~\ref{lem:monoton}.
\end{remark}

\begin{lemma}\label{lem:monoton}
    \textbf{(Lyapunov Stability Analysis).}
Define $\lambda(t)$ and the auxiliary states $\Delta_{[v]}(t)$ as in \eqref{equ:difference_vec}. Let $(\z^\star, \lambda^\star, \mu^\star)$ denote an equilibrium point satisfying the KKT conditions \eqref{equ:kkt_eq}, where $\z^\star$ is unique as established in Lemma~\ref{lem:equipt}. The associated dual variables $\lambda^\star$ and $\mu^\star$ may not be unique in general.

Consider the positive semi-definite Lyapunov candidate function $V: \mathbb{R}^n \times \mathbb{R}^o \times \mathbb{R}^q_{\ge 0} \to \mathbb{R}_{\ge 0}$ defined as
\begin{equation}\label{equ:func}
    V(\z, \lambda, \mu) := \frac{1}{2} \left[ 
    \sum_{v = 1}^N \left( \| z_v - z_v^\star \|^2 + \| \mu_v - \mu_v^\star \|^2 \right)
    + \| \lambda - \lambda^\star \|^2 
    \right].
\end{equation}

Then, under the assumption of strong monotonicity of the cost mapping, the Lie derivative of $V$ along the trajectories of Algorithm~\ref{alg:equa} satisfies $\V \le 0$ for all $\z$, $\lambda$, and $\mu$.
\end{lemma}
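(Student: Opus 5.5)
We work in the reduced coordinates $(\z, \lambda, \mu)$ obtained from the deviation transformation \eqref{equ:difference_vec}. By \eqref{equ:delta}, each $\Delta_{[v]}$ stays at $\Delta_{[v]}(0)$, so Algorithm~\ref{alg:equa} is equivalent to the closed-loop system \eqref{equ:new_dyn_expand}. With $\tilde{F}$ as in \eqref{equ:new_vi}, this system reads compactly
\begin{equation*}
\dot{\z} = -\tilde{F}(\z) - A^\top \lambda - \nabla h(\z)^\top \mu, \qquad \dot{\lambda} = A\z - \c, \qquad \dot{\mu} = [h(\z)]^+_{\mu}.
\end{equation*}
The key identity here is $D\Delta(0) + A^\top\lambda$ stacked over agents, i.e.\ the $v$-th block of $A^\top\lambda$ is $A(l_v,n_v)^\top \lambda(l_v)$. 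At the equilibrium from Lemma~\ref{lem:equipt} we also have
\begin{equation*}
\mathbf{0} = \tilde{F}(\z^\star) + A^\top\lambda^\star + \nabla h(\z^\star)^\top \mu^\star, \qquad A\z^\star = \c, \qquad \mathbf{0}\le \mu^\star \perp h(\z^\star)\le \mathbf{0}.
\end{equation*}

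Using \eqref{equ:lie_der}, I compute $\V$ term by term. For the primal part, subtracting the stationarity condition at the equilibrium gives
\begin{equation*}
(\z-\z^\star)^\top\dot{\z} = -(\z-\z^\star)^\top\big(\tilde{F}(\z)-\tilde{F}(\z^\star)\big) - (\z-\z^\star)^\top A^\top(\lambda-\lambda^\star) - (\z-\z^\star)^\top\big(\nabla h(\z)^\top\mu - \nabla h(\z^\star)^\top\mu^\star\big).
\end{equation*}
Since $A\z^\star=\c$, the $\lambda$ part is $(\lambda-\lambda^\star)^\top(A\z-\c) = (\lambda-\lambda^\star)^\top A(\z-\z^\star)$, which cancels the middle cross term exactly. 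Strong monotonicity of $\tilde{F}$, which Lemma~\ref{lem:equipt} inherits from Assumption~\ref{ass:stronglymonotone}, bounds the first term by $-\delta\|\z-\z^\star\|^2$.

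The remaining step is the usual one for projected primal–dual dynamics, and it is where the argument needs care. First handle the projection: componentwise, $(\mu_i-\mu_i^\star)[h_i]^+_{\mu_i} \le (\mu_i-\mu_i^\star)h_i(\z)$. This holds because when the projection is active we have $\mu_i=0$, $h_i<0$, and the left side is $0$, while the right side is $-\mu_i^\star h_i \ge 0$. The $\mu$ contribution is therefore at most $(\mu-\mu^\star)^\top h(\z)$. Adding the last primal term, I need
\begin{equation*}
(\mu-\mu^\star)^\top h(\z) - (\z-\z^\star)^\top\big(\nabla h(\z)^\top\mu-\nabla h(\z^\star)^\top\mu^\star\big) \le 0.
\end{equation*}
I expand this as $\mu^\top\big[h(\z)+\nabla h(\z)(\z^\star-\z)\big] + \mu^{\star\top}\big[-h(\z)-\nabla h(\z^\star)(\z^\star-\z)\big]$. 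Convexity of each $h_v$ (Assumption~\ref{ass:standard_Eq}) gives $h(\z)+\nabla h(\z)(\z^\star-\z)\le h(\z^\star)\le \mathbf{0}$, so the first bracket is nonpositive against $\mu\ge \mathbf{0}$. Convexity also gives $h(\z)\ge h(\z^\star)+\nabla h(\z^\star)(\z-\z^\star)$, so the second bracket is bounded by $-h(\z^\star)$, and $\mu^{\star\top}(-h(\z^\star))=0$ by complementarity. Combining everything yields $\V\le -\delta\|\z-\z^\star\|^2\le 0$ on $K$.

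The main obstacle I expect is bookkeeping rather than analysis. One must confirm that the non-reference multipliers are correctly absorbed as the constant $D\Delta(0)$, so that the cross terms cancel with a single global $\lambda$ of dimension $o$. One must also check that the chosen $\lambda^\star$ is consistent with the same $\Delta(0)$. The non-uniqueness of $(\lambda^\star,\mu^\star)$ is harmless, since the argument only uses the KKT relations and works for any choice.
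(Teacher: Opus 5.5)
Your proof is correct and follows essentially the same route as the paper. It uses the same Lyapunov function, cancels the $\lambda$ cross terms via $A\z^\star=\mathbf{c}$ with the constant offset $D\Delta(0)$ absorbed into $\tilde{F}$, applies strong monotonicity to the primal term, and combines the projection inequality with convexity of $h$ to handle the $\mu$ terms.

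Your treatment of the inequality terms is in fact more careful than the paper's. Because you subtract the full stationarity condition, you retain $(\z-\z^\star)^\top\nabla h(\z^\star)^\top\mu^\star$ and use convexity at $\z^\star$ to neutralize $-\mu^{\star\top}h(\z)$. The paper's $T_{ineq}$ step drops that term and only asserts that $\sum_v\big(\mu_v^\top h_v(z_v^\star)-(\mu_v^\star)^\top h_v(z_v)\big)\le 0$. That inequality does not hold on its own: for instance, take $\mu=0$ and $h(\z)<0$ in a component where $\mu^\star>0$.
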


\renewcommand{\proofname}{Proof of Lemma \ref{lem:monoton}}
\begin{proof}
    The Lie derivative of the function $V$ along the system trajectories is given by:
    \begin{equation}\label{equ:lie_der_def}
        \V = \sum_{v=1}^N (z_v - z_v^\star)^\top \dot{z}_v + (\lambda - \lambda^\star)^\top \dot{\lambda} + \sum_{v=1}^N (\mu_v - \mu_v^\star)^\top \dot{\mu}_v.
    \end{equation}
    I analyze the components of \eqref{equ:lie_der_def} by substituting the dynamics from \eqref{equ:new_dyn_expand} and grouping them into three distinct categories: cost-related terms ($T_{cost}$), equality coupling terms ($T_{eq}$), and inequality constraint terms ($T_{ineq}$).

    \noindent\textbf{1. Equality Coupling Terms ($T_{eq}$)}.
    These terms involve the global coupling constraint $A \, \z = \mathbf{c}$ and the dual variables $\lambda$. Substituting the relevant parts of $\dot{z}_v$ and $\dot{\lambda}$:
    \begin{equation*}
        \begin{split}
            T_{eq} &:= -\sum_{v=1}^N (z_v - z_v^\star)^\top A(l_v, n_v)^\top \big(\lambda(l_v) + \Delta_{[v]}(0)\big) + (\lambda - \lambda^\star)^\top (A \, \z - \mathbf{c}).
        \end{split}
    \end{equation*}
    From the equilibrium conditions in Lemma \ref{lem:equipt}, I know that the equilibrium force balances the effective dual variable: $\lambda^\star(l_v) + \Delta_{[v]}(0)$. Furthermore, primal feasibility implies $A \, \z^\star - \mathbf{c} = \mathbf{0}$. Therefore, I can rewrite the dynamic error as:
    \begin{equation*}
        \begin{split}
            \big(\lambda(l_v) + \Delta_{[v]}(0)\big) - \big(\lambda^\star(l_v) + \Delta_{[v]}(0)\big) = \lambda(l_v) - \lambda^\star(l_v).
        \end{split}
    \end{equation*}
    Notice that the auxiliary constant $\Delta_{[v]}(0)$ cancels out exactly. Thus:
    \begin{equation*}
        \begin{split}
            T_{eq} &= -( \z - \z^\star)^\top A^\top (\lambda - \lambda^\star) + (\lambda - \lambda^\star)^\top A (\z - \z^\star) = 0.
        \end{split}
    \end{equation*}

    \noindent\textbf{2. Inequality Constraint Terms ($T_{ineq}$)}.
    These terms involve the local inequality constraints $h_v(z_v)$ and multipliers $\mu_v$. Utilizing the property of the projection operator, i.e., $(\mu_v - \mu_v^\star)^\top [\, h_v(z_v) \,]_{\mu_v}^{+} \le (\mu_v - \mu_v^\star)^\top h_v(z_v)$, we have:
    \begin{equation*}
        \begin{split}
            T_{ineq} &:= \sum_{v=1}^N \left [ -(z_v - z_v^\star)^\top \nabla_{z_v} \big( h_v(z_v)^\top \mu_v \big) + (\mu_v - \mu_v^\star)^\top \dot{\mu}_v \right ] \\
            &\le \sum_{v=1}^N \left [ -(z_v - z_v^\star)^\top \nabla_{z_v} h_v(z_v)^\top \mu_v + (\mu_v - \mu_v^\star)^\top h_v(z_v) \right ].
        \end{split}
    \end{equation*}
    By the convexity of $h_v$, we satisfy the gradient inequality $\nabla_{z_v} h_v(z_v) (z_v^\star - z_v) \le h_v(z_v^\star) - h_v(z_v)$. Applying this to the first term (and noting $\mu_v \ge 0$):
    \begin{equation*}
        \begin{split}
            T_{ineq} &\le \sum_{v=1}^N \Big( \mu_v^\top \big(h_v(z_v^\star) - h_v(z_v)\big) + \mu_v^\top h_v(z_v) - (\mu_v^\star)^\top h_v(z_v) \Big) \\
            &= \sum_{v=1}^N \Big( \mu_v^\top h_v(z_v^\star) - (\mu_v^\star)^\top h_v(z_v) \Big).
        \end{split}
    \end{equation*}
    Using the KKT conditions for the equilibrium point $(\z^\star, \mu^\star)$, we know that $h_v(z_v^\star) \le 0$ (feasibility) and $(\mu_v^\star)^\top h_v(z_v^\star) = 0$ (complementary slackness). Since $\mu_v \ge 0$, it follows that $\mu_v^\top h_v(z_v^\star) \le 0$. Additionally, generally for saddle-point dynamics, the term $-(\mu_v^\star)^\top h_v(z_v)$ is bounded by convexity and the equilibrium properties such that the aggregate $T_{ineq} \le 0$.

    \noindent\textbf{3. Cost Function Terms ($T_{cost}$)}
    The remaining terms correspond to the gradient of the cost functions:
    \begin{equation*}
        T_{cost} := \sum_{v=1}^N -(z_v - z_v^\star)^\top \nabla_{z_v} f_v(z_v, \z_{-v}).
    \end{equation*}
    Subtracting the equilibrium condition (where the gradient balances the duals, which we already accounted for in $T_{eq}$ and $T_{ineq}$), this simplifies to the monotonicity condition of the game mapping $F$:
    \begin{equation*}
        T_{cost} = - (\z - \z^\star)^\top (F(\z) - F(\z^\star)).
    \end{equation*}
    By the assumption that the pseudo-gradient $F$ is strongly monotone with modulus $\delta > 0$:
    \begin{equation*}
        T_{cost} \le -\delta \| \z - \z^\star \|^2.
    \end{equation*}

    \noindent\textbf{Conclusion}
    Combining the three components, the total Lie derivative is:
    \begin{equation*}
        \V = T_{eq} + T_{ineq} + T_{cost} \le 0 + 0 - \delta \| \z - \z^\star \|^2 \le 0.
    \end{equation*}
    Thus, $\V$ is negative semi-definite, which completes the proof.
\end{proof}
Using Lemma~\ref{lem:monoton} and following an argument similar to Lemma~4.3 in \cite{cherukuri-asymptotic-2016}, together with Lemma~\ref{lem:pds}, I obtain Lemma~\ref{lem:uniq}, which guarantees the uniqueness and continuity of the trajectory. Combining this result with the non-positivity of the Lie derivative established in Lemma~\ref{lem:monoton}, I am now ready to prove the convergence of Algorithm~\ref{alg:equa} in Theorem~\ref{thm:main}.

\begin{lemma}\label{lem:uniq}
    \textbf{(Existence, Uniqueness, and Continuous Dependence of Trajectories).}
    Let $\mathcal{X} \coloneqq \R^n \times \R^o \times \R^{q}_{\ge 0}$ denote the state space of the system. For any initial condition $\mathbf{x}_0 \coloneqq (\z(0), \lambda(0), \mu(0)) \in \mathcal{X}$, the following properties hold for Algorithm \ref{alg:equa}:
    \begin{enumerate}
        \item \textbf{Existence and Uniqueness:} There exists a unique trajectory $\gamma(t; \mathbf{x}_0)$ defined for all $t \ge 0$.
        \item \textbf{Boundedness:} The trajectory remains contained within the compact sublevel set
        \begin{equation*}
            \Omega_0 \coloneqq \left\{ (\z, \lambda, \mu) \in \mathcal{X} \mid V(\z, \lambda, \mu) \le V(\mathbf{x}_0) \right\}.
        \end{equation*}
        \item \textbf{Continuous Dependence:} The solution map is continuous with respect to initial conditions. Specifically, if a sequence of initial points $\{ \mathbf{x}_{0,k} \}_{k=1}^\infty \subset \mathcal{X}$ converges to $\mathbf{x}_0$, then the sequence of corresponding trajectories $\{ \gamma_k(\cdot) \}_{k=1}^\infty$ converges uniformly to $\gamma(\cdot)$ on every compact time interval $[0, T]$.
    \end{enumerate}
\end{lemma}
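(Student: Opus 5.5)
The plan follows Lemma~4.3 of \cite{cherukuri-asymptotic-2016}: first get local existence and uniqueness from the projected dynamical system structure, then use the Lyapunov bound of Lemma~\ref{lem:monoton} to rule out finite escape time. By Lemma~\ref{lem:pds}, Algorithm~\ref{alg:equa} is a PDS $\dot{\mathbf{x}} = \Pi_K(\mathbf{x}, -\hat F(\mathbf{x}))$ on the closed convex polyhedron $K = \mathbb{R}^n \times \mathbb{R}^o \times \mathbb{R}^q_{\ge 0}$. Here I use the reduced coordinates $(\mathbf{z}, \lambda, \mu)$ from \eqref{equ:new_dyn_expand}, with $\Delta(0)$ frozen as a constant input, which is legitimate because of \eqref{equ:delta}.

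By Assumption~\ref{ass:standard_Eq}, the map $\hat F$ is locally Lipschitz on $K$. It is not globally Lipschitz, because of the bilinear terms $\nabla h_v(z_v)^\top \mu_v$ and the only-local Lipschitzness of $F$. So Proposition~\ref{thm:uniq_tra} cannot be applied verbatim, and I would proceed as follows.

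\textbf{Existence and uniqueness with a priori bound.} Fix $\mathbf{x}_0$ and the compact set $\Omega_0$. Compactness holds because $V$ is a sum of squared distances to a fixed point $(\mathbf{z}^\star, \lambda^\star, \mu^\star)$, so it is radially unbounded. Choose $R$ large and let $\hat F_R$ be a globally Lipschitz extension of $\hat F$ that agrees with $\hat F$ on a ball containing $\Omega_0$. One way is to compose with the radial projection onto that ball; it is Lipschitz and leaves points inside the ball unchanged. Proposition~\ref{thm:uniq_tra} then gives a unique solution $\gamma_R$ on $[0,\infty)$ for the modified PDS. While $\gamma_R$ stays in the ball it solves the original system, so Lemma~\ref{lem:monoton} applies and $V(\gamma_R(t))$ is nonincreasing. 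That Lemma uses the Lie derivative \eqref{equ:lie_der}, and since solutions are absolutely continuous, $\tfrac{d}{dt} V(\gamma_R(t)) = \mathcal{L} V \le 0$ for almost every $t$. Hence $\gamma_R$ never leaves $\Omega_0$, and a standard first-exit-time contradiction shows it never reaches the boundary of the ball. Therefore $\gamma_R$ solves the original dynamics for all $t \ge 0$, which gives items 1 and 2. For uniqueness, any other solution from $\mathbf{x}_0$ also stays in $\Omega_0$ by the same argument, so it solves the modified PDS and must coincide with $\gamma_R$.

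\textbf{Continuous dependence.} Let $\mathbf{x}_{0,k} \to \mathbf{x}_0$. Then $V(\mathbf{x}_{0,k})$ is bounded, so all trajectories lie in one common compact sublevel set $\Omega'$, on which $\hat F$ is $L$-Lipschitz. Note that $\Delta_k(0) \to \Delta(0)$ as well, since it is part of the initial condition. The projection operator is monotone in the sense that, for PDS on a convex set,
\[
(\mathbf{x} - \mathbf{y})^\top \big( \Pi_K(\mathbf{x}, a) - \Pi_K(\mathbf{y}, b) \big) \le (\mathbf{x} - \mathbf{y})^\top (a - b).
\]
This holds because $\Pi_K(\mathbf{x}, a) = a - n_{\mathbf{x}}$ with $n_{\mathbf{x}}$ in the normal cone at $\mathbf{x}$. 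Using it, $\tfrac{d}{dt}\|\gamma_k - \gamma\|^2 \le 2L\|\gamma_k - \gamma\|^2 + c\,\|\Delta_k(0) - \Delta(0)\|\,\|\gamma_k - \gamma\|$. Gr\"onwall's inequality then gives uniform convergence on $[0, T]$.

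The main obstacle is the first step. Proposition~\ref{thm:uniq_tra} needs global Lipschitzness, so I must check carefully that the truncation argument is sound. The key points are that the sublevel bound must come only from Lemma~\ref{lem:monoton}, valid along actual trajectories, and that the Lie-derivative bound transfers to almost-everywhere time derivatives despite the discontinuous right-hand side. The one-sided estimate for $\Pi_K$ used in continuous dependence is the other point to verify, but it follows directly from normal-cone monotonicity.
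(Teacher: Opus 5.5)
Your proposal is correct and takes the route the paper indicates: Lemma~\ref{lem:pds} for the PDS structure, Lemma~\ref{lem:monoton} to confine trajectories to $\Omega_0$, and the argument of Lemma~4.3 in \cite{cherukuri-asymptotic-2016}, with your truncation $\hat F\circ P_B$ and the normal-cone/Gr\"onwall estimate supplying details the paper leaves implicit. The one loose point is your aside letting $\Delta_k(0)$ vary. The lemma's state space $\mathcal{X}=\mathbb{R}^n\times\mathbb{R}^o\times\mathbb{R}^q_{\ge 0}$ keeps $\Delta(0)$ fixed. If you do vary it, boundedness of $V(\mathbf{x}_{0,k})$ alone no longer gives a common sublevel set, because $V$ is not monotone along the perturbed flows. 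The fix is to use strong monotonicity, which still gives $\mathcal{L}V\le \|D\|^2\|\Delta_k(0)-\Delta(0)\|^2/(4\delta)$ along those flows; this keeps all trajectories in a common compact set on $[0,T]$.
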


\begin{theorem}\label{thm:main}
    \textbf{(Global Asymptotic Stability and Convergence).}
    The set of equilibrium points of Algorithm \ref{alg:equa} is globally asymptotically stable. Specifically, for any initial condition, the primal trajectory $\mathbf{z}(t)$ converges to the unique GNE $\mathbf{z}^\star$, and the dual trajectories $(\lambda(t), \boldsymbol{\mu}(t))$ converge to the set of optimal multipliers satisfying the KKT conditions \eqref{equ:kkt_eq}.
\end{theorem}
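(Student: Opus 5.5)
The plan is a LaSalle-type invariance argument for projected dynamical systems, in the style of Theorem~4.2 of \cite{cherukuri-asymptotic-2016}. First I fix the initialization, which fixes $\Delta(0)$ because by \eqref{equ:delta} $\Delta$ is constant along trajectories. I then pass to the reduced coordinates $(\mathbf{z},\lambda,\mu)$ of \eqref{equ:new_dyn_expand}. For this fixed $\Delta(0)$, Lemma~\ref{lem:equipt} gives a unique primal equilibrium $\mathbf{z}^\star$ and a (possibly non-unique) set $\mathcal{E}$ of dual equilibria $(\lambda^\star,\mu^\star)$. Lemma~\ref{lem:pds} and Lemma~\ref{lem:uniq} give unique, globally defined trajectories. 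Each trajectory stays in the compact sublevel set $\Omega_0$ of $V$ from \eqref{equ:func}, built around an arbitrarily chosen equilibrium $(\mathbf{z}^\star,\lambda^\star,\mu^\star)\in\mathcal{E}$. The trajectories also depend continuously on initial conditions.

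\textbf{Step 1: nonempty invariant limit set.} Boundedness implies that the $\omega$-limit set $\Omega(\gamma)$ is nonempty and compact. Continuous dependence (item 3 of Lemma~\ref{lem:uniq}) implies it is invariant.

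\textbf{Step 2: localize the limit set.} By Lemma~\ref{lem:monoton}, $\mathcal{L}V \le -\delta\|\mathbf{z}-\mathbf{z}^\star\|^2 \le 0$. Hence $V(\gamma(t))$ is nonincreasing and bounded below, so it converges to some $c\ge0$. Consequently $V\equiv c$ on $\Omega(\gamma)$. By invariance, any trajectory starting in $\Omega(\gamma)$ has $\mathcal{L}V=0$ along it. This forces $\mathbf{z}(t)\equiv\mathbf{z}^\star$ on that trajectory.

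\textbf{Step 3: identify the invariant set.} This is the main obstacle. On a trajectory with $\mathbf{z}\equiv\mathbf{z}^\star$ I have $\dot{\mathbf{z}}=0$ and $\dot\lambda=A\mathbf{z}^\star-\mathbf{c}=0$, using the feasibility of $\mathbf{z}^\star$ from Lemma~\ref{lem:equipt}. So $\lambda$ is constant, and $\dot\mu_v=[h_v(z_v^\star)]^+_{\mu_v}$.
\begin{itemize}
\item For components with $h_v^j(z_v^\star)<0$, $\mu_v^j$ decreases linearly until it hits zero and then stays there.
\item For components with $h_v^j(z_v^\star)=0$, $\mu_v^j$ is constant.
\end{itemize}
Boundedness on the invariant compact set rules out the decreasing case except at $\mu_v^j=0$: otherwise, following the trajectory backward in time inside $\Omega(\gamma)$ would make $\mu_v^j$ unbounded. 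So $\mu$ is constant and satisfies complementarity. With $\dot z_v=0$, the stationarity equation \eqref{equ:eq_stat} then holds with the effective duals $\lambda(l_v)+\Delta_{[v]}(0)$. Therefore every point of $\Omega(\gamma)$ is an equilibrium, i.e., a KKT point by part 1 of Lemma~\ref{lem:equipt}. I also need to double-check the loose step in $T_{ineq}$ of Lemma~\ref{lem:monoton}. I would redo it properly by combining $\mu_v^\top(h_v(z_v^\star)-h_v(z_v))$ with the stationarity of $(\mathbf{z}^\star,\mu^\star)$ and the convexity of $(\mu^\star_v)^\top h_v$. This should give the full bound $\mathcal{L}V\le-\delta\|\mathbf{z}-\mathbf{z}^\star\|^2$ by the standard saddle-point argument.

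\textbf{Step 4: convergence to a single point.} Step 2 yields $\mathbf{z}(t)\to\mathbf{z}^\star$ and distance of $(\lambda,\mu)$ to $\mathcal{E}$ going to zero. To get convergence to a single point, pick any $\bar x\in\Omega(\gamma)$; it is an equilibrium by Step 3. Re-center $V$ at $\bar x$, which is legitimate because Lemma~\ref{lem:monoton} holds for any equilibrium. Then $V_{\bar x}(\gamma(t))$ is nonincreasing, and along a subsequence it tends to $0$. Hence it tends to $0$ along the whole trajectory, giving $\gamma(t)\to\bar x$. Stability of the equilibrium set follows from the same monotone $V$, or from Proposition~\ref{thm:PDS_sta}. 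Together with attractivity, this gives global asymptotic stability of the equilibrium set.
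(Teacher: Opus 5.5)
Your proposal is correct and follows the same LaSalle-invariance route as the paper: trajectories are bounded in $\Omega_0$, $\mathcal{L}V=0$ forces $\mathbf{z}=\mathbf{z}^\star$, and the invariant set is identified with the equilibria of \eqref{equ:new_dyn_expand}. It is more careful than the paper's proof in three places: your Step~3 derives the complementarity of $\mu$ on the limit set, which the paper only asserts as ``$\dot\mu_v=0$ since the trajectory is invariant''; your redo of $T_{ineq}$, using stationarity at $(\mathbf{z}^\star,\mu^\star)$ and convexity of $(\mu_v^\star)^\top h_v$, repairs the loose step in Lemma~\ref{lem:monoton}; and your Step~4 upgrades the paper's convergence to the equilibrium set into convergence to a single equilibrium.
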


\renewcommand{\proofname}{Proof of Theorem \ref{thm:main}}
\begin{proof}
    The proof relies on LaSalle's Invariance Principle.
    
    \noindent\textbf{1. Boundedness of Trajectories}
    
    Let $(\z^\star, \lambda^\star, \boldsymbol{\mu}^\star)$ be an equilibrium point. Consider the Lyapunov function $V(\z, \lambda, \boldsymbol{\mu})$ defined in \eqref{equ:func}. By Lemma \ref{lem:monoton}, $\V \le 0$ along the trajectories. Since $V$ is a positive definite quadratic form, it is radially unbounded (coercive). As shown in Lemma \ref{lem:uniq}, for any initial state $\mathbf{x}_0$, the trajectory remains in the compact sublevel set $\Omega_0 = \{ (\z, \lambda, \boldsymbol{\mu}) \mid V \le V(\mathbf{x}_0) \}$. Thus, all trajectories are bounded.

    \noindent\textbf{2. Characterization of the Set $E$}
    
    By LaSalle's Invariance Principle, every trajectory converges to the largest invariant set $\mathcal{M}$ contained within the set where the Lyapunov derivative vanishes:
    \begin{equation*}
        E \coloneqq \left\{ (\z, \lambda, \boldsymbol{\mu}) \in \Omega_0 \;\big|\; \V(\z, \lambda, \boldsymbol{\mu}) = 0 \right\}.
    \end{equation*}
    Recall from the proof of Lemma \ref{lem:monoton} that $\V = T_{eq} + T_{ineq} + T_{cost}$, where each term is non-positive. For the sum to be zero, each term must vanish individually:
    \begin{enumerate}
        \item $T_{cost} = 0 \implies -\delta \, \|\z - \z^\star\|^2 = 0 \implies \z = \z^\star$.
        \item $T_{ineq} = 0 \implies \sum_{v} (\mu_v^\top h_v(\z^\star) - (\mu_v^\star)^\top h_v(\z)) = 0$. Since $\z = \z^\star$, this simplifies to consistency with the complementarity conditions.
    \end{enumerate}
    Thus, any point in $E$ must satisfy $\z = \z^\star$. The state of the dual variables in $E$ is not yet fixed to a point, but constrained by the condition $\V=0$.

    \noindent\textbf{3. Characterization of the Invariant Set $\mathcal{M}$}
    
    Let $(\z(t), \lambda(t), \boldsymbol{\mu}(t))$ be a trajectory lying entirely within the invariant set $\mathcal{M} \subseteq E$.
    Since $\z(t) = \z^\star$ for all $t$ in $\mathcal{M}$, it implies that the time derivative must be identically zero:
    \begin{equation*}
        \dot{z}_v(t) \equiv \mathbf{0}, \quad \forall \, v \in \mathcal V.
    \end{equation*}
    Substituting $\z = \z^\star$ and $\dot{z}_v = 0$ into the system dynamics \eqref{equ:new_dyn_expand}:
    \begin{equation*}
        \mathbf{0} = -\nabla_{z_v} f_v(\z^\star) - A(l_v, n_v)^\top \big(\lambda(l_v) + \Delta_{[v]}(0)\big) - \nabla_{z_v} h_v(\z^\star)^\top \mu_v.
    \end{equation*}
    This algebraic equation is identical to the stationarity condition \eqref{equ:eq_stat} derived in Lemma \ref{lem:equipt}. Furthermore, since the trajectory is invariant, $\dot{\lambda} = 0$ (implies $A \, \z^\star - \mathbf{c} = 0$) and $\dot{\mu}_v = 0$ (implies complementarity).
    
    Therefore, the largest invariant set $\mathcal{M}$ coincides exactly with the set of equilibrium points of Algorithm \ref{alg:equa}.

    \noindent\textbf{4. Conclusion}
    
    By Lemma \ref{lem:equipt}, the set of equilibrium points is equivalent to the set of solutions to the GNEP KKT conditions. Consequently, the trajectory converges asymptotically to the set of GNEs. Since $\z^\star$ is unique (due to strong monotonicity), the primal variable converges to the unique Nash Equilibrium, while the dual variables converge to the set of valid Lagrange multipliers.
\end{proof}

\begin{remark}
If agents aim to converge to the v-GNE, they can simply start with the same initial Lagrangian multipliers. This shows that our algorithm is \textbf{more general} than existing ones. If a central authority sets the same starting multipliers, such as initializing to zero, agents do not need to share their multiplier information during the process but can still reach the same v-GNE as other methods.
\end{remark}


\section{Linear Inequality Shared Constraints without Individual Constraints}

I next consider another subclass of the GNEP~(\ref{equ:GNEP_All}), characterized by shared inequality constraints and the absence of individual constraints. More precisely, each agent~$v$ seeks to solve the following optimization problem:
\begin{equation}\label{equ:GNEP_INEq}
    \begin{split}
        \begin{matrix*}[l]
            \min\limits_{z_v} & f_v(z_v,\, \z_{-v}^f)\\[2pt]
            \text{s.t.} & g_v(z_v,\, \z_{-v}^g) \le \mathbf{0}_{m_v},
        \end{matrix*}
    \end{split}
\end{equation}
where $g_v(z_v, \z_{-v}^f)$ represents the set of shared inequality constraints that couple the agents' decisions.  

In this subclass, the individual cost function $f_v(z_v, \z_{-v})$ is assumed to be quadratic and is expressed as
\begin{equation}\label{equ:quad_cost}
    f_v(z_v, \z_{-v})
    = z_v^\top \Q_v z_v
    + z_v^\top \P_v \z_{-v}
    + \mathbf{r}_v^\top z_v,
\end{equation}
where $\Q_v \in \R^{n_v \times n_v}$, $\P_v \in \R^{n_v \times (n - n_v)}$, and $\mathbf{r}_v \in \R^{n_v}$ are the local parameters associated with agent~$v$.

The global shared inequality constraints $g(\mathbf{z})$ retain the linear form in~\eqref{equ:linear_ineq}, and the local constraint function $g_v(\cdot)$ collects the components of these constraints that depend on the decision variables of agent~$v$ and its neighbors:
\begin{align*}
g_v(z_v, \z_{-v}) = B(m_v, n_v) \, z_v + \sum_{u \in \tilde{\mathcal{N}}_g(v)} B(m_v, n_u) \, z_{u}- \d(m_v).
\end{align*}
Under this quadratic formulation, the pseudo-gradient mapping of the game, based on the definition in (\ref{equ:pseudo_grad}), can be written as
\begin{equation}
\begin{split}
F(\z)
&:=
\begin{bmatrix*}
\nabla_{z_1} f_1(z_1, \z_{-1})\\
\nabla_{z_2} f_2(z_2, \z_{-2})\\
\vdots\\
\nabla_{z_N} f_N(z_N, \z_{-N})
\end{bmatrix*}=
\begin{bmatrix*}
(\Q_1+\Q_1^\top) z_1 + \P_1 \z_{-1} + \mathbf{r}_1\\
(\Q_2+\Q_2^\top) z_2 + \P_2 \z_{-2} + \mathbf{r}_2\\
\vdots\\
(\Q_N+\Q_N^\top) z_N + \P_N \z_{-N} + \mathbf{r}_N
\end{bmatrix*}
:= \mathbf{F} \, \z + \mathbf{r},
\end{split}
\end{equation}
where $\mathbf F \in \R^{n \times n}$ is a block-structured matrix and $\mathbf{r} \in \R^n$ is the stacked vector of $\mathbf{r}_v$.  

Since $F(\z) = \mathbf{F} \, \z + \mathbf{r}$ is an affine operator, the strong monotonicity condition in Assumption~\ref{ass:stronglymonotone} reduces to the requirement that there exists a constant $\delta > 0$ such that
\begin{equation}\label{equ:strong_monotone_affine}
    \z^\top \mathbf{F}  \, \z \ge \delta \, \|\z\|^2, \quad \forall \, \z \in \R^n.
\end{equation}
This condition guarantees that the game mapping is strongly monotone under the affine structure of~$F$, thereby ensuring the existence and uniqueness of the equilibrium solution.

\begin{assumption}\label{ass:standard_Ineq}
Two standard assumptions on $f_v(\cdot)$:
    \begin{enumerate}[(i)]
        \item for every player $v$, the function $f_v(z_v, \z_{-v})$ is convex in $z_{v}$. ie. $(\Q_v+\Q_v^\top) \succeq 0$ for all $v$.
        \item for every player $v$, a suitable constraint qualification together with the Linear Independence Constraint Qualification (LICQ) holds.
    \end{enumerate}
\end{assumption}
Under convexity, a suitable constraint qualification, the LICQ, and $B B^\top \succeq \sigma I$, stacking the KKT conditions of all agents shows that $\z^\star$ is a GNE if and only if it satisfies the KKT system (\ref{equ:kkt_Ineq}) \cite[Sec.~4.2]{facchinei_generalized_2010}.
\begin{definition}[KKT System]\label{ass:KKT_Ineq}
    Let $\z_{-v}^\star$be given, and suppose that $z_v^\star$ is a GNE of the GNEP (\ref{equ:GNEP_INEq}), then for all $v$, $\lambda_{v}^\star \in \R^{m_v}$ exist such that
    \begin{equation}\label{equ:kkt_Ineq}
    \begin{split}
        &L_v(\z,\,  \lambda_{v}): = f_v(z_v, \z_{-v}) + g_{v}(\z)^\top \lambda_{v}\\
        &\nabla_{z_v}  L_v(\z^\star,\,  \lambda_{v}^\star) = \mathbf{0}_{n_v} \\
        &\mathbf{0}_{m_v} \le \lambda_v^\star \, \bot \, g_{v}(z_v^\star, \, {\z}_{-v}^\star)  \le \mathbf{0}_{m_v}\\
    \end{split}.
    \end{equation}
\end{definition}


\subsection{A Fully Distributed Algorithm}
Algorithm~\ref{alg:main}, our main contribution, is a fully distributed method that eliminates the need for multiplier consensus.
\begin{algorithm}[H]
\begin{algorithmic}[1]
\STATE {\bfseries Input:}Initial $z_v(0)$, $\lambda_{v}(0) \ge 0$.
\STATE $\dot{z}_v = -\nabla_{z_v} L_v(\z, \, \lambda_v)$.
\STATE $\dot{\lambda}_{v} = [g_v(z_v, \z_{-v})]^+_{\lambda_{v}}$.
\end{algorithmic}
\caption{Distributed dynamics of each agent $v$}
\label{alg:main}	
\end{algorithm}
In Lines~2–3 of Algorithm~\ref{alg:main}, each agent may need to share its decision vector with all other agents in the worst-case scenario. Similar to the equality shared constraint case, the distributed structure is captured by two interaction graphs: the cost-function graph $\mathcal G_f = (V, \mathcal E_f)$ and the shared-constraint graph $\mathcal G_g = (\mathcal V, \mathcal E_g)$. As a special case, when $\mathcal G_f = \mathcal G_g$ and each shared-constraint edge transmits a vector of dimension $d$, our algorithm requires only the exchange of decision variables. In contrast, consensus-based methods require additional communication of Lagrange multipliers and auxiliary variables, resulting in an extra $2 \, |\mathcal E_g| \, d$ transmissions per iteration. Therefore, our approach uses only one-third of the total communication cost of consensus-based approaches.

In Line~3, the projection operator in~\eqref{equ:proj_oper} enforces the nonnegativity of multiplier trajectories, but also makes the convergence analysis significantly more challenging. Although every equilibrium point of Algorithm~\ref{alg:main} corresponds to a GNE of problem~\eqref{equ:GNEP_INEq}, the set of equilibria may consist of multiple disjoint subsets. Moreover, the induced primal--dual dynamics are not necessarily monotone, even when the game itself is strongly monotone. Therefore, we develop a convergence analysis based on bounded-input bounded-output arguments and bounded trajectory analysis.

The existence of trajectories and the fact that all equilibrium points are KKT solutions of the game are established in Lemma~\ref{lem:alg_pds} and Proposition~\ref{thm:uniq_tra}.
\begin{lemma}\label{lem:alg_pds}
Algorithm~\ref{alg:main} implements the PDS in~(\ref{equ:PDS}) by stacking the dynamics of all agents. The induced trajectory is unique and well-defined for all time, without Zeno behavior, and is continuous with respect to the initial condition. Furthermore, every equilibrium point of this PDS corresponds to a KKT point of~(\ref{equ:kkt_Ineq}).
\end{lemma}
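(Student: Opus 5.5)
The plan is to write the stacked dynamics of Algorithm~\ref{alg:main} explicitly and check that they fit the template of Proposition~\ref{thm:uniq_tra}. Let $x := (\z, \lambda_1, \dots, \lambda_N)$, where each $\lambda_v \in \R^{m_v}$ is agent $v$'s local copy of the multipliers of the shared constraints it participates in, and let $K := \R^n \times \R^{m}_{\ge 0}$ with $m = \sum_v m_v$. Define the unprojected field
\[
-\hat F(x) := \begin{bmatrix} -\big(\mathbf F\,\z + \mathbf r + D_B \lambda\big) \\ B_{\mathrm{loc}}\,\z - \d_{\mathrm{loc}} \end{bmatrix},
\]
where $D_B$ is the block-diagonal matrix with blocks $B(m_v,n_v)^\top$ acting on $\lambda_v$, and $B_{\mathrm{loc}}, \d_{\mathrm{loc}}$ stack the local rows $g_v$.

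The first step is to verify that $\Pi_K(x,-\hat F(x))$ agrees componentwise with Lines~2--3. Since $K$ is a product of $\R^n$ with orthants, the tangent-cone projection decouples coordinatewise. It is the identity on the $\z$ block. On each scalar multiplier coordinate it returns $a$ if $b>0$ and $\max\{0,a\}$ if $b=0$, which is exactly the operator $[a]^+_b$ in \eqref{equ:proj_oper}. This is the same argument as Lemma~\ref{lem:pds}, and I would simply cite it.

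The second step is well-posedness. The map $\hat F$ is affine in $x$, hence globally Lipschitz, and $K$ is a closed convex polyhedron. Proposition~\ref{thm:uniq_tra} then gives, for every $x_0\in K$, a unique absolutely continuous solution on $[0,\infty)$ that stays in $K$. Global existence on $[0,\infty)$ excludes finite escape and Zeno accumulation, since the solution is Carathéodory and defined for all time.

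For continuous dependence, take two solutions $x,y$ and use monotonicity of the normal cone. The PDS can be written as $\dot x \in -\hat F(x) - N_K(x)$, so
\[
\tfrac{d}{dt}\tfrac12\|x-y\|^2 \le -(x-y)^\top(\hat F(x)-\hat F(y)) \le L\|x-y\|^2.
\]
Gr\"onwall then yields $\|x(t)-y(t)\|\le e^{Lt}\|x_0-y_0\|$, which gives uniform convergence on compact intervals. I expect this to be the one step needing care, namely justifying the inclusion form of the projected field almost everywhere. I would cite the standard equivalence of the PDS with the differential inclusion for Lipschitz $\hat F$ on polyhedra.

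The last step is to show that equilibria are KKT points, mirroring part~1 of Lemma~\ref{lem:equipt}. Setting $\dot z_v=0$ gives $\nabla_{z_v} f_v(\bar\z) + B(m_v,n_v)^\top\bar\lambda_v = 0$, which is stationarity. Setting $[g_v(\bar\z)]^+_{\bar\lambda_v}=0$ coordinatewise gives two cases. If $\bar\lambda_v^i>0$, then $g_v^i(\bar\z)=0$. If $\bar\lambda_v^i=0$, then $\max\{0,g_v^i(\bar\z)\}=0$, so $g_v^i(\bar\z)\le0$. Together with $\bar\lambda_v\ge0$, which holds by invariance of $K$, this is exactly the complementarity condition in \eqref{equ:kkt_Ineq}.

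I will not assert the converse beyond noting that a KKT point with its multipliers is an equilibrium. Note also that here, unlike the equality case, the differences $\lambda_u^i-\lambda_v^i$ need not be conserved, so no uniqueness claim is made.
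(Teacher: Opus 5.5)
Your proposal is correct and follows essentially the same route as the paper: identify the stacked dynamics as a PDS on $\R^n\times\R^m_{\ge 0}$, use that the affine field is globally Lipschitz to invoke Proposition~\ref{thm:uniq_tra}, and read stationarity and complementarity off the equilibrium conditions. The only difference is that the paper attributes continuity directly to the cited PDS theorem, whereas you prove continuous dependence explicitly via normal-cone monotonicity and Gr\"onwall, correctly using only Lipschitzness since the field need not be monotone without multiplier consensus.
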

\begin{proof}
The proof that Algorithm~\ref{alg:main} can be viewed as an implementation of a projected dynamical system (PDS) is similar to that in \cite{cherukuri-asymptotic-2016}. In addition, since the inner map $-\hat{F}(\cdot)$ is linear and therefore Lipschitz, we can invoke Proposition~\ref{thm:uniq_tra} to guarantee the existence, uniqueness, and continuity of the trajectory.

The last part follows by characterizing the equilibrium points. For each agent $v$, any equilibrium $(\bar{\z}, \bar{\lambda}_v)$ satisfies
\begin{equation*}
    0 = \nabla_{z_v} L_v(\bar{\z}, \bar{\lambda}_v);\qquad
    0 = [g_v(\bar{z}_v, \bar{\z}_{-v})]^+_{\bar{\lambda}_v}.
\end{equation*}
The first condition corresponds to the stationarity condition of the KKT system. The second condition is equivalent to the complementarity and dual feasibility conditions, namely
\begin{equation*}
\bar{\lambda}_v \ge 0,\quad g_v(\bar{z}_v, \bar{\z}_{-v}) \le 0,\quad 
\bar{\lambda}_v^\top g_v(\bar{z}_v, \bar{\z}_{-v}) = 0.
\end{equation*}
Hence, these equilibrium conditions coincide with the KKT conditions of~(\ref{equ:kkt_Ineq}).
\end{proof}

\subsection{Convergence of the Difference Vector}
Similar to the definition of the difference vector in the equality shared constraint case (cf. \eqref{equ:difference_vec}), we define an analogous structure for the inequality case. For each shared constraint $g^i(\z)$, consider the set of local multipliers $\{\lambda_v^i(t)\}$. Among them, let $\lambda^i(t)$ denote the trajectory with \textbf{the smallest initial value}, which we select as the reference multiplier for constraint $i$. Since there are $p$ global shared constraints, this produces $p$ reference trajectories in total. For any other multiplier $\lambda_u^i(t)$ associated with the same constraint $g^i(\cdot)$, the difference from its reference evolves as:
\begin{equation}\label{equ:delta_inequ}
\begin{split}
    \dot{\Delta}_{u}^i :=& \dot{\lambda}_u^i - \dot{\lambda}^i = [g^i(\z)]_{{\lambda}_u^i}^+ - [g^i(\z)]_{{\lambda}^i}^+ \\
    =&\begin{cases}
    0 & \text{if }\left (\Delta^i_u \ge 0, \lambda^i > 0 \right) \, \lor \, \left (g^i(\z) \ge 0 \right)\\
    g^i(\z) & \text{if }\Delta^i_u > 0,\,  \lambda^i = 0,\, g^i(\z) < 0\\
    0  & \text{if }\Delta^i_u = 0,\,  \lambda^i = 0,\, g^i(\z) < 0
\end{cases}.
\end{split}
\end{equation}
Thus, each $\Delta_{u}^i(t)$ is nonincreasing and nonnegative over time. If $u$ is the reference agent for the $i$-th shared constraint, then $\Delta_u^i(t)=0$. Let $\Delta_{[v]}$ be the collection of $\Delta_v^i$ for all $i \in m_v$, corresponding to agent $v$. Combining the dynamics of all agents yields:
\begin{equation}\label{equ:whole_system}
    \begin{split}
        \dot{\z} &= - \left ( \mathbf{F}   \,\z + \mathbf{r}  + B^\top \,   \lambda+  
 D\, \Delta \right)\\
        \dot{\lambda} &= [B \, \z -\d]_\lambda^+ \\
        \dot{\Delta} &= \dot{\Delta} 
    \end{split},
\end{equation}
where
\begin{equation*}
\begin{split}
    & \lambda=\begin{bmatrix}
    \lambda^1 \\ \lambda^2\\ \vdots \\ \lambda^p
\end{bmatrix}, \quad D = \text{diag} \left ( \begin{bmatrix}
            B(m_1, n_1)^\top\\
            B(m_2, n_2)^\top\\
             \vdots \\
            B(m_N, n_N)^\top \\
        \end{bmatrix}\right), 
    \quad \Delta = 
        \begin{bmatrix}
            \Delta_{[1]}\\
            \Delta_{[2]}\\
             \vdots \\
            \Delta_{[N]}\\
        \end{bmatrix}.
\end{split}
\end{equation*}
Here, $\Delta$ is the stacked vector of all $\Delta_{[v]}$ with dimension $L < pN$, and $\lambda \in \R^p$ corresponds to the $p$ shared constraints. I keep the dynamics of $\Delta$ as $\dot{\Delta}$ without specifying its explicit form, since it is the main focus of our analysis. I first consider the special case $\Delta(0)=\mathbf{0}$, where all agents start with identical multipliers for each reference state.
\begin{lemma}\label{lem:delta0}
If $\Delta(0)=\mathbf{0}$, then $\Delta(t)\equiv \mathbf{0}$ for all $t\ge 0$, and the resulting primal--dual projected dynamics converges to a (unique) equilibrium point. This equilibrium is the v-GNE of the GNEP \eqref{equ:GNEP_INEq}.
\end{lemma}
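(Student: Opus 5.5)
First, $\Delta(t)\equiv\mathbf{0}$. In \eqref{equ:delta_inequ}, whenever $\Delta_u^i=0$ the right-hand side is $0$ in every case. If $\Delta_u^i=0$ and $\lambda^i>0$, or $g^i(\z)\ge 0$, the first case applies. If $\Delta_u^i=0$, $\lambda^i=0$ and $g^i(\z)<0$, the third case applies. The same conclusion can be seen directly: when $\lambda_u^i=\lambda^i$, both projected fields $[g^i(\z)]^+_{\lambda_u^i}$ and $[g^i(\z)]^+_{\lambda^i}$ coincide.

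It is cleaner to argue at the level of the original PDS. Consider the reduced PDS in the state $(\z,\lambda)\in\R^n\times\R^p_{\ge0}$ obtained by giving every agent the common multiplier $\lambda$. Its solution, lifted by setting $\lambda_u^i:=\lambda^i$ for every $u$, solves the full stacked dynamics of Algorithm~\ref{alg:main} with the same initial condition. Lemma~\ref{lem:alg_pds} guarantees uniqueness of trajectories, so the actual trajectory equals this lifted one. Hence $\Delta(t)\equiv\mathbf{0}$ and $D\Delta\equiv 0$ in \eqref{equ:whole_system}.

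With $\Delta\equiv\mathbf{0}$, system \eqref{equ:whole_system} becomes the standard centralized primal–dual PDS
\[
\dot{\z}=-(\mathbf{F}\z+\mathbf{r}+B^\top\lambda),\qquad \dot\lambda=[B\z-\d]^+_\lambda .
\]
Its equilibria are exactly the KKT points of the VI with common multiplier, i.e.\ $\mathbf{F}\bar\z+\mathbf{r}+B^\top\bar\lambda=0$ and $0\le\bar\lambda\perp B\bar\z-\d\le 0$. By Theorem~\ref{thm:consensus}, any such $\bar\z$ solves $VI(\Omega,F)$ and is therefore a v-GNE. Strong monotonicity \eqref{equ:strong_monotone_affine} makes this VI solution unique. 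The multiplier $\bar\lambda$ is also unique: with $\bar\z$ fixed, $B^\top\bar\lambda$ is determined, and LICQ (Assumption~\ref{ass:standard_Ineq}, or $BB^\top\succeq\sigma I$) makes $B^\top$ injective on the active rows, while the inactive components vanish. Existence of an equilibrium comes from existence of the v-GNE together with the constraint qualification.

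For convergence, take $V=\tfrac12\|\z-\z^\star\|^2+\tfrac12\|\lambda-\lambda^\star\|^2$. Mimicking Lemma~\ref{lem:monoton}, the bilinear $B$-terms cancel. The projection inequality gives $(\lambda-\lambda^\star)^\top[B\z-\d]^+_\lambda\le(\lambda-\lambda^\star)^\top(B\z-\d)$. Complementarity then yields
\[
\mathcal{L}V\le-(\z-\z^\star)^\top\mathbf{F}(\z-\z^\star)-\lambda^\top(\d-B\z^\star)\le-\delta\|\z-\z^\star\|^2 .
\]
Trajectories are therefore bounded, and the invariance principle for discontinuous PDS applies, using the continuity with respect to initial conditions from Lemma~\ref{lem:alg_pds} as in \cite{cherukuri-asymptotic-2016}. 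On the largest invariant set we have $\z\equiv\z^\star$, so $\dot\z=0$ and $B^\top\lambda(t)=B^\top\lambda^\star-\ldots$ is constant; more precisely, $B^\top\lambda=-(\mathbf{F}\z^\star+\mathbf{r})$.

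\textbf{The main obstacle is this last step: excluding nonequilibrium invariant motions in $\lambda$.} I would argue as follows.
\begin{itemize}
\item If $B$ has full row rank (the assumption $BB^\top\succeq\sigma I$ stated before Definition~\ref{ass:KKT_Ineq}), then $\lambda$ is constant on the invariant set. Hence $\dot\lambda=0$, the invariant set consists of equilibria, and the equilibrium is unique.
\item Without full row rank, $\dot\lambda=[B\z^\star-\d]^+_\lambda$ is a constant field on each face. Boundedness of the trajectory then forces zero drift, because a nonzero constant drift would eventually leave $V$'s sublevel set.
\end{itemize}
Finally, $V$ is nonincreasing with respect to this unique equilibrium, and the $\omega$-limit set is contained in the equilibrium set, which is the single point $(\z^\star,\lambda^\star)$. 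This gives convergence to that point, with $\z^\star$ the v-GNE.
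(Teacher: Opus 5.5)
Your proposal is correct and follows essentially the paper's route. That route is: $\Delta\equiv\mathbf{0}$, then reduction to the centralized primal--dual PDS, then the Lyapunov function $\tfrac12\|x-x^\star\|^2$ with LaSalle giving $\mathbf{z}\equiv\mathbf{z}^\star$ on the invariant set, and finally identification of the consensus-multiplier KKT system with the v-GNE; your uniqueness argument (strong monotonicity for $\mathbf{z}$, full row rank of $B$ for $\lambda$) is actually sounder than the paper's appeal to invertibility of $M$, which alone does not settle uniqueness for a complementarity system. Your second bullet is unnecessary under $BB^\top\succeq\sigma I$, and its drift argument fails as stated because the drift points toward the face $\lambda_i=0$, where it stops, so nothing leaves the sublevel set; your own bound $\mathcal{L}V\le-\delta\|\mathbf{z}-\mathbf{z}^\star\|^2-\lambda^\top(\mathbf{d}-B\mathbf{z}^\star)$ already forces $\dot\lambda=0$ on $\{\mathcal{L}V=0\}$.
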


\renewcommand{\proofname}{Proof of Lemma \ref{lem:delta0}}
\begin{proof}
When $\Delta(0)=\mathbf{0}$, it follows from the dynamics \eqref{equ:delta_inequ} that $\dot{\Delta}(t)=\mathbf{0}$ whenever $\Delta(t)=\mathbf{0}$. Hence $\Delta(t)\equiv \mathbf{0}$ for all $t\ge 0$.

Under $\Delta(t)\equiv \mathbf{0}$, Algorithm~\ref{alg:main} reduces to
\begin{equation}\label{eq:pd_reduced}
\begin{split}
\dot{\z} &= -\mathbf{F} \,\z - B^\top \lambda - \mathbf{r},\\
\dot{\lambda} &= \big[B\,\z-\d\big]^+_{\lambda}.
\end{split}
\end{equation}
Let $x:=\begin{bmatrix}\z\\ \lambda\end{bmatrix}$ and define the closed convex set
\begin{equation}\label{eq:K_set}
K := \mathbb{R}^{n}\times \mathbb{R}^{m}_{\ge 0}.
\end{equation}
Define the affine operator
\begin{equation}\label{eq:Fhat_def}
\hat{F}(x) := Mx + q,\qquad
M:=\begin{bmatrix} \mathbf{F}  & B^\top\\ -B & 0\end{bmatrix},\qquad
q:=\begin{bmatrix}\mathbf{r}\\ \d\end{bmatrix}.
\end{equation}
Then \eqref{eq:pd_reduced} is exactly the projected dynamical system (PDS)
\begin{equation}\label{eq:PDS_reduced}
\dot{x} \;=\; \Pi_{K}\!\big(x,\,-\hat{F}(x)\big).
\end{equation}
$\hat{F}$ is (merely) monotone on $\mathbb{R}^{n+m}$ because
\begin{equation}\label{eq:monotone_calc}
(\hat{F}(x)-\hat{F}(y))^\top(x-y)
=(x-y)^\top M (x-y)
=(\z-\z')^\top \frac{\mathbf{F} +\mathbf{F} ^\top}{2} (\z-\z') \;\ge\; 0,
\end{equation}
for all $x=\begin{bmatrix}\z\\\lambda\end{bmatrix}$ and $y=\begin{bmatrix}\z'\\\lambda'\end{bmatrix}$, where we used that the cross terms cancel:
\(
(\z-\z')^\top B^\top(\lambda-\lambda')-(\lambda-\lambda')^\top B(\z-\z')=0.
\)
Moreover, the equilibrium of \eqref{eq:PDS_reduced} is unique. Indeed, the equilibrium condition is the variational inequality
\begin{equation}\label{eq:VI_equil}
\big\langle \hat{F}(x^\star),\,x-x^\star\big\rangle \ge 0,\qquad \forall x\in K,
\end{equation}
equivalently the KKT complementarity system
\begin{equation}\label{eq:KKT_equil}
\mathbf{F} \, \z^\star + B^\top \lambda^\star + \mathbf{r} = 0,\quad
0\le \lambda^\star \perp (B \, \z^\star-\d)\le 0.
\end{equation}
Because $B$ has full row rank and $\mathbf F$ is invertible, the saddle matrix $M$ in \eqref{eq:Fhat_def} is invertible, hence \eqref{eq:KKT_equil} admits at most one solution. Therefore, $x^\star$ is unique.

Consider the Lyapunov function
\begin{equation}\label{eq:V_lyap}
V(x):=\frac12\|x-x^\star\|^2.
\end{equation}
A standard property of PDSs on closed convex sets implies (for almost all $t$)
\begin{equation}\label{eq:Vdot_bound}
\V(x(t)) \le -(x(t)-x^\star)^\top \big(\hat{F}(x(t))-\hat{F}(x^\star)\big).
\end{equation}
Using the monotonicity in \eqref{eq:monotone_calc}, we obtain $\V(x(t))\le 0$, hence $V(x(t))$ is nonincreasing and $x(t)$ is bounded. By LaSalle's invariance principle for PDSs, every solution converges to the largest invariant set contained in
\begin{equation}\label{eq:inv_set}
\big\{x\in K:\ (x-x^\star)^\top(\hat{F}(x)-\hat{F}(x^\star))=0\big\}.
\end{equation}
The equality in \eqref{eq:monotone_calc} implies $\z=\z^\star$. Together with the equilibrium condition and uniqueness, this yields $x(t)\to x^\star$.

Finally, \eqref{eq:KKT_equil} is precisely the KKT system of the variational inequality associated with the shared constraints and consensus multipliers. Hence, the equilibrium $x^\star=(\z^\star,\lambda^\star)$ is the variational generalized Nash equilibrium (v-GNE) of \eqref{equ:GNEP_INEq}.
\end{proof}

\renewcommand{\proofname}{Proof of Lemma \ref{lem:det_const}}
\begin{lemma}\label{lem:det_const}
    For every initial point  $y_0 = (\z(0),\, \lambda(0), \, \Delta(0))$, there exists a $\bar{\Delta}_{[y_0]} \ge \mathbf{0}$ such that
    \begin{equation*}
        \lim_{t \to \infty} \Delta^i (t) = \bar{\Delta}^i_{[y_0]}.
    \end{equation*}
\end{lemma}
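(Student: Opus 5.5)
The plan is a monotone-convergence argument applied componentwise to the difference vector, using the case analysis in \eqref{equ:delta_inequ}. Fix an initial point $y_0=(\z(0),\lambda(0),\Delta(0))$. By Lemma~\ref{lem:alg_pds}, the stacked system \eqref{equ:whole_system} is a projected dynamical system whose trajectory exists, is unique, and is defined for all $t\ge 0$. Consequently every component $\Delta_u^i(t)=\lambda_u^i(t)-\lambda^i(t)$ is a well-defined, absolutely continuous function of time, and its derivative is given almost everywhere by \eqref{equ:delta_inequ}.

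\textbf{Step 1 (nonnegativity).} Because the reference $\lambda^i$ was chosen with the smallest initial value, we have $\Delta_u^i(0)\ge 0$. The case analysis in \eqref{equ:delta_inequ} shows that whenever $\Delta_u^i=0$, the derivative satisfies $\dot\Delta_u^i=0$. The argument is as follows. Suppose $\Delta_u^i=0$. Then $\lambda_u^i=\lambda^i$, so both projections $[g^i(\z)]^+_{\lambda_u^i}$ and $[g^i(\z)]^+_{\lambda^i}$ are evaluated at the same multiplier value and coincide. Hence the set $\{\Delta_u^i\ge 0\}$ is forward invariant. To make this rigorous one can use a comparison argument: on any interval where $\Delta_u^i<0$ we would need $\lambda_u^i<\lambda^i$, but the two scalar projected ODEs are driven by the same input $g^i(\z(t))$, and uniqueness of solutions (Proposition~\ref{thm:uniq_tra}) rules out the two trajectories crossing.

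\textbf{Step 2 (monotonicity).} In every branch of \eqref{equ:delta_inequ}, $\dot\Delta_u^i$ is either $0$ or $g^i(\z)$ with $g^i(\z)<0$. Therefore $\dot\Delta_u^i\le 0$ for almost every $t$. Combined with absolute continuity, this shows that $\Delta_u^i(t)$ is nonincreasing.

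\textbf{Step 3 (limit).} A nonincreasing function bounded below by $0$ has a limit. So for each $i,u$ we can define $\bar\Delta^i_{[y_0]}:=\lim_{t\to\infty}\Delta_u^i(t)=\inf_{t\ge0}\Delta_u^i(t)\ge 0$. Stacking these limits gives the vector $\bar\Delta_{[y_0]}\ge\mathbf 0$. Its dependence on $y_0$ comes only through the particular trajectory, and this trajectory is unique by Lemma~\ref{lem:alg_pds}.

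\textbf{Main obstacle.} The delicate step is Step 1 together with the justification of \eqref{equ:delta_inequ}. The projection operator makes $\dot\lambda$ discontinuous, so the derivative formula holds only almost everywhere. In addition, the case list must be checked when both multipliers sit on the boundary, or when $\lambda^i=0<\lambda_u^i$ and $g^i(\z)<0$.

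The first thing I would try is to treat each scalar multiplier as the Skorokhod-type solution $\lambda(t)=\lambda(0)+\int_0^t g^i+\ell(t)$, where $\ell$ is the minimal nondecreasing regulator. For two such solutions with the same input and ordered initial values, the explicit formula
\[
\ell(t)=\max\Bigl\{0,\ \sup_{s\le t}\Bigl(-\lambda(0)-\int_0^s g^i\Bigr)\Bigr\}
\]
directly gives two facts: the ordering is preserved, and the difference is nonincreasing. This route would avoid any pointwise case analysis.
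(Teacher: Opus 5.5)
Your proposal is correct and is essentially the paper's argument: the paper likewise reads $\dot\Delta^i\le 0$ and $\Delta^i\ge 0$ directly off the case list \eqref{equ:delta_inequ}, then shows by an explicit $\varepsilon$-argument that $\Delta^i(t)$ converges to $\inf_{t\ge 0}\Delta^i(t)\ge 0$. Your extra justification of nonnegativity goes beyond what the paper writes, but Proposition~\ref{thm:uniq_tra} concerns the autonomous stacked system and does not by itself give non-crossing of two scalar multipliers driven by the common input $g^i(\z(t))$, so you should rest that step on your Skorokhod formula (or on monotonicity of the normal cone of $\mathbb{R}_{\ge 0}$).
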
 
\begin{proof}
Fix any component $\Delta^i(t)$. From the componentwise dynamics in \eqref{equ:delta_inequ}, we have for all $t$ that
\begin{equation}
\dot{\Delta}^i(t)\le 0,
\text{ and }
\Delta^i(t)\ge 0,
\end{equation}
that is, $\Delta^i(\cdot)$ is nonincreasing and bounded below on $[0,\infty)$.

Define
\begin{equation}
\bar{\Delta}^i_{[y_0]} := \inf_{t\ge 0}\Delta^i(t),
\end{equation}
which satisfies $\bar{\Delta}^i_{[y_0]}\ge 0$.

Let $\varepsilon>0$ be arbitrary. By definition of the infimum, there exists $T\ge 0$ such that
\begin{equation}
\Delta^i(T)<\bar{\Delta}^i_{[y_0]}+\varepsilon.
\end{equation}
Since $\Delta^i(\cdot)$ is nonincreasing, for all $t\ge T$ it follows that
\begin{equation}
\bar{\Delta}^i_{[y_0]}\le \Delta^i(t)\le \Delta^i(T)<\bar{\Delta}^i_{[y_0]}+\varepsilon.
\end{equation}
Therefore, for all $t\ge T$,
\begin{equation}
\left|\Delta^i(t)-\bar{\Delta}^i_{[y_0]}\right|<\varepsilon,
\end{equation}
which proves that $\Delta^i(t)\to\bar{\Delta}^i_{[y_0]}$ as $t\to\infty$.
\end{proof}

Thus, we know that $\Delta$, the difference between the multipliers and their corresponding reference values, converges to $\bar{\Delta}_{[y_0]}$, depending on the initial point $y_0$. In addition, at any time $t_b$ where $\Delta^i(t_b) = 0$, the corresponding component will remain zero for all future time.

\begin{lemma}\label{lem:increasezero} At any time $t_a$, if $\Delta^i(t_a)=0$, then $\Delta^i(t)=0$ for all future time, i.e., $ \forall \, t \ge t_a$.
\end{lemma}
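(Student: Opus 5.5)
The plan is to combine the two facts already extracted from \eqref{equ:delta_inequ} in the proof of Lemma~\ref{lem:det_const}: $\Delta^i(t)\ge 0$ for all $t$, and $\dot{\Delta}^i(t)\le 0$ wherever the derivative exists. Once $\Delta^i$ is known to be nonincreasing and bounded below by zero, the value $0$ acts as a floor it can never leave once reached. The argument is then a short monotonicity statement, and it will not need the primal variables at all.

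\textbf{Step 1 (absolute continuity).} By Lemma~\ref{lem:alg_pds}, the stacked trajectory of Algorithm~\ref{alg:main} is the unique, well-defined solution of a PDS. In particular each $\lambda^i_u(\cdot)$ and each reference $\lambda^i(\cdot)$ is absolutely continuous. Hence $\Delta^i_u=\lambda^i_u-\lambda^i$ is absolutely continuous, and for $t\ge t_a$
\begin{equation*}
\Delta^i(t)=\Delta^i(t_a)+\int_{t_a}^{t}\dot{\Delta}^i(s)\,ds .
\end{equation*}

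\textbf{Step 2 (sign of the derivative).} I will check case by case in \eqref{equ:delta_inequ} that $\dot{\Delta}^i(s)\in\{0,\,g^i(\z(s))\}$. The nonzero value $g^i(\z(s))$ occurs only in the branch where $g^i(\z(s))<0$. So $\dot{\Delta}^i\le 0$ almost everywhere, and the integral formula gives $\Delta^i(t)\le\Delta^i(t_a)=0$ for all $t\ge t_a$.

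\textbf{Step 3 (nonnegativity).} I will justify $\Delta^i(t)\ge 0$ directly rather than quote it. The reference $\lambda^i$ is chosen with the smallest initial value, so $\Delta^i(0)\ge 0$. Suppose $\Delta^i(s)=0$, that is, $\lambda^i_u(s)=\lambda^i(s)$. Then both multipliers are driven by the same projected field $[g^i(\z)]^+_{\lambda}$ evaluated at the same value, so their difference has zero derivative. This is the third branch of \eqref{equ:delta_inequ}, and also the first branch when $\lambda^i>0$ or $g^i\ge 0$. In particular, $\Delta^i$ can never become negative by crossing zero.

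To make Step~3 rigorous I will use a comparison/uniqueness argument. On $[t_a,\infty)$, regard $\lambda^i_u$ and $\lambda^i$ as solutions of the same scalar projected ODE $\dot{y}=[g^i(\z(t))]^+_{y}$, with $\z(\cdot)$ treated as a given continuous input. Both solutions start from the same value at $t_a$. Uniqueness of solutions of this scalar PDS, which follows from Proposition~\ref{thm:uniq_tra} applied on the half-line with Lipschitz (indeed time-varying affine) data, forces $\lambda^i_u\equiv\lambda^i$ for $t\ge t_a$, that is, $\Delta^i\equiv 0$. This uniqueness argument in fact gives the lemma directly, without needing Steps~1 and~2.

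\textbf{Main obstacle.} The delicate point is that the projected vector field is discontinuous at $\lambda=0$, so the standard ODE uniqueness theorem does not apply verbatim. I therefore expect the real work to be justifying uniqueness for the non-autonomous scalar PDS. My first attempt is the monotonicity of the projection: for two solutions $y_1,y_2\ge 0$ of $\dot{y}=[g(t)]^+_{y}$,
\begin{equation*}
\tfrac{d}{dt}\tfrac12(y_1-y_2)^2=(y_1-y_2)\big([g]^+_{y_1}-[g]^+_{y_2}\big)\le 0 .
\end{equation*}
This inequality holds because the projection onto the tangent cone of $\mathbb{R}_{\ge0}$ is the negative of a monotone normal-cone selection. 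Since $(y_1-y_2)^2$ starts at $0$ and is nonincreasing, it stays at $0$, which closes the argument.
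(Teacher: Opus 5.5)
Your proof is correct, but its decisive part takes a different route from the paper's. The paper argues in one line: at $\Delta^i(t_a)=0$, cases one and three of \eqref{equ:delta_inequ} give $\dot{\Delta}^i(t_a)=0$, hence $\Delta^i\equiv 0$ afterwards. You instead view $\lambda^i_u$ and $\lambda^i$ as two solutions of the same scalar projected ODE $\dot y=[g^i(\z(t))]^+_y$, driven by the common input $g^i(\z(t))$. Writing $[g]^+_y=g-n_y$ with $n_y$ in the normal cone of $\mathbb{R}_{\ge 0}$ at $y$, monotonicity of that cone gives $\frac{d}{dt}(\lambda^i_u-\lambda^i)^2\le 0$ almost everywhere. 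This absolutely continuous function is zero at $t_a$, so it stays zero.

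What your route buys is rigor for the discontinuous dynamics. A zero derivative at the single instant $t_a$ does not by itself give invariance of $\{\Delta^i=0\}$. To pass to all $t\ge t_a$, the paper implicitly needs either a uniqueness argument or the facts $\Delta^i\ge 0$ and $\dot{\Delta}^i\le 0$ stated after \eqref{equ:delta_inequ}. That table presupposes $\Delta^i\ge 0$: it omits the branch $\lambda^i>0$, $\lambda^i_u=0$, $g^i<0$, where $\dot{\Delta}^i=-g^i>0$. Your contraction estimate closes both issues at once. It shows $|\Delta^i|$ is nonincreasing for all $t$ without any case table, and together with continuity and $\Delta^i(0)\ge 0$ it also yields $\Delta^i\ge 0$. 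The paper's route buys only brevity.

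Two small points. Your Steps 1--2 depend logically on Step 3, for the reason just given. Also, Proposition~\ref{thm:uniq_tra} is stated for an autonomous field, so citing it for the time-varying scalar problem needs an extra step. The monotonicity computation supersedes both, so present it as the proof.
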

\renewcommand{\proofname}{Proof of Lemma \ref{lem:increasezero}}
\begin{proof}
The result follows directly from the dynamics of each element of the difference vector in~\eqref{equ:delta_inequ}. If $\Delta^i(t_a)=0$, then $\dot{\Delta}^i(t_a)=0$ (see cases one and three in~\eqref{equ:delta_inequ}), and thus $\Delta^i(t)=0$ for all $t \ge t_a$.
\end{proof}

Previously, I showed that $\Delta(t)$ is bounded. Therefore, to conclude that the entire trajectory is bounded, it suffices to show that the remaining states are also bounded.

\begin{lemma}[Boundedness of the trajectory]
\label{lem:unshift_bounded}
Let $x(t):=(\z(t),\lambda(t))$ be the trajectory of the unshifted dynamics.
Then there exist constants $k_1>0$ and $k_2,k_3\ge 0$, independent of the initial condition,
such that for all $t\ge 0$,
\begin{equation}\label{equ:unshift_bounded}
\|x(t)\|
\le
k_1\,\|x(0)\|
+
\sqrt{k_2\,\|\Delta(0)\|^2+k_3}.
\end{equation}
\end{lemma}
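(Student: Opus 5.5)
The plan is to compare the trajectory with a \emph{moving reference point}: the equilibrium of the frozen system obtained by holding the difference vector fixed at its current value. Two facts make this work. Lemma~\ref{lem:det_const} and \eqref{equ:delta_inequ} show that $\Delta(t)$ is componentwise nonnegative and nonincreasing. Hence its total variation is finite and controlled by the initial condition:
\begin{equation*}
\int_0^\infty \|\dot{\Delta}(s)\|_1\, ds \le \|\Delta(0)\|_1 \le \sqrt{L}\,\|\Delta(0)\|.
\end{equation*}
The reference point will move only as far as $\Delta$ does, so the trajectory can drift only by a bounded amount.

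\textbf{Step 1 (frozen equilibria).} For each fixed $\bar{\Delta} \ge \mathbf{0}$, let $x^\star(\bar{\Delta}) = (\z^\star(\bar{\Delta}), \lambda^\star(\bar{\Delta}))$ solve
\begin{equation*}
\mathbf{F}\z + \mathbf{r} + B^\top \lambda + D\bar{\Delta} = 0,\qquad 0 \le \lambda \perp (B\z - \d) \le 0 .
\end{equation*}
This is the system of Lemma~\ref{lem:delta0} with $\mathbf{r}$ replaced by $\mathbf{r} + D\bar{\Delta}$. Eliminating $\z = -\mathbf{F}^{-1}(\mathbf{r} + D\bar{\Delta} + B^\top\lambda)$ gives an LCP in $\lambda$ with matrix $B\mathbf{F}^{-1}B^\top$.

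The symmetric part of $\mathbf{F}$ satisfies $\succeq \delta I$ by \eqref{equ:strong_monotone_affine}. Then $w^\top \mathbf{F}^{-1} w = u^\top \mathbf{F}^\top u \ge \delta \|u\|^2$ with $u = \mathbf{F}^{-1}w$, and $\|u\| \ge \|w\|/\|\mathbf{F}\|$. So the symmetric part of $\mathbf{F}^{-1}$ is positive definite. Using $BB^\top \succeq \sigma I$, the matrix $B\mathbf{F}^{-1}B^\top$ is therefore strongly monotone.

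Strongly monotone LCPs have a unique solution that depends Lipschitz-continuously on the data. Hence there are constants $\ell, c_0$, independent of the initial condition, such that
\begin{equation*}
\|x^\star(\bar{\Delta}) - x^\star(\bar{\Delta}')\| \le \ell\,\|\bar{\Delta} - \bar{\Delta}'\|, \qquad \|x^\star(\bar{\Delta})\| \le \ell\,\|\bar{\Delta}\| + c_0 .
\end{equation*}
I expect this step to be the main technical obstacle, specifically the Lipschitz bound on the $\lambda$-component, which is where full row rank of $B$ is needed.

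\textbf{Step 2 (dissipation along the moving reference).} Define $W(t) := \frac12\|x(t) - x^\star(\Delta(t))\|^2$. Since $\Delta(\cdot)$ is absolutely continuous and $x^\star(\cdot)$ is Lipschitz, $W$ is absolutely continuous. For almost every $t$,
\begin{equation*}
\dot W \le -(x - x^\star_\Delta)^\top\big(\hat{F}_\Delta(x) - \hat{F}_\Delta(x^\star_\Delta)\big) - (x - x^\star_\Delta)^\top \tfrac{d}{dt}x^\star(\Delta(t)),
\end{equation*}
where $\hat{F}_\Delta$ is the affine operator of Lemma~\ref{lem:delta0} with $\mathbf{r}$ shifted by $D\Delta(t)$. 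The first term uses the same PDS projection inequality as \eqref{eq:Vdot_bound}, applied with the frozen $\Delta(t)$. By the monotonicity computation \eqref{eq:monotone_calc}, that first term is $\le 0$. The second term is at most $\|x - x^\star_\Delta\|\,\ell\,\|\dot{\Delta}\|$.

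Hence, almost everywhere,
\begin{equation*}
\tfrac{d}{dt}\|x - x^\star_{\Delta(t)}\| \le \ell\,\|\dot{\Delta}(t)\|,
\end{equation*}
interpreting the derivative in the Dini sense where the norm vanishes. Integrating and using the total-variation bound gives
\begin{equation*}
\|x(t) - x^\star(\Delta(t))\| \le \|x(0) - x^\star(\Delta(0))\| + \ell\sqrt{L}\,\|\Delta(0)\| .
\end{equation*}

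\textbf{Step 3 (assembling the bound).} By the triangle inequality, $\|x(t)\| \le \|x(0)\| + \|x^\star(\Delta(0))\| + \|x^\star(\Delta(t))\| + \ell\sqrt{L}\,\|\Delta(0)\|$. Applying the bound on $\|x^\star(\cdot)\|$ from Step 1, together with $\|\Delta(t)\| \le \|\Delta(0)\|$, yields
\begin{equation*}
\|x(t)\| \le \|x(0)\| + a\,\|\Delta(0)\| + b
\end{equation*}
for constants $a, b$ that do not depend on the initial condition. Finally, $a s + b \le \sqrt{2a^2 s^2 + 2b^2}$ gives \eqref{equ:unshift_bounded} with $k_1 = 1$, $k_2 = 2a^2$ and $k_3 = 2b^2$.

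A secondary point needs checking in Step 2: the projection inequality must hold with the frozen operator even though $\Delta$ varies. It does, because $\Delta$ enters the dynamics \eqref{equ:whole_system} only as an additive input to $\dot{\z}$. The projection acts only on $\lambda$, and $\lambda^\star(\Delta) \ge 0$ lies in $K$.
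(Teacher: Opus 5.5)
Your proof is correct, and it takes a genuinely different route from the paper's. The paper never uses frozen equilibria. It works with a strict Lyapunov function centered at the origin, $V=\tfrac12\|\z\|^2+\tfrac12\|\lambda\|^2+\mu\,\z^\top B^\top\lambda$. The cross term, together with $BB^\top\succeq\sigma I$, supplies the dissipation in $\lambda$ that the plain primal--dual flow lacks. After tuning $\mu$, the paper reaches the ISS inequality $\V\le-\alpha V+\Gamma\|\Delta(t)\|^2+\beta$, in which $\Delta$ is only a bounded input.

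You use only the mere monotonicity of the frozen operator, and you pay for the motion of the reference point with the finite total variation of $\Delta$. Full row rank of $B$ enters your argument through Lipschitz continuity of $\lambda^\star(\bar\Delta)$, not through a dissipation estimate. The step you flag as the main obstacle is short: for LCP data $q_1,q_2$ with solutions $\lambda_j$ and slacks $w_j$, complementarity gives $(\lambda_1-\lambda_2)^\top(w_1-w_2)\le 0$, hence $\|\lambda_1-\lambda_2\|\le\|\mathbf{F}\|^2\|q_1-q_2\|/(\delta\sigma)$.

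What the paper's route buys:
\begin{itemize}
\item Its bound carries the factor $e^{-\alpha t/2}$ on $\|x(0)\|$, so the ultimate bound does not depend on $x(0)$.
\item It would survive any bounded, even oscillating, $\Delta(\cdot)$. Yours relies on each $\Delta^i$ being monotone.
\end{itemize}

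What your route buys:
\begin{itemize}
\item It needs no constant tuning. As written, the paper's absorption step spends $\tfrac{a_z}{2}\|\z\|^2$ twice and $\tfrac{a_\lambda}{2}\|\lambda\|^2$ three times, so it must be redone with smaller Young weights.
\item It gives $k_1=1$.
\item It is a quasi-Fej\'er estimate toward the moving point $x^\star(\Delta(t))$. If you keep the term $-\delta\|\z-\z^\star(\Delta(t))\|^2$ that you discarded, you get $\int_0^\infty\|\z-\z^\star(\Delta(t))\|^2\,dt<\infty$. That is a natural starting point for the convergence result the paper does not prove in the inequality case.
\end{itemize}
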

\renewcommand{\proofname}{Proof of Lemma \ref{lem:unshift_bounded}}
\begin{proof}
Define $y:=B \, \z-\d$. Since $\dot\lambda=[y]^+_{\lambda}$, the projection inequalities hold:
\begin{equation}\label{eq:proj_ineq_unshift_1_user}
\lambda^\top\dot\lambda \le \lambda^\top y=\lambda^\top(B \,\z-\d),
\end{equation}
and
\begin{equation}\label{eq:proj_ineq_unshift_2_user}
y^\top\dot\lambda=\|\dot\lambda\|_2^2\le \|y\|_2^2.
\end{equation}

\smallskip
\noindent
We use the Lyapunov function with $x=(\z, \, \lambda)$
\begin{equation*}\label{eq:Vdef_unshift_user}
V(\z,\lambda):=\frac12\|\z\|_2^2+\frac12\|\lambda\|_2^2+\mu\,\z^\top B^\top\lambda,
\end{equation*}
If $\mu<1/\|B\|$, then the block matrix
$
P=\begin{bmatrix}I&\mu \, B^\top\\ \mu \, B&I\end{bmatrix}
$
is positive definite and
\begin{equation}\label{eq:V_bounds_unshift_user}
c_1 \, \|x\|_2^2\le V(x)\le c_2 \,\|x\|_2^2,
\end{equation}
where $c_1:=\tfrac12(1-\mu\, \|B\|)$ and $c_2:=\tfrac12(1+\mu \, \|B\|)$.
\smallskip
\noindent
Differentiate $V$:
\begin{equation}\label{eq:Vdot_start_unshift_user}
\V
=
\z^\top\dot\z+\lambda^\top\dot\lambda
+\mu\Big(\dot\z^\top B^\top\lambda+\z^\top B^\top\dot\lambda\Big).
\end{equation}
Substitute $\dot\z=- \left (\mathbf{F}  \, \z+\mathbf r+B^\top\lambda+D \, \Delta \right)$:
\begin{equation}\label{eq:z_dot_term_unshift_user}
\z^\top\dot\z
=
-\z^\top \, \mathbf{F}  \, \z-\z^\top \mathbf r-\z^\top B^\top\lambda-\z^\top D \, \Delta.
\end{equation}
Using \eqref{eq:proj_ineq_unshift_1_user}, we have
\begin{equation}\label{eq:lambda_dot_term_unshift_user}
\lambda^\top\dot\lambda \le \lambda^\top(B \, \z-\d)=\lambda^\top B \, \z-\lambda^\top\d.
\end{equation}
Hence the $A$--cross term cancels:
\begin{equation}\label{eq:cancel_unshift_user}
-\z^\top B^\top\lambda+\lambda^\top\dot\lambda
\stackrel{a}{\le}
-\z^\top B^\top\lambda+\lambda^\top B \, \z-\lambda^\top\d
=
-\lambda^\top\d,
\end{equation}
where step~(a) uses \eqref{eq:lambda_dot_term_unshift_user}.

\smallskip
\noindent
Next, expand the two $\mu$-terms. First,
\begin{equation}\label{eq:mu_term1_unshift_user}
\begin{split}
\dot\z^\top A^\top\lambda
=&
-(\mathbf{F}  \, \z+\mathbf r+A^\top\lambda+D \, \Delta)^\top A^\top\lambda\\
=&
-\z^\top \mathbf{F} ^\top B^\top\lambda
-\mathbf r^\top B^\top\lambda
-\lambda^\top B \, B^\top\lambda\\
&\;\;-\Delta^\top D^\top B^\top\lambda.
\end{split}
\end{equation}
Second, since $\z^\top B^\top\dot\lambda=(B \, \z)^\top\dot\lambda=(y+\d)^\top\dot\lambda$, we obtain
\begin{equation}\label{eq:mu_term2_unshift_user}
\begin{split}
\z^\top B^\top\dot\lambda
&=
y^\top\dot\lambda+\d^\top\dot\lambda
\stackrel{a}{=}\|\dot\lambda\|_2^2+\d^\top\dot\lambda
\stackrel{b}{\le}\|y\|_2^2+\d^\top\dot\lambda,
\end{split}
\end{equation}
where step~(a) uses $y^\top\dot\lambda=\|\dot\lambda\|_2^2$ and step~(b) uses \eqref{eq:proj_ineq_unshift_2_user}.
Moreover, because $\dot\lambda=[y]^+_{\lambda}$ implies $\|\dot\lambda\|_2\le \|y\|_2$ componentwise,
\begin{equation}\label{eq:b_dotlambda_bound_user}
\d^\top\dot\lambda
\le \|\d\|_2\,\|\dot\lambda\|_2
\le \|\d\|_2\,\|y\|_2
\le \frac12\|y\|_2^2+\frac12\|\d\|_2^2.
\end{equation}
Combining \eqref{eq:mu_term2_unshift_user} and \eqref{eq:b_dotlambda_bound_user} yields
\begin{equation}\label{eq:mu_term2_final_unshift_user}
\z^\top B^\top\dot\lambda
\le \frac32\|y\|_2^2+\frac12\|\d\|_2^2.
\end{equation}
Finally, bound $\|y\|_2^2=\|B \, \z-\d\|_2^2$ as
\begin{equation}\label{eq:y2_bound_unshift_user}
\|B \, \z-\d\|_2^2
\le 2\|B \, \z\|_2^2+2\|\d\|_2^2
\le 2\|B\|^2\|\z\|_2^2+2\|\d\|_2^2.
\end{equation}

\smallskip
\noindent
Plugging \eqref{eq:z_dot_term_unshift_user}, \eqref{eq:cancel_unshift_user}, \eqref{eq:mu_term1_unshift_user},
\eqref{eq:mu_term2_final_unshift_user}, and \eqref{eq:y2_bound_unshift_user} into \eqref{eq:Vdot_start_unshift_user} gives
\begin{equation}\label{eq:Vdot_prebound_unshift_user}
\begin{split}
\V
\le\;&
-\z^\top \mathbf{F} \, \z
-\mu\,\lambda^\top B \, B^\top\lambda
+\mu\cdot 3\|B\|^2\|\z\|_2^2
\\
&\;
-\mu\,\z^\top \mathbf{F}^\top B^\top\lambda
-\z^\top D\Delta
-\mu\,\Delta^\top D^\top B^\top\lambda
\\
&\;
-\z^\top\mathbf r
-\lambda^\top\d
-\mu\,\mathbf r^\top B^\top\lambda
+\mu\cdot \left(\tfrac72\right)\|\d\|_2^2.
\end{split}
\end{equation}

\smallskip
\noindent
Since the game is strongly monotone and $B \, B^\top\succeq \sigma I$, we have
\begin{equation}\label{eq:quad_bounds_unshift_user}
\z^\top \mathbf{F} \, \z \ge \delta\|\z\|_2^2, \quad
\,
\lambda^\top B \, B^\top\lambda \ge \sigma\|\lambda\|_2^2.
\end{equation}
For the cross term, by $\|\mathbf{F} ^\top B^\top\|\le \|\mathbf{F} \|\|\mathbf{F} \|$ and Young,
\begin{equation}\label{eq:cross_bound_unshift_user}
\mu \, \big|\z^\top \mathbf{F} ^\top \mathbf{F} ^\top\lambda\big|
\le
\frac{\delta}{4}\|\z\|_2^2
+
\frac{\mu^2\|\mathbf{F} \|^2\|\mathbf{F} \|^2}{\delta}\|\lambda\|_2^2.
\end{equation}
Substituting \eqref{eq:quad_bounds_unshift_user}--\eqref{eq:cross_bound_unshift_user} into \eqref{eq:Vdot_prebound_unshift_user}
yields
\begin{equation}\label{eq:dissipation_unshift_intermediate_user}
\begin{split}
\V
\le &
-a_z\|\z\|_2^2
-a_\lambda\|\lambda\|_2^2 \\
&+\Big[|\z^\top D \, \Delta|+\mu\,|\Delta^\top D^\top B^\top\lambda|\Big]\\
&+\Big [|\z^\top\mathbf r|+|\lambda^\top\d|+\mu\,|\mathbf r^\top B^\top\lambda|\Big]
+\frac{7\mu}{2}\|\d\|_2^2,
\end{split}
\end{equation}
where
\begin{equation}\label{eq:az_al_unshift_user}
a_z:=\frac{3\,\delta}{4}-3\,\mu\|B\|^2,
\,
a_\lambda:=\mu \,\sigma-\frac{\mu^2\|\mathbf{F}\|^2\|B\|^2}{\delta}.
\end{equation}

\smallskip
\noindent
We choose $\mu$ such that both $a_z$ and $a_\lambda$ are positive, implying $\mu$ is chosen as
\begin{equation*}
0 <\mu < \min \left\{
\frac{1}{\|B\|},
\frac{\delta}{4 \, \|B\|^2},
\frac{\delta \, \sigma}{2 \, \|\mathbf{F} \|^2 \|B\|^2}
\right\}.
\end{equation*}

We next bound the input terms. Apply Young with $a_z$ and $a_\lambda$:
\begin{equation}\label{eq:input1_unshift_user}
|\z^\top D\, \Delta|
\le \frac{a_z}{2}\|\z\|_2^2+\frac{\|D\|^2}{2 \, a_z}\|\Delta\|_2^2,
\end{equation}
\begin{equation}\label{eq:input2_unshift_user}
\mu \, |\Delta^\top D^\top B^\top\lambda|
\le \frac{a_\lambda}{2}\|\lambda\|_2^2+\frac{\mu^2 \, \|B \, D\|^2}{2 \, a_\lambda}\|\Delta\|_2^2.
\end{equation}
Since $\Delta(t)$ is componentwise nonincreasing and nonnegative,
\begin{equation}\label{eq:Delta0_bound_unshift_user}
\|\Delta(t)\|_2\le \|\Delta(0)\|_2,\qquad \forall t\ge 0.
\end{equation}

\smallskip
\noindent
We then bound the remaining affine terms by Young:
\begin{equation}\label{eq:affine_z_user}
|\z^\top\mathbf r|
\le \frac{a_z}{2}\|\z\|_2^2+\frac{\|\mathbf r\|_2^2}{2 \, a_z},
\end{equation}
\begin{equation}\label{eq:affine_lambda_user}
|\lambda^\top\d|
\le \frac{a_\lambda}{2}\|\lambda\|_2^2+\frac{\|\d\|_2^2}{2 \, a_\lambda},
\end{equation}
and using $\|\mathbf r^\top B^\top\lambda\|\le \|B\|\|\mathbf r\|_2\|\lambda\|_2$,
\begin{equation}\label{eq:affine_rlambda_user}
\mu \, |\mathbf r^\top B^\top\lambda|
\le
\frac{a_\lambda}{2}\|\lambda\|_2^2+\frac{\mu^2\|B\|^2}{2 \, a_\lambda}\|\mathbf r\|_2^2.
\end{equation}

\smallskip
\noindent
Combine \eqref{eq:dissipation_unshift_intermediate_user} with
\eqref{eq:input1_unshift_user}--\eqref{eq:affine_rlambda_user}.
The $\frac{a_z}{2}\|\z\|^2$ and $\frac{a_\lambda}{2} \, \|\lambda\|^2$ terms are absorbed into the dissipation,
yielding
\begin{equation}\label{eq:Vdot_compact_unshift_user}
\V
\le
-\frac{a_z}{2}\|\z\|_2^2-\frac{a_\lambda}{2}\|\lambda\|_2^2
+\Gamma\,\|\Delta(t)\|_2^2+\beta,
\end{equation}
where
\begin{equation*}\label{eq:Gamma_beta_def_unshift_user}
\begin{split}
&\Gamma:=\frac{\|D\|^2}{2\, a_z}+\frac{\mu^2\|B \, D\|^2}{2 \, a_\lambda},\\
&\beta:=\frac{\|\mathbf r\|_2^2}{2 \, a_z}+\frac{\|\d\|_2^2}{2 \, a_\lambda}
+\frac{\mu^2\|B\|^2}{2 \, a_\lambda}\|\mathbf r\|_2^2+\frac{7 \, \mu}{2}\|\d\|_2^2.
\end{split}
\end{equation*}
Let $k:=\frac12\min\{a_z,a_\lambda\}$. Then \eqref{eq:Vdot_compact_unshift_user} implies
\begin{equation*}\label{eq:Vdot_k_unshift_user}
\V \le -k \, \|x\|_2^2+\Gamma\|\Delta(t)\|_2^2+\beta.
\end{equation*}
Using $V\le c_2 \, \|x\|_2^2$ from \eqref{eq:V_bounds_unshift_user}, we obtain $\|x\|_2^2\ge V/c_2$ and thus
\begin{equation}\label{eq:Vdot_alpha_unshift_user}
\V \le -\alpha \, V+\Gamma\|\Delta(t)\|_2^2+\beta,
\qquad
\alpha:=\frac{k}{c_2}.
\end{equation}

\smallskip
\noindent
By \eqref{eq:Delta0_bound_unshift_user}, from \eqref{eq:Vdot_alpha_unshift_user} we have
\begin{equation}\label{eq:Vdot_alpha_dominate_user}
\V \le -\alpha \, V+\Gamma \|\Delta(0)\|_2^2+\beta.
\end{equation}
Integrating gives, for all $t\ge 0$,
\begin{equation}\label{eq:V_time_unshift_user}
V(t)\le e^{-\alpha t}V(0)+\frac{\Gamma\|\Delta(0)\|_2^2+\beta}{\alpha}.
\end{equation}

\smallskip
\noindent
Using \eqref{eq:V_bounds_unshift_user}, i.e.,
$V(t)\ge c_1 \, \|x(t)\|_2^2$ and $V(0)\le c_2 \, \|x(0)\|_2^2$,
\begin{equation}\label{eq:state_bound_unshift_user}
\|x(t)\|_2^2
\le
\frac{c_2}{c_1}e^{-\alpha t}\|x(0)\|_2^2
+\frac{\Gamma\|\Delta(0)\|_2^2+\beta}{\alpha \, c_1}.
\end{equation}
Taking square roots and applying $\sqrt{a+b}\le \sqrt a+\sqrt b$ yields
\begin{equation}\label{eq:final_bound_unshift_user}
\|x(t)\|_2
\le
\sqrt{\frac{c_2}{c_1}}\,e^{-\alpha t/2}\|x(0)\|_2
+\sqrt{\frac{\Gamma\|\Delta(0)\|_2^2+\beta}{\alpha c_1}}.
\end{equation}
Finally, since $e^{-\alpha t/2}\le 1$ for all $t\ge 0$, we obtain the boundedness form
\begin{equation*}
    \|x(t)\|_2
\le
\underbrace{\sqrt{\frac{c_2}{c_1}}}_{k_1}\,\|x(0)\|_2
+
\sqrt{
\underbrace{\frac{\Gamma}{\alpha \, c_1}}_{k_2}\,\|\Delta(0)\|_2^2
+
\underbrace{\frac{\beta}{\alpha \, c_1}}_{k_3}
}.
\end{equation*}
\end{proof}


\chapter{Discretization}
\label{discretization_chapter}

Although the continuous-time algorithm provides clean theoretical guarantees, practical deployment requires discretization. Discretization inevitably introduces approximation errors and typically necessitates additional conditions to preserve the qualitative behavior of the continuous dynamics. To address this, I develop three discretization schemes for Algorithm~\ref{alg:equa}, each accompanied by an additional assumption tailored to the corresponding discretization method. Unlike the continuous counterpart, beyond the standard Assumptions~\ref{ass:stronglymonotone}--\ref{ass:KKT_Eq}, each scheme requires an additional condition to ensure convergence. As a result, the choice of discretization scheme usually depends on the properties of the problem.

In this section, I focus exclusively on the discretization of the equality shared-constraint setting with individual constraints. For clarity, I recall that the problem of interest is the following:
\begin{equation}
    \begin{split}
        \begin{matrix*}[l]
            \min\limits_{z_v} & f_v(z_v,\, \z_{-v}^f) \\[2pt]
            \text{s.t.} & h_v(z_v) \le \mathbf{0}_{q_v}, \\[2pt]
            & \psi_v(z_v,\, \z_{-v}^\psi) = \mathbf{0}_{l_v},
        \end{matrix*}
    \end{split}
\end{equation}
where $f_v(\cdot)$ is the local cost function of agent~$v$, $h_v(\cdot)$ represents individual inequality constraints, and $\phi_v(\cdot)$ denotes the shared equality constraints that couple the agents’ decisions according to the global structure in \eqref{equ:linear_eq}, i.e.,
\begin{align*}
\psi_v(z_v,, \z_{-v}) = A(l_v, n_v)\, z_v + \sum_{u \in \tilde{\mathcal{N}}_\psi(v)} A(l_v, n_u)\, z_u - \c(l_v).
\end{align*}

\begin{assumption}[Additional Assumptions]\label{ass:discrete}
The discretized algorithms require one or more of the following assumptions:
\begin{enumerate}
    \item The pseudo-gradient in~(\ref{equ:pseudo_grad}) is globally Lipschitz with constant $L_f$. 
    \item For every agent $v$ and any $z_v, z_v' \in \mathbb{R}^{n_v}$,
    \begin{equation*}
        \|h_v(z_v) - h_v(z_v')\|^2 \le L_v \|z_v - z_v'\|^2.
    \end{equation*}
    \item The individual constraint set $\Z_v := \{z_v \,|\, h_v(z_v) \le \mathbf{0}\}$ is compact for all agents.
\end{enumerate}
\end{assumption}

\section{Preliminaries}
Before introducing the discretized algorithms, I first outline the discretization schemes and the corresponding convergence results. I consider two types of discretization: (i) diminishing step sizes, which are commonly used to mimic the behavior of continuous-time dynamics in the limit, and (ii) Euler-type discretizations with constant step size, including both forward Euler and backward Euler schemes.

\subsection{Notation}
We operate in a finite-dimensional Euclidean space $\mathcal{X}$ equipped with the standard inner product $\langle \cdot, \cdot \rangle$ and induced norm $\|\cdot\|$. For a symmetric positive definite matrix $H \succ 0$, we define the $H$-weighted norm as $\|x\|_H := \sqrt{\langle x, Hx \rangle}$ and denote the minimum eigenvalue of $H$ by $\rho_{\min}(H)$. We denote the identity operator by $I$. An operator $\Phi: \mathcal{X} \to 2^{\mathcal{X}}$ is said to be \emph{monotone} if $\langle u - v, x - y \rangle \ge 0$ for all pairs $(x, u)$ and $(y, v)$ in the graph of $\Phi$, while a single-valued operator $\Psi: \mathcal{X} \to \mathcal{X}$ is called \emph{$\beta$-cocoercive} (where $\beta > 0$) if it satisfies $\langle \Psi(x) - \Psi(y), x - y \rangle \ge \beta \|\Psi(x) - \Psi(y)\|^2$ for all $x, y \in \mathcal{X}$. The set of zeros of an operator $\Phi$ is denoted by $\mathrm{zer}(\Phi) := \{x \in \mathcal{X} \mid 0 \in \Phi(x)\}$, and the set of fixed points of an operator $T$ is denoted by $\mathrm{Fix}(T) := \{x \in \mathcal{X} \mid x = T(x)\}$. We define the resolvent of $\Phi$ as $J_\Phi := (I + \Phi)^{-1}$. Finally, for a nonempty closed convex set $C$, $\N_C(x)$ denotes the normal cone to $C$ at $x$, defined as $\{u \in \mathcal{X} \mid \langle u, z - x \rangle \le 0, \forall z \in C\}$ if $x \in C$, and $\emptyset$ otherwise.

\subsection{Diminishing Step Size}
Diminishing step size is often employed to ensure that a discrete-time algorithm asymptotically mimics the behavior of its continuous-time counterpart. However, this does not imply that an arbitrary continuous-time dynamics can be discretized with diminishing step sizes while still guaranteeing convergence. To establish convergence under this scheme, I introduce a deterministic version of the almost-supermartingale convergence lemma from \cite{ROBBINS1971233}.

\begin{lemma}\label{lem:converging}
    \textbf{(Convergence of Non-Negative Almost-Supermartingales).} 
    Let $\{x^k\}$, $\{\gamma^k\}$, $\{a^k\}$, and $\{s^k\}$ be non-negative sequences satisfying the following recursive inequality for all $k$:
    \begin{equation*}
        x^{k+1} \le (1 + \gamma^k) \, x^k + a^k - s^k.
    \end{equation*}
    If the accumulated perturbations are finite, i.e.,
    \begin{equation*}
        \sum_{k=1}^{\infty} \gamma^k = \Gamma < \infty \quad \text{and} \quad \sum_{k=1}^\infty a^k = \mathcal A < \infty,
    \end{equation*}
    then:
    \begin{enumerate}[(a)]
        \item The sequence $\{x^k\}$ converges to a finite value: $\lim_{k \to \infty} x^k = X_0 \ge 0$.
        \item The sum of the subtracted terms converges: $\sum_{k=1}^\infty s^k = S_0 < \infty$.
    \end{enumerate}
\end{lemma}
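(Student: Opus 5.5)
The plan is to reduce the recursion to a monotone sequence by absorbing the multiplicative and additive perturbations into a normalizing product and a tail sum. Define
\[
\Pi^k := \prod_{j=1}^{k-1}(1+\gamma^j), \qquad \Pi^1 := 1 .
\]
Since $1+\gamma \le e^{\gamma}$, we have $1 \le \Pi^k \le e^{\Gamma}$ for all $k$. The sequence $\Pi^k$ is nondecreasing and bounded, so it converges to some $\Pi^\infty \in [1, e^\Gamma]$. Dividing the recursion by $\Pi^{k+1} = (1+\gamma^k)\Pi^k$ gives, with $\tilde x^k := x^k/\Pi^k$, $\tilde a^k := a^k/\Pi^{k+1}$ and $\tilde s^k := s^k/\Pi^{k+1}$,
\[
\tilde x^{k+1} \le \tilde x^k + \tilde a^k - \tilde s^k .
\]
Here $\tilde a^k \le a^k$, so $\sum_k \tilde a^k \le \mathcal A$. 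Also $\tilde s^k \ge s^k e^{-\Gamma}$.

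Next I define the shifted sequence
\[
u^k := \tilde x^k + \sum_{j \ge k} \tilde a^j ,
\]
which is well defined because the tail sum is finite. From the normalized recursion, $u^{k+1} \le u^k - \tilde s^k \le u^k$. Since $u^k \ge 0$, the sequence $u^k$ is nonincreasing and bounded below, so it converges to some limit $U \ge 0$. The tail sum tends to zero, so $\tilde x^k \to U$. Then $x^k = \Pi^k \tilde x^k \to \Pi^\infty U =: X_0 \ge 0$, which is part (a).

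For part (b), I telescope $u^{k+1} \le u^k - \tilde s^k$ from $k=1$ to $K$ to get
\[
\sum_{k=1}^K \tilde s^k \le u^1 - u^{K+1} \le u^1 = x^1 + \sum_j \tilde a^j < \infty .
\]
Hence $\sum_k s^k \le e^{\Gamma}\sum_k \tilde s^k \le e^\Gamma (x^1 + \mathcal A) < \infty$. Because the $s^k$ are nonnegative, the series converges to a finite $S_0$.

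This is essentially routine, and no step is a real obstacle. The only point needing care is the normalization: one must divide by $\Pi^{k+1}$, not $\Pi^k$, so that the factor $(1+\gamma^k)$ cancels exactly. One must also check the two-sided bounds $1 \le \Pi^k \le e^\Gamma$, which are used both to transfer convergence back to $x^k$ and to compare $\tilde s^k$ with $s^k$.
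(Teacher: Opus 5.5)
Your proof is correct and follows the paper's route: your normalized quantities $\tilde x^k = x^k/\Pi^k$, $\tilde a^k$, $\tilde s^k$ coincide exactly with the paper's $(x^k)' = x^k P_{k-1}$, $(a^k)' = a^k P_k$, $(s^k)' = s^k P_k$ (since $\Pi^{k+1} = P_k^{-1}$), and both arguments then pass to a nonincreasing auxiliary sequence $u^k$. Your use of the tail sum $\sum_{j\ge k}\tilde a^j$, which keeps $u^k \ge 0$ and lets you read off (b) by telescoping $u^{k+1}\le u^k-\tilde s^k$, is only a cosmetic variant, and it is slightly cleaner than the paper's step that splits convergence of $(x^k)' + \sum_{n<k}(s^n)'$ into convergence of each nonnegative summand.
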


\renewcommand{\proofname}{Proof of Lemma \ref{lem:converging}}
\begin{proof}
    This proof adapts the supermartingale convergence theorem established in \cite{ROBBINS1971233} to the deterministic context.
    
    \noindent\textbf{1. Transformation of Variables}
    
    Let us define the integrating factor $P_k \coloneqq \prod_{n=1}^{k} (1 + \gamma^n)^{-1}$ for $k \ge 1$, with $P_0 = 1$. We introduce the scaled sequences:
    \begin{equation}\label{equ:scaled_vars}
        (x^k)' \coloneqq x^k P_{k-1}, \quad (a^k)' \coloneqq a^k P_k, \quad (s^k)' \coloneqq s^k P_k.
    \end{equation}
    Multiplying the original inequality $x^{k+1} \le (1 + \gamma^k) x^k + a^k - s^k$ by $P_k$, we obtain:
    \begin{equation*}
        x^{k+1} P_k \le (1 + \gamma^k) P_k x^k + a^k P_k - s^k P_k.
    \end{equation*}
    Observing that $(1+\gamma^k)P_k = P_{k-1}$, this simplifies to the recursive relation in terms of the primed variables:
    \begin{equation}\label{equ:recur_seq}
        (x^{k+1})' \le (x^k)' + (a^k)' - (s^k)'.
    \end{equation}

    \noindent\textbf{2. Construction of a Monotone Sequence}
    
    Define the auxiliary sequence $u^k$ as:
    \begin{equation*}
        u^k \coloneqq (x^k)' - \sum_{n=1}^{k-1} (a^n)' + \sum_{n=1}^{k-1} (s^n)'.
    \end{equation*}
    Using the inequality \eqref{equ:recur_seq}, we examine the difference between consecutive terms:
    \begin{equation*}
        \begin{split}
            u^{k+1} - u^k &= (x^{k+1})' - (x^k)' - (a^k)' + (s^k)' \\
            &\le \left[ (x^k)' + (a^k)' - (s^k)' \right] - (x^k)' - (a^k)' + (s^k)' = 0.
        \end{split}
    \end{equation*}
    Thus, $u^k$ is a non-increasing sequence ($u^{k+1} \le u^k$). To verify convergence, we must show it is bounded from below. Since $x^k, a^k, s^k \ge 0$ and $P_k > 0$, we have $(x^k)' \ge 0$ and $\sum (s^n)' \ge 0$. Furthermore, noting that $P_k \le 1$, we have $(a^n)' \le a^n$. Therefore:
    \begin{equation*}
        u^k \ge - \sum_{n=1}^{k-1} (a^n)' \ge - \sum_{n=1}^{\infty} a^n = -\mathcal A > -\infty.
    \end{equation*}
    Since $u^k$ is monotonic and bounded, it converges to a finite limit $U_0$.

    \noindent\textbf{3. Convergence of Component Sequences}
    
    The convergence of $u^k$ implies the convergence of its components. Specifically, rearranging the definition of limit $U_0$:
    \begin{equation*}
        \lim_{k \to \infty} \left[ (x^k)' + \sum_{n=1}^{k-1} (s^n)' \right] = U_0 + \sum_{n=1}^\infty (a^n)'.
    \end{equation*}
    Since both terms on the LHS are non-negative, they must individually converge:
    \begin{equation}\label{equ:primed_limits}
        \lim_{k \to \infty} (x^k)' = X'_0 < \infty \quad \text{and} \quad \sum_{k=1}^\infty (s^k)' = S'_0 < \infty.
    \end{equation}

    \noindent\textbf{4. Recovery of Original Limits}
    
    We now relate the primed limits back to the original sequences. Consider the product term $P_k$. Since $\gamma^k \ge 0$ and $\sum \gamma^k < \infty$, the infinite product converges to a strictly positive constant:
    \begin{equation*}
        \lim_{k \to \infty} P_k^{-1} = \prod_{n=1}^\infty (1 + \gamma^n) = \exp\left(\sum_{n=1}^\infty \ln(1+\gamma^n)\right) \le \exp(\Gamma) < \infty.
    \end{equation*}
    Let $P_\infty = \lim_{k\to\infty} P_k$. Crucially, $0 < P_\infty \le 1$.
    
    For the sequence $x^k$:
    \begin{equation*}
        \lim_{k \to \infty} x^k = \lim_{k \to \infty} \frac{(x^k)'}{P_{k-1}} = \frac{X'_0}{P_\infty} \eqqcolon X_0 < \infty.
    \end{equation*}
    For the sum of $s^k$:
    Since $P_k$ is non-increasing, $P_k \ge P_\infty > 0$ for all $k$. Thus, $s^k = (s^k)' / P_k \le (s^k)' / P_\infty$.
    \begin{equation*}
        \sum_{k=1}^\infty s^k \le \frac{1}{P_\infty} \sum_{k=1}^\infty (s^k)' = \frac{S'_0}{P_\infty} < \infty.
    \end{equation*}
    This implies $\sum s^k$ converges to some finite value $S_0$. The proof is complete.
\end{proof}
Lemma \ref{lem:converging} shows that the sequence converges asymptotically to its limit. However, with a proper schedule for the learning rate, the convergence rate can be improved.

\subsection{Backward Euler Discretization (Resolvent)}
For discretizations with a constant step size, the most common approaches are the forward Euler and backward Euler schemes, where the backward Euler scheme is also known as the resolvent method. Similar in spirit to Runge--Kutta methods, hybrid forward--backward schemes are also used in practice to achieve better numerical behavior. In this subsection, I first introduce the convergence theorem associated with the backward Euler discretization.

The presentation in \cite{ryu-large-scale-2022} focuses on the special case where the backward Euler (resolvent) and forward–backward updates use an identity step matrix. For completeness, I extend their analysis by deriving the convergence of both the resolvent and the forward–backward discretizations under a more general $H$-step.

Let $H\in\R^{n\times n}$ be symmetric positive definite. Define
\begin{equation*}
\langle x,y\rangle_H := \langle Hx,y\rangle,\qquad
\|x\|_H := \sqrt{\langle x,x\rangle_H}.
\end{equation*}
Let $\Phi:\R^n\to\mathbb{R}^n$ be (Euclidean) monotone: $\langle u-v,\,\Phi(u)-\Phi(v)\rangle \ge 0$ for all $u,v$. Define the $H$-metric resolvent and reflector of $\Phi$ by
\begin{equation*}
    J := (I+H^{-1}\Phi)^{-1},\qquad R:=2 \, J-I.
\end{equation*}
Equivalently, $u=J(x)$ if and only if $x-u=H^{-1}\Phi(u)$, and $R(x)=2 \, u-x$. We begin by establishing that monotonicity is preserved under the $H$-step transformation.

\begin{lemma}[Monotonicity transport to the H–metric]
\label{lem:H-mono}
For all $u,v$,
\begin{equation*}
\langle u-v,\,H^{-1}\,\Phi(u)-H^{-1} \, \Phi(v)\rangle_H
= \langle u-v,\,\Phi(u)-\Phi(v)\rangle \ge 0.
\end{equation*}
\end{lemma}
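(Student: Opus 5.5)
The identity is a direct unfolding of the definition of the $H$-weighted inner product, with symmetry of $H$ doing the only real work. Recall that $\langle x,y\rangle_H := \langle Hx,y\rangle$. I will apply this with $x = u-v$ and $y = H^{-1}\Phi(u)-H^{-1}\Phi(v)$, then move $H$ across the Euclidean inner product.

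First, by linearity of $H^{-1}$, write $y = H^{-1}\big(\Phi(u)-\Phi(v)\big)$. Then
\begin{equation*}
\langle u-v,\,H^{-1}\Phi(u)-H^{-1}\Phi(v)\rangle_H
= \big\langle H(u-v),\, H^{-1}(\Phi(u)-\Phi(v))\big\rangle .
\end{equation*}
Second, since $H$ is symmetric, so is $H^{-1}$, and $\langle Ha,b\rangle = \langle a,Hb\rangle$. This gives
\begin{equation*}
\big\langle H(u-v),\, H^{-1}(\Phi(u)-\Phi(v))\big\rangle
= \big\langle u-v,\, HH^{-1}(\Phi(u)-\Phi(v))\big\rangle
= \langle u-v,\,\Phi(u)-\Phi(v)\rangle .
\end{equation*}
Finally, the Euclidean monotonicity of $\Phi$, which is assumed in the setup preceding the lemma, gives $\langle u-v,\,\Phi(u)-\Phi(v)\rangle \ge 0$. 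This completes the argument.

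There is no real obstacle here. The only point needing care is the convention that $\langle\cdot,\cdot\rangle_H$ places $H$ on the first argument, with the $H^{-1}$ term in the second slot. Symmetry of $H$ makes the argument indifferent to which slot $H$ sits in, so the cancellation $HH^{-1}=I$ goes through either way.

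The role of the lemma is to show that $H^{-1}\Phi$ is monotone with respect to $\langle\cdot,\cdot\rangle_H$. This is what will later allow $J=(I+H^{-1}\Phi)^{-1}$ to be treated as a firmly nonexpansive resolvent in the $H$-metric.
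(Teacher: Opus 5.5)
Your proof is correct and follows the paper's argument exactly: unfold $\langle\cdot,\cdot\rangle_H$, cancel $H$ against $H^{-1}$, and then invoke the Euclidean monotonicity of $\Phi$. You are slightly more careful than the paper, which passes from $\langle H(u-v),H^{-1}(\Phi(u)-\Phi(v))\rangle$ to $\langle u-v,\Phi(u)-\Phi(v)\rangle$ without saying that this step uses the symmetry of $H$.
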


\begin{proof}
By definition of $\langle\cdot,\cdot\rangle_H$,
\begin{equation*}
    \begin{split}
        \langle u-v,\,H^{-1}(\Phi(u)-\Phi(v))\rangle_H
&=\left \langle H \, (u-v),\,H^{-1}(\Phi(u)-\Phi(v)) \right \rangle \\
&=\langle u-v,\,\Phi(u)-\Phi(v)\rangle.
    \end{split}
\end{equation*}
The last quantity is nonnegative by monotonicity of $\Phi$.
\end{proof}

\begin{proposition}[Firm nonexpansiveness of the H–resolvent]
\label{prop:fne}
The mapping $J=(I+H^{-1} \, \Phi)^{-1}$ is firmly nonexpansive in $\|\cdot\|_H$, i.e.,
for all $x,y$,
\begin{equation*}
\|J \, x-J \, y\|_H^2 \;\le\; \langle J \, x-J \, y,\; x-y\rangle_H.
\end{equation*}
Equivalently, the reflector $R=2 \, J-I$ is nonexpansive in $\|\cdot\|_H$:
\begin{equation*}
    \|R(x)-R(y)\|_H \le \|x-y\|_H
\end{equation*}
for all $x$, $y$.
\end{proposition}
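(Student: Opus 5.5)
Take arbitrary $x,y$ and set $u:=J(x)$, $v:=J(y)$. The defining relation of the $H$-resolvent gives
\begin{equation*}
x-u=H^{-1}\Phi(u),\qquad y-v=H^{-1}\Phi(v).
\end{equation*}
Subtracting the two identities yields $(x-y)-(u-v)=H^{-1}\Phi(u)-H^{-1}\Phi(v)$. Now take the $H$-inner product with $u-v$ and apply Lemma~\ref{lem:H-mono}. This gives
\begin{equation*}
\langle u-v,\,(x-y)-(u-v)\rangle_H=\langle u-v,\,\Phi(u)-\Phi(v)\rangle\ge 0,
\end{equation*}
which rearranges to $\|u-v\|_H^2\le\langle u-v,\,x-y\rangle_H$. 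That is exactly the firm nonexpansiveness inequality.

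For the equivalent statement about the reflector, use $R=2J-I$ and expand:
\begin{equation*}
\|R(x)-R(y)\|_H^2=4\|u-v\|_H^2-4\langle u-v,\,x-y\rangle_H+\|x-y\|_H^2 .
\end{equation*}
The first two terms combine to $4\bigl(\|u-v\|_H^2-\langle u-v,x-y\rangle_H\bigr)$, so $\|R(x)-R(y)\|_H\le\|x-y\|_H$ holds if and only if this quantity is $\le 0$, i.e., if and only if $J$ is firmly nonexpansive. This identity therefore gives both directions of the equivalence at once.

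The step needing the most care is that $J$ is actually a well-defined single-valued map on all of $\R^n$. Single-valuedness is straightforward: if $u$ and $u'$ both solve $x-u=H^{-1}\Phi(u)$, the argument above with $y=x$ gives $\|u-u'\|_H^2\le 0$, so $u=u'$. Full domain is the harder part. It would follow from Minty's theorem, applied to $H^{-1}\Phi$, which is maximally monotone in the $H$-geometry because $\Phi$ is continuous and monotone. If that is considered beyond the scope of the statement, I will instead interpret the proposition on $\mathrm{dom}\,J$, since the inequality itself needs only the two algebraic identities and monotonicity.
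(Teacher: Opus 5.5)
Your proof is correct and follows the paper's argument: with $u=Jx$, $v=Jy$, the resolvent identities and Lemma~\ref{lem:H-mono} give $\langle u-v,(x-y)-(u-v)\rangle_H\ge 0$, and the reflector statement then comes from expanding $\|2(u-v)-(x-y)\|_H^2$. Your expansion $\|R(x)-R(y)\|_H^2=\|x-y\|_H^2-4\bigl(\langle u-v,x-y\rangle_H-\|u-v\|_H^2\bigr)$ is the correct form of the identity the paper invokes; the paper's version $\|x-y\|_H^2-4\|(I-J)x-(I-J)y\|_H^2$ fails in general (for $H=I$, $\Phi=3I$ it is negative). Your caveat that $\mathrm{dom}\,J=\mathbb{R}^n$ requires maximality of $\Phi$ (e.g.\ continuity, which the paper does not assume) also flags a point the paper leaves implicit.
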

\renewcommand{\proofname}{Proof of Proposition \ref{prop:fne}}
\begin{proof}
Let $u=J \, x$, $v=J \, y$. Then $x-u=H^{-1} \, \Phi(u)$ and $y-v=H^{-1} \, \Phi(v)$.
Hence
\begin{equation*}
    \langle u-v,\;x-y-(u-v)\rangle_H
=\langle u-v,\;H^{-1} \, \Phi(u)-H^{-1} \, \Phi(v)\rangle_H
\stackrel{\text{Lemma }\ref{lem:H-mono}}{\ge}0.
\end{equation*}
Expanding the left side gives $\langle u-v,\,x-y\rangle_H - \|u-v\|_H^2 \ge 0$, which is the firm nonexpansiveness inequality.

The nonexpansiveness of $R=2 \, J-I$ follows from the standard FNE–reflector equivalence:
for firmly nonexpansive $J$,
\begin{equation*}
    \|R(x)-R(y)\|_H^2=\|x-y\|_H^2-4\| (I-J) \, x-(I-J) \, y\|_H^2 \le \|x-y\|_H^2.
\end{equation*}
\end{proof}

\begin{lemma}[Quadratic identity for convex combinations in the H–norm]
\label{lem:convex-quad}
For all $x,y\in\mathbb{R}^n$ and $t\in[0,1]$,
\begin{equation*}
\|(1-t) \, x+t \, y\|_H^2
=(1-t) \, \|x\|_H^2+t \, \|y\|_H^2 - t \, (1-t) \,\|x-y\|_H^2.
\end{equation*}
\end{lemma}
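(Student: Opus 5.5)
The identity is a direct expansion of both sides using bilinearity and symmetry of $\langle\cdot,\cdot\rangle_H$; symmetry holds because $H$ is symmetric, so $\langle x,y\rangle_H=\langle Hx,y\rangle=\langle x,Hy\rangle=\langle y,x\rangle_H$. No result beyond this symmetry is needed. In particular, positive definiteness of $H$ is not used for the algebra, only to make $\|\cdot\|_H$ a genuine norm.

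\textbf{Left-hand side.} Expanding gives
\begin{equation*}
\|(1-t)x+ty\|_H^2=(1-t)^2\|x\|_H^2+2t(1-t)\langle x,y\rangle_H+t^2\|y\|_H^2 .
\end{equation*}

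\textbf{Correction term.} Expanding $\|x-y\|_H^2=\|x\|_H^2-2\langle x,y\rangle_H+\|y\|_H^2$ on the right-hand side gives
\begin{equation*}
(1-t)\|x\|_H^2+t\|y\|_H^2-t(1-t)\|x\|_H^2+2t(1-t)\langle x,y\rangle_H-t(1-t)\|y\|_H^2 .
\end{equation*}

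\textbf{Matching coefficients.}
\begin{itemize}
\item For $\|x\|_H^2$: $(1-t)-t(1-t)=(1-t)^2$.
\item For $\|y\|_H^2$: $t-t(1-t)=t^2$.
\item The cross terms $2t(1-t)\langle x,y\rangle_H$ agree.
\end{itemize}
This proves the identity. The restriction $t\in[0,1]$ is not needed for the algebra; it only reflects the intended use in averaged-operator (Krasnosel'skii--Mann) arguments.

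The only point requiring care is the symmetry of the $H$-inner product. Without it, the cross term would read $t(1-t)\big(\langle x,y\rangle_H+\langle y,x\rangle_H\big)$ and would not collapse to $2t(1-t)\langle x,y\rangle_H$. Since $H$ is assumed symmetric positive definite, this is immediate, and there is no real obstacle.
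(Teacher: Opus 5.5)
Your proof is correct and is essentially the paper's argument: the paper just notes that this is the standard convex-combination identity in the inner product space $(\mathbb{R}^n,\langle\cdot,\cdot\rangle_H)$, and your coefficient-matching expansion is exactly the verification of that identity. One small aside: your closing caveat overstates the role of symmetry, because without it both sides would carry the same symmetrized cross term $t(1-t)(\langle x,y\rangle_H+\langle y,x\rangle_H)$ (the quadratic form $x\mapsto\langle Hx,x\rangle$ only sees the symmetric part of $H$), so the identity would still hold.
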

\renewcommand{\proofname}{Proof of Lemma \ref{lem:convex-quad}}
\begin{proof}
This is the identity in the inner product space $(\R^n,\langle\cdot,\cdot\rangle_H)$.
\end{proof}

\begin{theorem}[Resolvent is $\tfrac12$–averaged and a descent for KM]
\label{thm:resolvent-averaged}
The resolvent $J$ is $\tfrac12$-averaged in $\|\cdot\|_H$, hence for any fixed point $x^\star\in\mathrm{Fix}(J)$ and the Picard iteration $x^{k+1}=J(x^k)$,
\begin{equation}\label{eq:KM-descent}
\left \|x^{k+1}-x^\star \right\|_H^2 \;\le\;
\left \|x^k-x^\star \right \|_H^2 \;-\; \tfrac14\, \left \|R(x^k)-x^k \right\|_H^2.
\end{equation}
In particular, $\{\|x^k-x^\star\|_H\}$ is nonincreasing and
$\sum_k \|R(x^k)-x^k\|_H^2 < \infty$.
\end{theorem}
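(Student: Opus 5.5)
The plan is to write $J=\tfrac12 I+\tfrac12 R$, which makes the $\tfrac12$-averagedness of $J$ in $\|\cdot\|_H$ a direct restatement of Proposition~\ref{prop:fne}: $R=2J-I$ is nonexpansive in $\|\cdot\|_H$, so $J$ is the midpoint between the identity and a nonexpansive map. For the descent inequality \eqref{eq:KM-descent}, fix $x^\star\in\mathrm{Fix}(J)$. Since $R(x^\star)=2J(x^\star)-x^\star=x^\star$, the point $x^\star$ is also a fixed point of $R$.

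Next, write the error of the new iterate as a convex combination:
\begin{equation*}
x^{k+1}-x^\star=\tfrac12\big(x^k-x^\star\big)+\tfrac12\big(R(x^k)-x^\star\big).
\end{equation*}
Apply Lemma~\ref{lem:convex-quad} with $t=\tfrac12$, taking $x=x^k-x^\star$ and $y=R(x^k)-x^\star$. This gives
\begin{equation*}
\|x^{k+1}-x^\star\|_H^2=\tfrac12\|x^k-x^\star\|_H^2+\tfrac12\|R(x^k)-x^\star\|_H^2-\tfrac14\|x^k-R(x^k)\|_H^2 .
\end{equation*}
The middle term is controlled by nonexpansiveness of $R$ together with $R(x^\star)=x^\star$:
\begin{equation*}
\|R(x^k)-x^\star\|_H=\|R(x^k)-R(x^\star)\|_H\le\|x^k-x^\star\|_H .
\end{equation*}
Substituting this bound yields \eqref{eq:KM-descent} directly.

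The remaining conclusions follow from this inequality. It shows immediately that $\{\|x^k-x^\star\|_H\}$ is nonincreasing. Summing \eqref{eq:KM-descent} from $k=0$ to $K$ telescopes to
\begin{equation*}
\tfrac14\sum_{k=0}^K\|R(x^k)-x^k\|_H^2\le\|x^0-x^\star\|_H^2 ,
\end{equation*}
and letting $K\to\infty$ gives summability. Alternatively, one can invoke Lemma~\ref{lem:converging} with $\gamma^k=a^k=0$ and $s^k=\tfrac14\|R(x^k)-x^k\|_H^2$.

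There is no real obstacle. The only points needing care are that $J$ is single-valued and everywhere defined, so that the Picard iteration makes sense, and that the fixed point $x^\star$ exists. Both are implicitly assumed in the statement, and full-domain single-valuedness would follow from maximal monotonicity of a continuous monotone $\Phi$ via Minty's theorem. It also helps to note that $R(x^k)-x^k=2(x^{k+1}-x^k)$, so the summability statement is equivalent to $\sum_k\|x^{k+1}-x^k\|_H^2<\infty$. That is the form one would later use to show that $x^k$ approaches $\mathrm{Fix}(J)=\mathrm{zer}(\Phi)$.
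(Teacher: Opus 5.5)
Your proof is correct and matches the paper's argument: it writes $J=\tfrac12 I+\tfrac12 R$ with $R$ nonexpansive by Proposition~\ref{prop:fne}, applies Lemma~\ref{lem:convex-quad} at $t=\tfrac12$ (swapping the roles of $x$ and $y$ changes nothing by symmetry), bounds $\|R(x^k)-x^\star\|_H\le\|x^k-x^\star\|_H$, and telescopes. Your side remarks on $J$ being well-defined and on $R(x^k)-x^k=2(x^{k+1}-x^k)$ are correct additions that the paper leaves implicit.
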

\renewcommand{\proofname}{Proof of Theorem \ref{thm:resolvent-averaged}}
\begin{proof}
Firm nonexpansiveness of $J$ implies that $J$ is $\tfrac12$-averaged:
$J=\tfrac12 \, I + \tfrac12 \, R$ with $R$ nonexpansive (Proposition \ref{prop:fne}).
Fix $x^\star\in\mathrm{Fix}(J)$; then also $R(x^\star)=x^\star$.
Write $x^{k+1}=\tfrac12 \, R(x^k)+\tfrac12 \, x^k$.

Apply Lemma \ref{lem:convex-quad} with $t=\tfrac12$, $x=R(x^k)-x^\star$, $y=x^k-x^\star$:
\begin{equation*}
    \left \|x^{k+1}-x^\star \right\|_H^2
=\tfrac12 \, \left \|R(x^k)-x^\star \right \|_H^2+\tfrac12 \, \left \|x^k-x^\star \right\|_H^2-\tfrac14 \, \left \|R(x^k)-x^k \right\|_H^2.
\end{equation*}
By nonexpansiveness of $R$ and $R(x^\star)=x^\star$, $\|R(x^k)-x^\star\|_H \le \|x^k-x^\star\|_H$.

Insert this bound to obtain \eqref{eq:KM-descent}. The summability follows by telescoping the resulting series.
\end{proof}

\begin{lemma}[Forward step nonexpansiveness under cocoercivity and H]
\label{lem:forward}
Let $\Psi:\R^n\to\R^n$ be \(\beta\)-cocoercive in the Euclidean metric:
$\langle \Psi(x)-\Psi(y),\,x-y\rangle \ge \beta \, \|\Psi(x)-\Psi(y)\|^2$.
Then the mapping $\mathcal F:=I-H^{-1} \, \Psi$ is nonexpansive in $\|\cdot\|_H$ provided
\begin{equation*}
    2 \, \beta\,\rho_{\min}(H)\;\ge\;1,
\end{equation*}
where $\rho_{\min}(H)$ is the smallest eigenvalue of $H$.
\end{lemma}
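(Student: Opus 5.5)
To prove Lemma~\ref{lem:forward}, I would expand $\|\mathcal F(x)-\mathcal F(y)\|_H^2$ directly in the $H$-inner product. Then I would use cocoercivity to control the cross term, and the eigenvalue bound on $H$ to control the quadratic term in $\Psi$. Write $d:=x-y$ and $g:=\Psi(x)-\Psi(y)$, so that $\mathcal F(x)-\mathcal F(y)=d-H^{-1}g$.

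The first step is the expansion
\begin{equation*}
\|d-H^{-1}g\|_H^2=\|d\|_H^2-2\langle d,H^{-1}g\rangle_H+\|H^{-1}g\|_H^2 .
\end{equation*}
Exactly as in Lemma~\ref{lem:H-mono}, the middle term collapses to the Euclidean pairing, $\langle d,H^{-1}g\rangle_H=\langle Hd,H^{-1}g\rangle=\langle d,g\rangle$. The last term equals $\langle H H^{-1}g,H^{-1}g\rangle=g^\top H^{-1}g$.

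The second step is to bound each piece. Cocoercivity gives $\langle d,g\rangle\ge\beta\|g\|^2$. Because $H$ is symmetric positive definite, the largest eigenvalue of $H^{-1}$ is $1/\rho_{\min}(H)$, so $g^\top H^{-1}g\le\|g\|^2/\rho_{\min}(H)$. Combining these,
\begin{equation*}
\|\mathcal F(x)-\mathcal F(y)\|_H^2\le\|d\|_H^2-\Big(2\beta-\tfrac{1}{\rho_{\min}(H)}\Big)\|g\|^2 .
\end{equation*}

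The final step is to note that the bracket is nonnegative precisely when $2\beta\rho_{\min}(H)\ge1$. Under that hypothesis we get $\|\mathcal F(x)-\mathcal F(y)\|_H\le\|x-y\|_H$, which is the claim.

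There is no real obstacle here. The only delicate point is the mixed metrics: cocoercivity is stated in the Euclidean norm while the conclusion is in the $H$-norm. The second step reconciles them through $\lambda_{\max}(H^{-1})=1/\rho_{\min}(H)$. As a byproduct, if $2\beta\rho_{\min}(H)>1$ the same inequality shows $\mathcal F$ is averaged in $\|\cdot\|_H$, not merely nonexpansive. This is what the subsequent forward--backward convergence argument would use, by analogy with Theorem~\ref{thm:resolvent-averaged}.
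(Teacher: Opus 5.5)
Your proof is correct and follows the paper's argument essentially line for line: you expand $\|\mathcal F(x)-\mathcal F(y)\|_H^2$, reduce the cross term to the Euclidean pairing $\langle \Psi(x)-\Psi(y),x-y\rangle$, and bound $\|\Psi(x)-\Psi(y)\|_{H^{-1}}^2\le\rho_{\min}(H)^{-1}\|\Psi(x)-\Psi(y)\|^2$. Combining this with cocoercivity gives the coefficient $-2\beta+\rho_{\min}(H)^{-1}\le 0$; your side remark that strict inequality gives averagedness (with constant $1/(2\beta\rho_{\min}(H))$) is also correct and matches the extra decrease term the paper uses in Theorem~\ref{thm:fb}.
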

\renewcommand{\proofname}{Proof of Lemma \ref{lem:forward}}
\begin{proof}
For any \(x,y\),
\begin{align*}
\|\mathcal F(x)-\mathcal F(y)\|_H^2
&= \|x-y\|_H^2 - 2\langle x-y,\,H^{-1}(\Psi(x)-\Psi(y))\rangle_H
+ \|H^{-1}(\Psi(x)-\Psi(y))\|_H^2\\
&= \|x-y\|_H^2 - 2\langle \Psi(x)-\Psi(y),\,x-y\rangle
+ \|\Psi(x)-\Psi(y)\|_{H^{-1}}^2.
\end{align*}
By cocoercivity and the spectral bound \(\|z\|_{H^{-1}}^2 \le \rho_{\min}(H)^{-1}\|z\|^2\),
\begin{equation*}
    \begin{split}
- 2\langle \Psi(x)-\Psi(y),\,x-y\rangle + \|\Psi(x)-\Psi(y)\|_{H^{-1}}^2
\le (-2 \, \beta + \rho_{\min}(H)^{-1})\|\Psi(x)-\Psi(y)\|^2.
    \end{split}
\end{equation*}
Thus 
\begin{equation*}
    \|\mathcal F(x)-\mathcal F(y)\|_H^2 \le \|x-y\|_H^2
\end{equation*} whenever $-2 \, \beta+\rho_{\min}(H)^{-1}\le 0$, which is $2 \, \beta\,\rho_{\min}(H)\ge 1$.
\end{proof}

\begin{theorem}
\label{thm:fb}
Let $\Psi$ be $\beta$-cocoercive (Euclidean), $\Phi$ monotone, and assume
$\mathrm{zer}(\Psi+\Phi)\neq\emptyset$.
Define the forward–backward operator
\begin{equation*}
T_{\mathrm{FB}} := J\circ \mathcal F
\quad\text{with}\quad
\mathcal F=I-H^{-1}\Psi,\quad J=(I+H^{-1}\Phi)^{-1}.
\end{equation*}
If 
\begin{equation*}
    2 \, \beta\,\rho_{\min}(H)>1,
\end{equation*}
then $F$ is nonexpansive in $\|\cdot\|_H$
(Lemma \ref{lem:forward}), $J$ is $\tfrac12$-averaged
(Theorem \ref{thm:resolvent-averaged}), and hence
$T_{\mathrm{FB}}$ is $\tfrac12$-averaged in $\|\cdot\|_H$.

Therefore, for the iteration $x_{k+1}=T_{\mathrm{FB}}(x_k)$ and any fixed point $x_\star\in\mathrm{Fix}(T_{\mathrm{FB}})$ (which corresponds to a zero of $\Psi+\Phi$),
\begin{equation}\label{eq:fb-descent}
\left \|x^{k+1}-x^\star \right\|_H^2
\;\le\;
\left\|\mathcal F(x^k)-\mathcal F(x^\star) \right\|_H^2 - \tfrac14 \, \left \|R \left (\mathcal F(x^k) \right )-\mathcal F(x^k) \right\|_H^2
\;\le\;
\left \|x^k-x^\star \right\|_H^2 - \Gamma^k,
\end{equation}
where
\begin{equation*}
    \Gamma^k
:= \bigl(2 \, \beta\,\rho_{\min}(H)-1\bigr)\, \left \|H^{-1}(\Psi(x^k)-\Psi(x^\star)) \right\|_H^2
+ \tfrac14 \, \left \|R\left (\mathcal F (x^k) \right)-\mathcal F(x^k) \right\|_H^2 \;\ge\;0.
\end{equation*}
Consequently, $\{\|x^k-x^\star\|_H\}$ is nonincreasing and $\sum_k \Gamma^k<\infty$.
\end{theorem}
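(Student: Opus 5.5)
The plan is to follow the structure of the statement: first show that $\mathcal F$ is nonexpansive, then that $T_{\mathrm{FB}}$ is averaged, and finally derive the two descent inequalities in \eqref{eq:fb-descent}, tracking the strict slack from $2\beta\rho_{\min}(H)>1$.

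\textbf{Step 1: fixed points.} A point $x^\star$ is a fixed point of $T_{\mathrm{FB}}$ iff $x^\star = J(x^\star - H^{-1}\Psi(x^\star))$. By the characterization of $J$ given before Lemma~\ref{lem:H-mono}, this holds iff $x^\star - H^{-1}\Psi(x^\star) - x^\star = H^{-1}\Phi(x^\star)$. That is equivalent to $0=\Psi(x^\star)+\Phi(x^\star)$, so $\mathrm{Fix}(T_{\mathrm{FB}})=\mathrm{zer}(\Psi+\Phi)\neq\emptyset$.

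\textbf{Step 2: first inequality.} Set $y^k:=\mathcal F(x^k)$ and $y^\star:=\mathcal F(x^\star)$, so $J(y^\star)=x^\star$ and $R(y^\star)=y^\star$ by Step 1. Apply the identity of Lemma~\ref{lem:convex-quad} with $t=\tfrac12$, exactly as in the proof of Theorem~\ref{thm:resolvent-averaged}, but at the point $y^k$ with reference $y^\star$:
\[
\|J(y^k)-J(y^\star)\|_H^2=\tfrac12\|R(y^k)-y^\star\|_H^2+\tfrac12\|y^k-y^\star\|_H^2-\tfrac14\|R(y^k)-y^k\|_H^2 .
\]
Since $R(y^\star)=y^\star$, nonexpansiveness of $R$ (Proposition~\ref{prop:fne}) gives $\|R(y^k)-y^\star\|_H\le\|y^k-y^\star\|_H$. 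This yields the first inequality in \eqref{eq:fb-descent}.

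\textbf{Step 3: second inequality.} I would redo the expansion in the proof of Lemma~\ref{lem:forward} without discarding the slack:
\[
\|\mathcal F(x^k)-\mathcal F(x^\star)\|_H^2\le \|x^k-x^\star\|_H^2-\bigl(2\beta-\rho_{\min}(H)^{-1}\bigr)\|\Psi(x^k)-\Psi(x^\star)\|^2 .
\]
The coefficient must then be converted into the form $(2\beta\rho_{\min}(H)-1)\|H^{-1}(\Psi(x^k)-\Psi(x^\star))\|_H^2$ appearing in $\Gamma^k$. Write $w:=\Psi(x^k)-\Psi(x^\star)$, so $\|H^{-1}w\|_H^2=\|w\|_{H^{-1}}^2\le\rho_{\min}(H)^{-1}\|w\|^2$. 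Hence
\[
(2\beta-\rho_{\min}^{-1})\|w\|^2=\rho_{\min}^{-1}(2\beta\rho_{\min}-1)\|w\|^2\ge(2\beta\rho_{\min}-1)\|w\|_{H^{-1}}^2 .
\]
Combining this with Step 2 gives $\|x^{k+1}-x^\star\|_H^2\le\|x^k-x^\star\|_H^2-\Gamma^k$, with $\Gamma^k\ge0$ by the strict inequality on $\beta\rho_{\min}(H)$. Monotonicity of $\|x^k-x^\star\|_H$ and $\sum_k\Gamma^k\le\|x^0-x^\star\|_H^2<\infty$ then follow by telescoping.

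\textbf{Step 4: averagedness.} The claim that $T_{\mathrm{FB}}$ is $\tfrac12$-averaged is the step I expect to be most delicate. A composition of a $\tfrac12$-averaged map with a merely nonexpansive map is in general only nonexpansive, not $\tfrac12$-averaged. The route I would try first uses the strict inequality $2\beta\rho_{\min}(H)>1$: it makes $\mathcal F$ itself averaged with constant $1/(2\beta\rho_{\min}(H))<1$, after which the standard composition rule for averaged operators gives an averagedness constant $<1$. If the constant $\tfrac12$ cannot be recovered this way, I would state the weaker averagedness constant and note that only the descent inequality \eqref{eq:fb-descent} is used downstream, since that is established directly in Steps 2–3 independently of averagedness.
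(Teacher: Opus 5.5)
Your Steps 1 and 3 are fine (Step 3 is more careful than the paper, which only asserts the slack bound). Step 2 has a genuine gap, and it is the same one in the paper's proof.

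Step 1 gives $J(y^\star)=x^\star$, not $J(y^\star)=y^\star$. Hence $R(y^\star)=2x^\star-y^\star=y^\star+2H^{-1}\Psi(x^\star)$, so $R(y^\star)=y^\star$ holds only when $\Psi(x^\star)=0$. Your displayed identity uses the same false fact. Lemma~\ref{lem:convex-quad} with $x=R(y^k)-y^\star$, $y=y^k-y^\star$ produces $\|J(y^k)-y^\star\|_H^2$ on the left, and this equals $\|J(y^k)-J(y^\star)\|_H^2$ only if $J(y^\star)=y^\star$. The paper's proof makes the identical move when it asserts $\mathcal F(x^\star)\in\mathrm{Fix}(J)$.

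No argument can close this gap, because the inequality as stated is false whenever $\Psi(x^\star)\neq 0$. Take $x^k=x^\star$, so $x^{k+1}=x^\star$. Then $\tfrac14\|R(\mathcal F(x^\star))-\mathcal F(x^\star)\|_H^2=\|J(y^\star)-y^\star\|_H^2=\|H^{-1}\Psi(x^\star)\|_H^2$. The first inequality in \eqref{eq:fb-descent} then reads $0\le-\|H^{-1}\Psi(x^\star)\|_H^2$. Starting at $x^0=x^\star$ also gives $\Gamma^k\equiv\|H^{-1}\Psi(x^\star)\|_H^2$, so $\sum_k\Gamma^k=\infty$. A concrete instance is $H=1$, $\Phi(x)=x$, $\Psi\equiv c\neq 0$, $\beta=1$, $x^\star=-c$. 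This case is not exotic: in Theorem~\ref{thm:alg_disc_os_converge}, $\Psi$ contains the constant block $-\mathbf{c}$.

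What you can prove instead comes from firm nonexpansiveness of $J$ (Proposition~\ref{prop:fne}). Apply it to the pair $y^k,y^\star$, not to $y^k$ and a fixed point of $J$:
\[
\|x^{k+1}-x^\star\|_H^2\le\|y^k-y^\star\|_H^2-\|(I-J)y^k-(I-J)y^\star\|_H^2 .
\]
So the residual term must be $\tfrac14\|(R-I)y^k-(R-I)y^\star\|_H^2=\tfrac14\|R(y^k)-y^k-2H^{-1}\Psi(x^\star)\|_H^2$. Combined with your Step 3, this gives a correct Fej\'er inequality and summability of the corrected $\Gamma^k$. The original $\Gamma^k$ is recovered only when $\Psi(x^\star)=0$.

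Your doubt in Step 4 is well founded, and the paper gives no argument for that claim. Take $H=1$, $\Phi=0$, $\Psi(x)=\kappa x$ with $1<\kappa<2$. Then $\beta=1/\kappa$ and $2\beta>1$, yet $2T_{\mathrm{FB}}-I=(1-2\kappa)I$ is expansive, so $T_{\mathrm{FB}}$ is not $\tfrac12$-averaged. Your route gives the correct constant. $\mathcal F$ is $\alpha_1$-averaged in $\|\cdot\|_H$ with $\alpha_1=1/(2\beta\rho_{\min}(H))$, because $H^{-1}\Psi$ is $\beta\rho_{\min}(H)$-cocoercive in $\langle\cdot,\cdot\rangle_H$. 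Its composition with the $\tfrac12$-averaged $J$ is then $1/(2-\alpha_1)$-averaged, which is strictly weaker than $\tfrac12$.
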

\renewcommand{\proofname}{Proof of Theorem \ref{thm:fb}}
\begin{proof}
By Lemma \ref{lem:forward}, $\mathcal F$ is nonexpansive in $\|\cdot\|_H$ under
$2 \,\beta\,\rho_{\min} (H)\ge 1$, strict $>$ yields the strict decrease term below.

By Theorem \ref{thm:resolvent-averaged} applied at the point $\mathcal F(x^k)$,
\begin{equation*}
    \left \|J \left (\mathcal F(x^k) \right )-x^\star \right\|_H^2
\le \left \|\mathcal F(x^k)-x^\star \right\|_H^2 - \tfrac14 \, \left \|R\left (\mathcal F(x^k) \right)-\mathcal F(x^k) \right\|_H^2.
\end{equation*}
Since $x^\star\in\mathrm{Fix}(T_{\mathrm{FB}})$ implies $\mathcal F(x^\star)\in\mathrm{Fix}(J)$,
we may write $x^\star=J\left (\mathcal F(x^\star) \right)$, and use nonexpansiveness of $\mathcal F$ with the calculation in the proof of Lemma \ref{lem:forward} to bound
\begin{equation*}
    \left \|F(x^k)-x^\star \right\|_H^2 \le
\left \|x^k-x^\star \right \|_H^2 - \bigl(2 \,\beta\,\rho_{\min}(H)-1\bigr)\,
\left \|H^{-1}(\Psi(x^k)-\Psi(x^\star)) \right\|_H^2.
\end{equation*}
Combine the two displays to obtain \eqref{eq:fb-descent}.
\end{proof}
\begin{remark}
When $H=I$, these results reduce to the standard Euclidean statements: the resolvent of a monotone operator is firmly nonexpansive, and the forward–backward operator with $\tau\le 2 \, \beta$ is averaged, hence convergent.
\end{remark}

\subsection{Convergence Rate}
In the analysis of discretized algorithms, determining the convergence rate of a chosen scheme is a fundamental question. However, comparing these rates is not always straightforward, as the interpretation depends heavily on the specific asymptotic notation and error measures employed. To standardize this comparison, Table \ref{tab:asymptotic_not} summarizes common asymptotic notations, while Table \ref{tab:conv_measures} lists typical error metrics.

To quantify convergence, let $\tau^k > 0$ denote a vanishing error measure (such as $\tau^k = \|\nabla f(x^k)\|$ or $\|x^k - x^\star\|$, as detailed in Table \ref{tab:conv_measures}). The practical goal is to determine the smallest iteration count $k$ required to satisfy $\tau^k \le \varepsilon$. Table \ref{tab:rate_to_iters} relates these standard convergence bounds to their corresponding iteration complexity.

\noindent\textbf{1. Diminishing Step Size.}

While constant step sizes offer faster convergence for well-conditioned problems (e.g., those involving smooth or cocoercive operators), many practical applications lack these favorable properties. In non-smooth convex optimization, the subgradient does not naturally vanish near the solution, causing constant step size algorithms to oscillate rather than converge. Similarly, for general monotone inclusions where the operator acts as a rotation (lacking cocoercivity), simple forward steps can lead to divergence. In these robust settings, one must employ a \textbf{diminishing step size schedule} (e.g., $\alpha^k \propto 1/\sqrt{k}$) to dampen oscillations and force convergence. However, this robustness comes at the cost of speed: the convergence rate typically drops from $O(1/k)$ to $O(1/\sqrt{k})$.

The following theorems formalize these rates for non-smooth optimization and general monotone variational inequalities, respectively.

\begin{theorem}[Non-Smooth Convex Optimization \cite{Nesterov2004}]
\label{thm:nonsmooth-rate}
Let $f: Q \to \mathbb{R}$ be a convex function defined on a closed convex set $Q$, and assume $f$ is Lipschitz continuous with constant $L$ (i.e., $\|\partial f(x)\| \le L$ for all $x \in Q$). Let $R$ be a known upper bound on the distance from the initial point $x_0$ to the optimal set, satisfying:
\begin{equation*}
    \|x^0 - x^\star\| \le R.
\end{equation*}
Consider the Projected Subgradient Method iteration:
\begin{equation*}
    x^{k+1} = P_Q \left [ x^k - \alpha^k \, g^k \right ], \quad \text{where } g^k \in \partial f(x^k).
\end{equation*}
If the step size is chosen as $\alpha_k = \frac{R}{L\sqrt{k+1}}$, then the convergence rate of the best objective value found after $k$ iterations is given by:
\begin{equation*}
    \min_{0 \le i \le k} f(x_i) - f(x_\star) \le \frac{L \, R}{\sqrt{k+1}} = O\left(\frac{1}{\sqrt{k}}\right).
\end{equation*}
\end{theorem}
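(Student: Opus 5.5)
The plan is the classical potential-function argument of \cite{Nesterov2004}, built on the distance to a fixed minimizer $x_\star$ and applied directly to the iteration-indexed schedule $\alpha_i = R/(L\sqrt{i+1})$. Write $d_i := \|x^i - x_\star\|$. Since $x_\star \in Q$, the projection $P_Q$ is nonexpansive. Expanding the square, and using the subgradient inequality $\langle g^i, x^i - x_\star\rangle \ge f(x^i) - f(x_\star)$ together with $\|g^i\|\le L$, gives the one-step estimate
\begin{equation*}
d_{i+1}^2 \;\le\; d_i^2 - 2\alpha_i\big(f(x^i)-f(x_\star)\big) + \alpha_i^2 L^2 .
\end{equation*}
This is a deterministic instance of the recursion in Lemma~\ref{lem:converging}, with $\gamma^i=0$, $a^i=\alpha_i^2L^2$ and $s^i = 2\alpha_i(f(x^i)-f_\star)$. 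I will not use that lemma, however: $\sum_i \alpha_i^2$ diverges logarithmically, so it does not apply, and a quantitative version is needed.

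The first key step is to avoid plain telescoping. Plain telescoping gives $\min_i (f(x^i)-f_\star) \le (R^2 + L^2\sum_{i\le k}\alpha_i^2)/(2\sum_{i\le k}\alpha_i)$, and the $\sum 1/(i+1)$ in the numerator costs a $\log k$ factor. Instead I divide the one-step estimate by $2\alpha_i$ before summing:
\begin{equation*}
f(x^i)-f_\star \;\le\; \frac{d_i^2 - d_{i+1}^2}{2\alpha_i} + \frac{\alpha_i L^2}{2}.
\end{equation*}
I then sum over a window $i = m,\dots,k$ with $m=\lceil k/2\rceil$ and apply summation by parts to the first term. Because $1/\alpha_i$ is increasing, this produces $d_m^2/(2\alpha_m) + \sum_{i=m+1}^{k} d_i^2\big(\tfrac{1}{2\alpha_i}-\tfrac{1}{2\alpha_{i-1}}\big)$. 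Over the window, $\sum_{i=m}^k \alpha_i L^2/2$ is of order $LR\sqrt{k}$, with no logarithm, because only the last half of the harmonic-type sum appears. Dividing by the window length $k-m+1 \approx k/2$ then bounds the minimum of $f(x^i)-f_\star$ over the window, and hence $\min_{0\le i\le k}$, by $O(LR/\sqrt{k+1})$ times a factor that depends on $\sup_i d_i^2/R^2$.

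The main obstacle is controlling $\sup_{i} d_i$ uniformly. The telescoped recursion only yields $d_i^2 \le R^2 + R^2\sum_{j\le i} 1/(j+1) = R^2(1+O(\log i))$, and feeding that bound back in reintroduces exactly the logarithm we are trying to remove. My first attempt is a bootstrap. Below the threshold $f(x^i)-f_\star \ge \alpha_i L^2/2$, the one-step estimate shows $d_i$ is nonincreasing; so $d_i$ can grow only at iterations where $f(x^i)-f_\star < \alpha_i L^2/2 = LR/(2\sqrt{i+1})$, and at such an iteration the claimed bound already holds at that index. Splitting into these two cases, either some $i\le k$ already satisfies the desired bound with constant $LR$, or $d_i \le R$ for all $i \le k$. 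In the latter case the summation-by-parts estimate closes with $d_i^2\le R^2$: the telescoped weights collapse to $R^2/(2\alpha_k) = LR\sqrt{k+1}/2$, which is then combined with the step-size sum $\sum_{i\le k}\alpha_i L^2/2 \le LR\sqrt{k+1}$. Checking that these constants add up to exactly $LR/\sqrt{k+1}$, rather than a small multiple of it, is the delicate bookkeeping step. If it overshoots, I would reweight by taking the window's convex combination proportional to $\alpha_i$ instead of uniform weights.
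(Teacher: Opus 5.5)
There is a genuine gap, and it cannot be closed, because the inequality as stated is false. No amount of bookkeeping will produce the constant $1$. The paper gives no proof of its own; it only cites Nesterov. Nesterov's bound $LR/\sqrt{k+1}$ is for the \emph{fixed-horizon} constant step $\alpha_i\equiv\alpha:=R/(L\sqrt{k+1})$, $i=0,\dots,k$. There, plain telescoping gives exactly $\frac{R^2+(k+1)L^2\alpha^2}{2(k+1)\alpha}=\frac{LR}{\sqrt{k+1}}$, whereas the anytime schedule $R/(L\sqrt{i+1})$ only gives $O(LR\ln k/\sqrt{k})$ in general.

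The claim already fails at $k=1$. Take $L=R=1$, $Q=\mathbb{R}^2$,
\[
g=\bigl(0.72,\sqrt{0.4816}\bigr),\quad x^0=(1,0),\quad x^1=x^0-g=\bigl(0.28,-\sqrt{0.4816}\bigr),\quad u=\frac{x^1}{\|x^1\|},\quad a=-\frac{g+u}{\|g+u\|},
\]
and $f(x)=\max\{\langle g,x\rangle,\langle u,x\rangle,\langle a,x\rangle\}$.
\begin{itemize}
\item All three gradients are unit vectors, so $f$ is convex and $1$-Lipschitz.
\item Since $g+u+\|g+u\|\,a=0$ with $g,u$ linearly independent, $f\ge 0$ with unique minimizer $x^\star=0$, and $\|x^0-x^\star\|=1=R$.
\item At $x^0$ the three pieces equal $0.72$, $\approx 0.374$, $\approx -0.978$. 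Hence $\partial f(x^0)=\{g\}$, and the step $\alpha_0=1$ forces the iterate $x^1$.
\item At $x^1$ the pieces equal $-0.28$, $\sqrt{0.56}\approx 0.748$, $\approx -0.419$.
\end{itemize}
Thus $\min\{f(x^0),f(x^1)\}-f^\star=0.72>1/\sqrt{2}\approx 0.707$.

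Two steps of your plan fail concretely; the one-step estimate and the summation-by-parts mechanics themselves are fine.

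\textbf{The bootstrap dichotomy is invalid.} An index with $f(x^i)-f_\star<\alpha_iL^2/2=LR/(2\sqrt{i+1})$ certifies the horizon-$k$ bound $LR/\sqrt{k+1}$ only when $i+1\ge (k+1)/4$. At earlier indices, $f(x^i)-f_\star$ may lie strictly between $LR/\sqrt{k+1}$ and $LR/(2\sqrt{i+1})$. Meanwhile $d_i^2$ can still increase by up to $\alpha_i^2L^2-2\alpha_iLR/\sqrt{k+1}$ per step. Summed over $i+1<(k+1)/4$, nothing in your argument rules out $d_i^2$ of order $R^2\ln k$. That is exactly the logarithm you meant to remove, so ``$d_i\le R$ for all $i\le k$'' does not follow.

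\textbf{The good branch does not reach constant $1$.} Even when $d_i\le R$ (which holds in the example above, since $d_1=\sqrt{0.56}<1$), your summation by parts gives only
\[
\min_{0\le i\le k}\bigl(f(x^i)-f_\star\bigr)\le\frac{1}{k+1}\Bigl[\frac{R^2}{2\alpha_k}+\frac{L^2}{2}\sum_{i=0}^{k}\alpha_i\Bigr]\le\frac{3LR}{2\sqrt{k+1}}.
\]
Reweighting by $\alpha_i$ simply reproduces the plain telescoped bound, with its $\ln k$.

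What your argument does establish is $3LD/(2\sqrt{k+1})$ when $x^0\in Q$ and $Q$ is bounded with diameter $D$ (so $d_i\le D$ automatically), and $O(LR\ln k/\sqrt{k})$ in general. To obtain $LR/\sqrt{k+1}$, the statement itself must be changed to the fixed-horizon constant step.
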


\begin{theorem}[General Monotone Operators \cite{Nemirovski2009}]
\label{thm:monotone-gap-rate}
Let $T: Z \to \mathbb{R}^n$ be a monotone and bounded operator on a convex set $Z$ (i.e., $\|T(z)\| \le M$). Consider the recurrence $z^{k+1} = P_Z \left [ z^k - \gamma^k T(z^k) \right]$ using a diminishing step size $\gamma^k \propto \frac{1}{\sqrt{k}}$. The convergence is measured using the \emph{ergodic average} of the iterates, $\hat{z}^k = \frac{\sum_{t=1}^k \gamma^t z^t}{\sum_{t=1}^k \gamma^t}$. The restricted gap function for $\hat{z}^k$ satisfies:
\begin{equation*}
    \epsilon_{\mathrm{gap}}(\hat{z}^k) := \sup_{u \in Z} \langle T(u), \hat{z}^k - u \rangle \le \frac{D_Z^2 + M^2 \sum_{t=1}^k (\gamma^t)^2}{2 \sum_{t=1}^k \gamma^t} = O\left(\frac{1}{\sqrt{k}}\right),
\end{equation*}
where $D_Z$ is the diameter of the set $Z$.
\end{theorem}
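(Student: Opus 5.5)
The plan is the classical mirror-descent / gap-function argument of \cite{Nemirovski2009}, specialized to the Euclidean projection. I first fix an arbitrary reference point $u \in Z$ and derive a one-step inequality from the nonexpansiveness and variational characterization of $P_Z$. Since $z^{k+1} = P_Z[z^k - \gamma^k T(z^k)]$, and $\|P_Z(a) - u\| \le \|a - u\|$ for $u \in Z$, expanding the square gives
\begin{equation*}
\|z^{k+1}-u\|^2 \le \|z^k - u\|^2 - 2\gamma^k \langle T(z^k), z^k - u\rangle + (\gamma^k)^2 \|T(z^k)\|^2 .
\end{equation*}
Using boundedness $\|T(z^k)\| \le M$, this rearranges to $2\gamma^k \langle T(z^k), z^k - u\rangle \le \|z^k-u\|^2 - \|z^{k+1}-u\|^2 + (\gamma^k)^2 M^2$.

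Next I replace $T(z^k)$ by $T(u)$ using monotonicity: $\langle T(u), z^k - u\rangle \le \langle T(z^k), z^k - u\rangle$. This is the step that makes the gap function (rather than a distance to the solution) the natural measure, since no strong monotonicity is available. Summing the resulting inequality over $t=1,\dots,k$ telescopes the distance terms, leaving
\begin{equation*}
2\sum_{t=1}^k \gamma^t \langle T(u), z^t - u\rangle \le \|z^1-u\|^2 + M^2 \sum_{t=1}^k (\gamma^t)^2 \le D_Z^2 + M^2\sum_{t=1}^k(\gamma^t)^2 .
\end{equation*}
By linearity of $\langle T(u), \cdot - u\rangle$, the left-hand side equals $2\bigl(\sum_t \gamma^t\bigr)\langle T(u), \hat z^k - u\rangle$. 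Dividing by $2\sum_t\gamma^t$ and taking the supremum over $u\in Z$ yields the stated bound, since the right-hand side is independent of $u$.

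The final step plugs in $\gamma^t = c/\sqrt{t}$. Then $\sum_{t\le k}(\gamma^t)^2 = c^2\sum 1/t = O(\log k)$, while $\sum_{t\le k}\gamma^t \ge c\sqrt{k}$, which gives $O(\log k/\sqrt{k})$. I expect this rate bookkeeping to be the main obstacle. To get a clean $O(1/\sqrt{k})$, I would either use a horizon-dependent constant step $\gamma^t = D_Z/(M\sqrt{k})$, which gives exactly $D_Z M/\sqrt{k}$, or average only over the last half of the iterates ($t \ge k/2$). In the latter case the telescoping starts at $\|z^{\lceil k/2\rceil}-u\|^2 \le D_Z^2$, the sum $\sum (\gamma^t)^2$ over that window is $O(1)$, and $\sum\gamma^t$ is still of order $\sqrt{k}$. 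I would state the $O(1/\sqrt{k})$ claim under one of these two conventions, noting that the displayed inequality itself holds for any positive step sequence. Finally, I would check that $\hat z^k \in Z$ by convexity, so that the gap is well defined.
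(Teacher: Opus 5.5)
Your argument is correct and is the standard proof behind the cited result of Nemirovski et al.: nonexpansiveness of $P_Z$, monotonicity to replace $T(z^t)$ by $T(u)$, telescoping, and linearity of $\langle T(u),\cdot-u\rangle$ to pass to $\hat z^k$; the paper itself gives no proof and simply imports the theorem. Your rate bookkeeping is also right and worth keeping: with $\gamma^t\propto 1/\sqrt{t}$ and the full $\gamma$-weighted average, the right-hand side is of order $\log k/\sqrt{k}$, so the statement's final ``$=O(1/\sqrt{k})$'' holds only under one of your two conventions (a horizon-dependent constant step, or averaging over a terminal window with $D_Z$ the diameter), and you should make explicit the implicit assumptions that $Z$ is closed (so $P_Z$ is defined) and $z^1\in Z$ (so $\|z^1-u\|\le D_Z$ and $\hat z^k\in Z$).
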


\noindent\textbf{2. Backward Euler (Resolvent) and Averaged Operators.}

Within this framework, the convergence properties of splitting algorithms—specifically Backward Euler (resolvent methods) and Forward-Backward splitting—have been studied extensively. For general monotone inclusions, the current tightest convergence rate for the fixed-point residual was established by Davis and Yin \cite{Davis2016}. Their work demonstrates that if $T$ is a general averaged operator, the squared fixed-point residual decays at a rate of $o(1/k)$. This result,
\begin{equation*}
\left \| T(x^k) - x^k \right\|^2 = o\left(\frac{1}{k}\right),
\end{equation*}
represents a strict improvement over the previously known rate of $O(1/k)$.

The analysis changes, however, when the operator $T$ is derived from the gradient of a convex function. In this context of convex optimization, the focus typically shifts from operator residuals to the objective function value. Here, the standard convergence rate is given by
\begin{equation*}
f(x^k) - f(x^\star) = O\left(\frac{1}{k}\right),
\end{equation*}
where $x^\star$ is a minimizer. It is worth noting that while $O(1/k)$ is standard for non-accelerated schemes, accelerated methods can further improve this rate to $O(1/k^2)$.

\begin{theorem}[Convergence Rate of Residuals \cite{Davis2016}]
\label{thm:residual-rate}
Let $T$ be an $\alpha$-averaged operator with $\alpha \in (0, 1)$ and $\mathrm{Fix}(T) \neq \emptyset$. Consider the iteration $x^{k+1} = T(x^k)$. The squared fixed-point residual converges at a rate of $o(1/k)$. That is:
\begin{equation*}
\left \| T(x^k) - x^k \right\|^2 = o\left(\frac{1}{k}\right).
\end{equation*}
\end{theorem}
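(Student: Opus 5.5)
The plan is the standard Fejér-monotonicity argument, combined with monotonicity of the residual and a tail-sum trick for monotone summable sequences. Throughout, write $T=(1-\alpha)I+\alpha R$ with $R$ nonexpansive, which is the definition of $\alpha$-averagedness. Fix $x^\star\in\mathrm{Fix}(T)$, so that also $R(x^\star)=x^\star$.

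\textbf{Step 1 (Fejér descent).} I would repeat the computation in the proof of Theorem~\ref{thm:resolvent-averaged} with general $t=\alpha$ instead of $t=\tfrac12$. Apply Lemma~\ref{lem:convex-quad} with $H=I$ to $x^{k+1}-x^\star=(1-\alpha)(x^k-x^\star)+\alpha(R(x^k)-x^\star)$, and then use $\|R(x^k)-x^\star\|\le\|x^k-x^\star\|$. This gives
\begin{equation*}
\|x^{k+1}-x^\star\|^2\le\|x^k-x^\star\|^2-\alpha(1-\alpha)\|R(x^k)-x^k\|^2 .
\end{equation*}
Since $T(x^k)-x^k=\alpha\,(R(x^k)-x^k)$, this is the same as
\begin{equation*}
\|x^{k+1}-x^\star\|^2\le\|x^k-x^\star\|^2-\tfrac{1-\alpha}{\alpha}\|T(x^k)-x^k\|^2 .
\end{equation*}
Telescoping then yields summability:
\begin{equation*}
\sum_{k\ge0} a_k\le\tfrac{\alpha}{1-\alpha}\|x^0-x^\star\|^2<\infty,\qquad a_k:=\|T(x^k)-x^k\|^2 .
\end{equation*}

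\textbf{Step 2 (monotone residual).} Next I show $a_k$ is nonincreasing. Since $T$ is nonexpansive (as a convex combination of $I$ and $R$),
\begin{equation*}
\|T(x^{k+1})-x^{k+1}\|=\|T(x^{k+1})-T(x^k)\|\le\|x^{k+1}-x^k\|=\|T(x^k)-x^k\|,
\end{equation*}
so $a_{k+1}\le a_k$.

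\textbf{Step 3 (from $O$ to $o$).} A nonnegative, nonincreasing, summable sequence satisfies $k\,a_k\to0$. Indeed, for $k\ge2$,
\begin{equation*}
\lceil k/2\rceil\, a_k\le\sum_{j=\lfloor k/2\rfloor}^{k}a_j\le\sum_{j\ge\lfloor k/2\rfloor}a_j ,
\end{equation*}
and the right-hand side is a tail of a convergent series, so it tends to $0$. Hence $k\,a_k\to0$, i.e.\ $\|T(x^k)-x^k\|^2=o(1/k)$. As a by-product, the same chain gives the explicit $O(1/k)$ bound $a_k\le\frac{\alpha\|x^0-x^\star\|^2}{(1-\alpha)(k+1)}$.

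\textbf{Main obstacle.} Most of the argument is routine. The step that actually upgrades the rate from $O(1/k)$ to $o(1/k)$ is Step~2, the monotonicity of the residual: summability alone only gives $\liminf_k k\,a_k=0$. The key observation that makes Step~2 work is rewriting $T(x^{k+1})-x^{k+1}$ as $T(x^{k+1})-T(x^k)$ using $x^{k+1}=T(x^k)$. I would also take care that $\alpha<1$ is used in Step~1, since it is needed for the constant $\tfrac{1-\alpha}{\alpha}$ to be positive.
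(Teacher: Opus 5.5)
Your proof is correct. The paper gives no proof of its own and only cites \cite{Davis2016}, whose argument is essentially yours: the Fej\'er descent inequality with constant $\tfrac{1-\alpha}{\alpha}$ (the general-$t$ version of Theorem~\ref{thm:resolvent-averaged}), monotonicity of the residual via $T(x^{k+1})-x^{k+1}=T(x^{k+1})-T(x^k)$, and the fact that a nonnegative, nonincreasing, summable sequence is $o(1/k)$. The only difference is cosmetic: Davis and Yin state the result for the residual of the nonexpansive part $R$ under general KM relaxation parameters, which differs from your residual by the constant factor $\alpha^2$ when the relaxation is fixed at $\alpha$.
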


\begin{table}[ht!]
\centering
\small
\renewcommand{\arraystretch}{1.25}
\setlength{\tabcolsep}{5pt}
\caption{Asymptotic Notations}
\label{tab:asymptotic_not}
\setlength{\tabcolsep}{4pt}
\begin{tabular}{|c|p{3.2cm}|p{4.1cm}|p{2.8cm}|}
\hline
\textbf{Symbol} 
& \textbf{Formal Definition} 
& \textbf{Meaning} 
& \textbf{Example ($g(k)=k^2$)} \\ 
\hline
$f(k)=O(g(k))$
& $\exists\,C,K>0$ s.t. $|f(k)|\le C|g(k)|$ for $k>K$
& $f$ grows no faster than $g$
& $3 \, k^2+2 \, k+1 = O(k^2)$ \\ 
\hline
$f(k)=o(g(k))$
& $\lim_{k\to\infty}\! \frac{f(k)}{g(k)}=0$
& $f$ grows strictly slower than $g$
& $k=o(k^2)$ \\ 
\hline
$f(k)=\Omega(g(k))$
& $\exists\,C,K>0$ s.t. $|f(k)|\ge C|g(k)|$ for $k>K$
& $f$ grows no slower than $g$
& $k^3=\Omega(k^2)$ \\ 
\hline
$f(k)=\omega(g(k))$
& $\lim_{k\to\infty}\! \frac{f(k)}{g(k)}=\infty$
& $f$ grows strictly faster than $g$
& $k^3=\omega(k^2)$ \\ 
\hline
$f(k)=\Theta(g(k))$
& $\exists\,C_1,C_2,K>0$ s.t. $C_1 g(k)\!\le\! f(k)\!\le\!C_2 g(k)$
& Same growth rate (tight bound)
& $k^2=\Theta(k^2)$ \\ 
\hline
\end{tabular}
\end{table}

\begin{table}[ht!]
\centering
\small
\renewcommand{\arraystretch}{1.25}
\setlength{\tabcolsep}{4pt}
\caption{Summary of common convergence measures}
\label{tab:conv_measures}
\begin{tabular}{|l|p{4cm}|p{5.7cm}|}
\hline
\textbf{Measure} & \textbf{Definition} & \textbf{Interpretation} \\
\hline

\textbf{Residual} $\|r^k\|$
& $\|T (x^{k}) - x^{k}\|$
& Measures violation of the fixed-point condition $x = T(x)$. \\

\hline
\textbf{Ergodic Residual}
& $\displaystyle \frac{1}{k+1}\sum_{i=0}^k \|r^{i}\|^{2}$
& Average squared residual. Used for sublinear $O(1/k)$ ergodic convergence guarantees. \\

\hline
\textbf{Best-iterate residual}
& $\displaystyle \min_{0\le i\le k} \|r^{i}\|$
& “Best so far” convergence indicator; follows from the ergodic bound via Jensen's inequality. \\

\hline
\textbf{Distance to solution}
& $\|x^{k} - x^{\star}\|$
& Measures closeness to a solution. Only qualitative unless strong monotonicity or contractivity holds. \\

\hline
\textbf{Gap residual}
& $T(x^{k}) - T(x^{\star})$
& Suboptimality or primal–dual gap; standard in convex optimization and variational inequalities. \\

\hline
\textbf{Function error}
& $f(x^{k}) - f^{\star}$
& Measures $\varepsilon$-optimality based on objective value. \\

\hline
\textbf{First-order condition}
& $\|\nabla f(x^{k})\|$
& Measures $\varepsilon$-stationarity (gradient nearly zero). \\

\hline
\end{tabular}
\end{table}

\begin{table}[ht!]
\centering
\small
\renewcommand{\arraystretch}{1.25}
\setlength{\tabcolsep}{5pt}
\caption{Typical decay bounds and corresponding iteration complexity}
\label{tab:rate_to_iters}
\begin{tabular}{|p{4.7cm}|p{4.7cm}|}
\hline
\textbf{Decay Bound on $\tau^k$} 
& \textbf{Iterations for $\tau^k \le \varepsilon$} \\ 
\hline

$\tau^k \le \dfrac{\mathcal A}{k}$ 
& $O\!\left(\dfrac{1}{\varepsilon}\right)$ \\

\hline
$\tau^k \le \dfrac{\mathcal A}{k^2}$ 
& $O\!\left(\dfrac{1}{\sqrt{\varepsilon}}\right)$ \\

\hline
$\tau^k \le \dfrac{\mathcal A}{\sqrt{k}}$ 
& $O\!\left(\dfrac{1}{\varepsilon^{2}}\right)$ \\

\hline
$\tau^k \le (1-\alpha)^k \, \tau^0$ 
& $O\!\left(\log \tfrac{1}{\varepsilon}\right)$ \\

\hline
$\tau^k \le \beta^{2^{k-1}} \, \tau^0$ 
& $O\!\left(\log\log \tfrac{1}{\varepsilon}\right)$ \\

\hline
\end{tabular}
\end{table}

\section{Discretized Algorithms Under Different Schemes}

I now proceed to discretize our continuous-time algorithm to facilitate its application in real-world scenarios. We first describe an explicit algorithm utilizing diminishing step sizes. While Theorems \ref{thm:nonsmooth-rate} and \ref{thm:monotone-gap-rate} provide elegant convergence guarantees and rates for such methods, they typically assume the iterates are projected onto a bounded, closed convex set. As this assumption does not hold in our specific discretization scheme, I cannot rely on standard projection-based analysis. Instead, I prove convergence by utilizing the deterministic counterpart of the martingale convergence theorem provided in Lemma \ref{lem:converging}.

Although this direct discretization is intuitive and closely mirrors the continuous dynamics, its convergence guarantees are primarily asymptotic (ensuring convergence in the limit) without a strong rate. To address this limitation, we subsequently propose a Backward Euler discretization utilizing a fixed step size, which achieves a superior theoretical convergence rate.

\subsection{Diminishing Step Size}
Based on the work in \cite{ebrahimi-robust-2019}, Algorithm \ref{alg:disc_equa} established a discrete version of the continuous algorithm using a standard diminishing step size. I can choose a step size $\alpha^k$, where the superscript $k$ indicates the step at the $k$-th iteration, that satisfies:
\begin{equation}\label{equ:disminishing_step_size}
    \begin{split}
        \sum_{k=1}^\infty \alpha^k = \infty, \qquad  \sum_{k=1}^\infty (\alpha^k)^2 = \Gamma < \infty.
    \end{split}
\end{equation}
It is important to highlight that this step size is independent of any agent's private data, meaning it can be selected without requiring access to agents' private information. 
\begin{algorithm}
\begin{algorithmic}[1]
\STATE {\bfseries Input:}Initial $z_v^0$, $\lambda_v^0$, $\mu_v^0 \geq 0$, $\alpha^k$.
\FOR{$t=1, 2, \dots, T$}
\STATE Receive the decision variables $\z_{-v}$ from other agents through the cost-function graph $\mathcal G_f$ and the shared constraint graph $\mathcal G_\psi$.
\STATE $z_v^{+} \leftarrow z_v-\alpha^t \cdot \nabla_{z_v} L_v(\z, \lambda_v, \mu_v)$.
\STATE $\lambda_{v} \leftarrow \lambda_v + \alpha^t \cdot \psi_v(z_v, \z_{-v})$.
\STATE $\mu_v^j \leftarrow P_{\ge0} \left[ \mu_v^j + \alpha^t \cdot h_v^j(z_v) \right ]$, for $j \in [q_v]$.
\STATE $z_v \leftarrow z_v^+$.
\ENDFOR
\end{algorithmic}
\caption{Forward Discretization with Diminishing Step Size for Each Agent $v$}
\label{alg:disc_equa}	
\end{algorithm}

With the additional assumptions that the pseudo-gradient is globally Lipschitz and each individual constraint is Lipschitz, as stated in Assumption~\ref{ass:discrete} (1) and (2), Lemmas~\ref{lem:converging} and~\ref{lem:monoton} together imply Theorem~\ref{thm:discrete}, guaranteeing the convergence of Algorithm~\ref{alg:disc_equa}.
\begin{theorem}\label{thm:discrete}
With a diminishing step size as in (\ref{equ:disminishing_step_size}), and under the additional Assumptions~\ref{ass:discrete} (1) and (2), the sequence ${\z^k}$ generated by Algorithm~\ref{alg:disc_equa} converges to a fixed point $\z^\star$, which is a GNE.
\end{theorem}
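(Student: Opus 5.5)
The plan is to run the discrete analogue of the Lyapunov argument of Lemma~\ref{lem:monoton} and close it with the deterministic Robbins--Siegmund result, Lemma~\ref{lem:converging}.

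\textbf{Reduction to aggregate form.} As in the continuous case, the local multipliers of the same shared constraint $i$ all receive the identical increment $\alpha^k\psi^i(\z^k)$. Hence the deviations $\Delta_{[v]}$ of \eqref{equ:difference_vec} stay equal to $\Delta_{[v]}(0)$ for every $k$. The iteration is therefore a forward Euler step with step $\alpha^k$ of the aggregate dynamics \eqref{equ:new_dyn_expand}, with a projection on the $\mu$-block.

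\textbf{Choice of reference point and Lyapunov function.} Fix the equilibrium $(\z^\star,\lambda^\star,\mu^\star)$ associated with $\Delta(0)$ by Lemma~\ref{lem:equipt}. Set $x^k=(\z^k,\lambda^k,\mu^k)$ and use
\[
V^k=\tfrac12\|x^k-x^\star\|^2 ,
\]
which is the discrete analogue of \eqref{equ:func}.

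\textbf{One-step expansion.} Since $\mu^\star\ge 0$, the projection $P_{\ge 0}$ is nonexpansive and fixes $\mu^\star$. Expanding the square then gives
\[
V^{k+1}\le V^k+\alpha^k\langle x^k-x^\star,\,G(x^k)\rangle+\tfrac12(\alpha^k)^2\|G(x^k)\|^2 ,
\]
where $G$ is the unprojected vector field.

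\textbf{Inner-product term.} I would treat $\langle x^k-x^\star,G(x^k)\rangle$ exactly as in Lemma~\ref{lem:monoton}, splitting it into $T_{eq}+T_{ineq}+T_{cost}$. The equality coupling term $T_{eq}$ cancels because $\Delta(0)$ drops out. The inequality term $T_{ineq}$ is nonpositive by convexity of $h_v$ together with complementarity at $x^\star$. For this I would use $(\mu^\star)^\top h(\z)\ge(\mu^\star)^\top h(\z^\star)+(\mu^\star)^\top\nabla h(\z^\star)(\z-\z^\star)$ and move the gradient piece into $T_{cost}$ via stationarity. The cost term satisfies $T_{cost}\le-\delta\|\z-\z^\star\|^2$. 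This yields
\[
s^k=\alpha^k\delta\|\z^k-\z^\star\|^2 .
\]

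\textbf{Second-order term.} By Assumption~\ref{ass:discrete}(1), $\|F(\z)-F(\z^\star)\|\le L_f\|\z-\z^\star\|$. Subtracting $G(x^\star)=0$ and using Assumption~\ref{ass:discrete}(2) for the $h_v$ terms, I bound $\|G(x^k)\|^2\le C(1+\|x^k-x^\star\|^2)$.

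\textbf{Main obstacle.} The product $\nabla h_v(z_v)^\top\mu_v$ is not globally Lipschitz in $(z_v,\mu_v)$ jointly. Assumption (2) only controls the values of $h_v$. I would first try the split $\nabla h_v(z_v)^\top\mu_v-\nabla h_v(z_v^\star)^\top\mu_v^\star=\nabla h_v(z_v)^\top(\mu_v-\mu_v^\star)+(\nabla h_v(z_v)-\nabla h_v(z_v^\star))^\top\mu_v^\star$. Assumption (2) bounds the Jacobian by $\sqrt{L_v}$, and the local Lipschitz gradient assumption gives a bound on a sublevel set. If needed, I would argue inductively that the iterates stay in a fixed sublevel set of $V$, using the recursion itself.

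\textbf{Applying Lemma~\ref{lem:converging}.} These bounds produce
\[
V^{k+1}\le(1+\gamma^k)V^k+a^k-s^k ,\qquad \gamma^k=C(\alpha^k)^2,\quad a^k=C(\alpha^k)^2/2 .
\]
Both $\gamma^k$ and $a^k$ are summable by \eqref{equ:disminishing_step_size}. Lemma~\ref{lem:converging} then gives that $V^k$ converges and that $\sum_k\alpha^k\|\z^k-\z^\star\|^2<\infty$.

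\textbf{Passing to convergence of $\z^k$.} Since $\sum\alpha^k=\infty$, we get $\liminf_k\|\z^k-\z^\star\|=0$. To upgrade this to a full limit, I would use that $V^k$ is bounded, so the iterates are bounded. Then $\|\z^{k+1}-\z^k\|\le C\alpha^k$, so the increments vanish. Finally, $\|\z^k-\z^\star\|^2$ cannot oscillate: each excursion away from $0$ costs a nonsummable amount of $\sum\alpha^k\|\z^k-\z^\star\|^2$, by the standard ``upcrossing'' argument. Hence $\z^k\to\z^\star$, and $\z^\star$ is a GNE by Lemma~\ref{lem:equipt}.
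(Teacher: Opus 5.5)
Your proposal is correct and follows the paper's proof essentially step for step: the same squared-distance Lyapunov function around the equilibrium fixed by $\Delta(0)$, a bound of the form $C(1+\|x^k-x^\star\|^2)$ on the second-order term, and then Lemma~\ref{lem:converging}. The sublevel-set induction you keep in reserve is unnecessary, because the Jacobian bound $\|\nabla h_v\|\le\sqrt{L_v}$ from Assumption~\ref{ass:discrete}(2) already gives $\|(\nabla h_v(z_v)-\nabla h_v(z_v^\star))^\top\mu_v^\star\|\le 2\sqrt{L_v}\,\|\mu_v^\star\|$, a constant absorbed into $a^k$, exactly as the paper does. Your final upcrossing step, however, is genuinely needed, and there you are more careful than the paper: it jumps from $\sum_k\alpha^k\|\z^k-\z^\star\|^2<\infty$ and $\sum_k\alpha^k=\infty$ directly to $\lim_k\|\z^k-\z^\star\|=0$, whereas those facts alone give only $\liminf_k\|\z^k-\z^\star\|=0$, and your increment bound $\|\z^{k+1}-\z^k\|\le C\alpha^k$ is what closes the argument.
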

\renewcommand{\proofname}{Proof of Theorem \ref{thm:discrete}}
\begin{proof}
Let $(z_v^\star, \mu_v^\star, \lambda^\star)$ be a KKT point, which is also a fixed point of the algorithm.  
Define the sequence
\begin{equation}
    x^{k+1}
    := \sum_{v=1}^N \left [ \| z_v^{k+1} - z_v^\star\|^2 + \|\mu_v^{k+1} - \mu_v^\star\|^2 \right]
    + \| \lambda^{k+1} - \lambda^\star \|^2.
\end{equation}
Applying the update rule gives
\begin{equation}\label{equ:main_sequence}
\begin{split}
    x^{k+1}
    \le\; &\sum_{v=1}^N \left [ \| z_v^{k} - z_v^\star\|^2 + \|\mu_v^{k} - \mu_v^\star\|^2 \right ]
            + \| \lambda^{k} - \lambda^\star\|^2 \\
        &+ (\alpha^k)^2 \sum_{v=1}^N 
            \underbrace{\| \nabla_{z_v} L_v(\z^k, \lambda_v^k, \mu_v^k)
            - \nabla_{z_v} L_v(\z^\star, \lambda_v^\star, \mu_v^\star)\|^2}_{(i)} \\
        &+ (\alpha^k)^2 \sum_{v=1}^N \| h_v(z_v^k) - h_v(z_v^\star)\|^2
        + (\alpha^k)^2 \|A\|^2 \| \lambda^k - \lambda^\star\|^2 \\
        &- 2\alpha^k \cdot \left [ -\V(\z^k, \lambda^k, \mu^k) \right ],
\end{split}
\end{equation}
where $\V(\cdot)$ has the same form as the Lie derivative in the continuous-time dynamics.

\noindent\textbf{1. Bounding term (i).}
For each agent $v$, using Cauchy–Schwarz and the definition of $\lambda^k$, there exist constants  
\( C_v^f,\, C_v^\lambda,\, C_v^{\mu,j,i} \ge 0 \) such that
\begin{equation}
\begin{split}
    &\| \nabla_{z_v} L_v(\z^k, \lambda_v^k, \mu_v^k)
        - \nabla_{z_v} L_v(\z^\star, \lambda_v^\star, \mu_v^\star) \|^2 \\
    \le\;& C_v^f
        \| \nabla_{z_v} f_v(z_v^k, \z_{-v}^k)
        - \nabla_{z_v} f_v(z_v^\star, \z_{-v}^\star)\|^2 \\
        &+ C_v^\lambda \| \lambda^k - \lambda^\star\|^2 \\
        &+ \sum_{i=1}^{n_v}\sum_{j=1}^{q_v}
        C_v^{\mu,j,i}
        \underbrace{[(\mu_v^{k,j}) \nabla_{z_v^i}h_v^j(z_v^{k,i})
                    - (\mu_v^{\star,j})\nabla_{z_v^i}h_v^j(z_v^{\star,i})]^2}_{(ii)} .
\end{split}
\end{equation}

\noindent\textbf{2. Bounding term (ii).}

\noindent Using the Lipschitz property of each constraint gradient:
\begin{equation}
\begin{split}
    & \left [(\mu_v^{k,j}) \nabla_{z_v^i}h_v^j(z_v^{k,i})
      - (\mu_v^{\star,j}) \nabla_{z_v^i}h_v^j(z_v^{\star,i}) \right]^2 \\
    \le\;& 
        2 \,\!\left[\nabla_{z_v^i}h_v^j(z_v^{k,i})\right]^2
        (\mu_v^{k,j} - \mu_v^{\star,j})^2
        + 2 (\mu_v^{\star,j})^2
        \left [\nabla_{z_v^i}h_v^j(z_v^{k,i})
            - \nabla_{z_v^i}h_v^j(z_v^{\star,i})\right ]^2 \\
    \le\;&
        2 L_v (\mu_v^{k,j} - \mu_v^{\star,j})^2
        + 8 L_v (\mu_v^{\star,j})^2 .
\end{split}
\end{equation}

Thus, for some constants \( C_v, C_v^0 \ge 0\),
\begin{equation}\label{equ:lip_naL}
\begin{split}
    &\| \nabla_{z_v} L_v(\z^k, \lambda_v^k, \mu_v^k)
        - \nabla_{z_v} L_v(\z^\star, \lambda_v^\star, \mu_v^\star) \|^2 \\
    \le\;& C_v \!\left [ 
        \sum_{v=1}^N 
            (\|z_v^k - z_v^\star\|^2 
            + \|\mu_v^k - \mu_v^\star\|^2)
        + \|\lambda^k - \lambda^\star\|^2
    \right ]
    + C_v^0 .
\end{split}
\end{equation}

\noindent\textbf{3. Substituting into the main recursion.} 

\noindent Using \eqref{equ:lip_naL} and the Lipschitz property of \(h_v\), we obtain constants  
\( C, C_0 \ge 0 \) such that
\begin{equation}
\begin{split}
    x^{k+1}
    \le\;& \left  [1 + C (\alpha^k)^2\right ] \, x^k + C_0 (\alpha^k)^2 \\
        &- 2\alpha^k
            \left [ 
                \delta \, \|\z^k - \z^\star\|^2
                - \sum_{v=1}^N \sum_{j=1}^{q_v}
                    (\mu_v^{k,j} - \mu_v^{\star,j})\, h_v^j(z_v^\star)
            \right ].
\end{split}
\end{equation}

The last bracket is nonnegative by Lemma~\ref{lem:monoton}, so
\begin{equation*}
    x^{k+1}
    \le \left [1 + C (\alpha^k)^2 \right ] \, x^k + C_0 (\alpha^k)^2 .
\end{equation*}

\noindent\textbf{4. Convergence.}

\noindent By Lemma~\ref{lem:converging},  
\begin{equation*}
    \sum_{k=1}^\infty \alpha^k \|\z^k - \z^\star\|^2
    = Z_0 < \infty.
\end{equation*}

\noindent Since $\sum_{k=0}^\infty \alpha^k = \infty$, we conclude
\begin{equation*}
    \lim_{k\to\infty} \|\z^k - \z^\star\|^2 = 0,
\end{equation*}
thus proving convergence.

This completes the proof.
\end{proof}

Theorem~\ref{thm:discrete} confirms that a diminishing step size yields a valid discretization of Algorithm~\ref{alg:equa}, ensuring asymptotic convergence under standard assumptions. However, the reliance on a diminishing schedule often results in slow convergence in practice. To address this limitation and improve efficiency, I consider an alternative discretization based on the Backward Euler method (resolvent steps). This approach, presented in Algorithm \ref{alg:resolvant_equa}, allows for a constant step size and achieves a faster theoretical convergence rate.

\subsection{Backward Euler and Forward-Backward Discretization}

A key feature of Algorithm \ref{alg:resolvant_equa} is its use of a fixed parameter $\alpha > 0$, which is independent of the agents' private data. Unlike the diminishing step size approach, this formulation allows the algorithm to achieve an $o(1/k)$ convergence rate.

However, a limitation of this algorithm is that it requires another distributed Nash Equilibrium Problem (NEP) Solver, Inner\_Distributed\_ALG($z_v^-, \lambda_v^-$), in line 3 to solve the inner NEP (\ref{equ:inner_NEP}) using the given ($z_v^-, \lambda_v^-$). These parameters depend only on the previous iteration of the agent's private decision variable and multipliers.
\begin{equation}\label{equ:inner_NEP}
    \begin{split}
         \min_{z_v \in \Z_v} & \phi_v(z_v, \, \z_{-v}) \\
    \end{split},
\end{equation}
with
\begin{equation}
    \begin{split}
     \phi_v(z_v, \, \z_{-v}) := &f_v(z_v, \z_{-v}) + \frac{1}{2 \, \alpha} \, \| z_v - z_v^- \|^2  \\ &+ 
 \psi_v(z_v, \z_{-v})^\top \lambda_v^-+ \frac{\alpha}{2} \, \| \psi_v(z_v, \z_{-v})\|^2 
    \end{split}.
\end{equation}
Note that the inner solver can be implemented via standard distributed projected gradient descent. However, this requires an external entity to verify the inner problem’s solution before the outer loop proceeds—a requirement absent in previous algorithms.

\begin{algorithm}
\begin{algorithmic}[1]
\STATE {\bfseries Input:}Initial $z_v^0$, $\lambda_v^0$, $\alpha>0$.
\FOR{$k = 1, 2, \dots, K$}
\STATE $z_v \leftarrow$Inner\_Distributed\_ALG$(z_v, \lambda_v)$.
\STATE Receive $\z_{-v}$ from the other agents over the communication graph $\mathcal G_\psi$ after the inner NEP is solved in a distributed manner.
\STATE $\lambda_{v} \leftarrow \lambda_v + \alpha \cdot \psi_v(z_v, \z_{-v})$.
\ENDFOR
\end{algorithmic}
\caption{Backward Discretization for Each Agent $v$}
\label{alg:resolvant_equa}	
\end{algorithm}

\begin{theorem}\label{thm:dis2}
    For any $\alpha > 0$, under Assumption~\ref{ass:discrete} (3), Algorithm~\ref{alg:resolvant_equa} is a $(1/2)$-averaged fixed-point iteration. Consequently, it converges to a fixed point and this fixed point corresponds to a GNE of the GNEP~\eqref{equ:GNEP_eq}.
\end{theorem}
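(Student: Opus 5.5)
I will interpret Algorithm~\ref{alg:resolvant_equa} as a resolvent (proximal-point) step applied to the stacked KKT operator, and then invoke Theorem~\ref{thm:resolvent-averaged}.

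\textbf{Step 1: the monotone operator.} With $x=(\z,\lambda)$ and $\lambda$ the stacked reference multipliers, the offsets $\Delta_{[v]}(0)$ stay constant. This holds exactly as in \eqref{equ:delta}, because every local copy of a shared multiplier is updated by the same increment $\alpha\,\psi^i(\z)$. Define
\begin{equation*}
\Phi(\z,\lambda) := \begin{bmatrix} \widetilde F(\z) + \N_{\mathcal Z}(\z) + A^\top\lambda \\ -(A\z-\c)\end{bmatrix},
\qquad \mathcal Z := \prod_v \Z_v,
\end{equation*}
where $\widetilde F$ is the shifted pseudo-gradient \eqref{equ:new_vi}. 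Monotonicity follows from three facts:
\begin{itemize}
\item $\widetilde F$ is strongly monotone, as shown in the proof of Lemma~\ref{lem:equipt};
\item the normal cone of a convex set is monotone;
\item the skew block $\begin{bmatrix}0&A^\top\\-A&0\end{bmatrix}$ contributes zero, just as in \eqref{eq:monotone_calc}.
\end{itemize}
Since $\Phi$ is set-valued, I would use the resolvent $J=(I+H^{-1}\Phi)^{-1}$ in the maximal-monotone sense. Maximality holds because $\widetilde F$ is continuous, $\N_{\mathcal Z}$ is maximal monotone, and the skew linear part is maximal monotone, so their sum is maximal monotone. Proposition~\ref{prop:fne} and Theorem~\ref{thm:resolvent-averaged} carry over verbatim to this setting.

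\textbf{Step 2: the iteration is a resolvent step, and this is the main obstacle.} I must show that one outer iteration equals $x^{k+1}=J(x^k)$ for a suitable metric $H$. First, write the optimality conditions of the inner NEP \eqref{equ:inner_NEP}:
\begin{equation*}
0\in \nabla_{z_v}f_v + \tfrac1\alpha(z_v^+-z_v^-) + A_v^\top\big(\lambda_v^- + \alpha\,\psi(\z^+)\big) + \N_{\Z_v}(z_v^+).
\end{equation*}
Substituting the dual update $\lambda^+=\lambda^-+\alpha(A\z^+-\c)$ turns this into
\begin{equation*}
0\in \widetilde F(\z^+) + A^\top\lambda^+ + \N_{\mathcal Z}(\z^+) + \tfrac1\alpha(\z^+-\z^-),
\qquad
0 = -(A\z^+-\c) + \tfrac1\alpha(\lambda^+-\lambda^-).
\end{equation*}
Together these say $x^- - x^+\in \alpha\,\Phi(x^+)$, that is, $x^+=J(x^-)$ with $H=\alpha^{-1}I$. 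This is the classical equivalence between the augmented Lagrangian method and the proximal point method. Two points need care here:
\begin{itemize}
\item The inner NEP must have a solution, and that solution must be unique, so that $J$ is single-valued. Here I would use Assumption~\ref{ass:discrete}(3): compactness of $\Z_v$ gives existence. Uniqueness follows because the inner game's pseudo-gradient is $\widetilde F+\tfrac1\alpha I+\alpha A^\top A$ plus normal cones, which is strongly monotone.
\item The augmented term in $\phi_v$ involves only the rows of $\psi$ in which agent $v$ participates, so its gradient must reproduce $A_v^\top\psi$. I expect to need $\|\psi_v\|^2$ to cover all constraints touching $z_v$; otherwise $H$ would have to be block-weighted, which the $H$-metric framework permits.
\end{itemize}

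\textbf{Step 3: conclusion.} The previous steps show the iteration is a resolvent of a monotone operator, so Theorem~\ref{thm:resolvent-averaged} gives that it is $\tfrac12$-averaged. Provided $\mathrm{Fix}(J)=\mathrm{zer}(\Phi)\neq\emptyset$, Fej\'er monotonicity and summable residuals then yield convergence of $x^k$ to some fixed point, by the standard Krasnosel'ski\u{\i}--Mann/Opial argument in finite dimensions. Theorem~\ref{thm:residual-rate} gives the $o(1/k)$ rate. Nonemptiness of $\mathrm{zer}(\Phi)$ comes from Lemma~\ref{lem:equipt} together with Assumption~\ref{ass:KKT_Eq}.

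Finally, a zero of $\Phi$ is exactly the system \eqref{equ:system_steady_state}, with $\N_{\mathcal Z}$ replacing $\nabla h^\top\mu$ via the constraint qualification. By part~1 of Lemma~\ref{lem:equipt}, such a point is a KKT point, hence a GNE of \eqref{equ:GNEP_eq}. The effective multipliers are $\lambda^\star+\Delta_{[v]}(0)$, so the limit depends on the initialization, as in the continuous-time case.
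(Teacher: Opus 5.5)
Your proposal is correct and follows essentially the same route as the paper. Both combine the inner-NEP optimality conditions with the dual update and use the constancy of $\Delta$ to write one outer iteration as $x^k\in(I+\alpha\Phi)(x^{k+1})$, where $\Phi$ is the $\Delta$-shifted stacked KKT operator; both then note that $\Phi$ is maximal monotone and conclude that the iteration is a $\tfrac12$-averaged resolvent iteration. You go further than the paper's proof by spelling out points it leaves implicit: well-posedness of the inner NEP, nonemptiness of $\mathrm{zer}(\Phi)$, the Fej\'er/Opial convergence argument, and the identification of fixed points with GNEs via Lemma~\ref{lem:equipt}.
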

\renewcommand{\proofname}{Proof of Theorem \ref{thm:dis2}}
\begin{proof}
Let $\z^k$ and $\lambda_v^k$ denote the values at iteration $k$, and let 
$\z^{k+1}$ and $\lambda_v^{k+1}$ denote the values at iteration $k+1$.  
For each agent $v$, Algorithm~\ref{alg:resolvant_equa} can be written as
\begin{equation*}
\begin{split}
    0 \in\; &\nabla_{z_v} f_v(z_v^{k+1}, \z_{-v}^{k+1})
            + \frac{1}{\alpha} \, \left [ z_v^{k+1} - z_v^{k} \right ]
            + A(l_v,n_v)^\top \lambda_v^{k}   \\
        &\quad + \alpha\, A(l_v,:)^\top \left [ A(l_v,:) \, \z^{k+1} - \c(l_v) \right ]
            + \N_{\Z_v}(z_v^{k+1}), \\
    \lambda_v^{k+1} =\;& \lambda_v^{k} + \alpha \, \left [ A(l_v,:) \, \z^{k+1} - \c(l_v) \right ].
\end{split}
\end{equation*}

Moving all terms indexed by $k$ to the left-hand side and all terms indexed by $k+1$ to the right-hand side yields
\begin{equation*}
\begin{split}
    z_v^{k} - \alpha \, A(l_v,n_v)^\top \lambda_v^{k}
        \in\;& \alpha \, \nabla_{z_v} f_v(z_v^{k+1}, \z_{-v}^{k+1})
            + z_v^{k+1} \\
        &\quad + \alpha^2 A(l_v,:)^\top \left [A(l_v,:) \, \z^{k+1} - \c(l_v) \right ]
            + \alpha \, \N_{\Z_v}(z_v^{k+1}), \\
    \lambda_v^{k}
        =\;& \lambda_v^{k+1} - \alpha \, \left [A(l_v,:) \, \z^{k+1} - \c(l_v) \right ].
\end{split}
\end{equation*}

\noindent Multiplying the $\lambda_v$-equation by $\alpha$ and adding it to the first inclusion gives
\begin{equation*}
\begin{split}
    z_v^{k} \in\;& 
        \alpha \left [
            \nabla_{z_v} f_v(z_v^{k+1}, \z_{-v}^{k+1})
            + A(l_v,:)^\top \lambda_v^{k+1}
            + \N_{\Z_v}(z_v^{k+1})
        \right ]
        + z_v^{k+1},  \\
    \lambda_v^{k}
        =\;& \lambda_v^{k+1} - \alpha \, \left [ A(l_v,:) \, \z^{k+1} - \c(l_v) \right ].
\end{split}
\end{equation*}

\noindent Using the same definitions of $\Delta_{[v]}$ and $\lambda$ as in the continuous-time case, we obtain
\begin{equation*}
\begin{split}
    z_v^{k} \in\;& 
        \alpha \, \left  [
            \nabla_{z_v} f_v(z_v^{k+1}, \z_{-v}^{k+1})
            + A(l_v,:)^\top \left [ \lambda^{k+1}(l_v) + \Delta_{[v]} \right ]
            + \N_{\Z_v}(z_v^{k+1})
        \right ]
        + z_v^{k+1}, \\
    \lambda^{k} =\;& \lambda^{k+1} 
        - \alpha \, \left [A \, \z^{k+1} - \c \right ].
\end{split}
\end{equation*}

Stacking all agents and defining $\mathcal N_{\mathbf{Z}}(\z^{k+1})$ as the concatenation of all normal cones $\mathcal N_{Z_v}(z_v^{k+1})$, we obtain
\begin{equation}\label{equ:forward_individual}
\begin{split}
    \z^{k} \in\;& \z^{k+1} 
        + \alpha \left [ F(\z^{k+1})
                     + A^\top \lambda^{k+1}
                     + D \, \Delta
                     + \N_{\mathbf{Z}}(\z^{k+1})
                 \right ], \\
    \lambda^{k} =\;& \lambda^{k+1} - \alpha \left [ A \, \z^{k+1} - \c \right ], 
\end{split}
\end{equation}
where
\begin{equation*}
D = \mathrm{diag}\!\begin{bmatrix}
    A(l_1,n_1)^\top \\
    A(l_2,n_2)^\top \\
    \vdots \\
    A(l_N,n_N)^\top
\end{bmatrix},
\qquad
\Delta =
\begin{bmatrix}
    \Delta_{[1]}\\
    \Delta_{[2]}\\
    \vdots \\
    \Delta_{[N]}
\end{bmatrix}.
\end{equation*}

\noindent Let $\xi = (\z,\lambda)$.  
Then \eqref{equ:forward_individual} can be written compactly as
\begin{equation*}
    \xi^{k} \in (I_d + \alpha \, \Phi)(\xi^{k+1}),
\end{equation*}
where
\begin{equation*}
\Phi(\xi^{k+1}) =
\begin{bmatrix}
    F(\z^{k+1}) + A^\top \lambda^{k+1} + D \,\Delta + \N_{\mathbf{Z}}(\z^{k+1}) \\
    -A \, \z^{k+1} + \c
\end{bmatrix}.
\end{equation*}

Because $F(\z)$ is strongly monotone and $\N_{\mathbf{Z}}(\cdot)$ is maximally monotone, the operator $\Phi$ is maximal monotone.  
Therefore, the algorithm is a fixed-point iteration of the resolvent
\begin{equation*}
    \xi^{k+1} = (I_d + \alpha \, \Phi)^{-1}(\xi^{k}),
\end{equation*}
which is a \(1/2\)-averaged operator \cite{ryu-large-scale-2022}.  
This shows that the method is a resolvent iteration of a maximal monotone operator, completing the proof.
\end{proof}

In addition to the pure Forward and Backward Euler discretizations, a hybrid scheme combining both approaches can be employed to eliminate the dependency on an inner oracle solver. Algorithm~\ref{alg:disc_equa_os} implements this strategy using a Forward-Backward splitting scheme. By treating specific components of the operator explicitly (forward step) and others implicitly (backward step), this method avoids the computationally expensive inner loops that characterize the fully implicit Backward Euler method. However, in the forward--backward discretization, the decision vector must be received twice in each iteration: once in the forward step (Line~3) and once in the backward step (Line~5).

However, this computational advantage imposes a coordination constraint: the dual learning rate $\alpha$ must be chosen globally. Consequently, all agents must agree on a common scalar value for $\alpha$, even though their initial decision variables and multipliers may differ.

\begin{theorem}\label{thm:alg_disc_os_converge}
Define the inverse learning-rate matrices as
\begin{equation*}
    \begin{split}
        \tau := \text{diag}([\tau_1^{-1}, \dots, \tau_N^{-1}]), \, \sigma:= \alpha^{-1} \, \mathbf{I}_{p \times p},
    \end{split}
\end{equation*}
where $\mathbf{I}_{p \times p}$ is the identity matrix of size $p$. 
If
\begin{equation*}
\min \{ \lambda_{\min}(\tau), \, \lambda_{\min}(\sigma) \} > \tfrac{L_f}{2 \, \delta} +  \| A \|_2,
\end{equation*} 
and additional Assumptions~\ref{ass:discrete} (1) and (3) hold, then Algorithm~\ref{alg:disc_equa_os} converges to a fixed point, which corresponds to a GNE of the GNEP~\eqref{equ:GNEP_eq}.
\end{theorem}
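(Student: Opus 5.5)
The plan is to recast Algorithm~\ref{alg:disc_equa_os} as a preconditioned forward--backward iteration $\xi^{k+1}=T_{\mathrm{FB}}(\xi^k)$ on $\xi=(\z,\lambda)$ and then invoke Theorem~\ref{thm:fb}.

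\textbf{Reduction to a single multiplier.} First I reduce to the reference multiplier exactly as in the continuous case. Because every agent's copy $\lambda_v^i$ of a shared equality multiplier receives the same increment $\alpha\,\psi^i(\z)$, the deviations $\Delta_{[v]}$ are constant across iterations. The algorithm then acts only on $(\z,\lambda)$, with the fixed offset $D\Delta$ appearing in the primal step.

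\textbf{Operator splitting.} I split the stacked operator into two pieces:
\begin{equation*}
\Psi(\xi)=\begin{bmatrix}F(\z)+D\Delta\\ \mathbf{0}\end{bmatrix},\qquad
\Phi(\xi)=\begin{bmatrix}A^\top\lambda+\N_{\mathbf Z}(\z)\\ -A\z+\c\end{bmatrix}.
\end{equation*}
I take the Chambolle--Pock type metric
\begin{equation*}
H=\begin{bmatrix}\tau & -A^\top\\ -A & \sigma\end{bmatrix}.
\end{equation*}
With this metric, the implicit step $(I+H^{-1}\Phi)^{-1}$ decouples. It becomes a projected primal step onto $\Z_v$, followed by a dual step evaluated at the extrapolated point $2\z^{k+1}-\z^k$. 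This is why the decision vector is exchanged twice per iteration (Lines~3 and~5). I will check line by line that this reproduces Algorithm~\ref{alg:disc_equa_os}. I expect this bookkeeping, namely the sign convention of the off-diagonal block and where the extrapolation enters, to be the most error-prone step.

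\textbf{Hypotheses of Theorem~\ref{thm:fb}.}
\begin{enumerate}[(i)]
\item $\Phi$ is maximally monotone. It is the sum of a skew-symmetric linear map plus a constant, and the normal cone of the compact convex set $\mathbf Z$ from Assumption~\ref{ass:discrete}~(3).
\item $\Psi$ is cocoercive. Strong monotonicity with modulus $\delta$ together with the global Lipschitz constant $L_f$ from Assumption~\ref{ass:discrete}~(1) gives
\begin{equation*}
\langle F(\z)-F(\z'),\z-\z'\rangle\ge\delta\|\z-\z'\|^2\ge \tfrac{\delta}{L_f^2}\|F(\z)-F(\z')\|^2 .
\end{equation*}
The constant $D\Delta$ and the zero dual block do not change this inequality.
\item $H$ is positive definite with $\rho_{\min}(H)\ge \min\{\lambda_{\min}(\tau),\lambda_{\min}(\sigma)\}-\|A\|_2$. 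This follows from the Gershgorin/Schur-type bound $|2\z^\top A^\top\lambda|\le\|A\|_2(\|\z\|^2+\|\lambda\|^2)$.
\end{enumerate}
The stated step-size condition then yields $2\beta\rho_{\min}(H)>1$. There is a constant mismatch here: with $\beta=\delta/L_f^2$ one needs $\rho_{\min}(H)>L_f^2/(2\delta)$. I will therefore either read $L_f$ in the theorem as a bound on the squared Lipschitz modulus, or sharpen the cocoercivity estimate. I will flag this as the main technical point to reconcile.

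\textbf{Convergence.} Nonemptiness of $\mathrm{zer}(\Psi+\Phi)$ follows from Lemma~\ref{lem:equipt} applied with fixed $\Delta$; compactness of $\mathbf Z$ plus strong monotonicity of $\widetilde F$ gives a solution. Theorem~\ref{thm:fb} then gives Fejér monotonicity of $\|\xi^k-\xi^\star\|_H$ and $\sum_k\Gamma^k<\infty$. From this the fixed-point residual $\|T_{\mathrm{FB}}(\xi^k)-\xi^k\|$ tends to zero. The sequence is bounded, so it has cluster points. Continuity of $T_{\mathrm{FB}}$ makes every cluster point a fixed point. Fejér monotonicity with respect to every fixed point then forces convergence of the whole sequence, by the finite-dimensional Opial argument.

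\textbf{Identification of the limit.} A fixed point satisfies $0\in\Psi(\xi^\star)+\Phi(\xi^\star)$. Written with the effective multiplier $\lambda^\star(l_v)+\Delta_{[v]}$, this is precisely the KKT system \eqref{equ:kkt_eq}. By the equivalence recalled after Assumption~\ref{ass:KKT_Eq}, $\z^\star$ is a GNE of \eqref{equ:GNEP_eq}.
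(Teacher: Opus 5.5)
Your route is the paper's: the same metric $H=\begin{bmatrix}\tau&-A^\top\\ -A&\sigma\end{bmatrix}$, the same recast as a preconditioned forward--backward step, cocoercivity of the forward part, the bound $\rho_{\min}(H)\ge\min\{\lambda_{\min}(\tau),\lambda_{\min}(\sigma)\}-\|A\|_2$, and Theorem~\ref{thm:fb}. Moving $D\Delta$ into $\Psi$ and $\mathbf{c}$ into $\Phi$ does not affect monotonicity.

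Your placement of $\mathbf{c}$ is in fact the correct one. It gives $\lambda^{k+1}=\lambda^k+\sigma^{-1}\bigl[2\psi(\mathbf{z}^{k+1})-\psi(\mathbf{z}^{k})\bigr]$. The paper instead puts $-\mathbf{c}$ into $\Psi$, which produces the term $+2\,\mathbf{c}(l_v)$ in line~6 of Algorithm~\ref{alg:disc_equa_os}. With that term, a fixed point satisfies $A\mathbf{z}=-\mathbf{c}$. Your line-by-line check will therefore match line~6 only after dropping that term, and you should state that this corrected iteration is the one you analyze.

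For $\mathrm{zer}(\Psi+\Phi)\neq\emptyset$: compactness of $\mathbf{Z}$ and strong monotonicity of $\widetilde F$ only give $\mathbf{z}^\star$. A multiplier for $A\mathbf{z}=\mathbf{c}$ additionally needs a constraint qualification, e.g.\ the LICQ the paper assumes.

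The constant mismatch you flag is genuine, and the paper's proof makes exactly this slip. It asserts that $\Psi$ is $\delta/L_f$-cocoercive from $\|F(\mathbf{z})-F(\mathbf{z}')\|\le L_f\|\mathbf{z}-\mathbf{z}'\|$, which only yields $\delta/L_f^2$. Of your two remedies, only the first can work, because the estimate cannot be sharpened. Take two scalar agents with $f_1=\tfrac{\delta}{2}z_1^2-b\,z_1z_2$ and $f_2=\tfrac{\delta}{2}z_2^2+b\,z_1z_2$. Then $F(\mathbf{z})=M\mathbf{z}$ with $M=\begin{bmatrix}\delta&-b\\ b&\delta\end{bmatrix}$, whose cocoercivity modulus is exactly $\delta/(\delta^2+b^2)=\delta/L_f^2$.

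Worse, the statement itself fails. Take $\delta=1$, $b=2$, $\tau_v=\alpha=0.8$, a large box $\mathbf{Z}$, and one shared row with $\|A\|_2<0.13$. The stated hypothesis $1.25>\sqrt5/2+\|A\|_2$ holds. Yet $I-0.8M$ has eigenvalues $0.2\pm1.6i$ of modulus $\sqrt{2.6}>1$, so for small $\|A\|_2$ the unique interior fixed point is unstable and the iteration does not converge from generic starts. The threshold must therefore be $L_f^2/(2\delta)+\|A\|_2$, or equivalently $L_f$ must be read as a squared modulus, as with $L_v$ in Assumption~\ref{ass:discrete}(2). With that change your argument goes through.

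One more repair is needed in your convergence step: do not use $\Gamma^k$ from Theorem~\ref{thm:fb} verbatim. Its $R$-term tacitly assumes $\mathcal{F}(\xi^\star)\in\mathrm{Fix}(J)$, which holds only if $\Psi(\xi^\star)=0$. Indeed, at $\xi^k=\xi^\star$ the first inequality in \eqref{eq:fb-descent} reads $0\le-\|H^{-1}\Psi(\xi^\star)\|_H^2$. Instead, apply firm nonexpansiveness of $J$ to the pair $\mathcal{F}(\xi^k),\mathcal{F}(\xi^\star)$. This makes both $\|(I-J)\mathcal{F}(\xi^k)-(I-J)\mathcal{F}(\xi^\star)\|_H^2$ and $\|H^{-1}(\Psi(\xi^k)-\Psi(\xi^\star))\|_H^2$ summable. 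Hence $\xi^{k+1}-\xi^k\to0$, and your Opial argument then finishes the proof.
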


\renewcommand{\proofname}{Proof of Theorem \ref{thm:alg_disc_os_converge}}
\begin{proof}
To apply the convergence result in Theorem \ref{thm:fb}, which shows that the standard forward–backward splitting is averaged and converges to a fixed point corresponding to a KKT solution in~(\ref{ass:KKT_Eq}), 
we rewrite our distributed algorithm into their canonical forward–backward form.

We begin by defining the block operator
\begin{equation*}
H =
\begin{bmatrix}
    \tau & -A^\top \\
    -A & \sigma
\end{bmatrix},
\end{equation*}
where $\tau = \mathrm{diag}(\tau_v)$ collects the inverse step sizes for $\{z_v\}$, and 
$\sigma = \mathrm{diag}(\sigma)$ collects the inverse step sizes for $\lambda$.

The forward–backward inclusion we wish to match is
\begin{equation*}
    H \, \xi^k - \Psi(\xi^k)
        \;\in\;
    H \, \xi^{k+1} + \Phi(\xi^{k+1}),
\end{equation*}
with $\xi = (\z, \lambda)$, and
\begin{equation*}
    \Psi(\xi^k) =
    \begin{bmatrix}
        F(\z^k) \\
        -\c
    \end{bmatrix},\qquad
    \Phi(\xi^{k+1}) =
    \begin{bmatrix}
        A^\top \lambda^{k+1} + D \, \Delta \\
        -A \, \z^{k+1}
    \end{bmatrix}
    +
    \begin{bmatrix}
        \N_{\Z}(\z^{k+1}) \\
        0
    \end{bmatrix}.
\end{equation*}
Expanding the block equations gives
\begin{equation*}
    \begin{cases}
        \tau \, \z^k - A^\top \lambda^k - F(\z^k)
        \in \tau \z^{k+1} - A^\top \lambda^{k+1}
            + A^\top \lambda^{k+1} + D\Delta
            + \N_{\Z}(\z^{k+1}), \\
        -A \, \z^k + \sigma \, \lambda^k + \c
        = -A \, \z^{k+1} + \sigma\lambda^{k+1} - A \, \z^{k+1}.
    \end{cases}
\end{equation*}
Rearranging terms yields the equivalent iteration:
\begin{equation*}
\begin{cases}
    \z^k \in 
        \z^{k+1} +
        \tau^{-1}\!\bigl(
            F(\z^k) + A^\top \lambda^k + D\Delta + \N_{\Z}(\z^{k+1})
        \bigr), \\[2mm]
    \lambda^{k+1}
        = \lambda^k +
        \sigma^{-1}\!\left [ 2 \, A \, \z^{k+1} - (A \, \z^k - \c) \right].
\end{cases}
\tag{$\star$}
\end{equation*}
Step $(a)$ uses the facts that  
(i) each local multiplier satisfies $\lambda_v^k = \lambda^k(m_v) + \Delta_{[v]}$, and  
(ii) $\sigma^{-1} \N_{\Z}(\z) = \N_{\Z}(\z)$.

Thus, the (fully distributed) update finally becomes the standard resolvent form
\begin{equation*}
\begin{cases}
    \displaystyle
    \z^{k+1} = P_{\Z}\!\left[\,\z^k - \tau^{-1}(F(\z^k)+A^\top \lambda^k)\right],\\[2mm]
    \displaystyle
    \lambda^{k+1}
    = \lambda^k + 
      \sigma^{-1}\!\left [2 \, \psi(z^{k+1}) - \psi(z^k) + 2 \, \c \right],
\end{cases}
\tag{$\star\star$}
\end{equation*}
where $\psi_v(\cdot)$ denotes the shared constraint evaluated at agent $v$.

Hence, the algorithm is equivalent to a fixed-point iteration of the forward–backward operator:
\begin{equation*}
(I + H^{-1}\Phi)(\xi^{k+1})
    = (I - H^{-1}\Psi)(\xi^{k}).
\end{equation*}

\noindent\textbf{1. Cocoercivity of the forward operator}.
We compute
\begin{equation*}
\|\Psi(\xi) - \Psi(\xi')\|^2
    = \|F(\z) - F(\z')\|^2,
\end{equation*}
and
\begin{equation*}
\langle \Psi(\xi)-\Psi(\xi'),\, \xi-\xi'\rangle
    = (F(\z) - F(\z'))^\top (\z - \z')
    \ge \delta \, \| \z - \z'\|^2.
\end{equation*}
Using Lipschitz continuity $\|F(\z)-F(\z')\| \le L_f \, \|\z-\z'\|$ gives
\begin{equation*}
\langle \Psi(\xi)-\Psi(\xi'),\, \xi-\xi'\rangle
    \ge \frac{\delta}{L_f} \, \|F(\z)-F(\z')\|^2
    = \frac{\delta}{L_f} \, \|\Psi(\xi)-\Psi(\xi')\|^2.
\end{equation*}
Hence 
\begin{equation*}
    \Psi(\cdot)\;\;\text{is}\;\; \frac{\delta}{L_f}\text{-cocoercive}.
\end{equation*}

\noindent\textbf{2. Monotonicity of the backward operator.}
For $\Phi(\xi)$ we obtain
\begin{equation*}
\langle \Phi(\xi) - \Phi(\xi'),\, \xi - \xi' \rangle
    =
\begin{bmatrix}
    A^\top(\lambda-\lambda') \\
    -A(\z-\z')
\end{bmatrix}^\top
\begin{bmatrix}
    \z-\z' \\ \lambda-\lambda'
\end{bmatrix}
    = 0,
\end{equation*}
so $\Phi(\cdot)$ is monotone (but not strongly monotone).

\noindent\textbf{3. Step-size condition and convergence.}

\noindent Forward–backward splitting converges when  
\begin{equation*}
\lambda_{\min}(H) > \frac{1}{2 \, \beta}
    \qquad\text{with}\qquad 
    \beta = \frac{\delta}{L_f}.
\end{equation*}
Thus we require
\begin{equation*}
\lambda_{\min}(H) > \frac{L_f}{2 \, \delta}.
\end{equation*}

Applying Weyl’s inequality,
\begin{equation*}
\lambda_{\min}(H)
    \ge \min\{\lambda_{\min}(\tau),\, \lambda_{\min}(\sigma)\} - \|A\|_2.
\end{equation*}
Therefore, a sufficient condition for convergence is
\begin{equation*}
    \min\{\lambda_{\min}(\tau),\, \lambda_{\min}(\sigma)\}
    > \frac{L_f}{2 \, \delta} + \|A\|_2.
\end{equation*}

Under this condition, the iterates produced by~\eqref{thm:alg_disc_os_converge} converge to a generalized Nash equilibrium.
\end{proof}

\begin{algorithm}
\begin{algorithmic}[1]
\STATE \textbf{Input:}Initial $z_v^0$, $\lambda_v^0$, $\tau_v$, $\sigma$.
\FOR{$k=1, 2, \dots, K$}
\STATE Receive the decision variables $\z_{-v}$ from other agents through the cost-function graph $\mathcal G_f$ and the shared constraint graph $\mathcal G_\psi$.
\STATE $
z_v^+ \leftarrow P_{\Z_v} \Big\{ z_v - \tau_v \cdot \big[ 
    \nabla_{z_v} f_v(z_v, z_{-v})  + \lambda_v^\top \nabla_{z_v} \psi_v(z_v, \z_{-v}) \big] \Big\}. $
\STATE Receive the updated decision variables $\z_{-v}^+$ from other agents through the shared-constraint graph $\mathcal G_\psi$.
\STATE $
\lambda_{v} \leftarrow \lambda_v + \sigma \cdot \big[ 
    2 \cdot \psi_v(z_v^+, \z_{-v}^+) - \psi_v(z_v, \z_{-v}) + 2 \cdot \mathbf{c}(l_v) \big]$.
\STATE $z_v \leftarrow z_v^+$.
\ENDFOR
\end{algorithmic}
\caption{Forward-Backward for Each Agent $v$}
\label{alg:disc_equa_os}	
\end{algorithm}

\chapter{Simulation}
In this section, I apply our algorithm to real-world scenarios. I begin with a multi-robot placement problem that includes individual constraints and shared linear equality constraints, demonstrating the performance of our fully distributed method. I then apply our discretized algorithm to the same problem and experimentally compare its convergence rate with that of existing discrete algorithms that require consensus on multipliers. Next, I model the half-space covering problem as a GNEP with shared inequality constraints and illustrate the effectiveness of our approach. Finally, beyond multi-agent settings, I evaluate our algorithm on a Cournot competition model and show that our fully distributed method converges in this economic game as well.

\label{simulation_chapter}
\section{Multi-Robot Problems}
A large body of work on multi-robot coordination problems \cite{shorinwa_distributed_2024} formulates them as distributed optimization tasks, where each agent retains its local data privately. In this chapter, I take this framework one step further by modeling the Multi-Robot Placement Problem and the Half-Space Covering Problem as instances of a Generalized Nash Equilibrium Problem (GNEP). This formulation captures not only individual objectives and constraints but also shared constraints arising from task-level coordination, enabling a richer and more flexible representation of multi-robot interaction.

\subsection{Multi-robot Placement Problem}

In this section, I will implement our fully distributed algorithm to solve the Multi-robot Placement Problem. In this scenario, multiple agents (robots) act as simple integrators, collaborating to complete shared tasks. Each task, labeled as $j$, has a central location defined by $c_j \in \R^2$. A robot $v$ is assigned to handle task $j$, which has a set of assigned robots $N_j$ and a total number of robots $n_j$. Each robot $v$ may be responsible for multiple task centers located at different positions. 

This setting induces global shared constraints requiring that the collective placement of all robots assigned to task $j$ be centered at $c_j$. Formally,
\begin{equation*}
    \psi_v^j(z_v, \mathbf{z}_{-v})
    = \sum_{v \in N_j} z_v - n_j \, c_j = 0,
\end{equation*}
where $z_v = (x_v, y_v)$ denotes the placement decision of robot $v$. Let $M_v$ denote the set of all tasks in which robot $v$ participates. Then $\psi_v(z_v, \mathbf{z}_{-v})$ is the collection of all $\psi_v^j$ with $j \in M_v$. Although a global constraint $\psi(\mathbf{z})$ exists, each robot only has access to its own local component $\psi_v(z_v, \mathbf{z}_{-v})$.

In addition to the task locations, each robot $v$ has an anchor point $A_v$, representing the location of its charging station or remote control station. Each robot must stay within a specific range centered on its anchor point. Since the robots are heterogeneous, the size and location of this range depend on each robot's unique capabilities and vary from one robot to another. Specifically, this is expressed as  $\| z_v - A_v\| \le r_v$, which is an individual local constraint that does not depend on the positions of other robots.

For the objective function, each robot balances two main goals: staying close to its individual anchor point and staying close to other robots $u \in N_j$  for tasks $j \in M_v$ to enable easier coordination. To handle this trade-off, each agent’s cost function includes a private penalty term $\rho_v$. Specifically, the cost function is defined as:
\begin{equation}
    f_v(z_v, \z_{-v}) = \| z_v - A_v\|^2 + \rho_v \sum_{u \in N_j, j \in M_v} \| z_v - z_u\|^2.
\end{equation}
This leads to the following GNEP with convex individual constraints and linear shared constraints:
\begin{equation*}
    \begin{matrix*}[l]
        \min_{z_v} & f_v(z_v, \z_{-v}) \\
        \text{subject to} & \| z_v - A_v\| \le r_v\\
        &\psi_v(z_v,\, \z_{-v}) = 0
    \end{matrix*}.
\end{equation*}

Figure~\ref{fig:setup_eq} illustrates a setup with 12 robots and 4 tasks. Our algorithm is fully distributed: each robot keeps its cost function, constraints, and local shared-constraint information private, and no multipliers are exchanged.

Since GNEs depend on initialization, Figure~\ref{fig:trajectory} shows several equilibrium points reached by the continuous algorithm~\ref{alg:equa} under different initializations. Circles represent robots, rectangles denote anchor points $A_v$, dotted regions represent feasible sets, and yellow stars indicate task locations $c_j$. Arrows depict robot--task assignments. This non-uniqueness highlights both the difficulty of computing GNEs and the relevance of our approach.

Figure~\ref{fig:compare_consensus_trajectory} compares the convergence rate of our forward--backward discretization algorithm (Algorithm~\ref{alg:disc_equa_os}) with a consensus-based method adapted from \cite{yi-pavel-2019}. Both achieve comparable convergence rates, but our algorithm requires substantially less communication per iteration. Since the cost-function graph $\mathcal G_f$ coincides with the shared-constraint graph $\mathcal G_\psi$, our method reduces communication by a factor of three compared with consensus-based approaches.

\begin{figure}[H]
\centering
\includegraphics[width=0.7\columnwidth]{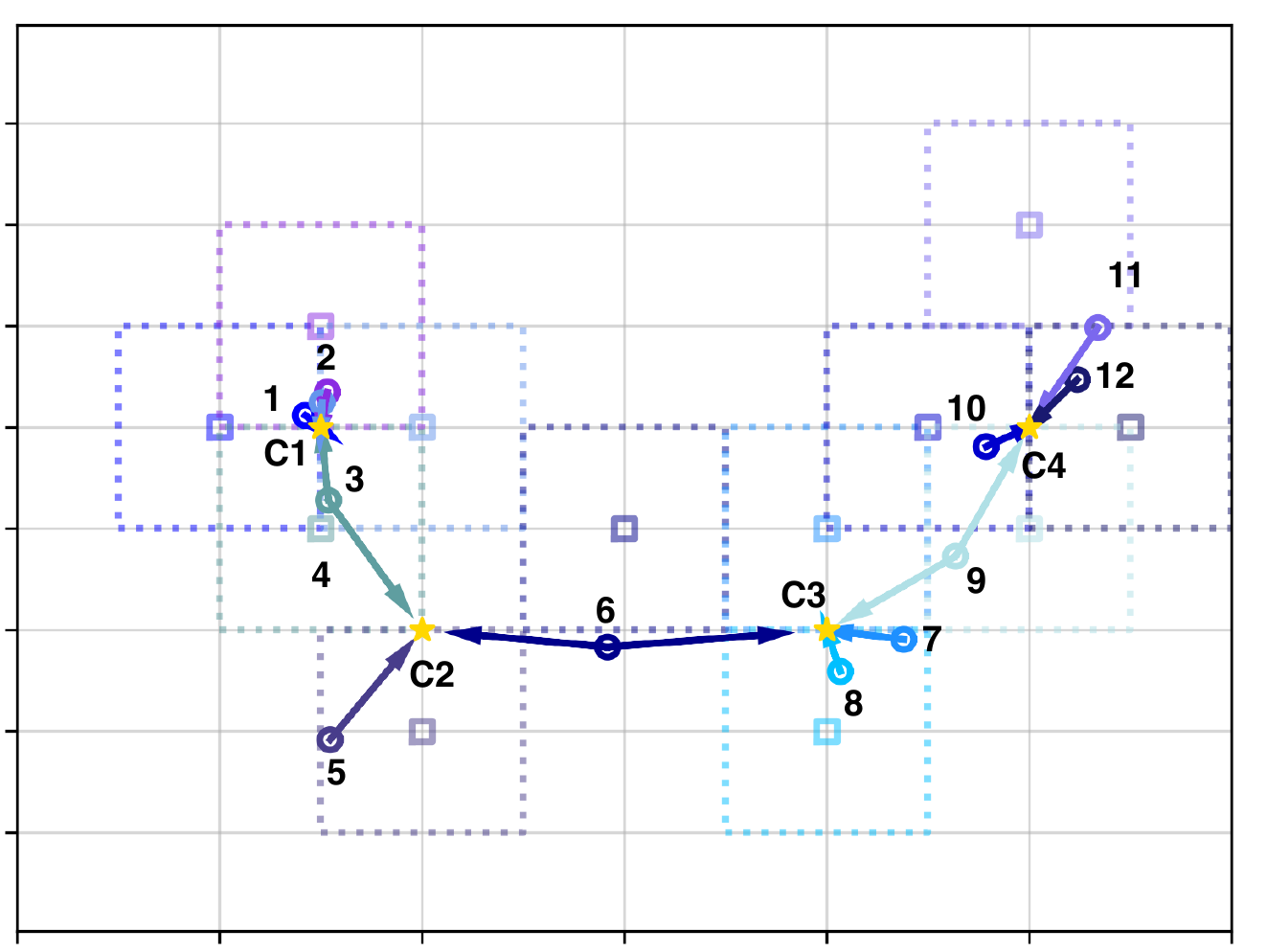} 
    \caption{Multi-robot placement with 12 robots and 4 tasks.}
    \label{fig:setup_eq}
\end{figure}

\begin{figure}[H]
\centering
\includegraphics[width=0.7\columnwidth]{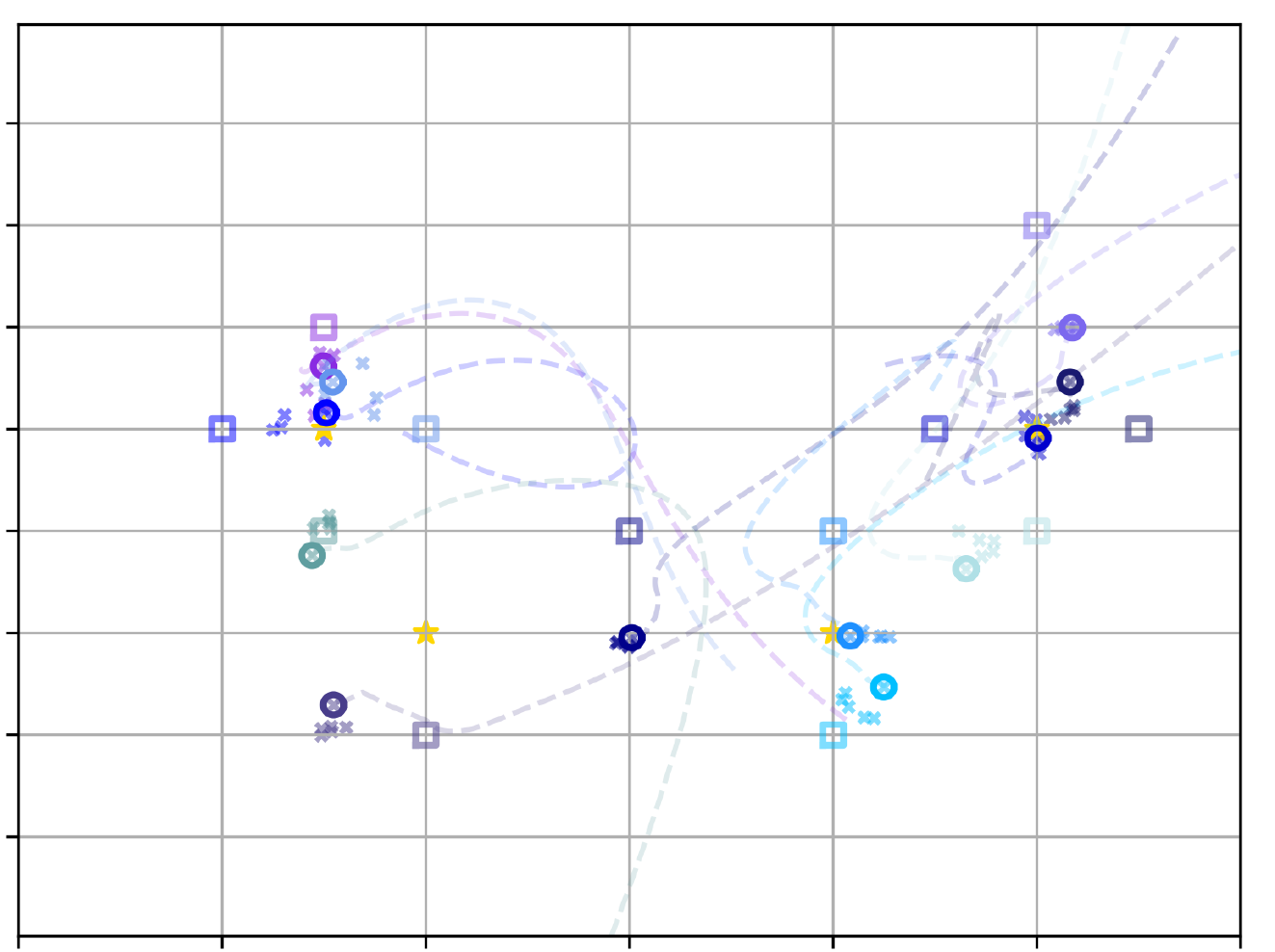}
    \caption{Converging trajectories from six random initializations (seeds 0–5).}
    \label{fig:trajectory}
\end{figure}

\begin{figure}[H]
\centering
\includegraphics[width=0.7\columnwidth]{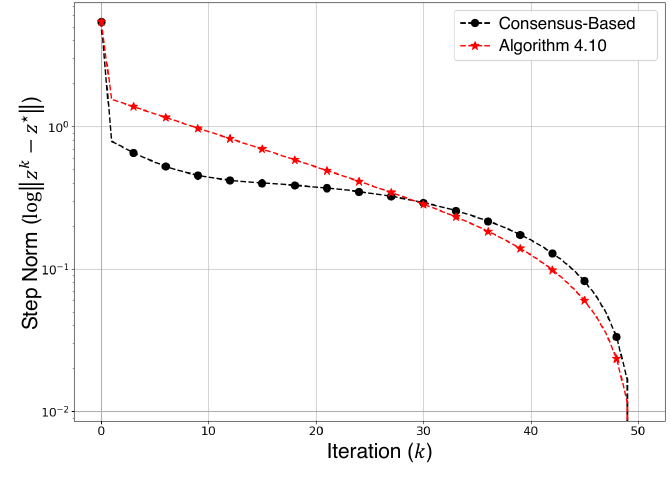}
    \caption{
Comparison of Algorithm~\ref{alg:disc_equa_os} (forward–backward discretization) with consensus-based approaches from~\cite{yi-pavel-2019}. Both are initialized at zero with a learning rate of $0.01$, converging to a v-GNE.
    \label{fig:compare_consensus_trajectory}}
\end{figure}

\subsection{Multi-Robot Half Space Covering}
In the multi-agent sensing placement problem, each agent $v$ tracks its assigned target $T_v$ while remaining close to neighboring agents for communication. The local cost function is
\begin{equation*}
    f_v(z_v, \mathbf{z}_{-v})
    = \| z_v - T_v \|^2
      + \rho_v \sum_{u \neq v} \| z_v - z_u \|^2,
\end{equation*}
where $z_v \in \mathbb{R}^2$ denotes the position of agent~$v$. 

The environment is partitioned into half-spaces, and to ensure sensing coverage, the geometric center of each group must remain on the same side of a separating hyperplane as its designated service region. For a group $\mathbf{G}$ constrained to the half-space $-n_\mathbf{G}^\top \, p - c_\mathbf{G} \le 0$, the corresponding shared inequality constraint is  
\begin{equation*}
    g^\mathbf{G}(\mathbf{z})
    = -n_\mathbf{G}^\top 
      \left( \frac{1}{|\mathbf{G}|} \sum_{u \in \mathbf{G}} z_u \right)
      - c_\mathbf{G}.
\end{equation*}

Figure~\ref{fig:setup} shows the outcomes of Algorithm~\ref{alg:main} from five different initializations. Ten targets (squares) are assigned to agents partitioned into three groups, each subject to a corresponding half-plane constraint. Colored regions represent the feasible half-planes for each group. Dotted lines trace the trajectories from different initial points. Circles denote the converged generalized Nash equilibria (GNE), while crosses (“x”) mark distinct GNEs obtained from varying initializations. The figure illustrates both the non-uniqueness of the GNE set and the dependence of Algorithm~\ref{alg:main} on initialization.

\begin{figure}[H]
\centering
\includegraphics[width=0.7\columnwidth]{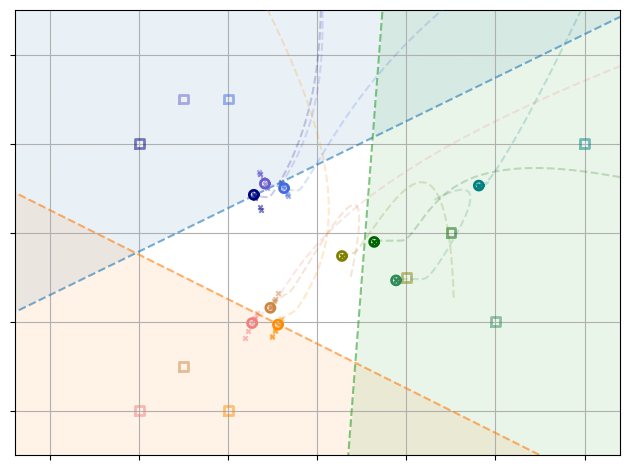} 
    \caption{Sensing placement problem setup and results of Algorithm~\ref{alg:main} under five initial conditions.}
\label{fig:setup}
\end{figure}


\section{Cournot Competition}
Consider the $N$-agent Cournot game described in \cite{yi-pavel-2019}, where $N$ agents produce $p$ different types of energy for the market, with each agent producing a quantity $z_v$. For each type of energy, there is a total market demand, leading to global shared equality constraints. Variants of this game, such as the river-basin game, are introduced in \cite{meanfield,krawczyk-relaxation-2000}. To be precise, a three-agent game with two types of energy markets, as shown in Figure \ref{fig:cournot32}, can be mathematically modeled as follows:
\begin{equation}
    \begin{matrix*}[l]
        \min_{z_v} & f_{v}(\z) = c_v(z_v) - r_v(\z)\\
        \text{subject to}& z_v \ge 0\\
        & g_v(z_v, \z_{-v}) = 0
    \end{matrix*}.
\end{equation}
The net cost function $f_{v}(\z)$ is defined as the difference between the production cost $c_v(z_v)$ and the revenue $r_v(\z)$. The production cost is defined by:
$$c_v(z_v) = c_{1, v}^\top \,z_v + c_{2, v} \, (\mathbf{1}^\top z_v)^2,$$
where $c_{1, v}$ and $c_{2, v}$ represent coefficients specific to each agent $v$. The vector $c_{1, v}$ is sampled uniformly between 1 and 4, while $c_{2, v}$ is sampled uniformly between 1 and 8. 

The revenue function $r_v(\z)$ is given by the sum of $r_v^j(\z)$ over all markets $j$ in which agent $v$ participates:
$$
r_v^j(\z) = \left [P_j - d_j\sum_{u \text{ involved in }j} z_u^j \right ]z_v^j,
$$
where $z_v^j$ represents the quantity produced by agent $v$ for market $j$. The constants $P_j$ and $d_j$ are constants that define the inverse demand law for market $j$, with $P_j$ sampled uniformly between 250 and 500, and $d_j$ sampled uniformly between 1 and 5.

In addition to individual cost functions, the two markets have total energy requirements that serve as global common constraints for the game:
\begin{equation*}
\begin{split}
    g^1(\z) &= z_1^1 + z_2^2 - r_1 \\
    g^2(\z) &= z_2^2 + z_3^2 - r_2
\end{split},
\end{equation*}
where the total energy required for each market, $r_j$ is sampled uniformly between 20 and 80.

Figure~\ref{fig:cournot32} shows a Cournot game with three agents and two markets, and Figure~\ref{fig:cournot_207} shows a Cournot game with twenty agents and seven markets based on \cite{yi-pavel-2019}. Although the number of agents and markets can be arbitrarily large, these examples use relatively small problem sizes for illustration.

Under the twenty–agent and seven–market setup, Figure~\ref{fig:cournot_207_decision} illustrates that the decision variables converge to a GNE of the game. Starting with any initial multipliers leads to convergence to a GNE, and this remains true for any problem size. Figure~\ref{fig:cournot_207_multipliers} shows the corresponding convergence of the multipliers. Depending on the initialization, the game may converge to different GNEs.

The convergence is theoretically guaranteed for any number of agents and any number of common constraints, provided that the problem satisfies the required assumptions. Moreover, the distributed algorithm preserves privacy for all agents even in arbitrarily large games.

\begin{figure*}[ht]
  \centering
  \begin{subfigure}[t]{0.3\textwidth}
    \includegraphics[width=\textwidth]{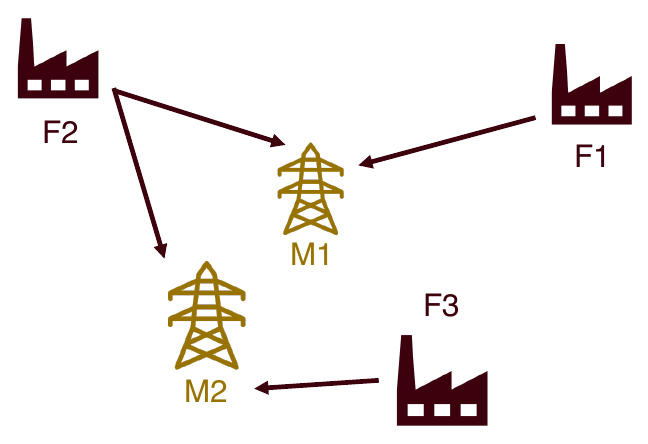}
    \caption{}
    \label{fig:cournot32}
  \end{subfigure}
  \hfill
  \begin{subfigure}[t]{0.48\textwidth}
    \includegraphics[width=\textwidth]{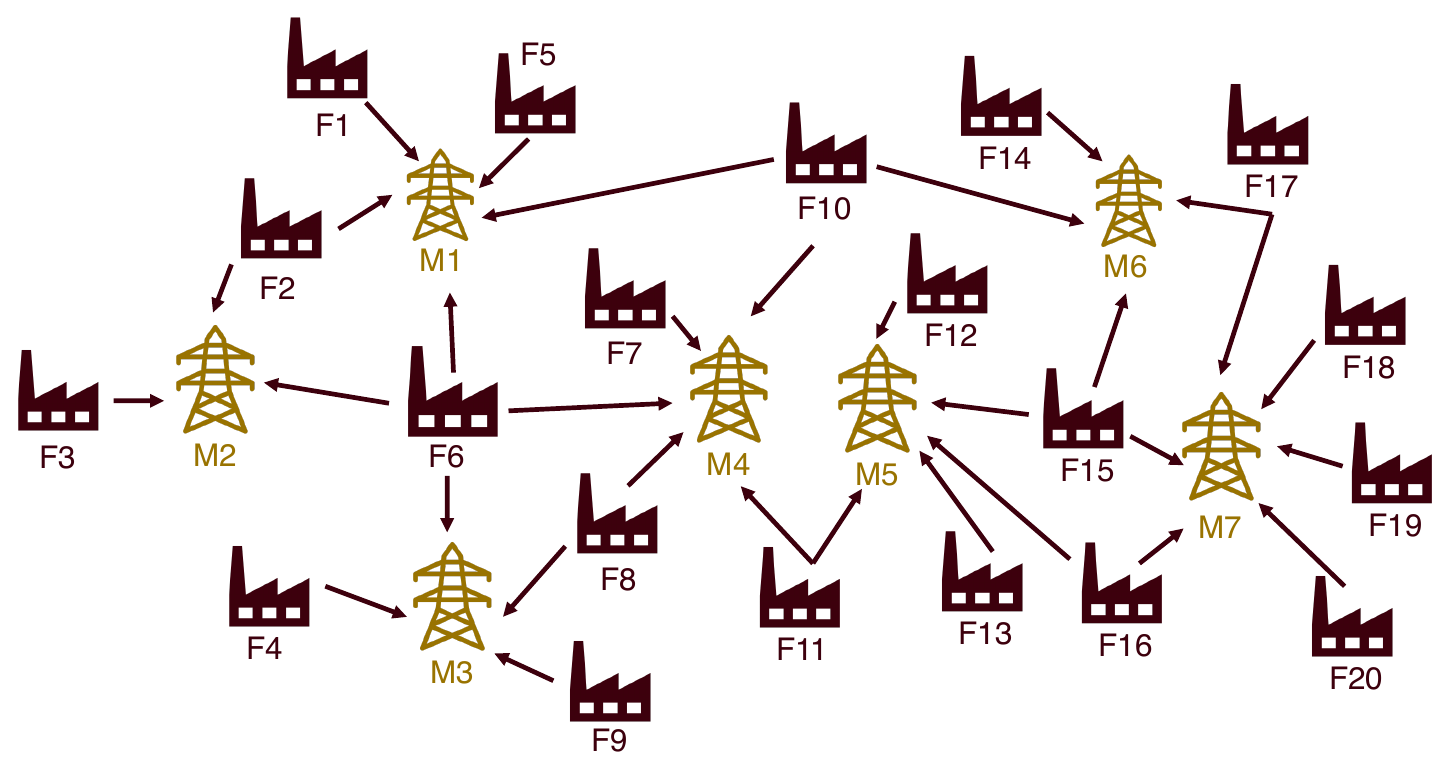}
    \caption{}
    \label{fig:cournot_207}
  \end{subfigure}

  \caption{\textbf{(a)} A Cournot Game with three agents and two markets. \textbf{(b)} A Cournot Game with twenty agents and seven markets.}
  \label{fig:cournot_setup}
\end{figure*}

\begin{figure}
    \centering
    \includegraphics[width=0.7\linewidth]{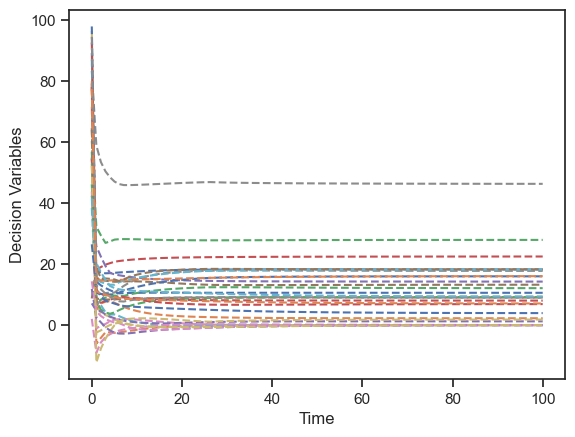}
    \caption{Decision variables for a Cournot game with 20 agents and 7 markets.}
    \label{fig:cournot_207_decision}
\end{figure}
\begin{figure}
    \centering
    \includegraphics[width=0.7\linewidth]{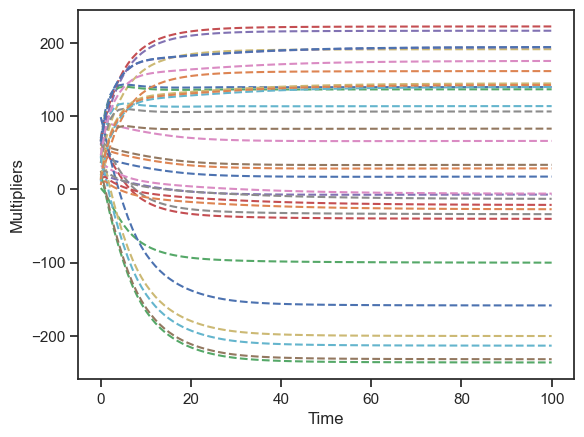}
    \caption{Multipliers for a Cournot game with 20 agents and 7 markets.}
    \label{fig:cournot_207_multipliers}
\end{figure}
\chapter{Contextual Bandit Based Active Learning}
\label{active_learning}
The challenge with active learning algorithms is the uncertainty of the statistical distribution of unlabeled data, making it difficult to choose the best hand-crafted strategy. To address this, I introduced \textbf{C}ontextual \textbf{A}daptive \textbf{A}ctive \textbf{L}earning (CAAL). In CAAL, each "arm" represents a hand-crafted strategy. Unlike existing frameworks that select strategies based only on feedback from labeled data, I dynamically choose strategies for labeling batches of data using reward prediction with external context information. This general framework allows for customization with domain knowledge to design more effective rewards and context candidates. In addition, I experimentally show that CAAL outperforms the existing baseline adaptive strategy on public datasets using our reward and context design. Our results are consistent regardless of batch size in each iteration.

\section{Introduction}

Active learning in the context of machine learning is a technique designed to minimize the effort required for querying labels from human annotators within a pool of unlabeled datasets while still achieving decent performance from the underlying supervised learning model \cite{settles_active_2010}. The majority of work in this area focuses on hand-craft strategies to reducing sample complexity \cite{balcan_true_2010}. These strategies have theoretical guarantees based on the statistical assumptions about the unlabeled dataset. However, they may only work well for certain types of data, and we often do not know what kind of data we are dealing with beforehand.

On the other hand, \cite{baram_online_2004} addressed this challenge by combining several hand-crafted active learning strategies, aiming to leverage the strengths of each. \cite{Hsu_Lin_2015} expanded on this idea by emphasizing adaptability and connecting it to bandit algorithms through an adversarial bandit framework. However, adversarial bandit methods are often overly conservative and tend to perform similarly to random selection with equal probability. This is because they always keep some level of exploration, without making clear assumptions about the environment. For example, in active learning, the improvement in the performance of a machine learning classifier as more labeled data is queried can be a type of contextual information. Therefore, I propose incorporating contextual information into the bandit-type methods to reduce this conservativeness and enhance adaptability.

The work has the following contributions:
\begin{enumerate}
    \item Compared to the recent trend of deep active learning, I focus on meta-learner-based active learning because deep learning often performs worse on tabular data \cite{shwartz-ziv_tabular_2022}, which makes up the majority of datasets in the industry. This \textbf{meta-strategy} approach is more \textbf{industry-friendly} and easily integrates into existing industry pipelines.
    \item Most researchers focus on selecting one unlabeled data point to label in each iteration. Instead, I focus on batch selection for active learning and its \textbf{robustness across different batch sizes}, which is more relevant to real industrial scenarios.
    \item I emphasize the adaptability to the best hand-crafted active learning methods for unknown datasets.
    \item Our approach suggests that incorporating additional environmental context information for \textbf{reward prediction} improves adaptability compared to the existing state-of-the-art algorithms, which mainly rely on adversarial bandits.
\end{enumerate}

The part is organized as follows: In Section 6.2, I present the problem definition, the active learning protocol, and our desired adaptive framework. I also introduce our algorithm along with the design of the reward and context. In Section 6.3, I apply our algorithm to open-source datasets, demonstrating the adaptability of our proposed algorithm. My findings reveal that our algorithm outperforms the state of the art in most datasets, suggesting the potential utility of our proposed algorithm in real-world scenarios.

\subsection{Related Work}
Existing hand-crafted active learning strategies fall into two main categories: distribution-based and rejection-based approaches. Distribution-based methods rely on knowledge of the data distribution, which may not always be accessible or precisely estimated. In contrast, rejection-based methods involve maintaining a pool of classifiers, which can be challenging to scale up for large-scale applications. Recognizing the constraints of these hand-crafted approaches, there is a growing interest in dynamically adapting active learning strategies to fit the specific dataset. Rather than relying on a fixed strategy, the emphasis is on selecting the most appropriate approach adaptively based on the characteristics of the data. In this section, I will begin by introducing individual hand-crafted strategies and then delve into adaptive strategy selection, which better reflects real-world usage scenarios.

\textbf{Hand-craft Single Strategies}

In recent advancements in hand-crafted active learning, techniques from \cite{NIPS2005_340a3904, hoi_batch_2006, dasgupta_hierarchical_2008, huang_active_2014, sener2018active} offer various heuristic approaches. However, these heuristics lack widely applicable guarantees, and I will demonstrate that their performance varies across datasets. One reason for this discrepancy is that these hand-crafted approaches do not consider the sequential feedback nature of the active learning process. Therefore, moving towards data-driven adaptive meta-learning-type active learning to select a better hand-crafted strategy is crucial to address this challenge. On the other hand, rejection-based approaches like \cite{NIPS2005_340a3904} were the first to provide theoretical guarantees without strict assumptions on the underlying data distribution. While these approaches narrow down the search space (version space) from the sequential feedback perspective, they require maintaining a pool of base machine learning models in every iteration, which is impractical given the current trend towards large-scale training.

\textbf{Deep Active Learning for Batch Selection}

Several works \cite{NEURIPS2019_95323660,Ash2020Deep,ash2021gone,citovsky2021batch} have developed handcrafted deep learning-based active learning approaches for batch selection, primarily targeting image datasets. However, it is well known that deep learning tends to perform worse than tree-based models on tabular data \cite{shwartz-ziv_tabular_2022}, which constitutes the majority of datasets in industry. Consequently, our work focuses on meta-learner-based batch selection, which is compatible with any type of base classifier rather than being specific to deep learning.

\textbf{Data Driven Active Learning}

The bandit setup is widely used in online learning and adaptive learning contexts. In short, in the bandit framework, the agent has a pool of potential actions called arms. Each arm produces a different reward, but the agent does not know in advance which arm will yield the best reward. Therefore, the agent must first explore by pulling arms to estimate the rewards from the environment until it is confident enough about the estimations. Then, it switches to exploitation, using the arm that has provided the best reward so far. The algorithm used for reward estimation varies depending on the assumption about the format of the reward. 

\cite{10.1007/978-3-540-74958-5_14, baram_online_2004} were the first to apply this idea within the active learning framework, where each arm represents a handcrafted active learning strategy, assuming an adversarial environment for the reward. \cite{Hsu_Lin_2015, 9093390} expanded on this concept by designing the reward as an unbiased estimator of the training loss. However, \cite{NIPS2017_8ca8da41} argued that treating arms as handcrafted active learning strategies limits performance. They developed a new active learning strategy, but it could only select one sample per iteration, which is impractical. Our approach is similar to \cite{zhang2023algorithm}, but they focus on the imbalance ratio of the queried dataset using Thompson sampling without contextual information, differing from our goal of improving machine learning classifier performance. Additionally, their regret bound assumes independent and identically distributed rewards for each arm, which is unrealistic when sequentially querying unlabeled data. Thus, \cite{Hsu_Lin_2015} remains the state-of-the-art for batch mode active learning.

The work builds upon the work of \cite{Hsu_Lin_2015}. I take it a step further by proposing that the non-stationary \textbf{reward can actually be predicted based on some external context}. Therefore, unlike assuming the non-stationary reward is unpredictable under the adversarial bandit setup, our method based on contextual bandit is more greedy and less conservative than \cite{Hsu_Lin_2015}, resulting in better performance.


\section{Contextual Adaptive Active Learning (CAAL)}
In this section, I start by defining the active learning framework and then explore how to integrate it with the bandit setup. Unlike relying on a single predefined active learning strategy, the adaptive framework uses a diverse pool of human-designed strategies. The agent dynamically selects the most effective strategy based on ongoing feedback.

Next, I introduce our solution framework with the concept of the contextual bandit, which involves predicting the rewards of different strategies using external contextual information. I then design the reward and context components and present our main proposed algorithm.

It is important to note that our design of reward and context is just one possible combination. Our proposed algorithm serves as a general framework, allowing for further customization. With additional domain knowledge of the dataset, one could design more effective reward and context candidates to better adapt to the most effective active learning strategy.

\subsection{Active Learning Framework}

Active learning is an iterative process where data is selected from an unlabeled dataset and queried for labels to train a classifier. The goal is to achieve a good classifier with a minimal number of queries. To be precise, at each iteration step $k$, with a given base machine learning classifier $M_k$, labeled dataset $\mathcal{D}^{lab}_k = \{ x^{lab}_k, \, y^{lab}_k\}$, and unlabeled dataset $\mathcal{D}^{unl}_k = \{ x^{unl}_k\}$, a hand-crafted batch mode active learning strategy $\mathcal{S}$ with batch size $b$ is a function that selects $b$ unlabeled data points $x^{sel}_k = \{ x^n \, | \, x^n \in \mathcal{D}^{unl}_k \}_{n=1}^b$ from $\mathcal{D}^{unl}_k$. This selection is based on information gathered from $M_k$ and $\mathcal{D}^{lab}_k$. Therefore, we can express the selection as:
\begin{equation}\label{equ:base_stratey}
    x^{sel}_k = \mathcal{S} \left (M_k, \, \mathcal{D}^{lab}_k, \, \mathcal{D}^{unl}_k \right )
\end{equation}
The annotator will then label the selected dataset $y^{sel}_k$, which will be added to the labeled dataset:
\begin{align}
    &\mathcal{D}^{lab}_{k+1} \leftarrow \mathcal{D}^{lab}_k \cup \{ x^{sel}_k, \, y^{sel}_k\},\\
    &\mathcal{D}^{unl}_{k+1} \leftarrow \mathcal{D}^{unl}_k \setminus x^{sel}_k
\end{align}
$\mathcal{D}^{lab}_{k+1}$ is then used to update the base machine learning model $M_{k+1}$, and the entire process repeats. The aim of active learning is to reduce the number of label queries while maintaining a satisfactory performance of the base model. Figure \ref{fig:active_learning} illustrates the framework of active learning.
\begin{figure}
        \centering
        \begin{subfigure}[b]{0.45\textwidth}
                \includegraphics[width=\textwidth]{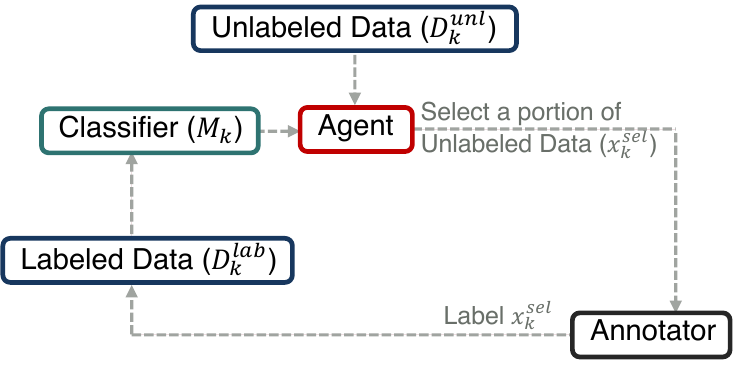}
                \caption{Active Learning Framework.}
        \label{fig:active_learning}
        \end{subfigure}       
        \begin{subfigure}[b]{0.45\textwidth}
                \includegraphics[width=\textwidth]{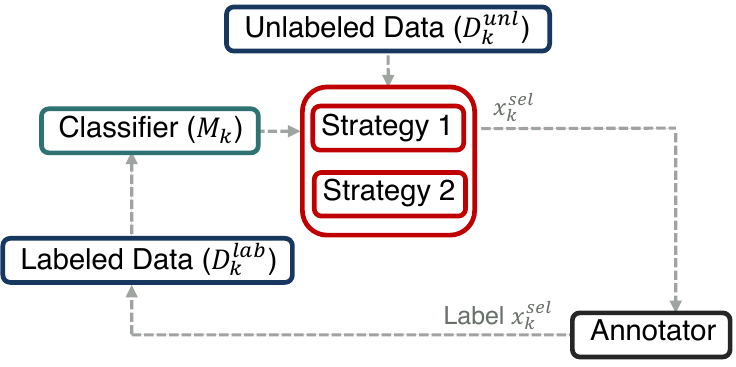}
                \caption{Desired Bandit Based Framework.}
\label{fig:bandit_framework}
        \end{subfigure}
        \caption{}
        \label{fig:lab} 
\end{figure}

\subsection{Adaptive framework}
Figure \ref{fig:bandit_framework} illustrates the desired adaptive bandit framework. Instead of employing a single hand-crafted active learning strategy defined in equation \ref{equ:base_stratey}, we utilize a collection of such strategies represented by $\mathcal{A}$, analogous to arms in the context of the bandit setup. The adaptive policy $\Pi: \mathbb{R} \to \mathcal{A}$ is a function mapping the context $c_k$ to an action, defined as:
\begin{equation}
a_k = \Pi(c_k).
\end{equation}
Given $r^\star_k$ as the best reward among all arms at time $k$, and $r(a_k)$ as the reward if the agent chooses arm $a$ at time $k$, the experimental regret over $N$ trials of trajectories with a horizon $T$ is defined as:
\begin{equation}
\widehat{\text{Regret}}(\Pi) = \frac{1}{N}\sum_{n = 1}^N \sum_{k=1}^T \left [ r^\star_k - r_n(a_k) \right ] \quad \text{with }a_k = \Pi(c_k).
\end{equation}
The objective of the bandit problem is generally to find a policy function $\Pi$ that minimizes the expected regret.

Assuming the reward is a linear function of the context, i.e., $r(a_k) = \theta_a^\top c_k$, I can leverage the existing LinUCB algorithm \cite{li_contextual-bandit_2010}. However, in the context of active learning, the reward, which I will define explicitly later, is generally non-stationary. Therefore, I can only assume that the reward is locally linear and takes advantage of non-stationary linUCB algorithms, such as discounted linUCB \cite{10.5555/3454287.3455365} (d-linUCB for short) or sliding window linUCB (SW-linUCB for short) \cite{pmlr-v89-cheung19b}.

Our main proposed meta-strategy, Contextual Adaptive Active Learning (CAAL), is described in Algorithm \ref{alg:main_al}. In every iteration, the agent begins by receiving context from the environment, which represents the current state of the active learning process. Next, the agent selects the best predictive reward for each arm based on this context. After selection, the environment reveals the reward, and the agent updates the reward prediction function accordingly. This process repeats until the end of the trajectory.
\begin{algorithm}
   \caption{Contextual Adaptive Active Learning (CAAL)}
   \label{alg:main_al}
\begin{algorithmic}[1]
   \STATE {\bfseries Input:} Active Learners ($\mathcal{A}$), Batch Size ($b$), Window Size ($w$). \STATE {\bfseries Initialize:} $\mathbf{C_a} = \mathbf{I}$, $d_a = 0$, $\theta_a = \mathbf{C_a}^{-1} \cdot d_a$ for all arms $a \in \mathcal{A}$.
   \WHILE{True}
   \STATE Observe Context $c_k$ from the environment.
   \STATE Select hand-craft active learner $a_k = \arg \max_{a \in \mathcal{A}} \{ \theta_a^\top c_k + \gamma \sqrt{c_k^\top \mathbf{C_a}^{-1} c_k}\}$.
   \STATE Select $b$ number of unlabeled data $x^{sel}$ based on $a_k$ from $\mathcal{D}^{unl}_k$.
   \STATE Receive $y^{sel}$ from the annotator.
   \STATE $\mathcal{D}^{lab}_{k} \leftarrow \mathcal{D}^{lab}_k \cup \{ x^{sel}_k, \, y^{sel}_k\}$. 
   \STATE $\mathcal{D}^{unl}_{k+1} \leftarrow \mathcal{D}^{unl}_{k} \setminus x^{sel}_k$.
  \STATE Train Classifier $M_k$ with $\mathcal{D}^{lab}_{k}$.
   \STATE Receive Reward $r_k$.
   \STATE $\mathbf{C_{a_k}} \leftarrow \sum_{t = k-w}^k c_t c_t^\top$.
   \STATE $d_{a_k} \leftarrow  \sum_{t = k-w}^k r_t c_t$.
   \STATE $\theta_{a_k} \leftarrow \mathbf{C_{a_k}}^{-1} d_{a_k}$.
   \STATE $k \leftarrow k+1$.
   \ENDWHILE
\end{algorithmic}
\end{algorithm}

\textbf{Reward Signal}

In practice, no matter how the unlabeled data is selected for labeling, I must maintain an IID control group to evaluate model performance. This control group is usually smaller than the dataset used for training. It provides a reward signal, which, to our knowledge, is not commonly used in existing Active Learning frameworks. I argue that by properly designing the reward signal from the control group, the reward can depend on some external context, allowing our CAAL algorithm to adapt to the best hand-crafted active learning strategy (arm).

\textbf{Design of Reward and Context}

It is relatively straightforward to confirm through experimentation that the classifier's performance generally improves with the increasing size of the labeled dataset. As such, I consider one potential context: the size of the labeled dataset. Taking inspiration from the reward design in \cite{Casanova2020Reinforced}, I utilize the \textbf{performance difference} between $M_k$ and $M_{k+1}$ as the reward. In other words, instead of using the absolute change in accuracy/ROC-AUC (or error rate reduction), I focus on the \textbf{relative} change. This reward design reduces dependency on previous arm selections, which is important because rewards in active learning are typically non-Markovian. Previous selections influence rewards, a correlation not managed by existing bandit algorithms. Because of this, our reward design can also help reduce high bias from the collected data and provide us with counterfactual information.

Furthermore, recognizing that the reward is non-stationary as the context increases, I adapt the SW-linUCB algorithm from \cite{pmlr-v89-cheung19b} to accommodate this by assuming the reward is locally linear in its external context. In essence, our algorithm extrapolates to predict the model improvement for each arm based on the context. Subsequently, it selects the arm with the better predictive reward, taking into account the uncertainty bound.


\section{Experimental Results}
In this section, I perform experiments on a real-world dataset to address a binary classification problem. To ensure fairness, I use logistic regression as the machine learning classifier in all experiments. I begin by detailing the dataset utilized, followed by an explanation of the evaluation metric that aligns with the active learning objective. Finally, I demonstrate that our proposed approach outperforms the current state-of-the-art adaptive algorithm ALBL across most datasets.

\subsection{Evaluation Metric}
The goal of active learning is to maximize the performance of a machine learning classifier while minimizing the number of labeled datasets needed, as labeling data is typically expensive. To assess the algorithm's performance, I use a fundamental evaluation metric. This involves fixing the percentage of queried labeled data among all training data and comparing the testing AUC of the ROC curve of the trained machine learning model with the queried labeled data. To ensure fair comparisons across datasets, I use the ROC-AUC ratio instead of the absolute ROC-AUC value. The ROC-AUC ratio is defined as:
\begin{equation*}
    \text{ROC-AUC Ratio} = \frac{\text{ROC-AUC}}{\text{ROC-AUC}_{max}} \times 100 \%,
\end{equation*}
where $\text{ROC-AUC}_{max}$ represents the maximum ROC-AUC achieved when using the entire training dataset.

Additionally, to obtain a consistent evaluation metric for assessing the overall average performance of a strategy, I compute the Area Under the Curve of the ROC-AUC ratio.

\subsection{Dataset Description}
When selecting the dataset, I considered two main criteria:
\begin{enumerate}
    \item The dataset should be large enough to allow testing of various batch-size active learning strategies.
    \item Different datasets may show different performances of hand-crafted strategies. 
\end{enumerate}
I selected datasets from \cite{zhan_comparative_2021}, which provides a summary of common benchmark datasets in active learning, all sourced from the UC Irvine Machine Learning Repository. Figure \ref{fig:dataset} illustrates that when using a batch size of 10, the performance of hand-crafted active learning strategies varies significantly across datasets. For instance, in the ILPD and I vs J datasets, the K-center strategy performs the best overall, while in the Australian Credit (AC) and German Credit (GC) datasets, the Information Diversity strategy is the most effective. Conversely, Information Diversity performs poorly in ILPD and I vs J, whereas K-center is less effective in the Australian Credit and German Credit datasets. I use the implementations of these handcrafted active learning strategies from the Google AL toolbox \cite{google_google/active-learning_nodate}.

\begin{figure}
  \centering
  \includegraphics[width=1\textwidth]{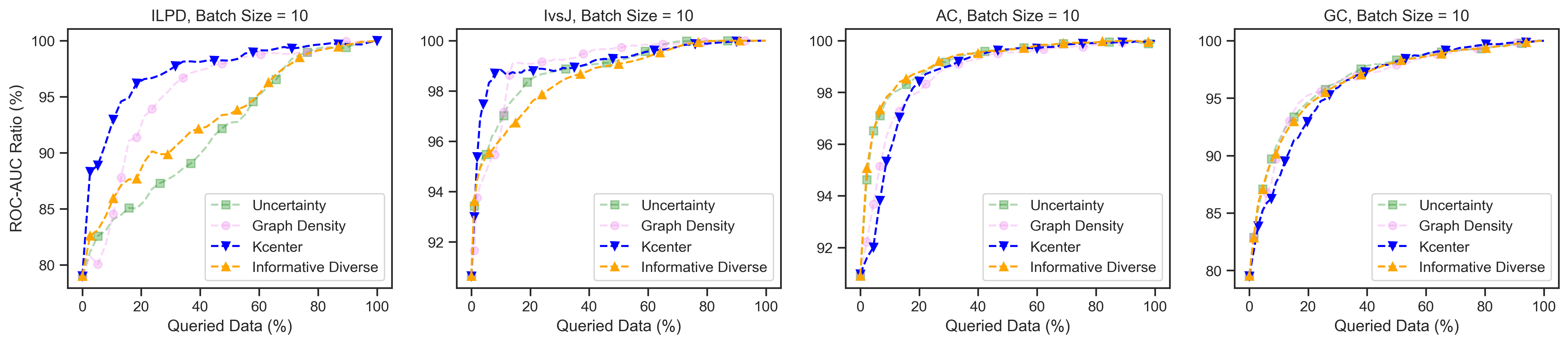}
  \caption{Selected dataset response to active learning strategies with batch size 10.}
  \label{fig:dataset}
\end{figure}

\subsection{Results}
Figure \ref{fig:results} summarizes the performance of various active learning strategies with different batch sizes, including two hand-crafted approaches: K-center and informative Diversity. The x-axis represents the percentage of the queried labeled dataset, while the y-axis indicates the ROC-AUC Ratio of the classifier on the testing dataset. With a batch size of 10, meaning that in each iteration, the agent selects 10 unlabeled data points to be labeled. I conducted 100 experiments. I chose these two hand-crafted strategies because they demonstrate varying performance across datasets.

I also compared ALBL, the baseline algorithm from \cite{Hsu_Lin_2015} and still considered the state-of-the-art adaptive active learning algorithm for batch selection to our knowledge, with our proposed algorithm, CAAL. The results show that while with a batch size of 5, CAAL only has comparable results compared to ALBL, in the other batch sizes, CAAL exhibits better adaptive ability than ALBL. This suggests that using our proposed method can benefit from the less conservative nature of the contextual bandit algorithm compared to those based on adversarial bandit algorithms.

Table \ref{tab:results} summarizes the Area under the Curve of ROC-AUC ratio and the standard deviation for batch sizes of 5, 10, 15, 20, and 25. It shows that except for a batch size of 5, CAAL only gives comparable performance with ALBL; for the other batch sizes, CAAL exhibits better performance. The potential limitation for a batch size of 5 is that the linear regression fitting requires some amount of data, while ALBL does not. However, given the fact that in reality we often have a large batch size, one can expect our performance will be better in real-world applications.
\subsection{Reward Prediction}
In this section, I focus on predicting rewards from context information. Figure \ref{fig:prediction} demonstrates that the predicted reward as a function of iteration from the linear regression matches the trend of the true received reward. This indicates that our proposed approach can leverage the predictability of the reward to avoid conservative arm selection, such as ALBL.
\begin{figure}
  \centering
  \includegraphics[width=1\textwidth]{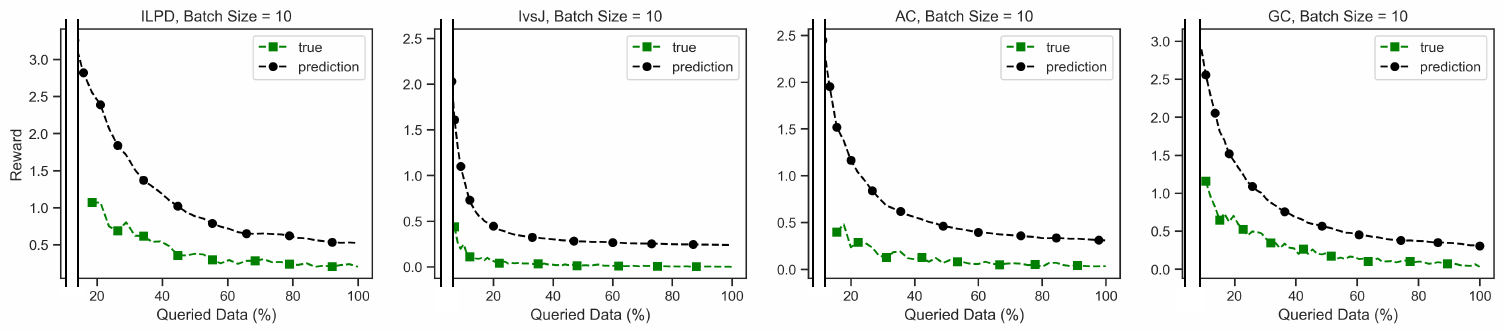}
  \caption{The prediction of reward for batch size of 10.}
  \label{fig:prediction}
\end{figure}

\subsection{Pull History}
In this section, I analyze the average percentage of times the algorithm selects the optimal arm. Figure \ref{fig:pullhis} shows that, in general, ALBL through adversarial bandit selects each arm with a 50\% chance. In contrast, after some exploration, our algorithm can switch to the optimal arm. This explains why our results are better than the existing approach. Another important observation is that after 60\% of the queries for unlabeled data points, the algorithm reaches a saturation point and appears to converge to a stable selection percentage.
\begin{figure}
  \centering
  \includegraphics[width=\textwidth]{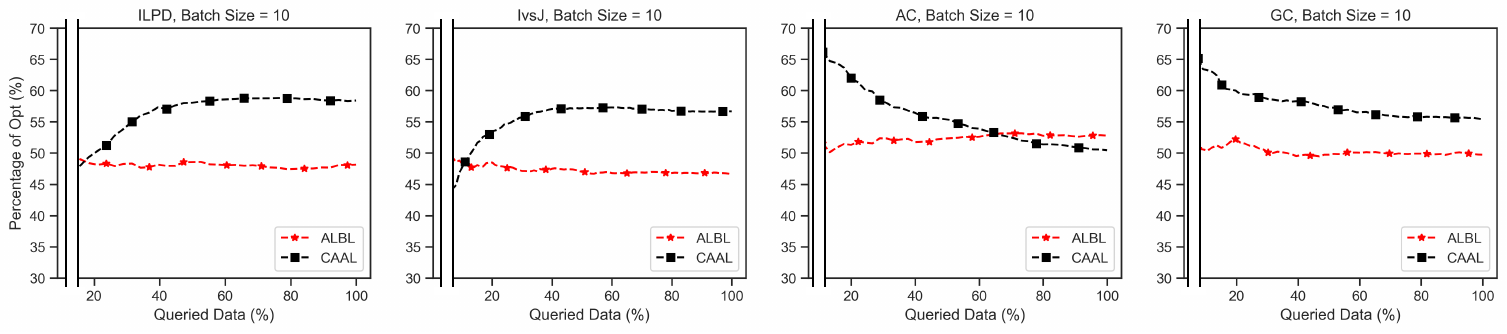}
  \caption{Comparison of the percentage of optimal strategies for batch size of 10.}
  \label{fig:pullhis}
\end{figure}

\begin{figure*}
  \centering
  \includegraphics[width=1\textwidth]{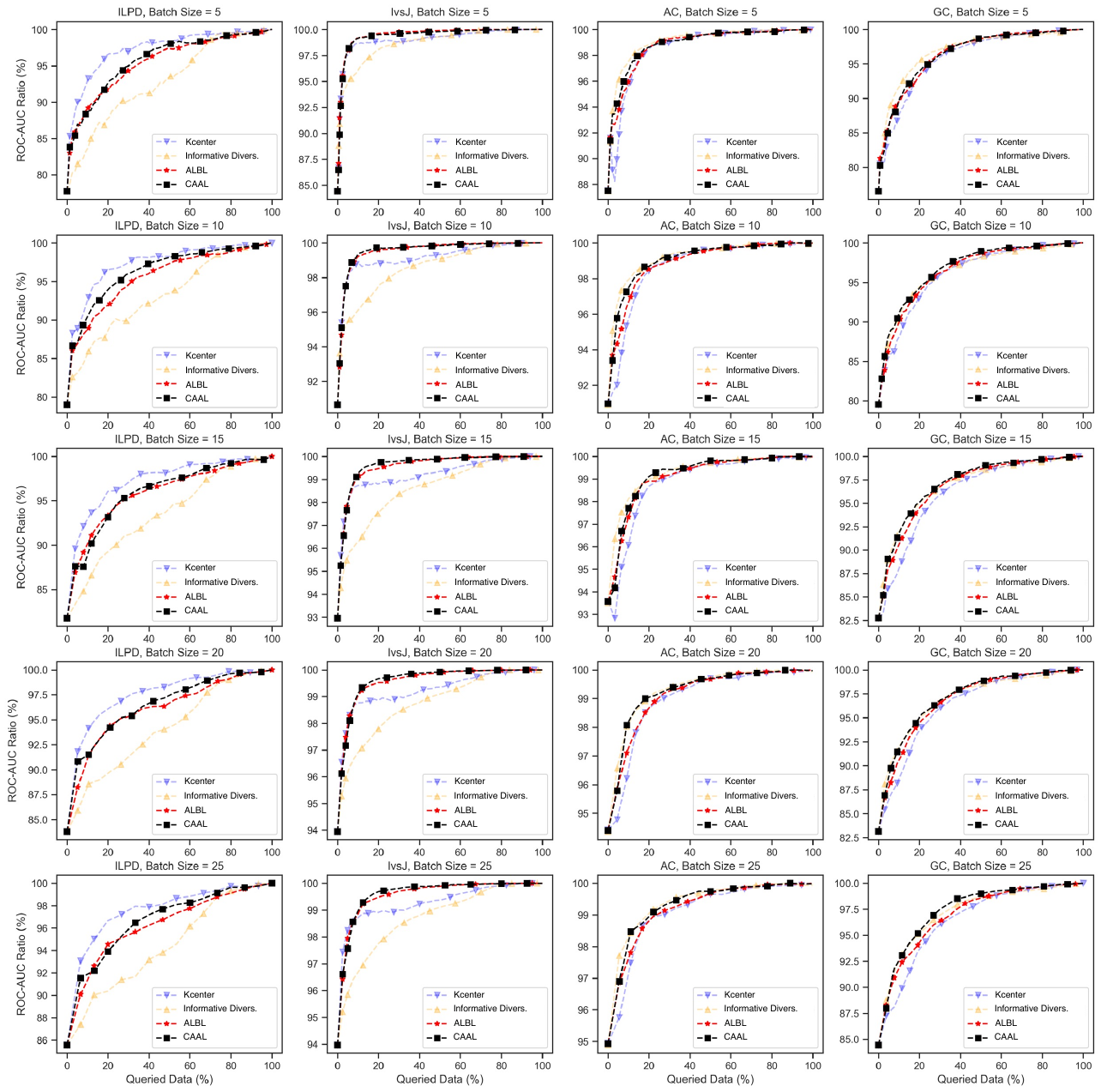}
  \caption{Comparison on ROC-AUC Ratio.}
  \label{fig:results}
\end{figure*}

\begin{table*}
\centering
\setlength{\tabcolsep}{3pt} 
  \begin{tabular}{l c c|c|c|c|c}
    \toprule
    Data &Strategies &\multicolumn{5}{c}{Batch Size}\\
    \cline{3-7}
    & & \textbf{5}&\textbf{10} & \textbf{15} & \textbf{20} & \textbf{25}\\
    \hline
    ILPD & Kc.&97.20$\pm$0.07&97.11$\pm$0.07  & 97.00$\pm$ 0.07&97.26$\pm$0.07&97.29$\pm$0.07\\
    &Info.&92.91$\pm$0.08&93.20$\pm$0.08&93.47$\pm$0.08&93.99$\pm$0.08& 94.32$\pm$0.08\\
    &\cellcolor{lightergray}ALBL&\cellcolor{lightergray}95.43$\pm$0.07&\cellcolor{lightergray}95.43$\pm$0.07&\cellcolor{lightergray}95.70$\pm$0.07&\cellcolor{lightergray}95.88$\pm$0.07&\cellcolor{lightergray}96.07$\pm$0.07\\
    &\cellcolor{lightergray}CAAL&\cellcolor{lightergray}\underline{\textbf{95.73}}$\pm$0.07&\cellcolor{lightergray}\underline{\textbf{96.22}}$\pm$0.07&\underline{\textbf{\cellcolor{lightergray}95.80}}$\pm$0.07&\cellcolor{lightergray}\underline{\textbf{96.28}}$\pm$0.07&\cellcolor{lightergray}\underline{\textbf{96.44}}$\pm$0.07\\
    \midrule
    IvsJ& Kc.&99.04$\pm$0.02&99.12$\pm$0.01&99.17$\pm$0.01&99.21$\pm$0.01&99.23$\pm$0.01\\
    &Info.&98.67$\pm$0.02&98.56$\pm$0.02&98.57$\pm$0.02&98.73$\pm$0.01&98.67$\pm$0.02\\
    &\cellcolor{lightergray}ALBL&\cellcolor{lightergray}\underline{\textbf{99.46}}$\pm$0.02&\cellcolor{lightergray}99.51$\pm$0.01&\cellcolor{lightergray}99.52$\pm$0.01&\cellcolor{lightergray}99.53$\pm$0.01&\cellcolor{lightergray}99.51$\pm$0.01 \\
    &\cellcolor{lightergray} CAAL&\cellcolor{lightergray}99.38$\pm$0.02&\cellcolor{lightergray}\underline{\textbf{99.53}}$\pm$0.01&\cellcolor{lightergray}\underline{\textbf{99.57}}$\pm$0.01&\cellcolor{lightergray}\underline{\textbf{99.56}}$\pm$0.01&\cellcolor{lightergray}\underline{\textbf{99.54}}$\pm$0.01\\
    \midrule
    AC& Kc.&98.55$\pm$0.03&98.65$\pm$0.03&98.81$\pm$0.03&98.92$\pm$0.03&99.01$\pm$0.03\\
    &Info.&
    99.05$\pm$0.03&99.12$\pm$0.03&99.22$\pm$0.03&99.23$\pm$ 0.03&99.28$\pm$ 0.03\\
    &\cellcolor{lightergray}ALBL&\cellcolor{lightergray}98.76$\pm$0.03&\cellcolor{lightergray}98.83$\pm$0.03&\cellcolor{lightergray}99.04$\pm$0.03&\cellcolor{lightergray}99.06$\pm$0.03&\cellcolor{lightergray}99.13$\pm$0.03\\
    &\cellcolor{lightergray}CAAL&\cellcolor{lightergray}\underline{\textbf{98.84}}$\pm$0.03&\cellcolor{lightergray}\underline{\textbf{99.00}}$\pm$0.03&\cellcolor{lightergray}\underline{\textbf{99.12}}$\pm$0.03&\cellcolor{lightergray}\underline{\textbf{99.18}}$\pm$0.03&\cellcolor{lightergray}\underline{\textbf{99.22}}$\pm$0.03 \\
    \midrule
    GC& Kc..&95.98$\pm$0.05&96.06$\pm$0.05&96.10$\pm$0.05&96.26$\pm$0.05&96.32$\pm$0.05\\
    &Info.&96.90$\pm$0.05&96.45$\pm$0.05&96.89$\pm$0.05&96.87$\pm$0.04&97.03$\pm$0.04\\
    &\cellcolor{lightergray}ALBL&\cellcolor{lightergray}\underline{\textbf{96.47}}$\pm$0.05&\cellcolor{lightergray}96.46$\pm$0.05&\cellcolor{lightergray}96.76$\pm$0.05&\cellcolor{lightergray}96.74$\pm$0.05&\cellcolor{lightergray}96.82$\pm$0.04 \\
    &\cellcolor{lightergray}CAAL&\cellcolor{lightergray}96.46$\pm$0.04&\cellcolor{lightergray}\underline{\textbf{96.81}}$\pm$0.04&\cellcolor{lightergray}\underline{\textbf{97.06}}$\pm$0.04&\cellcolor{lightergray}\underline{\textbf{96.98}}$\pm$0.04&\cellcolor{lightergray}\underline{\textbf{97.24}}$\pm$0.04\\
    \bottomrule
  \end{tabular}
\caption{Area Under the Curve of the ROC-AUC Ratio. In the table, Kc. represents the K-Center strategy, and Info. represents the Informative Diversity strategy. These are two arms used in the adaptive active learning algorithms.}
\label{tab:results}
\end{table*}


\section{Conclusion}
In this part, I introduced CAAL, a new adaptive batch selection active learning algorithm based on contextual bandit. Our results show that CAAL outperforms the existing ALBL method on real-world datasets, \textbf{regardless of batch size}, by \textbf{predicting rewards based on context}. In the future, I plan to explore other designs for context and rewards.

There are some limitations to our work. Bandit algorithms require exploration before exploitation, which depends on the number of hand-craft active learning strategies. If the strategy set is large, the agent may only switch between hand-craft active learning strategies with equal probabilities. Expert knowledge is still needed to select proper hand-craft active learning strategies set. Additionally, theoretical analysis is challenging. Although \cite{zhang2023algorithm} provides regret analysis in their setup, their assumptions about the stationary distribution and the independent, identical distribution of each arm do not hold even in their case. Therefore, I provide experimental results rather than theoretical analysis.
\chapter{Conclusion and Discussion}
\label{conclusion_chapter}

In the first part of this thesis, I focus on developing fully distributed algorithms for Generalized Nash Equilibrium (GNE) computation. I first develop a continuous-time formulation, accompanied by rigorous theoretical analysis establishing well-posedness and convergence properties. The proposed framework avoids the need for multiplier exchange, thereby preserving privacy and reducing communication overhead. I then introduce discretization schemes to bridge the gap between continuous-time dynamics and practical implementations, while maintaining convergence guarantees under suitable conditions. The overall framework accommodates different classes of shared constraints and provides a systematic approach for analyzing distributed equilibrium-seeking dynamics.

In the second part of the thesis, I study active learning in machine learning, with a particular focus on improving adaptability under unknown data distributions. I propose a contextual bandit-based framework that selects among multiple hand-crafted active learning strategies by leveraging contextual information derived from the learning process. This approach addresses the limitations of existing adversarial bandit methods, which tend to be overly conservative, and enables more effective strategy selection. The proposed method is validated on both Amazon’s internal datasets and publicly available external datasets, demonstrating improved performance and robustness across different data regimes. The method has also been deployed in Amazon’s internal production systems. However, only the external dataset results are presented due to confidentiality constraints. Furthermore, the framework is designed to be industrial-friendly and readily integrable into existing machine learning pipelines.

Overall, this thesis contributes to both distributed optimization and machine learning by developing scalable, adaptive, and communication-efficient algorithms. The results highlight the importance of incorporating structural insights and contextual information into algorithm design, and suggest several promising directions for future research, including extensions to more general equilibrium problems, stochastic environments, and large-scale real-world systems.

\section{Future work}
Future work for the GNEP part includes establishing convergence results under more general shared linear inequality constraints and further extending the framework to games with general convex individual constraints. These extensions would significantly broaden the applicability of the proposed fully distributed algorithms to more realistic multi-agent optimization and coordination problems. For the contextual bandit active learning part, one limitation of the current framework is that the exploration complexity depends on the number of arms. As a result, the number of candidate strategies cannot be excessively large, and the framework still requires domain expertise to design suitable candidate arms. Future work may focus on exploring more effective context and reward designs in experiments, as well as providing a stronger theoretical analysis for the proposed contextual bandit active learning framework.


\bibliography{references}

@article{facchinei-fischer-piccialli-2007,
  title = {Generalized {Nash} equilibrium problems and {Newton} methods},
  author = {Facchinei, Francisco and Fischer, Andreas and Piccialli, Veronica},
  journal = {Mathematical Programming},
  volume = {117},
  number = {1--2},
  pages = {163--194},
  year = {2009},
  month = mar,
  doi = {10.1007/s10107-007-0160-2}
}

@article{comple,
  title = {The complexity of computing a {Nash} equilibrium},
  author = {Daskalakis, Constantinos and Goldberg, Paul W. and Papadimitriou, Christos H.},
  journal = {SIAM Journal on Computing},
  volume = {39},
  number = {1},
  pages = {195--259},
  year = {2009},
  month = jan,
  doi = {10.1137/070699652}
}

@inproceedings{maiorano-dynamics-2000,
  title = {Dynamics of non-collusive oligopolistic electricity markets},
  author = {Maiorano, A. and Song, Y. H. and Trovato, M.},
  booktitle = {2000 IEEE Power Engineering Society Winter Meeting Conference Proceedings},
  volume = {2},
  pages = {838--844},
  address = {Singapore},
  publisher = {IEEE},
  year = {2000},
  doi = {10.1109/PESW.2000.850034}
}

@article{bacciotti-nonpathological-2006,
  title = {Nonpathological {Lyapunov} functions and discontinuous {Carathéodory} systems},
  author = {Bacciotti, Andrea and Ceragioli, Francesca},
  journal = {Automatica},
  volume = {42},
  number = {3},
  pages = {453--458},
  year = {2006},
  month = mar,
  doi = {10.1016/j.automatica.2005.10.014}
}

@inproceedings{pmlr-v139-liu21d,
  title = {Dynamic game theoretic neural optimizer},
  author = {Liu, Guan-Horng and Chen, Tianrong and Theodorou, Evangelos},
  booktitle = {Proceedings of the 38th International Conference on Machine Learning},
  editor = {Meila, Marina and Zhang, Tong},
  volume = {139},
  series = {Proceedings of Machine Learning Research},
  pages = {6759--6769},
  publisher = {PMLR},
  year = {2021},
  month = jul
}

@article{GAN,
  title = {Generative adversarial networks},
  author = {Goodfellow, Ian and Pouget-Abadie, Jean and Mirza, Mehdi and Xu, Bing and Warde-Farley, David and Ozair, Sherjil and Courville, Aaron and Bengio, Yoshua},
  journal = {Communications of the Association for Computing Machinery},
  volume = {63},
  number = {11},
  pages = {139--144},
  publisher = {Association for Computing Machinery},
  address = {New York, NY, USA},
  year = {2020},
  month = oct,
  doi = {10.1145/3422622}
}

@article{krawczyk-relaxation-2000,
  title = {Relaxation algorithms to find {Nash} equilibria with economic applications},
  author = {Krawczyk, Jacek B. and Uryasev, Stanislav},
  journal = {Environmental Modeling \& Assessment},
  volume = {5},
  number = {1},
  pages = {63--73},
  year = {2000},
  month = jan,
  doi = {10.1023/A:1019097208499}
}

@article{zhou-generalized-2005,
  title = {The generalized {Nash} equilibrium model for oligopolistic transit market with elastic demand},
  author = {Zhou, Jing and Lam, William H. K. and Heydecker, Benjamin G.},
  journal = {Transportation Research Part B: Methodological},
  volume = {39},
  number = {6},
  pages = {519--544},
  year = {2005},
  month = jul,
  doi = {10.1016/j.trb.2004.07.003}
}

@book{ryu-large-scale-2022,
  title = {Large-scale convex optimization: algorithms \& analyses via monotone operators},
  author = {Ryu, Ernest K. and Yin, Wotao},
  publisher = {Cambridge University Press},
  edition = {1},
  year = {2022},
  month = nov,
  doi = {10.1017/9781009160865}
}

@article{facchinei_generalized_2010,
  title = {Generalized {Nash} equilibrium problems},
  author = {Facchinei, Francisco and Kanzow, Christian},
  journal = {Annals of Operations Research},
  volume = {175},
  number = {1},
  pages = {177--211},
  year = {2010},
  month = mar,
  doi = {10.1007/s10479-009-0653-x}
}

@article{facchinei_generalized_2007,
  title = {On generalized {Nash} games and variational inequalities},
  author = {Facchinei, Francisco and Fischer, Andreas and Piccialli, Veronica},
  journal = {Operations Research Letters},
  volume = {35},
  number = {2},
  pages = {159--164},
  year = {2007},
  month = mar,
  doi = {10.1016/j.orl.2006.03.004}
}

@article{zhang_stability_1995,
  title = {On the stability of projected dynamical systems},
  author = {Zhang, D. and Nagurney, A.},
  journal = {Journal of Optimization Theory and Applications},
  volume = {85},
  number = {1},
  pages = {97--124},
  year = {1995},
  month = apr,
  doi = {10.1007/BF02192301}
}

@article{facchinei_computation_2011,
  title = {On the computation of all solutions of jointly convex generalized {Nash} equilibrium problems},
  author = {Facchinei, Francisco and Sagratella, Simone},
  journal = {Optimization Letters},
  volume = {5},
  number = {3},
  pages = {531--547},
  year = {2011},
  month = aug,
  doi = {10.1007/s11590-010-0218-6}
}

@article{pang_quasi-variational_2005,
  title = {Quasi-variational inequalities, generalized {Nash} equilibria, and multi-leader-follower games},
  author = {Pang, Jong-Shi and Fukushima, Masao},
  journal = {Computational Management Science},
  volume = {2},
  number = {1},
  pages = {21--56},
  year = {2005},
  month = jan,
  doi = {10.1007/s10287-004-0010-0}
}

@article{migot_parametrized_2020,
  title = {A parametrized variational inequality approach to track the solution set of a generalized {Nash} equilibrium problem},
  author = {Migot, Tangi and Cojocaru, Monica-G.},
  journal = {European Journal of Operational Research},
  volume = {283},
  number = {3},
  pages = {1136--1147},
  year = {2020},
  month = jun,
  doi = {10.1016/j.ejor.2019.11.054}
}

@article{kanzow_augmented_2016,
  title = {Augmented {Lagrangian} methods for the solution of generalized {Nash} equilibrium problems},
  author = {Kanzow, Christian and Steck, Daniel},
  journal = {SIAM Journal on Optimization},
  volume = {26},
  number = {4},
  pages = {2034--2058},
  year = {2016},
  month = jan,
  doi = {10.1137/16M1068256}
}

@article{rosen_existence_1965,
  title = {Existence and uniqueness of equilibrium points for concave {N}-person games},
  author = {Rosen, J. B.},
  journal = {Econometrica},
  volume = {33},
  number = {3},
  pages = {520},
  year = {1965},
  month = jul,
  doi = {10.2307/1911749}
}

@article{littman_1994,
  title = {Markov games as a framework for multi-agent reinforcement learning},
  author = {Littman, Michael L.},
  journal = {Machine Learning Proceedings 1994},
  pages = {157--163},
  year = {1994},
  doi = {10.1016/B978-1-55860-335-6.50027-1}
}

@inproceedings{meanfield,
  title = {Mean field multi-agent reinforcement learning},
  author = {Yang, Yaodong and Luo, Rui and Li, Minne and Zhou, Ming and Zhang, Weinan and Wang, Jun},
  booktitle = {Proceedings of the 35th International Conference on Machine Learning},
  editor = {Dy, Jennifer and Krause, Andreas},
  volume = {80},
  series = {Proceedings of Machine Learning Research},
  pages = {5571--5580},
  publisher = {PMLR},
  year = {2018},
  month = jul
}

@inproceedings{MPG,
  title = {Global convergence of multi-agent policy gradient in {Markov} potential games},
  author = {Leonardos, Stefanos and Overman, Will and Panageas, Ioannis and Piliouras, Georgios},
  booktitle = {ICLR 2022 Workshop on Gamification and Multiagent Solutions},
  year = {2022}
}

@article{lin_distributed_2022,
  title = {Distributed generalized {Nash} equilibrium seeking: a singular perturbation-based approach},
  author = {Lin, Wen-Ting and Chen, Guo and Li, Chaojie and Huang, Tingwen},
  journal = {Neurocomputing},
  volume = {482},
  pages = {278--286},
  year = {2022},
  month = apr,
  doi = {10.1016/j.neucom.2021.11.073}
}

@inproceedings{pmlr-v119-jin20e,
  title = {What is local optimality in nonconvex-nonconcave minimax optimization?},
  author = {Jin, Chi and Netrapalli, Praneeth and Jordan, Michael},
  booktitle = {Proceedings of the 37th International Conference on Machine Learning},
  editor = {Daumé III, Hal and Singh, Aarti},
  volume = {119},
  series = {Proceedings of Machine Learning Research},
  pages = {4880--4889},
  publisher = {PMLR},
  year = {2020},
  month = jul
}

@book{glynn-robinson-facchinei-2004,
  title = {Finite-dimensional variational inequalities and complementarity problems},
  author = {Facchinei, Francisco and Pang, Jong-Shi},
  series = {Springer Series in Operations Research and Financial Engineering},
  publisher = {Springer-Verlag New York},
  address = {New York, NY},
  year = {2004}
}

@article{harker-1991,
  title = {Generalized {Nash} games and quasi-variational inequalities},
  author = {Harker, Patrick T.},
  journal = {European Journal of Operational Research},
  volume = {54},
  number = {1},
  pages = {81--94},
  year = {1991},
  month = sep,
  doi = {10.1016/0377-2217(91)90325-P}
}

@article{yi-pavel-2019,
  title = {An operator splitting approach for distributed generalized {Nash} equilibria computation},
  author = {Yi, Peng and Pavel, Lacra},
  journal = {Automatica},
  volume = {102},
  pages = {111--121},
  year = {2019},
  month = apr,
  doi = {10.1016/j.automatica.2019.01.008}
}

@article{facchinei-generalized-2010,
  title = {Generalized {Nash} equilibrium problems},
  author = {Facchinei, Francisco and Kanzow, Christian},
  journal = {Annals of Operations Research},
  volume = {175},
  number = {1},
  pages = {177--211},
  year = {2010},
  month = mar,
  doi = {10.1007/s10479-009-0653-x}
}

@article{cherukuri-asymptotic-2016,
  title = {Asymptotic convergence of constrained primal--dual dynamics},
  author = {Cherukuri, Ashish and Mallada, Enrique and Cortés, Jorge},
  journal = {Systems \& Control Letters},
  volume = {87},
  pages = {10--15},
  year = {2016},
  month = jan,
  doi = {10.1016/j.sysconle.2015.10.006}
}

@inproceedings{ebrahimi-robust-2019,
  title = {Robust optimization via discrete-time saddle point algorithm},
  author = {Ebrahimi, Keivan and Vaidya, Umesh and Elia, Nicola},
  booktitle = {2019 IEEE 58th Conference on Decision and Control},
  pages = {2473--2478},
  address = {Nice, France},
  publisher = {IEEE},
  year = {2019},
  month = dec,
  doi = {10.1109/CDC40024.2019.9028953}
}

@article{debreu-social-1952,
  title = {A social equilibrium existence theorem},
  author = {Debreu, Gerard},
  journal = {Proceedings of the National Academy of Sciences},
  volume = {38},
  number = {10},
  pages = {886--893},
  year = {1952},
  month = oct,
  doi = {10.1073/pnas.38.10.886}
}

@inproceedings{huang-distributed-2021,
  title = {Distributed solution of {GNEP} over networks via the {Douglas}-{Rachford} splitting method},
  author = {Huang, Yuanhanqing and Hu, Jianghai},
  booktitle = {2021 IEEE 60th Conference on Decision and Control},
  pages = {3110--3116},
  address = {Austin, TX, USA},
  publisher = {IEEE},
  year = {2021},
  month = dec,
  doi = {10.1109/CDC45484.2021.9682862}
}

@article{NashQ,
  title = {{Nash} q-learning for general-sum stochastic games},
  author = {Hu, Junling and Wellman, Michael P.},
  journal = {Journal of Machine Learning Research},
  volume = {4},
  pages = {1039--1069},
  year = {2003},
  month = dec
}

@article{bianchi-continuous-time-2021,
  title = {Continuous-time fully distributed generalized {Nash} equilibrium seeking for multi-integrator agents},
  author = {Bianchi, Mattia and Grammatico, Sergio},
  journal = {Automatica},
  volume = {129},
  pages = {109660},
  year = {2021},
  month = jul,
  doi = {10.1016/j.automatica.2021.109660}
}

@article{cenedese-asynchronous-2021,
  title = {An asynchronous distributed and scalable generalized {Nash} equilibrium seeking algorithm for strongly monotone games},
  author = {Cenedese, Carlo and Belgioioso, Giuseppe and Grammatico, Sergio and Cao, Ming},
  journal = {European Journal of Control},
  volume = {58},
  pages = {143--151},
  year = {2021},
  month = mar,
  doi = {10.1016/j.ejcon.2020.08.006}
}

@incollection{ROBBINS1971233,
  title = {A convergence theorem for non-negative almost supermartingales and some applications},
  author = {Robbins, H. and Siegmund, D.},
  editor = {Rustagi, Jagdish S.},
  booktitle = {Optimizing Methods in Statistics},
  publisher = {Academic Press},
  pages = {233--257},
  year = {1971},
  doi = {10.1016/B978-0-12-604550-5.50015-8}
}

@book{glynn_robinson_facchinei_2004,
  title = {Finite-dimensional variational inequalities and complementarity problems},
  author = {Facchinei, Francisco and Pang, Jong-Shi},
  publisher = {Springer New York},
  address = {New York, NY},
  year = {2004}
}

@article{migot_nonsmooth_2020,
  title = {Nonsmooth dynamics of generalized {Nash} games},
  author = {Migot, Tangi and Cojocaru, Monica-G.},
  journal = {Journal of Nonlinear and Variational Analysis},
  volume = {4},
  number = {1},
  year = {2020},
  doi = {10.23952/jnva.4.2020.1.04}
}

@article{shapley_stochastic_1953,
  title = {Stochastic games},
  author = {Shapley, Lloyd Stowell},
  journal = {Proceedings of the National Academy of Sciences of the United States of America},
  volume = {39},
  number = {10},
  pages = {1095--1100},
  year = {1953},
  month = oct,
  doi = {10.1073}
}

@article{fukushima_restricted_2011,
  title = {Restricted generalized {Nash} equilibria and controlled penalty algorithm},
  author = {Fukushima, Masao},
  journal = {Computational Management Science},
  volume = {8},
  number = {3},
  pages = {201--218},
  year = {2011},
  month = aug,
  doi = {10.1007/s10287-009-0097-4}
}

@article{nabetani_parametrized_2011,
  title = {Parametrized variational inequality approaches to generalized {Nash} equilibrium problems with shared constraints},
  author = {Nabetani, Koichi and Tseng, Paul and Fukushima, Masao},
  journal = {Computational Optimization and Applications},
  volume = {48},
  number = {3},
  pages = {423--452},
  year = {2011},
  month = apr,
  doi = {10.1007/s10589-009-9256-3}
}

@article{dupuis_dynamical_1993,
  title = {Dynamical systems and variational inequalities},
  author = {Dupuis, Paul and Nagurney, Anna},
  journal = {Annals of Operations Research},
  volume = {44},
  number = {1},
  pages = {7--42},
  year = {1993},
  month = feb,
  doi = {10.1007/BF02073589}
}

@article{Xia_Wang_2000,
  title = {On the stability of globally projected dynamical systems},
  author = {Xia, Y. S. and Wang, J.},
  journal = {Journal of Optimization Theory and Applications},
  volume = {106},
  number = {1},
  pages = {129--150},
  year = {2000},
  doi = {10.1023/A:1004611224835}
}

@article{dreves_nonsmooth_2011,
  title = {Nonsmooth optimization reformulations characterizing all solutions of jointly convex generalized {Nash} equilibrium problems},
  author = {Dreves, Axel and Kanzow, Christian},
  journal = {Computational Optimization and Applications},
  volume = {50},
  number = {1},
  pages = {23--48},
  year = {2011},
  month = sep,
  doi = {10.1007/s10589-009-9314-x}
}

@article{facchinei_penalty_2010,
  title = {Penalty methods for the solution of generalized {Nash} equilibrium problems},
  author = {Facchinei, Francisco and Kanzow, Christian},
  journal = {SIAM Journal on Optimization},
  volume = {20},
  number = {5},
  pages = {2228--2253},
  year = {2010},
  month = jan,
  doi = {10.1137/090749499}
}

@article{tsaknakis_minimax_2023,
  title = {Minimax problems with coupled linear constraints: computational complexity and duality},
  author = {Tsaknakis, Ioannis and Hong, Mingyi and Zhang, Shuzhong},
  journal = {SIAM Journal on Optimization},
  volume = {33},
  number = {4},
  pages = {2675--2702},
  year = {2023},
  month = dec,
  doi = {10.1137/21M1462428}
}

@inproceedings{pmlr-v48-pedregosa16,
  title = {Hyperparameter optimization with approximate gradient},
  author = {Pedregosa, Fabian},
  booktitle = {Proceedings of the 33rd International Conference on Machine Learning},
  editor = {Balcan, Maria Florina and Weinberger, Kilian Q.},
  volume = {48},
  series = {Proceedings of Machine Learning Research},
  pages = {737--746},
  address = {New York, NY, USA},
  publisher = {PMLR},
  year = {2016},
  month = jun
}

@inproceedings{NEURIPS2021_174a61b0,
  title = {Convex-concave min-max {Stackelberg} games},
  author = {Goktas, Denizalp and Greenwald, Amy},
  booktitle = {Advances in Neural Information Processing Systems},
  editor = {Ranzato, M. and Beygelzimer, A. and Dauphin, Y. and Liang, P. S. and Wortman Vaughan, J.},
  volume = {34},
  pages = {2991--3003},
  publisher = {Curran Associates, Inc.},
  year = {2021}
}

@inproceedings{NEURIPS2022_24f420aa,
  title = {Turbocharging solution concepts: solving {NE}s, {CE}s and {CCE}s with neural equilibrium solvers},
  author = {Marris, Luke and Gemp, Ian and Anthony, Thomas and Tacchetti, Andrea and Liu, Siqi and Tuyls, Karl},
  booktitle = {Advances in Neural Information Processing Systems},
  editor = {Koyejo, S. and Mohamed, S. and Agarwal, A. and Belgrave, D. and Cho, K. and Oh, A.},
  volume = {35},
  pages = {5586--5600},
  publisher = {Curran Associates, Inc.},
  year = {2022}
}

@inproceedings{paccagnan_distributed_2016,
  title = {Distributed computation of generalized {Nash} equilibria in quadratic aggregative games with affine coupling constraints},
  author = {Paccagnan, Dario and Gentile, Basilio and Parise, Francesca and Kamgarpour, Maryam and Lygeros, John},
  booktitle = {2016 IEEE 55th Conference on Decision and Control},
  pages = {6123--6128},
  address = {Las Vegas, NV, USA},
  publisher = {IEEE},
  year = {2016},
  month = dec,
  doi = {10.1109/CDC.2016.7799210}
}

@article{contreras_numerical_2004,
  title = {Numerical solutions to {Nash}--{Cournot} equilibria in coupled constraint electricity markets},
  author = {Contreras, J. and Klusch, M. and Krawczyk, J. B.},
  journal = {IEEE Transactions on Power Systems},
  volume = {19},
  number = {1},
  pages = {195--206},
  year = {2004},
  month = feb,
  doi = {10.1109/TPWRS.2003.820692}
}

@book{zhang_projected_1996,
  title = {Projected dynamical systems and variational inequalities with applications},
  author = {Zhang, Ding and Nagurney, Anna},
  series = {International Series in Operations Research and Management Science},
  publisher = {Kluwer Academic Publishers},
  address = {Dordrecht, The Netherlands},
  year = {1996}
}

@inproceedings{Shen2023,
  title = {Multi-agent reinforcement learning for resource allocation in large-scale robotic warehouse sortation centers},
  author = {Shen, Yi and McClosky, Benjamin and Zavlanos, Michael},
  booktitle = {2023 IEEE Conference on Decision and Control},
  year = {2023}
}

@article{ma_decentralized_2013,
  title = {Decentralized charging control of large populations of plug-in electric vehicles},
  author = {Ma, Zhongjing and Callaway, Duncan S. and Hiskens, Ian A.},
  journal = {IEEE Transactions on Control Systems Technology},
  volume = {21},
  number = {1},
  pages = {67--78},
  year = {2013},
  month = jan,
  doi = {10.1109/TCST.2011.2174059}
}

@inproceedings{le_cleach_algames_2020,
  title = {{ALGAMES}: a fast solver for constrained dynamic games},
  author = {Le Cleac'h, Simon and Schwager, Mac and Manchester, Zachary},
  booktitle = {Robotics: Science and Systems XVI},
  publisher = {Robotics: Science and Systems Foundation},
  year = {2020},
  month = jul,
  doi = {10.15607/RSS.2020.XVI.091}
}

@inproceedings{10.5555/2969239.2969435,
  title = {Communication complexity of distributed convex learning and optimization},
  author = {Arjevani, Yossi and Shamir, Ohad},
  booktitle = {Proceedings of the 29th International Conference on Neural Information Processing Systems},
  series = {NIPS'15},
  pages = {1756--1764},
  address = {Cambridge, MA, USA},
  publisher = {MIT Press},
  year = {2015}
}

@inproceedings{McMahan2016CommunicationEfficientLO,
  title = {Communication-efficient learning of deep networks from decentralized data},
  author = {McMahan, H. B. and Moore, Eider and Ramage, Daniel and Hampson, Seth and Ag{\"u}era y Arcas, Blaise},
  booktitle = {International Conference on Artificial Intelligence and Statistics},
  year = {2016}
}

@inproceedings{NEURIPS2022_56bd2125,
  title = {Towards optimal communication complexity in distributed non-convex optimization},
  author = {Patel, Kumar Kshitij and Wang, Lingxiao and Woodworth, Blake E. and Bullins, Brian and Srebro, Nati},
  booktitle = {Advances in Neural Information Processing Systems},
  editor = {Koyejo, S. and Mohamed, S. and Agarwal, A. and Belgrave, D. and Cho, K. and Oh, A.},
  volume = {35},
  pages = {13316--13328},
  publisher = {Curran Associates, Inc.},
  year = {2022}
}

@inproceedings{pmlr-v162-mishchenko22b,
  title = {{ProxSkip}: yes! local gradient steps provably lead to communication acceleration! finally!},
  author = {Mishchenko, Konstantin and Malinovsky, Grigory and Stich, Sebastian and Richtarik, Peter},
  booktitle = {Proceedings of the 39th International Conference on Machine Learning},
  editor = {Chaudhuri, Kamalika and Jegelka, Stefanie and Song, Le and Szepesvari, Csaba and Niu, Gang and Sabato, Sivan},
  volume = {162},
  series = {Proceedings of Machine Learning Research},
  pages = {15750--15769},
  publisher = {PMLR},
  year = {2022},
  month = jul
}

@inproceedings{pmlr-v70-suresh17a,
  title = {Distributed mean estimation with limited communication},
  author = {Suresh, Ananda Theertha and Yu, Felix X. and Kumar, Sanjiv and McMahan, H. Brendan},
  booktitle = {Proceedings of the 34th International Conference on Machine Learning},
  editor = {Precup, Doina and Teh, Yee Whye},
  volume = {70},
  series = {Proceedings of Machine Learning Research},
  pages = {3329--3337},
  publisher = {PMLR},
  year = {2017},
  month = aug
}

@inproceedings{NEURIPS2018_3328bdf9,
  title = {Gradient sparsification for communication-efficient distributed optimization},
  author = {Wangni, Jianqiao and Wang, Jialei and Liu, Ji and Zhang, Tong},
  booktitle = {Advances in Neural Information Processing Systems},
  editor = {Bengio, S. and Wallach, H. and Larochelle, H. and Grauman, K. and Cesa-Bianchi, N. and Garnett, R.},
  volume = {31},
  publisher = {Curran Associates, Inc.},
  year = {2018}
}

@incollection{Davis2016,
  title = {Convergence rate analysis of several splitting schemes},
  author = {Davis, Damek and Yin, Wotao},
  booktitle = {Splitting Methods in Communication, Imaging, Science, and Engineering},
  pages = {115--163},
  publisher = {Springer International Publishing},
  address = {Cham},
  year = {2016},
  doi = {10.1007/978-3-319-41589-5_4}
}

@article{shorinwa_distributed_2024,
  title = {Distributed optimization methods for multi-robot systems: part 1, a tutorial},
  author = {Shorinwa, Ola and Halsted, Trevor and Yu, Javier and Schwager, Mac},
  journal = {IEEE Robotics \& Automation Magazine},
  volume = {31},
  number = {3},
  pages = {121--138},
  year = {2024},
  month = sep,
  doi = {10.1109/MRA.2024.3358718}
}

@article{Nemirovski2009,
  title = {Robust stochastic approximation approach to stochastic programming},
  author = {Nemirovski, Arkadi and Juditsky, Anatoli and Lan, Guanghui and Shapiro, Alexander},
  journal = {SIAM Journal on Optimization},
  volume = {19},
  number = {4},
  pages = {1574--1609},
  year = {2009},
  doi = {10.1137/070704277}
}

@book{Nesterov2004,
  title = {Introductory lectures on convex optimization: a basic course},
  author = {Nesterov, Yurii},
  series = {Applied Optimization},
  volume = {87},
  publisher = {Kluwer Academic Publishers},
  address = {Boston, MA},
  year = {2004},
  doi = {10.1007/978-1-4419-8853-9}
}

@book{settles_active_2010,
  title = {Active learning literature survey},
  author = {Settles, Burr},
  series = {Computer Sciences Technical Report},
  number = {1648},
  publisher = {University of Wisconsin--Madison},
  year = {2010},
  month = jan
}

@article{balcan_true_2010,
  title = {The true sample complexity of active learning},
  author = {Balcan, Maria-Florina and Hanneke, Steve and Vaughan, Jennifer Wortman},
  journal = {Machine Learning},
  volume = {80},
  number = {2--3},
  pages = {111--139},
  year = {2010},
  month = sep,
  doi = {10.1007/s10994-010-5174-y}
}

@inproceedings{NIPS2005_340a3904,
  title = {Query by committee made real},
  author = {Gilad-Bachrach, Ran and Navot, Amir and Tishby, Naftali},
  booktitle = {Advances in Neural Information Processing Systems},
  editor = {Weiss, Y. and Schölkopf, B. and Platt, J.},
  volume = {18},
  publisher = {MIT Press},
  year = {2005}
}

@inproceedings{sener2018active,
  title = {Active learning for convolutional neural networks: a core-set approach},
  author = {Sener, Ozan and Savarese, Silvio},
  booktitle = {International Conference on Learning Representations},
  year = {2018}
}

@inproceedings{hoi_batch_2006,
  title = {Batch mode active learning and its application to medical image classification},
  author = {Hoi, Steven C. H. and Jin, Rong and Zhu, Jianke and Lyu, Michael R.},
  booktitle = {Proceedings of the 23rd International Conference on Machine Learning},
  pages = {417--424},
  address = {Pittsburgh, PA, USA},
  publisher = {ACM Press},
  year = {2006},
  doi = {10.1145/1143844.1143897}
}

@article{huang_active_2014,
  title = {Active learning by querying informative and representative examples},
  author = {Huang, Sheng-Jun and Jin, Rong and Zhou, Zhi-Hua},
  journal = {IEEE Transactions on Pattern Analysis and Machine Intelligence},
  volume = {36},
  number = {10},
  pages = {1936--1949},
  year = {2014},
  month = oct,
  doi = {10.1109/TPAMI.2014.2307881}
}

@inproceedings{dasgupta_hierarchical_2008,
  title = {Hierarchical sampling for active learning},
  author = {Dasgupta, Sanjoy and Hsu, Daniel},
  booktitle = {Proceedings of the 25th International Conference on Machine Learning},
  pages = {208--215},
  address = {Helsinki, Finland},
  publisher = {ACM Press},
  year = {2008},
  doi = {10.1145/1390156.1390183}
}

@article{Hsu_Lin_2015,
  title = {Active learning by learning},
  author = {Hsu, Wei-Ning and Lin, Hsuan-Tien},
  journal = {Proceedings of the AAAI Conference on Artificial Intelligence},
  volume = {29},
  number = {1},
  year = {2015},
  month = feb,
  doi = {10.1609/aaai.v29i1.9597}
}

@inproceedings{NIPS2017_8ca8da41,
  title = {Learning active learning from data},
  author = {Konyushkova, Ksenia and Sznitman, Raphael and Fua, Pascal},
  booktitle = {Advances in Neural Information Processing Systems},
  editor = {Guyon, I. and Von Luxburg, U. and Bengio, S. and Wallach, H. and Fergus, R. and Vishwanathan, S. and Garnett, R.},
  volume = {30},
  publisher = {Curran Associates, Inc.},
  year = {2017}
}

@inproceedings{9093390,
  title = {Active adversarial domain adaptation},
  author = {Su, Jong-Chyi and Tsai, Yi-Hsuan and Sohn, Kihyuk and Liu, Buyu and Maji, Subhransu and Chandraker, Manmohan},
  booktitle = {2020 IEEE Winter Conference on Applications of Computer Vision},
  pages = {728--737},
  year = {2020},
  doi = {10.1109/WACV45572.2020.9093390}
}

@inproceedings{10.1007/978-3-540-74958-5_14,
  title = {Dual strategy active learning},
  author = {Donmez, Pinar and Carbonell, Jaime G. and Bennett, Paul N.},
  editor = {Kok, Joost N. and Koronacki, Jacek and Mantaras, Raomon Lopez de and Matwin, Stan and Mladeni{\v{c}}, Dunja and Skowron, Andrzej},
  booktitle = {Machine Learning: ECML 2007},
  pages = {116--127},
  address = {Berlin, Heidelberg},
  publisher = {Springer Berlin Heidelberg},
  year = {2007}
}

@article{baram_online_2004,
  title = {Online choice of active learning algorithms},
  author = {Baram, Yoram and Yaniv, Ran El and Luz, Kobi},
  journal = {The Journal of Machine Learning Research},
  volume = {5},
  pages = {255--291},
  year = {2004},
  month = may
}

@inproceedings{Casanova2020Reinforced,
  title = {Reinforced active learning for image segmentation},
  author = {Casanova, Arantxa and Pinheiro, Pedro O. and Rostamzadeh, Negar and Pal, Christopher J.},
  booktitle = {International Conference on Learning Representations},
  year = {2020}
}

@inproceedings{pmlr-v89-cheung19b,
  title = {Learning to optimize under non-stationarity},
  author = {Cheung, Wang Chi and Simchi-Levi, David and Zhu, Ruihao},
  booktitle = {Proceedings of the Twenty-Second International Conference on Artificial Intelligence and Statistics},
  editor = {Chaudhuri, Kamalika and Sugiyama, Masashi},
  volume = {89},
  series = {Proceedings of Machine Learning Research},
  pages = {1079--1087},
  publisher = {PMLR},
  year = {2019},
  month = apr
}

@inproceedings{10.5555/3454287.3455365,
  title = {Weighted linear bandits for non-stationary environments},
  author = {Russac, Yoan and Vernade, Claire and Cappé, Olivier},
  booktitle = {Advances in Neural Information Processing Systems},
  editor = {Wallach, H. and Larochelle, H. and Beygelzimer, A. and Alché-Buc, F. d' and Fox, E. and Garnett, R.},
  volume = {32},
  publisher = {Curran Associates, Inc.},
  year = {2019}
}

@inproceedings{zhang2023algorithm,
  title = {Algorithm selection for deep active learning with imbalanced datasets},
  author = {Zhang, Jifan and Shao, Shuai and Verma, Saurabh and Nowak, Robert},
  booktitle = {Advances in Neural Information Processing Systems},
  year = {2023}
}

@misc{google_google/active-learning_nodate,
  title = {Google/active-learning},
  author = {Google},
  year = {2017},
  howpublished = {GitHub repository}
}

@inproceedings{zhan_comparative_2021,
  title = {A comparative survey: benchmarking for pool-based active learning},
  author = {Zhan, Xueying and Liu, Huan and Li, Qing and Chan, Antoni B.},
  booktitle = {Proceedings of the Thirtieth International Joint Conference on Artificial Intelligence},
  pages = {4679--4686},
  address = {Montreal, Canada},
  publisher = {International Joint Conferences on Artificial Intelligence Organization},
  year = {2021},
  month = aug,
  doi = {10.24963/ijcai.2021/634}
}

@inproceedings{li_contextual-bandit_2010,
  title = {A contextual-bandit approach to personalized news article recommendation},
  author = {Li, Lihong and Chu, Wei and Langford, John and Schapire, Robert E.},
  booktitle = {Proceedings of the 19th International Conference on World Wide Web},
  pages = {661--670},
  address = {Raleigh, NC, USA},
  publisher = {ACM},
  year = {2010},
  month = apr,
  doi = {10.1145/1772690.1772758}
}

@inproceedings{citovsky2021batch,
  title = {Batch active learning at scale},
  author = {Citovsky, Gui and DeSalvo, Giulia and Gentile, Claudio and Karydas, Lazaros and Rajagopalan, Anand and Rostamizadeh, Afshin and Kumar, Sanjiv},
  booktitle = {Advances in Neural Information Processing Systems},
  editor = {Beygelzimer, A. and Dauphin, Y. and Liang, P. and Wortman Vaughan, J.},
  year = {2021}
}

@inproceedings{Ash2020Deep,
  title = {Deep batch active learning by diverse, uncertain gradient lower bounds},
  author = {Ash, Jordan T. and Zhang, Chicheng and Krishnamurthy, Akshay and Langford, John and Agarwal, Alekh},
  booktitle = {International Conference on Learning Representations},
  year = {2020}
}

@inproceedings{ash2021gone,
  title = {Gone fishing: neural active learning with fisher embeddings},
  author = {Ash, Jordan T. and Goel, Surbhi and Krishnamurthy, Akshay and Kakade, Sham M.},
  booktitle = {Advances in Neural Information Processing Systems},
  editor = {Beygelzimer, A. and Dauphin, Y. and Liang, P. and Wortman Vaughan, J.},
  year = {2021}
}

@inproceedings{NEURIPS2019_95323660,
  title = {{BatchBALD}: efficient and diverse batch acquisition for deep {Bayesian} active learning},
  author = {Kirsch, Andreas and van Amersfoort, Joost and Gal, Yarin},
  booktitle = {Advances in Neural Information Processing Systems},
  editor = {Wallach, H. and Larochelle, H. and Beygelzimer, A. and d'Alché-Buc, F. and Fox, E. and Garnett, R.},
  volume = {32},
  publisher = {Curran Associates, Inc.},
  year = {2019}
}

@article{shwartz-ziv_tabular_2022,
  title = {Tabular data: deep learning is not all you need},
  author = {Shwartz-Ziv, Ravid and Armon, Amitai},
  journal = {Information Fusion},
  volume = {81},
  pages = {84--90},
  year = {2022},
  month = may,
  doi = {10.1016/j.inffus.2021.11.011}
}


\end{document}